\colorlet{siaminlinkcolor}{green!50!black}
\colorlet{siamexlinkcolor}{red!50!black}
\colorlet{siamreviewcolor}{black!50}
\renewcommand*{\backref}[1]{\ifx#1\relax \else Page #1 \fi}
\renewcommand*{\backrefalt}[4]{%
	\ifcase #1 \footnotesize{(Not cited)}%
	\or        \footnotesize{(Cited on page~#2)}%
	\else      \footnotesize{(Cited on pages~#2)}%
	\fi
}
\newcolumntype{M}[1]{>{\centering\arraybackslash}m{#1}} 
\theoremstyle{plain}
\newtheorem{theorem}{Theorem}[section]
\newtheorem{lemma}[theorem]{Lemma}
\newtheorem{proposition}[theorem]{Proposition}
\newtheorem{corollary}[theorem]{Corollary}
\newtheoremstyle{my_def}
  {\topsep}   
  {\topsep}   
  {\normalfont}  
  {0pt}       
  {\bfseries\itshape} 
  {.}         
  {5pt plus 1pt minus 1pt} 
  {}          
\theoremstyle{my_def}
\newtheorem{definition}[theorem]{Definition}
\newtheorem{example}[theorem]{Example}
\newtheoremstyle{my_rem}
  {0.5\topsep}   
  {0.5\topsep}   
  {\normalfont}  
  {0pt}       
  {\bfseries\itshape} 
  {.}         
  {5pt plus 1pt minus 1pt} 
  {}          
\theoremstyle{my_rem}
\newtheorem{remark}[theorem]{Remark}
\numberwithin{equation}{section}
\numberwithin{theorem}{section}
\newcommand{\N}{\mathbb{N}}
\newcommand{\R}{\mathbb{R}}
\newcommand{\Q}{\mathbb{Q}}
\newcommand{\sP}{\mathscr{P}}
\newcommand{\sE}{\mathscr{E}}
\newcommand{\sfL}{\mathsf{L}}
\newcommand{\sfW}{\mathsf{W}}
\newcommand{\sfm}{\mathsf{m}}
\newcommand{\sfJ}{\mathsf{J}}
\newcommand{\sfP}{\mathsf{P}}
\newcommand{\sfd}{\mathsf{d}}
\newcommand{\sfB}{\mathsf{B}}
\newcommand{\sfT}{\mathsf{T}}
\newcommand{\cC}{\mathcal{C}}
\newcommand{\cG}{\mathcal{G}}
\newcommand{\cH}{\mathcal{H}}
\newcommand{\cJ}{\mathcal{J}}
\newcommand{\cK}{\mathcal{K}}
\newcommand{\cL}{\mathcal{L}}
\newcommand{\cR}{\mathcal{R}}
\newcommand{\cU}{\mathcal{U}}
\newcommand{\cV}{\mathcal{V}}
\newcommand{\cY}{\mathcal{Y}}
\let\originalleft\left 
\let\originalright\right
\renewcommand{\left}{\mathopen{}\mathclose\bgroup\originalleft}
\renewcommand{\right}{\aftergroup\egroup\originalright}
\DeclarePairedDelimiterX{\iptemp}[2]{\langle}{\rangle}{#1, #2}
\newcommand{\ip}{\iptemp}
\DeclarePairedDelimiterX{\normtemp}[1]{\lVert}{\rVert}{#1}
\newcommand{\norm}{\normtemp}
\DeclarePairedDelimiterX{\abstemp}[1]{\lvert}{\rvert}{#1}
\newcommand{\abs}{\abstemp}
\DeclarePairedDelimiterX{\trtemp}[1]{(}{)}{#1}
\newcommand{\tr}{\operatorname{tr}\trtemp}
\DeclarePairedDelimiterX{\SEtemp}[2]{(}{)}{#1, #2}
\DeclarePairedDelimiterX{\SGtemp}[1]{(}{)}{#1}
\newcommand{\set}[2]{{\left\{ #1 \,\middle|\, #2 \right\}}}
\newcommand{\slot}{{\,\cdot\,}}
\newcommand{\defeq}{\coloneqq} 
\newcommand{\eqdef}{\eqqcolon} 
\DeclarePairedDelimiterX{\floor}[1]{\lfloor}{\rfloor}{#1} 
\DeclarePairedDelimiterX{\ceil}[1]{\lceil}{\rceil}{#1} 
\newcommand{\wht}{\widehat}
\newcommand{\wbar}{\overline}
\newcommand{\wtilde}{\widetilde}
\newcommand{\tp}{\top} 
\DeclareMathOperator{\Id}{Id} 
\newcommand{\HS}{\mathrm{HS}} 
\newcommand{\E}{\operatorname{\mathbb{E}}} 
\newcommand{\Law}{\operatorname{Law}} 
\newcommand{\diid}{\stackrel{\mathrm{i.i.d.}}{\sim}} 
\newcommand{\unif}{\mathrm{Unif}}
\newcommand{\Unif}{\unif}
\newcommand{\normal}{\mathcal{N}} 
\newcommand{\push}{_{\#}}
\DeclareMathOperator*{\argmax}{arg\,max}
\DeclareMathOperator*{\argmin}{arg\,min}
\DeclareMathOperator{\Lip}{Lip}
\def\qfa{\quad\text{for all}\quad}
\def\qa{\quad\text{and}\quad}
\def\qf{\quad\text{for}\quad}
\def\qw{\quad\text{where}\quad}
\def\qin{\quad\text{in}\quad}
\def\qon{\quad\text{on}\quad}
\newcommand{\sfit}[1]{\textsf{\small{#1}}} 
\newcommand{\mytitle}{Learning where to learn: Training data distribution optimization for scientific machine learning}
\title{\mytitle}
\let\inserttitle\@title
\author{%
  Nicolas Guerra\thanks{Center for Applied Mathematics, Cornell University, Ithaca, NY 14853, USA.}\\
  Cornell University\\
  \texttt{nmg64@cornell.edu} \\
  \And
  Nicholas H.~Nelsen\thanks{Department of Mathematics, Massachusetts Institute of Technology, Cambridge, MA 02139, USA.}
  \thanks{Department of Mathematics, Cornell University, Ithaca, NY 14853, USA.}
  \thanks{Oden Institute for Computational Engineering and Sciences \& Department of Aerospace Engineering and Engineering Mechanics, The University of Texas at Austin, Austin, TX 78712, USA.}
  \\
  MIT, Cornell, UT Austin\\
  \texttt{nnelsen@oden.utexas.edu} \\
   \And
  Yunan Yang\footnotemark[3]\\
  Cornell University\\
  \texttt{yunan.yang@cornell.edu}
}
\begin{document}

\maketitle

\begin{abstract}
In scientific machine learning, models are routinely deployed with parameter values or boundary conditions far from those used in training. This paper studies the learning-where-to-learn problem of designing a training data distribution that minimizes average prediction error across a family of deployment regimes. A theoretical analysis shows how the training distribution shapes deployment accuracy. This motivates two adaptive algorithms based on bilevel or alternating optimization in the space of probability measures. Discretized implementations using parametric distribution classes or nonparametric particle-based gradient flows deliver optimized training distributions that outperform nonadaptive designs. Once trained, the resulting models exhibit improved sample complexity and robustness to distribution shift. This framework unlocks the potential of principled data acquisition for learning functions and solution operators of partial differential equations.
\end{abstract}

\section{Introduction}\label{sec:intro}
In scientific settings, machine learning models encounter data at deployment time that can differ substantially from the data seen during training. Standard empirical risk minimization (ERM), which minimizes an average loss over the fixed training distribution, may yield models that perform well in-sample yet degrade under distribution shift. This challenge is acute in operator learning for partial differential equations (PDEs) and infinite-dimensional inverse problems. For example, in data-driven electrical impedance tomography (EIT), neural operators are trained to map electrical patterns on the boundary to internal conductivities~\citep{fan2020solving,molinaro2023neural,park2021deep}. The input is a collection of voltage and corresponding current pairs that represent Dirichlet and Neumann boundary conditions, respectively. The probability distribution $\nu$ that boundary voltages are sampled from is a user choice. Crucially, the choice of $\nu$ shapes how well the trained model generalizes to new boundary measurement distributions. Fig.~\ref{fig:motivate} illustrates this effect with a variant of the Neural Inverse Operator (NIO) architecture~\citep{molinaro2023neural}; see \textbf{SM}~\ref{app:architectures}--\ref{app:details_numeric} for details.

These observations motivate the central premise of the present work: \emph{learning where to learn} by designing an optimal training distribution $\nu$ for a prescribed family of deployment regimes. Guided by theoretical and practical considerations, this paper develops a principled and computationally feasible framework for optimizing $\nu$ in function regression and operator learning tasks. It establishes intelligent data acquisition as a crucial component of scientific machine learning (SciML) workflows. 

\begin{figure}[tb]%
	\centering
	\resizebox{\textwidth}{!}{%
		\subfloat{
			\includegraphics[width=0.5\textwidth]{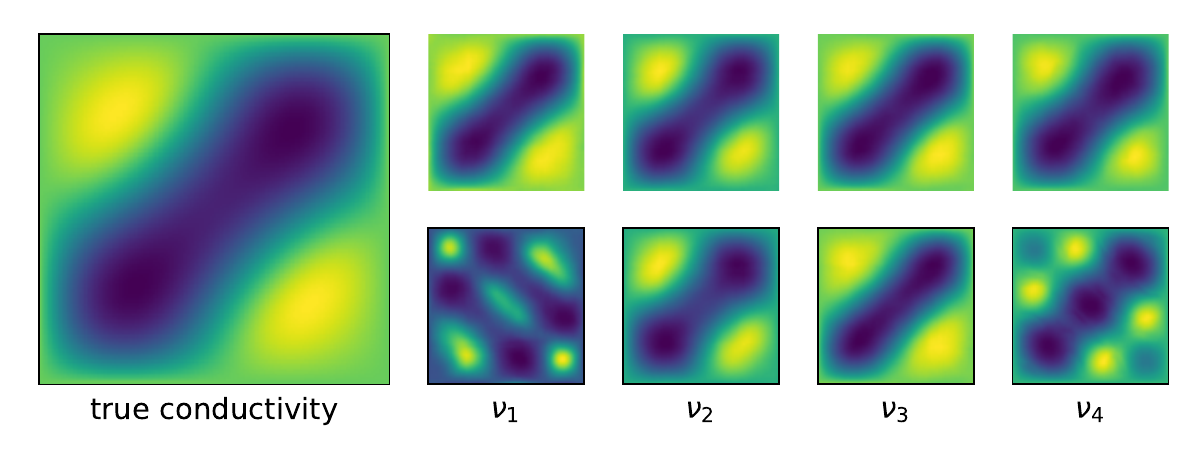}
		}
		\subfloat{
			\includegraphics[width=0.5\textwidth]{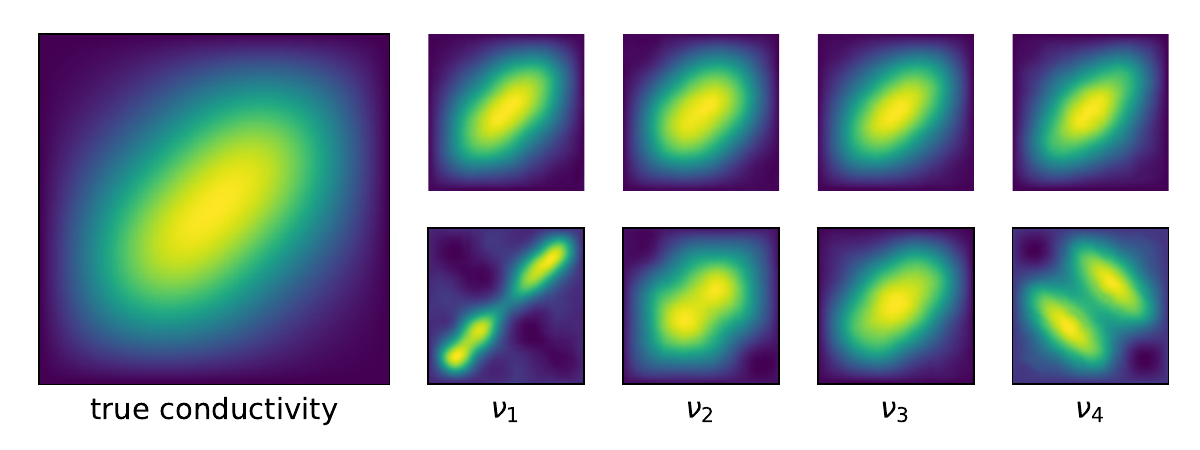}
		}
	}
	\caption{\small Two conductivity samples in EIT. The true conductivity is on the left of each panel, followed by predictions from NIO models trained on four Dirichlet boundary condition distributions $\{\nu_i\}_{i=1}^4$. Top row: in-distribution predictions; bottom row: out-of-distribution predictions. See \textbf{SM}~\ref{app:architectures}--\ref{app:details_numeric} for details.}
	\label{fig:motivate}
\end{figure} 

\paragraph{Contributions.} Our main contributions are as follows.
\begin{enumerate}[label={(C\arabic*)},nolistsep]
	\item \emph{Lipschitz-based distribution shift bounds.}
	We develop quantitative bounds that connect out-of-distribution (OOD) error to the training error, Lipschitz continuity of the model and target map, and the Wasserstein mismatch between training and test distributions.\label{c:theory}
	
	\item \emph{Algorithms for training distribution optimization.}
	Motivated by the theory, we develop two adaptive algorithms that actively optimize the training distribution for general model classes: ($1$) a bilevel procedure that directly minimizes OOD error and ($2$) a computationally efficient alternating scheme that minimizes an upper bound on the OOD error. We implement ($1$) and ($2$) over the space of probability measures by considering (a) parametric training distribution families (e.g., Gaussian processes) and (b) nonparametric particle representations of the training distribution (e.g., that are updated with Wasserstein gradient flows).\label{c:alg}
	
	\item \emph{Empirical performance in SciML tasks.}
	We perform numerical experiments involving function approximation as well as forward and inverse operator learning for a range of elliptic, parabolic, and hyperbolic PDEs (EIT, Darcy flow, viscous Burgers', and radiative transport). Consistent with our theory, the results show that the training data distribution strongly governs OOD deployment error. In the experiments, the two proposed adaptive algorithms efficiently optimize the training distribution, greatly reduce OOD error, and often outperform adaptive and nonadaptive baselines.\label{c:num}
\end{enumerate} 

\paragraph{Related work.}
The careful selection of sampling points for regression appears frequently in numerical analysis \citep{barron2008approximation,krieg2022recovery}. Weighted least-squares regression with Christoffel sampling is one prominent example \citep{cohen2017optimal,adcock2022cas4dl}. One of the main findings of these works is that training on samples from the target test distribution is a suboptimal strategy in general (cf. \citet{canatar2021out}). Instead, a change of measure is derived by minimizing an upper bound on average error. These ideas have their origin in numerical integration and Monte Carlo methods \citep{caflisch1998monte}. Some other related design strategies include greedy Bayesian herding~\citep{huszar2012optimally} and kernel quadrature~\citep{epperly2023kernel}. However, quadrature-based sampling does not necessarily deliver optimal performance for regression.

In SciML for PDEs, several adaptive sampling and reweighting strategies have emerged, particularly for unsupervised methods such as physics-informed neural networks (PINNs) \citep{wu2023comprehensive}. Common approaches either reweight a fixed pool of training data samples \citep{chen2025self,mcclenny2023self} or sample new data points from a density proportional to the PDE residual \citep{gao2023failure,tang2023pinns}.
Similar methods based on importance sampling, reweighting, or coreset selection have been applied to neural operators \citep{wang2022improved,toscano2025variational}. Others use residual information to move samples directly \citep{ouyang2025rams} or iteratively expand training datasets~\citep{lye2021iterative}. The current work builds on these advances by operating at the level of probability distributions. This enables a wide range of parametrizations and discretizations beyond samples only and provides extra freedom to explore the design space.

For a discussion of broader connections to other areas of machine learning and statistics, see \textbf{SM}~\ref{app:related_work_extra}.

\paragraph{Outline.}
This paper is organized as follows. Sec.~\ref{sec:prelim} provides the necessary background. Sec.~\ref{sec:theory} establishes motivating theoretical results~\ref{c:theory}. Sec.~\ref{sec:alg} proposes two implementable data distribution selection algorithms~\ref{c:alg}. Sec.~\ref{sec:numeric} performs numerical studies~\ref{c:num} and Sec.~\ref{sec:conclusion} gives concluding remarks. Proof and implementation details are deferred to the supplementary material ({\bf SM}).

\section{Preliminaries}\label{sec:prelim}
This section sets the notation, provides essential background on key mathematical concepts, and reviews the framework of operator learning. \textbf{SM}~\ref{app:details_prelim} contains additional reference material.

\paragraph{Notation.}
This paper works with an abstract input space $(\cU, \ip{\cdot}{\cdot}_\cU, \norm{\slot}_\cU)$ and output space $(\cY, \ip{\cdot}{\cdot}_\cY, \norm{\slot}_\cY)$, both assumed to be real separable Hilbert spaces. We often drop the subscripts on norms and inner products when the meaning is clear from the context. Unless otherwise stated, $\cU$ or $\cY$ could be infinite-dimensional. We denote the Lipschitz constant of a map $F\colon\cU\to\cY$ by $\Lip(F)\defeq \sup_{u\neq u'}\norm{F(u)-F(u')}_{\cY}/\norm{u-u'}_{\cU}$. For any element $\zeta$ of a metric space, the Dirac probability measure assigning unit mass to $\zeta$ is written as $\delta_\zeta$. For $p\in[1,\infty]$, (Bochner) $L^p_\mu(\cU;\cY)$ spaces are defined in the usual way.
We say that a probability measure $\mu\in\sP(\cU)$ belongs to $\sP_p(\cU)$ if $\sfm_p(\mu)<\infty$, where $\mathsf{m}_p\colon \sP(\cU)\to\R_{\geq 0}\cup\{\infty\}$ denotes the uncentered $p$-th moment defined by $\mu\mapsto \sfm_p(\mu)\defeq \E_{u\sim\mu}\norm{u}_\cU^p$.  We often work with Gaussian measures $\normal(m,\cC)\in\sP_2(\cU)$, where $m\in\cU$ is the mean and $\cC\colon \cU\to\cU$ is the symmetric trace-class positive semidefinite covariance operator; see~\citet[App.~A.3]{Dashti2017} for details in the infinite-dimensional setting.

\paragraph{Background.}
We review several key concepts that frequently appear in this work.

The \emph{pushforward operator} maps a measure from one measurable space to another via a measurable map. Specifically, given measure spaces $(X, \mathcal{B}_X, \mu)$ and $(Y, \mathcal{B}_Y, \nu)$ and a measurable map $ T\colon X \to Y $, the pushforward operator $ T\push $ assigns to $\mu$ a measure $ T\push\mu $ on $ Y $ defined by 
$
T\push\mu(B) = \mu(T^{-1}(B))$ for all $ B \in \mathcal{B}_Y$, 
where $ T^{-1}(B) =\set{x \in X}{T(x) \in B } $ is the pre-image of $ B $. Intuitively, $ T\push $ redistributes the measure $\mu$ on $ X $ to a measure $ T\push\mu $ on $ Y $ following the structure of the map $ T $.

\begin{definition}[$p$-Wasserstein distance]\label{def:wasserstein}
	Consider probability measures $\mu$ and $\nu$ on a metric space $(X,\sfd)$ with finite $p$-th moments, where $p \in [1, \infty)$. Define the set of all couplings $\Gamma(\mu, \nu)\defeq \set{\gamma \in \sP(X \times X)}{\pi^1\push \gamma = \mu, \; \pi^2\push \gamma = \nu}$,
	where $\pi^1(x, y) = x$ and $\pi^2(x, y) = y$. The \emph{$p$-Wasserstein distance} between $\mu$ and $\nu$ is $\sfW_p(\mu, \nu) \defeq ( \inf_{\gamma \in \Gamma(\mu, \nu)} \int_{X \times X} \sfd(x,y)^p \, \gamma(dx, dy))^{1/p}$.
\end{definition}
The cases of $p = 1$ and $p=2$ are often considered, corresponding to the $\sfW_1$ and $\sfW_2$ metrics~\citep{villani2021topics}. The case when $\mu$ and $\nu$ are both Gaussian and $X$ is a Hilbert space is also used frequently.

We will use the notion of random measure~\citep{kallenberg2017random} to quantify OOD error in Secs.~\ref{sec:theory}--\ref{sec:alg}.
\begin{definition}[random measure]\label{def:rand_measure}
	Let $(\Omega, \mathcal{F}, \mathbb{P})$ be a probability space, and let $(M, \mathcal{B})$ be a measurable space. A \emph{random measure} on $M$ is a map $\Xi\colon \Omega\rightarrow \sP(M)$, $\Xi(\omega) \defeq\mu_\omega$, such that for every $A \in \mathcal{B}$, the function
	$\omega \mapsto \mu_\omega(A)$
	is a random variable, i.e., measurable from $(\Omega, \mathcal{F})$ to $(\mathbb{R}_{\geq 0}, \mathcal{B}(\mathbb{R}))$.
\end{definition}

\paragraph{Operator learning.}
Operator learning provides a powerful framework for approximating mappings $\cG^\star\colon \cU \to \cY$ between function spaces. It plays an increasingly pivotal role for accelerating forward and inverse problems in the computational sciences~\citep{boulle2023mathematical,anandkumar2022neural,kovachki2024operator,moreldisco,nelsen2024operator,subedi2025operator}. As in traditional regression, the learned operator $\cG_{\wht{\theta}}$ is parametrized over a set $\Theta$ and minimizes an \emph{empirical risk}
\begin{equation}\label{eqn:erm}
	\wht{\theta}\in\argmin_{\theta\in\Theta} \frac{1}{N} \sum_{n=1}^N\norm[\big]{\cG^\star(u_n) - \cG_\theta(u_n)}_{\cY}^2\,.
\end{equation}
The Hilbert norm in \eqref{eqn:erm} and discretization-invariant architectures (e.g., DeepONet~\citep{lu2019deeponet}, Fourier Neural Operator (FNO)~\citep{li2021fourier}, NIO~\citep{molinaro2023neural}) are points of departure from classical function approximation. In \eqref{eqn:erm}, $\{u_n\}\sim\nu^{\otimes N}$, where $\nu$ is the training distribution over input functions. The choice and properties of $\nu$ are crucial. An inadequately selected $\nu$ may result in $\cG_{\wht{\theta}}(u)$ being a poor approximation to $\cG^\star(u)$ for $u\sim\nu'$, where $\nu'\neq \nu$ \citep{adcock2022cas4dl,boulle2023elliptic,de2023convergence,li2024multi,musekamp2025active,pickering2022discovering,subedi2024benefits}. The present work aims to improve accuracy under distribution shift in both function approximation and operator learning settings.

\section{Lipschitz theory for out-of-distribution error bounds}\label{sec:theory}
This section studies how loss functions behave under distribution shifts and the implications of such shifts on average-case OOD accuracy. The assumed Lipschitz continuity of the target map and the elements in the approximation class play a central role in the analysis. Although our theoretical contributions build upon established tools, we are not aware of any existing work that presents these results in the precise form developed here.

\paragraph{Distribution shift inequalities.}
In the infinite data limit, the functional in~\eqref{eqn:erm} naturally converges to the so-called \emph{expected risk}, which serves as a notion of model accuracy in this work.
The following result forms the foundation for our forthcoming data selection algorithms in Sec.~\ref{sec:alg}.
\begin{proposition}[distribution shift error]\label{prop:ood}
	Let $\cG_1$ and $\cG_2$ both be Lipschitz continuous maps from Hilbert space $\cU$ to Hilbert space $\cY$. For any $\nu\in\sP_1(\cU)$ and $\nu'\in\sP_1(\cU)$, it holds that
	\begin{align}\label{eqn:w1_ood_bound}
		\E_{u'\sim\nu'}\norm{\cG_1(u')-\cG_2(u')}_\cY\leq \E_{u\sim\nu}\norm{\cG_1(u)-\cG_2(u)}_\cY + \bigl(\Lip(\cG_1) + \Lip(\cG_2)\bigr)\sfW_1(\nu,\nu')\,.
	\end{align}
	Moreover, for any $\mu\in\sP_2(\cU)$ and $\mu'\in\sP_2(\cU)$, it holds that
	\begin{align}\label{eqn:w2_ood_bound}
		\E_{u'\sim\mu'}\norm{\cG_1(u')-\cG_2(u')}^2_\cY\leq \E_{u\sim\mu}\norm{\cG_1(u)-\cG_2(u)}_\cY^2+ c(\cG_1,\cG_2,\mu,\mu') \sfW_2(\mu,\mu')\,,
	\end{align}
	where $c(\cG_1,\cG_2,\mu,\mu')$ equals
	\begin{align*}
		\bigl(\Lip(\cG_1) + \Lip(\cG_2)\bigr)
		\sqrt{4\bigl(\Lip(\cG_1) + \Lip(\cG_2)\bigr)^2\bigl[\sfm_2(\mu)+\sfm_2(\mu')\bigr] + 16\bigl[\norm{\cG_1(0)}_\cY^2 + \norm{\cG_2(0)}_\cY^2\bigr]}.
	\end{align*}
\end{proposition}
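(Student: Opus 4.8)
The plan is to reduce both inequalities to scalar statements about the nonnegative function $h\colon\cU\to\R_{\geq 0}$ defined by $h(u)\defeq\norm{\cG_1(u)-\cG_2(u)}_\cY$, combined with transport (coupling) arguments. Writing $G\defeq\cG_1-\cG_2$ and $L\defeq\Lip(\cG_1)+\Lip(\cG_2)$, the triangle inequality gives $\Lip(G)\leq L$, and the reverse triangle inequality then shows $h$ is itself $L$-Lipschitz, since $\abs{h(u)-h(v)}\leq\norm{G(u)-G(v)}_\cY\leq L\norm{u-v}_\cU$. Two further elementary facts I would record at the outset are the affine growth bound $h(u)\leq L\norm{u}_\cU+\norm{\cG_1(0)}_\cY+\norm{\cG_2(0)}_\cY$ (from $h(u)\leq\norm{G(u)-G(0)}_\cY+\norm{G(0)}_\cY$) and the consequent integrability of $h$ against any $\sP_1$ measure and of $h^2$ against any $\sP_2$ measure, which guarantees every expectation below is finite.

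For the first inequality \eqref{eqn:w1_ood_bound}, I would argue by coupling. For an arbitrary $\gamma\in\Gamma(\nu,\nu')$, the marginal constraints give $\E_{u'\sim\nu'}h(u')-\E_{u\sim\nu}h(u)=\int_{\cU\times\cU}\bigl(h(u')-h(u)\bigr)\,d\gamma(u,u')\leq L\int_{\cU\times\cU}\norm{u-u'}_\cU\,d\gamma(u,u')$, using the pointwise $L$-Lipschitz bound on $h$. Taking the infimum over all couplings $\gamma$ turns the right-hand side into $L\,\sfW_1(\nu,\nu')$, and rearranging yields the claim. (Equivalently, one could invoke Kantorovich--Rubinstein duality directly.)

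For the second inequality \eqref{eqn:w2_ood_bound}, I would again fix an arbitrary $\gamma\in\Gamma(\mu,\mu')$ and start from the factorization $h(u')^2-h(u)^2=\bigl(h(u')-h(u)\bigr)\bigl(h(u')+h(u)\bigr)$. Bounding the first factor by $L\norm{u-u'}_\cU$, integrating, and applying Cauchy--Schwarz gives $\E_{\mu'}h^2-\E_{\mu}h^2\leq L\bigl(\int\norm{u-u'}_\cU^2\,d\gamma\bigr)^{1/2}\bigl(\int(h(u')+h(u))^2\,d\gamma\bigr)^{1/2}$. The crucial move is to control the second factor not by $\E h^2$ (which would reintroduce the target quantity and make the bound circular) but through the affine growth bound: $h(u')+h(u)\leq L(\norm{u}_\cU+\norm{u'}_\cU)+2(\norm{\cG_1(0)}_\cY+\norm{\cG_2(0)}_\cY)$, followed by two applications of $(a+b)^2\leq 2a^2+2b^2$. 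Integrating the resulting pointwise estimate over $\gamma$ depends only on the marginals and produces exactly $4L^2[\sfm_2(\mu)+\sfm_2(\mu')]+16[\norm{\cG_1(0)}_\cY^2+\norm{\cG_2(0)}_\cY^2]$. Since this is independent of $\gamma$, taking the infimum over couplings converts the first factor into $\sfW_2(\mu,\mu')$ and assembles precisely into the stated constant $c(\cG_1,\cG_2,\mu,\mu')$.

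I expect the main obstacle to be the bookkeeping in this last step: estimating the ``sum'' factor $h(u')+h(u)$ via the affine growth bound rather than via $\E h^2$ is what keeps the constant explicit and the inequality non-circular, and tracking the numerical factors $2$, $4$, $8$, $16$ correctly is what makes the final constant match. A minor technical point worth noting is that no existence of an optimal coupling is required, since every estimate holds for all $\gamma\in\Gamma$ and the infimum is taken only at the end; this sidesteps attainment questions, although optimal couplings do exist here because $\cU$ is separable.
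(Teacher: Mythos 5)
Your proposal is correct and follows essentially the same route as the paper: both establish that $h$ is $(\Lip(\cG_1)+\Lip(\cG_2))$-Lipschitz, handle \eqref{eqn:w1_ood_bound} via Kantorovich--Rubinstein (or the equivalent coupling argument), and prove \eqref{eqn:w2_ood_bound} by factoring $h^2(u')-h^2(u)$, bounding the sum factor by an affine-growth estimate, and applying Cauchy--Schwarz over a coupling to recover the constant $c$ exactly. The only cosmetic differences are that the paper works directly with the $\sfW_2$-optimal coupling rather than taking an infimum over all couplings at the end, and packages the numerical constants via $(a+b+c+d)^2\leq 4(a^2+b^2+c^2+d^2)$.
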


See \textbf{SM}~\ref{app:details_theory} for the proof. Provided that the maps $\cG_1$ and $\cG_2$ are Lipschitz, Prop.~\ref{prop:ood} shows that the functional $\mu\mapsto \E_{u\sim\mu}\norm{\cG_1(u)-\cG_2(u)}$ is globally Lipschitz continuous in the $\sfW_1$ metric. Similarly, the functional $\mu\mapsto \E_{u\sim\mu}\norm{\cG_1(u)-\cG_2(u)}^2$ is \emph{locally} Lipschitz continuous in the $\sfW_2$ metric; in particular, it is Lipschitz on sets with uniformly bounded second moments. We remark that the Lipschitz map hypotheses in Prop.~\ref{prop:ood} are invoked for simplicity. H\"older continuity is also compatible with Wasserstein distances, and the preceding result can be adapted to hold under such regularity. 

\paragraph{Average-case performance.}
OOD performance may be quantified in several ways. This paper chooses \emph{average-case accuracy} with respect to the law $\Q\in\sP(\sP(\cU))$ of a random probability measure $\nu'\sim\Q$. For a ground truth map $\cG^\star$ and any hypothesis $\cG$, we strive for
\begin{align}\label{eqn:ood_accuracy}
	\sE_{\Q}(\cG)\defeq \E_{\nu'\sim\Q}\E_{u'\sim\nu'}\norm[\big]{\cG^\star(u')-\cG(u')}^2_{\cY}
\end{align}
to be small. An equivalent viewpoint in \eqref{eqn:app_ood_equal_test_mix} from \textbf{SM}~\ref{app:details_degenerate} rewrites \eqref{eqn:ood_accuracy} as a single expectation against a mixture. Other approaches to quantify OOD error are possible, such as worst-case analysis. We choose average-case accuracy because it is more natural for learning from finite samples and because the linearity of \eqref{eqn:ood_accuracy} with respect to $\Q$ is useful for algorithm development.

The following corollary is useful. It is a consequence of Prop.~\ref{prop:ood} and the Cauchy--Schwarz inequality.
\begin{corollary}[basic inequality]\label{cor:basic_inequality_average}
	For any $\mu\in\sP_2(\cU)$ and maps $\cG^\star$ and $\cG$, it holds that
	\begin{align}\label{eqn:basic_inequality_average}
		\sE_{\Q}(\cG)
		\leq \E_{u\sim\mu}\norm[\big]{\cG^\star(u)-\cG(u)}_\cY^2+ \sqrt{\E_{\nu'\sim\Q}c^2(\cG^\star,\cG,\mu,\nu')}\sqrt{\E_{\nu'\sim\Q} \sfW_2^2(\mu,\nu')}\,.
	\end{align}
\end{corollary}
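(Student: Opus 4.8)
The plan is to apply the $\sfW_2$ bound \eqref{eqn:w2_ood_bound} from Prop.~\ref{prop:ood} pointwise in the random test measure $\nu'$ and then average against $\Q$, invoking Cauchy--Schwarz to split the resulting cross term into the two square-root factors appearing in \eqref{eqn:basic_inequality_average}.

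First I would fix a realization of $\nu'$ (lying in $\sP_2(\cU)$ for $\Q$-almost every $\nu'$, so that every quantity below is finite) and invoke \eqref{eqn:w2_ood_bound} with the identifications $\cG_1=\cG^\star$, $\cG_2=\cG$, $\mu'=\nu'$, and reference measure $\mu$. This gives, for $\Q$-a.e.\ $\nu'$,
\[
\E_{u'\sim\nu'}\norm{\cG^\star(u')-\cG(u')}^2_\cY \leq \E_{u\sim\mu}\norm{\cG^\star(u)-\cG(u)}^2_\cY + c(\cG^\star,\cG,\mu,\nu')\,\sfW_2(\mu,\nu')\,.
\]
Next I would take the expectation of both sides over $\nu'\sim\Q$. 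The left-hand side is exactly $\sE_{\Q}(\cG)$ by the definition \eqref{eqn:ood_accuracy}, while the first term on the right is independent of $\nu'$ and hence unchanged by the averaging. It then remains to control $\E_{\nu'\sim\Q}[c(\cG^\star,\cG,\mu,\nu')\,\sfW_2(\mu,\nu')]$. Regarding $\nu'\mapsto c(\cG^\star,\cG,\mu,\nu')$ and $\nu'\mapsto \sfW_2(\mu,\nu')$ as elements of $L^2(\Q)$, the Cauchy--Schwarz inequality yields
\[
\E_{\nu'\sim\Q}\bigl[c(\cG^\star,\cG,\mu,\nu')\,\sfW_2(\mu,\nu')\bigr] \leq \sqrt{\E_{\nu'\sim\Q}c^2(\cG^\star,\cG,\mu,\nu')}\;\sqrt{\E_{\nu'\sim\Q}\sfW_2^2(\mu,\nu')}\,,
\]
which is precisely the second summand in \eqref{eqn:basic_inequality_average}. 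Chaining the three displays then closes the argument.

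The mathematical content here is genuinely light; the only point requiring care is the measurability and integrability of the maps $\nu'\mapsto c(\cG^\star,\cG,\mu,\nu')$ and $\nu'\mapsto \sfW_2(\mu,\nu')$ under the law $\Q$ of the random measure (Def.~\ref{def:rand_measure}), which is what legitimizes both the averaging step and the $L^2(\Q)$ Cauchy--Schwarz estimate. Measurability of $\nu'\mapsto \sfW_2(\mu,\nu')$ is standard from optimal transport theory, and $c$ depends on $\nu'$ only through the second moment $\sfm_2(\nu')$ via a continuous (square-root-of-affine) function, so its measurability reduces to that of $\nu'\mapsto\sfm_2(\nu')=\int\norm{u}_\cU^2\,\nu'(du)$, which is immediate from Def.~\ref{def:rand_measure}. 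Finiteness is ensured as soon as $\Q$ satisfies the mild integrability conditions $\E_{\nu'\sim\Q}\sfm_2(\nu')<\infty$ and $\E_{\nu'\sim\Q}\sfW_2^2(\mu,\nu')<\infty$. I expect no substantive obstacle beyond this bookkeeping.
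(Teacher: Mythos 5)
Your proposal is correct and follows exactly the route the paper indicates: it states that the corollary ``is a consequence of Prop.~\ref{prop:ood} and the Cauchy--Schwarz inequality,'' which is precisely your argument of applying \eqref{eqn:w2_ood_bound} with $\mu'=\nu'$, averaging over $\nu'\sim\Q$, and splitting the cross term by Cauchy--Schwarz in $L^2(\Q)$. Your additional remarks on measurability and integrability are sensible bookkeeping that the paper leaves implicit.
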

The upper bound~\eqref{eqn:basic_inequality_average} on the average-case OOD error $\sE_{\Q}$ is intuitive. We view the arbitrary probability measure $\mu$ as the training data distribution that we are free to select. The first term on the right-hand side of \eqref{eqn:basic_inequality_average} then represents the in-distribution generalization error of $\cG$. On the other hand, the second term can be considered a regularizer that penalizes how far $\mu$ is from $\Q$ on average with respect to the $\sfW_2$ metric. The factor involving the quantity $c$ further penalizes the size of moments of $\mu$ and $\Q$ as well as the size of the Lipschitz constants of $\cG^\star$ and $\cG$.

We instantiate Cor.~\ref{cor:basic_inequality_average} in the concrete context of Gaussian mixtures in \textbf{SM}~\ref{app:details_theory}, Ex.~\ref{ex:mixture}.

Although the preceding OOD results hold in a quite general setting, they do not distinguish between an arbitrary model and a trained model obtained by ERM \eqref{eqn:erm}. Sharpening the bounds for a trained model would require probabilistic estimates on the model's Lipschitz constant and in-distribution error. These are challenging theoretical tasks. Instead, we turn our attention to implementable algorithms inspired by Cor.~\ref{cor:basic_inequality_average} that are designed to reduce OOD error in practice.

\section{Practical algorithms for training distribution design}\label{sec:alg}
Let $\cG^\star\colon\cU\to\cY$ be the target mapping and $\Q$ be a probability measure over $\sP_2(\cU)$. We assume query access to an oracle that returns the label $\cG^\star(u)\in\cY$ when presented with an input $u\in\cU$.
The idealized infinite data training distribution selection problem that we wish to address is
\begin{align}\label{eqn:exact_bilevel}
	\inf\set{\sE_\Q\bigl(\wht{\cG}^{(\nu)}\bigr)}{\nu \in  \sP_2(\cU)\qa \widehat{\cG}^{(\nu)} \in\argmin_{\cG\in\cH}\E_{u\sim\nu}\norm[\big]{\cG^\star(u)-\cG(u)}_\cY^2}\,,
\end{align}
where $\sE_\Q$ is the average-case OOD accuracy functional \eqref{eqn:ood_accuracy} and $\cH$ is the model hypothesis class. 

Eqn.~\eqref{eqn:exact_bilevel} is a bilevel optimization problem. The inner minimization problem seeks the best-fit model $\widehat{\cG}^{(\nu)}$ for a fixed training distribution $\nu$. The outer problem minimizes $\sE_\Q(\widehat{\cG}^{(\nu)})$ over $\nu$ in the space of probability measures. 
The map $\nu\mapsto \widehat{\cG}^{(\nu)}$ can be highly nonlinear, which poses a challenge to solve~\eqref{eqn:exact_bilevel}. To devise practical solution methods, this section considers the bilevel optimization problem~\eqref{eqn:exact_bilevel} in Sec.~\ref{sec:alg_bilevel} and a computationally tractable relaxation of~\eqref{eqn:exact_bilevel} in Sec.~\ref{sec:alg_ub}.

\begin{remark}[finite data]\label{rmk:finite_data}
	It is necessary to use finite data to estimate the optimal training distribution~(\textbf{SM}~\ref{app:details_degenerate}). We thus assume sample access to the chosen feasible set of training distributions, e.g., i.i.d. sampling or interacting particles. Moreover, we only require sample access to samples from $\Q$.
\end{remark}

\subsection{Exact bilevel formulation}\label{sec:alg_bilevel}
This subsection focuses on directly solving the bilevel optimization problem~\eqref{eqn:exact_bilevel} to identify the optimal training distribution. Bilevel optimization has been extensively studied in the literature; see, for example, the work of~\citet{colson2007overview,golub2003separable,van2021variable}. For the infinite-dimensional problem~\eqref{eqn:exact_bilevel}, our approach begins by specifying a Hilbert space $\cH$. Using its linear structure, we calculate the gradient of the Lagrangian associated with~\eqref{eqn:exact_bilevel} with respect to the variable $ \nu $. This gradient computation leverages the \textit{adjoint state method} and the envelope theorem~\citep{afriat1971theory}. In what follows, we specialize this general framework to the case where $ \cH $ is a reproducing kernel Hilbert space (RKHS) with scalar output space $ \cY = \R $. The resulting algorithm can be efficiently implemented using representer theorems for kernel methods~\citep{micchelli2005learning}. Additionally, other linear classes of functions can be incorporated into this framework with minimal effort. Extending the framework to nonlinear, vector-valued function spaces $ \cH $, such as neural network or neural operator spaces, is an exciting direction for future research.

\paragraph{RKHS.}
Let $\cH\defeq (\cH_\kappa, \ip{\cdot}{\cdot}_\kappa, \norm{\slot}_\kappa)$ be a separable RKHS of real-valued functions on $\cU$ with bounded continuous reproducing kernel $\kappa\colon \cU\times \cU\to\R$. For fixed $\nu\in\sP_2(\cU)$, let $\iota_\nu\colon \cH\hookrightarrow L^2_\nu$ be the canonical inclusion map (which is compact under the assumptions on $\kappa$). The adjoint of the inclusion map is the kernel integral operator $\iota_\nu^*\colon L^2_\nu\to\cH$ defined by $\iota_\nu^*h=\int_\cU \kappa(\slot,u)h(u)\nu(du)$. We also define the self-adjoint operator $\cK_{\nu}\defeq \iota_\nu^*\iota_\nu^{\phantom{*}}\colon \cH\to\cH$, which has the same action as $\iota_\nu^*$.

\paragraph{Method of adjoints.}
Assume that $\cG^\star\in L^2_\nu\equiv L^2_\nu(\cU;\R)$ for every $\nu\in\sP_2(\cU)$ under consideration; this is true, for example, if $\cG^\star$ is uniformly bounded on $\cU$. Define the data misfit $\Psi\colon\cH\to\R$ by $\Psi(\cG)\defeq\frac{1}{2}\norm{\iota_\nu \cG - \cG^\star}^2_{L^2_\nu}$; this is an abstract way to write the convex inner objective in~\eqref{eqn:exact_bilevel} (rescaled by $1/2$). By Lem.~\ref{lem:normal_eqns}, the inner solution $\wht{\cG}^{(\nu)}$ solves the equation $\cK_\nu\wht{\cG}^{(\nu)}=\iota^*_\nu\cG^\star$ in $\cH$. This fact allows us to define the Lagrangian $\cL\colon \sP_2(H)\times\cH\times\cH\to\R$ for \eqref{eqn:exact_bilevel} by
\begin{align}\label{eqn:lagrangian}
	\cL(\nu,\cG,\lambda)=\tfrac{1}{2}\E_{\nu'\sim\Q}\norm{\iota_{\nu'}\cG-\cG^\star}_{L^2_{\nu'}}^2+\ip[\big]{(\cK_{\nu} \cG - \iota^*_\nu \cG^\star)}{\lambda}_{\kappa}\,.
\end{align}
Differentiating $\cL$ with respect to $\cG\in\cH$ leads to the adjoint equation.
\begin{lemma}[adjoint state equation]\label{lem:adjoint_equation}
	At a critical point of the map $\cG\mapsto \cL(\nu,\cG,\lambda)$ for fixed $(\nu,\cG)$, the adjoint state $\lambda=\lambda^{(\nu)}(\cG)\in\cH$ satisfies the compact operator equation   \begin{align}\label{eqn:adjoint_equation}
		\cK_{\nu} \lambda^{(\nu)}(\cG)=\E_{\nu'\sim\Q}\E_{u'\sim\nu'}\bigl[\kappa(\slot,u')\bigl(\cG^\star(u')-\cG(u')\bigr)\bigr]\,.
	\end{align}
\end{lemma}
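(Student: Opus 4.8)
The plan is to compute the Fréchet derivative of the partial map $\cG\mapsto\cL(\nu,\cG,\lambda)$ at fixed $(\nu,\lambda)$, in an arbitrary direction $\varphi\in\cH$, and then impose the first-order stationarity condition that this derivative vanish. The Lagrangian \eqref{eqn:lagrangian} splits into the averaged quadratic misfit $\Phi(\cG)\defeq\tfrac12\E_{\nu'\sim\Q}\norm{\iota_{\nu'}\cG-\cG^\star}^2_{L^2_{\nu'}}$ and the linear constraint pairing $\ip{\cK_\nu\cG-\iota^*_\nu\cG^\star}{\lambda}_\kappa$, so I would treat the two pieces separately and add their derivatives.

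First I would differentiate the constraint term. Since $\cG\mapsto\cK_\nu\cG-\iota^*_\nu\cG^\star$ is affine with bounded linear part $\cK_\nu$, its derivative in the direction $\varphi$ is $\ip{\cK_\nu\varphi}{\lambda}_\kappa$; invoking the self-adjointness of $\cK_\nu=\iota_\nu^*\iota_\nu$ rewrites this as $\ip{\varphi}{\cK_\nu\lambda}_\kappa$, so its contribution to the gradient is exactly $\cK_\nu\lambda$. Next I would differentiate $\Phi$. Interchanging the derivative with the expectation over $\Q$ and expanding the squared $L^2_{\nu'}$ norm gives the directional derivative $\E_{\nu'\sim\Q}\ip{\iota_{\nu'}\cG-\cG^\star}{\iota_{\nu'}\varphi}_{L^2_{\nu'}}$. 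Applying the defining adjoint relation $\ip{g}{\iota_{\nu'}\varphi}_{L^2_{\nu'}}=\ip{\iota_{\nu'}^*g}{\varphi}_\kappa$ moves every pairing back into $\cH$, producing $\ip{\E_{\nu'\sim\Q}(\cK_{\nu'}\cG-\iota^*_{\nu'}\cG^\star)}{\varphi}_\kappa$, where I used $\iota_{\nu'}^*\iota_{\nu'}=\cK_{\nu'}$.

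Adding the two contributions, the $\cH$-gradient of $\cG\mapsto\cL(\nu,\cG,\lambda)$ is the Riesz representative $\cK_\nu\lambda+\E_{\nu'\sim\Q}(\cK_{\nu'}\cG-\iota^*_{\nu'}\cG^\star)$. Setting it to zero and rearranging isolates $\cK_\nu\lambda=-\E_{\nu'\sim\Q}(\cK_{\nu'}\cG-\iota^*_{\nu'}\cG^\star)$. The final step is to unfold the operators in integral form: the reproducing property gives $\iota_{\nu'}^*\cG^\star=\E_{u'\sim\nu'}[\kappa(\slot,u')\cG^\star(u')]$ and $\cK_{\nu'}\cG=\iota_{\nu'}^*(\iota_{\nu'}\cG)=\E_{u'\sim\nu'}[\kappa(\slot,u')\cG(u')]$, so their difference equals $-\E_{u'\sim\nu'}[\kappa(\slot,u')(\cG^\star(u')-\cG(u'))]$; the two sign flips combine to yield precisely \eqref{eqn:adjoint_equation}.

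The quadratic and affine differentiations are routine. The step requiring genuine care, and the main obstacle, is justifying the interchange of the Fréchet derivative with the expectation $\E_{\nu'\sim\Q}$ and the attendant Bochner integrability of $\nu'\mapsto\cK_{\nu'}\cG-\iota^*_{\nu'}\cG^\star$ in $\cH$. Here I would lean on the standing hypothesis $\cG^\star\in L^2_{\nu'}$ together with the boundedness of $\kappa$, which controls $\norm{\kappa(\slot,u')}_\kappa=\sqrt{\kappa(u',u')}$ uniformly; a dominated-convergence argument (or, equivalently, the observation that $\Phi$ is a genuinely quadratic functional of $\cG$ whose Hessian is uniformly bounded across the support of $\Q$) then legitimizes differentiating under the expectation and guarantees that the $\cH$-valued integral on the right-hand side of \eqref{eqn:adjoint_equation} is well defined.
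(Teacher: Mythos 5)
Your proposal is correct and follows essentially the same route as the paper's proof: compute the Fréchet derivative $D_2\cL$ of the quadratic misfit and the affine constraint pairing, set the sum to zero in $\cH^*$, and unfold $\iota^*_{\nu'}$ via the reproducing kernel to obtain \eqref{eqn:adjoint_equation}. Your additional remarks on self-adjointness of $\cK_\nu$ and on justifying the interchange of differentiation with $\E_{\nu'\sim\Q}$ are sound elaborations of steps the paper leaves implicit.
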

The preceding results enable us to differentiate $\cL$ with respect to $\nu$ along the constraints of \eqref{eqn:exact_bilevel}.
\begin{proposition}[derivative: infinite-dimensional case]\label{prop:derivative_bilevel_prob}
	Define $\cJ\colon \sP_2(\cU) \to\R$ by $ \cJ(\nu) \defeq \cL(\nu,\wht{\cG}^{(\nu)},\lambda^{(\nu)}(\wht{\cG}^{(\nu)}))$. Then, in the sense of duality, $(D\cJ)(\nu)=(\wht{\cG}^{(\nu)}-\cG^\star)\lambda^{(\nu)}(\wht{\cG}^{(\nu)})$ for each $\nu$.
\end{proposition}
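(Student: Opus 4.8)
The plan is to prove the identity by the adjoint-state method, arranging matters so that the envelope theorem leaves only the explicit $\nu$-dependence of $\cL$ after differentiation. I first regard $\cJ$ as the reduced outer objective: at $\cG=\wht{\cG}^{(\nu)}$ the constraint term of $\cL$ vanishes by the normal equation (Lem.~\ref{lem:normal_eqns}), so $\cJ(\nu)=\tfrac12\sE_\Q(\wht{\cG}^{(\nu)})$ is exactly the quantity minimized in the outer problem of~\eqref{eqn:exact_bilevel}. I then fix a perturbation given by a signed measure $\sigma$ on $\cU$ with $\sigma(\cU)=0$ (a tangent vector at $\nu$) and interpret $(D\cJ)(\nu)$ as the first variation $\tfrac{d}{dt}\cJ(\nu+t\sigma)\big|_{t=0}=\int_\cU (D\cJ)(\nu)\,d\sigma$, i.e. the function on $\cU$ representing this linear functional; this is the meaning of ``in the sense of duality.''

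Next I would view $\cL(\nu,\cG,\lambda)$ as a function of three independent arguments and write $\cJ(\nu)=\cL(\nu,\wht{\cG}^{(\nu)},\lambda^{(\nu)})$, where $\wht{\cG}^{(\nu)}$ and $\lambda^{(\nu)}\defeq\lambda^{(\nu)}(\wht{\cG}^{(\nu)})$ are two $\cH$-valued functions of $\nu$. The chain rule expands the total derivative into three contributions: the explicit partial $(\partial_\nu\cL)[\sigma]$; the implicit term $\ip{\partial_\cG\cL}{D_\nu\wht{\cG}^{(\nu)}[\sigma]}_\kappa$; and the implicit term $\ip{\partial_\lambda\cL}{D_\nu\lambda^{(\nu)}[\sigma]}_\kappa$. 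Both implicit terms vanish. The $\lambda$-term is zero because $\partial_\lambda\cL(\nu,\cG,\lambda)=\cK_\nu\cG-\iota_\nu^*\cG^\star$ is the residual of the normal equation, which is zero at $\cG=\wht{\cG}^{(\nu)}$ by Lem.~\ref{lem:normal_eqns}; this holds irrespective of $D_\nu\lambda^{(\nu)}$. The $\cG$-term is zero because $\lambda^{(\nu)}$ is defined precisely so that $\partial_\cG\cL=0$, which is the content of the adjoint equation~\eqref{eqn:adjoint_equation} (this is the defining property of the adjoint state). Hence $(D\cJ)(\nu)[\sigma]=(\partial_\nu\cL)[\sigma]$.

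It remains to evaluate the explicit partial. The first term of $\cL$ is an expectation over $\nu'\sim\Q$ with $\Q$ held fixed, so it contributes nothing to $\partial_\nu\cL$. For the constraint term I use the adjoint property of $\iota_\nu$ together with the reproducing identity $(\iota_\nu\cG)(u)=\cG(u)$ to rewrite $\ip{\cK_\nu\cG-\iota_\nu^*\cG^\star}{\lambda}_\kappa=\int_\cU\bigl(\cG(u)-\cG^\star(u)\bigr)\lambda(u)\,\nu(du)$, which is linear in $\nu$. Its first variation in the direction $\sigma$ is therefore $\int_\cU(\cG-\cG^\star)\lambda\,d\sigma$, so the representing function is the pointwise product $(\cG-\cG^\star)\lambda$; evaluated at $\cG=\wht{\cG}^{(\nu)}$ and $\lambda=\lambda^{(\nu)}(\wht{\cG}^{(\nu)})$ this is exactly the claimed expression $(\wht{\cG}^{(\nu)}-\cG^\star)\lambda^{(\nu)}(\wht{\cG}^{(\nu)})$.

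The main obstacle is rigor rather than computation. The chain rule requires that the parameter-to-state maps $\nu\mapsto\wht{\cG}^{(\nu)}$ and $\nu\mapsto\lambda^{(\nu)}$ be G\^ateaux differentiable into $\cH$, which is delicate because $\cK_\nu$ is compact and hence not boundedly invertible on all of $\cH$; one must restrict to the closed range of $\cK_\nu$ (or impose a well-posedness or regularization assumption) to solve the normal and adjoint equations uniquely and to control their $\nu$-dependence. Granting this differentiability, the envelope cancellation above is exactly what makes the unwieldy sensitivities $D_\nu\wht{\cG}^{(\nu)}$ and $D_\nu\lambda^{(\nu)}$ unnecessary to compute, leaving the clean duality representative as the derivative of $\cJ$.
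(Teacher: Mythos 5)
Your proof is correct and follows essentially the same route as the paper's: the paper likewise computes the first variation of the Lagrangian's constraint term as $(\cG-\cG^\star)\lambda$ (the $\Q$-expectation term being independent of $\nu$) and then invokes the envelope theorem to discard the implicit sensitivities. You simply unpack the envelope argument into its two explicit cancellations (normal equation kills the $\lambda$-sensitivity, adjoint equation kills the $\cG$-sensitivity), which is a faithful and slightly more detailed rendering of the same argument.
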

\textbf{SM}~\ref{app:details_alg_bilevel} provides the proofs of Lem.~\ref{lem:adjoint_equation} and Prop.~\ref{prop:derivative_bilevel_prob} and implementation details.

\begin{remark}[Wasserstein gradient flow]\label{rmk:wgf}
	A nonparametric approach to solve the bilevel problem~\eqref{eqn:exact_bilevel} is to derive the Wasserstein gradient flow and its Lagrangian equivalence $\dot{u}=-\nabla(D\cJ)(\Law(u))(u)$, where $D\cJ$ is as in Prop.~\ref{prop:derivative_bilevel_prob}.
	This leads to a curve of measures $t\mapsto\Law(u(t))$ converging to a steady-state probability distribution. It is important to note that this dynamic requires gradients of the true map $\cG^\star$, which may not be available or could be too expensive to compute even if available. To implement this mean-field evolution, we can discretize it with interacting particles; see~\textbf{SM}~\ref{app:details_numeric_rte}.
\end{remark}

\paragraph{Parametrized training distributions.}
Although Prop.~\ref{prop:derivative_bilevel_prob} provides the desired derivative of the bilevel problem \eqref{eqn:exact_bilevel}, it does not specify how to use the derivative to compute the optimal $\nu$. This issue is nontrivial because $\nu\in\sP_2(\cU)$ belongs to a nonlinear metric space. One approach assumes a parametrization $\nu=\nu_\vartheta$, where $\vartheta\in \cV$ for some separable Hilbert parameter space $\cV$. Typically $\cV\subseteq \R^P$ for some $P\in\N$. We then use the chain rule to apply Euclidean gradient-based optimization tools to solve \eqref{eqn:exact_bilevel}. To this end, define $\sfJ \colon \cV \to\R$ by $\vartheta\mapsto \sfJ(\vartheta)\defeq \cJ(\nu_\vartheta)$, where $\cJ$ is as in Prop.~\ref{prop:derivative_bilevel_prob}. The following theorem computes the gradient $\nabla \sfJ\colon\cV\to\cV$.

\begin{theorem}[gradient: parametric case]\label{thm:derivative_bilevel_parametric}
	Suppose there exists a dominating measure $\mu_0$ such that for every $\vartheta\in\cV$, $\nu_\vartheta$ has density $p_\vartheta$ with respect to $\mu_0$. If $(u,\vartheta)\mapsto p_\vartheta(u)$ is sufficiently regular, then
	\begin{align}\label{eqn:derivative_bilevel_parametric}
		\nabla \sfJ(\vartheta)=\int_\cU \bigl(\wht{\cG}^{(\nu_\vartheta)}(u)-\cG^\star(u)\bigr)
		\bigl(\lambda^{(\nu_\vartheta)}(\wht{\cG}^{(\nu_\vartheta)})\bigr)(u)
		\bigl(\nabla_\vartheta\log p_\vartheta(u)\bigr) \nu_\vartheta(du)\,.
	\end{align}
\end{theorem}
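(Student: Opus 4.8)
The plan is to combine the first-variation formula from Prop.~\ref{prop:derivative_bilevel_prob} with the chain rule induced by the parametrization $\vartheta\mapsto\nu_\vartheta$ and the score-function identity $\nabla_\vartheta p_\vartheta = p_\vartheta\,\nabla_\vartheta\log p_\vartheta$. Prop.~\ref{prop:derivative_bilevel_prob} already performs the hard analytic work: via the adjoint equation (Lem.~\ref{lem:adjoint_equation}) and the envelope theorem, the stationarity of $\cL$ in the slots $\cG$ and $\lambda$ eliminates the implicit dependence of $\wht{\cG}^{(\nu)}$ and $\lambda^{(\nu)}$ on $\nu$, leaving only the explicit $\nu$-dependence. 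Its output is the first variation $(D\cJ)(\nu)=(\wht{\cG}^{(\nu)}-\cG^\star)\lambda^{(\nu)}(\wht{\cG}^{(\nu)})$, understood as the function on $\cU$ that pairs against measure perturbations. I therefore treat $(D\cJ)(\nu)$ as given and only push it through the finite-dimensional parametrization.

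First I would fix $\vartheta\in\cV$ and a direction $\dot\vartheta\in\cV$, and differentiate along a smooth curve $t\mapsto\vartheta(t)$ with $\vartheta(0)=\vartheta$ and velocity $\dot\vartheta$ at $t=0$. Writing $\nu_\vartheta(du)=p_\vartheta(u)\,\mu_0(du)$ and differentiating under the integral sign (this is where the regularity hypothesis enters), the induced perturbation of the measure is the signed measure $\chi(du)=\bigl(\nabla_\vartheta p_\vartheta(u)\cdot\dot\vartheta\bigr)\,\mu_0(du)$, which has zero total mass because $\int_\cU p_\vartheta\,d\mu_0\equiv 1$ forces $\int_\cU\nabla_\vartheta p_\vartheta\,d\mu_0=0$. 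Hence $\chi$ is a legitimate tangent vector to $\sP_2(\cU)$ along the parametrized family, and the duality pairing defining $D\cJ$ yields
\begin{align*}
	\frac{d}{dt}\Big|_{t=0}\sfJ(\vartheta(t))
	= \int_\cU (D\cJ)(\nu_\vartheta)(u)\,\bigl(\nabla_\vartheta p_\vartheta(u)\cdot\dot\vartheta\bigr)\,\mu_0(du)\,.
\end{align*}

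Next I would invoke the score-function identity $\nabla_\vartheta p_\vartheta=p_\vartheta\,\nabla_\vartheta\log p_\vartheta$ (valid $\mu_0$-a.e.\ on the support of $p_\vartheta$) to convert the integral against $\mu_0$ into one against $\nu_\vartheta$:
\begin{align*}
	\frac{d}{dt}\Big|_{t=0}\sfJ(\vartheta(t))
	= \Bigl\langle \int_\cU (D\cJ)(\nu_\vartheta)(u)\,\nabla_\vartheta\log p_\vartheta(u)\,\nu_\vartheta(du),\; \dot\vartheta\Bigr\rangle_\cV\,.
\end{align*}
Since $\dot\vartheta$ is arbitrary and the left-hand side equals $\ip{\nabla\sfJ(\vartheta)}{\dot\vartheta}_\cV$ by definition of the gradient, I read off $\nabla\sfJ(\vartheta)$ as the vector-valued integral, and substituting the explicit form $(D\cJ)(\nu_\vartheta)=(\wht{\cG}^{(\nu_\vartheta)}-\cG^\star)\lambda^{(\nu_\vartheta)}(\wht{\cG}^{(\nu_\vartheta)})$ gives exactly \eqref{eqn:derivative_bilevel_parametric}. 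As a consistency check, the zero-mass property makes the formula insensitive to the additive-constant ambiguity inherent in a first variation: shifting $D\cJ$ by a constant $c$ contributes $c\int_\cU\nabla_\vartheta\log p_\vartheta\,d\nu_\vartheta=c\,\nabla_\vartheta\!\int_\cU p_\vartheta\,d\mu_0=0$.

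The main obstacle is rigorously justifying the interchange of $\nabla_\vartheta$ and the $\mu_0$-integral, together with the integrability of the integrand in \eqref{eqn:derivative_bilevel_parametric} against $\nu_\vartheta$; this is precisely what ``sufficiently regular'' must encode. Concretely I would require a local domination bound—an integrable envelope for $\nabla_\vartheta p_\vartheta$ uniform for $\vartheta$ in a neighborhood—so that dominated convergence licenses differentiation under the integral, and I would control $\int_\cU \abs{(D\cJ)(\nu_\vartheta)(u)}\,\abs{\nabla_\vartheta\log p_\vartheta(u)}\,\nu_\vartheta(du)$ using boundedness of the kernel $\kappa$ (hence of $\wht{\cG}^{(\nu_\vartheta)}$ and $\lambda^{(\nu_\vartheta)}$ through the representer formulas) and finiteness of the score's second moment under $\nu_\vartheta$. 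For standard families such as Gaussians these conditions hold on bounded parameter sets, so the formal computation above is fully justified.
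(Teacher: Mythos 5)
Your proposal is correct and follows essentially the same route as the paper's proof: take the first variation $(D\cJ)(\nu)=(\wht{\cG}^{(\nu)}-\cG^\star)\lambda^{(\nu)}(\wht{\cG}^{(\nu)})$ from Prop.~\ref{prop:derivative_bilevel_prob} as given, differentiate under the integral to identify the tangent measure $(\nabla_\vartheta\log p_\vartheta)\,\nu_\vartheta(du)$ (this is precisely Lem.~\ref{lem:diff_under_int}), and apply the chain rule through the parametrization $\vartheta\mapsto\nu_\vartheta$; your directional-derivative phrasing is equivalent to the paper's Fr\'echet chain rule. The zero-mass observation resolving the additive-constant ambiguity of the first variation is a nice consistency check not present in the paper, but it does not alter the argument.
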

In Thm.~\ref{thm:derivative_bilevel_parametric}, $\mu_0$ is typically the Lebesgue measure if $\cU\subseteq \R^d$. \textbf{SM}~\ref{app:details_alg_bilevel} contains the proof.  

\begin{example}[Gaussian parametrization]\label{ex:gaussian_log_density}
	Consider the Gaussian model $\nu_\vartheta\defeq\normal(m_\vartheta,\cC_\vartheta)$ with $\mu_0$ being the Lebesgue measure on $\cU=\R^d$.
	Explicit calculations with Gaussian densities show that
	\begin{align}\label{eqn:example_gaussian_density}
		\nabla_\vartheta \log p_\vartheta(u) = -\nabla_\vartheta\Phi(u,\vartheta),\text{ where } \Phi(u,\vartheta)\defeq \tfrac{1}{2}\norm[\big]{\cC_\vartheta^{-1/2}(u-m_\vartheta)}_2^2 + \tfrac{1}{2}\log\det(\cC_\vartheta).
	\end{align}
\end{example}

The parameters $ \vartheta $ of $ \nu_\vartheta $ can now be updated using any gradient-based optimization method. For instance, a simple gradient flow in Hilbert space is given by $ \dot{\vartheta} = -\nabla \sfJ(\vartheta)$.

\paragraph{Implementation.}
A gradient descent scheme for the bilevel optimization problem~\eqref{eqn:exact_bilevel} is summarized in Alg.~\ref{alg:bilevel}. Alternative gradient-based optimization methods can also be explored. We emphasize that the derived bilevel gradients are exact; in implementation, the only source of error arises from discretization. Discretization details for the gradient~\eqref{eqn:derivative_bilevel_parametric} are provided in~\textbf{SM}~\ref{app:details_alg_bilevel}; in particular, we ``empiricalize'' all expectations.

\begin{algorithm}[tb]%
	\caption{Training Data Design via Gradient Descent on a Parametrized Bilevel Objective}\label{alg:bilevel}
	\begin{algorithmic}[1]
		\setstretch{0.5}
		\State \textbf{Initialize:} Parameter $\vartheta^{(0)}$, step sizes $\{t_k\}$
		\For{$k = 0, 1, 2, \ldots$}
		\State \textbf{Gradient step:} With $\nabla \sfJ$ as in \eqref{eqn:derivative_bilevel_parametric}, update the training distribution's parameters via
		\begin{align*}
			\vartheta^{(k+1)} = \vartheta^{(k)}-t_k\nabla \sfJ(\vartheta^{(k)})
		\end{align*}
		\If {stopping criterion is met} \State Return $\vartheta^{(k+1)}$ then \textbf{break} 
		\EndIf
		\EndFor
	\end{algorithmic}
\end{algorithm}

\subsection{Upper-bound minimization}\label{sec:alg_ub}
Although Alg.~\ref{alg:bilevel} is implementable assuming query access to the ground truth map $u\mapsto \cG^\star(u)$, it is so far restricted to linear vector spaces $\cH$. This subsection relaxes problem~\eqref{eqn:exact_bilevel} by instead minimizing an upper bound on the objective, which is no longer a bilevel optimization problem. We then design an alternating descent algorithm that applies to any hypothesis class $\cH$ (e.g., neural networks and neural operators). Empirically, the alternating algorithm converges fast, often in a few iterations.

\paragraph{An alternating scheme.} 
Recall the upper bound \eqref{eqn:basic_inequality_average} from Cor.~\ref{cor:basic_inequality_average},
which holds for any $\mu\in\sP_2(\cU)$. The particular choice $\mu\defeq\nu$, where $\nu$ is a candidate training data distribution, delivers a valid upper bound for the objective functional $\sE_{\Q}(\widehat{\cG}^{(\nu)})$ in~\eqref{eqn:exact_bilevel}. That is,
\begin{align}\label{eqn:valid_upper_bound}
	\sE_{\Q}(\widehat{\cG}^{(\nu)})\leq \E_{u\sim\nu}\norm[\big]{\cG^\star(u)-\widehat{\cG}^{(\nu)}(u)}^2_{\cY}
	+ \sqrt{\E_{\nu'\sim\Q} c^2\bigl(\cG^\star,\widehat{\cG}^{(\nu)},\nu,\nu'\bigr)}\sqrt{\E_{\nu'\sim\Q}\sfW_2^2(\nu,\nu')}\,.
\end{align}
It is thus natural to approximate the solution to~\eqref{eqn:exact_bilevel} by the minimizing distribution of the right-hand side of \eqref{eqn:valid_upper_bound}. The dependence of the factor $c$ in Prop.~\ref{prop:ood} on the Lipschitz constant of $\widehat{\cG}^{(\nu)}$ is hard to capture in practice. One way to circumvent this difficulty is to further assume an upper bound $\Lip(\widehat{\cG}^{(\nu)})\leq R$ uniformly in $\nu$ for some constant $R>0$. We define $c_R(\cG^\star,\nu,\nu')$ to be $c(\cG^\star,\widehat{\cG}^{(\nu)},\nu,\nu')$ with all instances of $\Lip(\widehat{\cG}^{(\nu)})$ replaced by $R$. The assumption of a uniform bound on the Lipschitz constant is valid for certain chosen model classes $\cH$, e.g., kernels with Lipschitz feature maps or Lipschitz-constrained neural networks \citep{gouk2021regularisation}. Although such Lipschitz control in deep neural networks is challenging, there has been substantial recent progress on architectures with explicit Lipschitz constraints, e.g., spectral normalization, orthogonal/Householder layers, 1-Lipschitz activations \citep{miyato2018spectral,anil2019sorting,wang2020orthogonal,murari2025approximation}. Thus, expressive neural models with controlled Lipschitz behavior can be made practical.

We denote the candidate set of training distributions by $\sfP\subseteq\sP_2(\cU)$. Finally, the upper-bound minimization problem becomes
\begin{align}\label{eqn:AMAobjective}
	\inf_{(\nu, \cG)\in \mathsf{P}\times\cH} \biggl\{\E_{u\sim\nu}\norm[\big]{\cG^\star(u)-\cG(u)}^2_{\cY} + \sqrt{\E_{\nu'\sim\Q} c_R^2\bigl(\cG^\star,\nu,\nu'\bigr)}\sqrt{\E_{\nu'\sim\Q}\sfW_2^2(\nu,\nu')}\biggr\}\,.
\end{align}

Note that since only the first term of~\eqref{eqn:AMAobjective} depends on the variable $\cG$, the minimizing $\cG$ is given by $\widehat{\cG}^{(\nu)}$, as defined previously. This observation motivates the alternating scheme outlined in Alg.~\ref{alg:alter}. 

\begin{algorithm}[tb]%
	\caption{Training Data Design via Alternating Model Fitting and Distribution Update Steps}\label{alg:alter}
	\begin{algorithmic}[1]
		\setstretch{0.5}
		\State \textbf{Initialize:} Training data distribution $\nu^{(0)}$, estimate of the constant $R>0$
		\For{$k = 0, 1, 2, \ldots$}
		\State \textbf{Model training:} To obtain the trained model $\wht{\cG}^{(k)}$, solve the optimization problem 
		\begin{equation*}
			\widehat{\cG}^{(k)} \gets \argmin_{\cG\in \cH} \, \E_{u \sim \nu^{(k)}}\norm[\big]{\cG^\star(u) - \cG(u)}_{\cY}^2
		\end{equation*}
		\label{line:model}
		\State \textbf{Distribution update:} Update the training distribution by solving
		\begin{equation*}
			\nu^{(k+1)} \gets \argmin_{\nu\in \sfP} \Bigl\{\E_{u\sim\nu}\norm[\big]{\cG^\star(u)-\wht{\cG}^{(k)}(u)}^2_{\cY} + \sqrt{\E_{\nu'\sim\Q} c_R^2\bigl(\cG^\star,\nu,\nu'\bigr)}\sqrt{\E_{\nu'\sim\Q}\sfW_2^2(\nu,\nu')}\Bigr\}
		\end{equation*}
		\label{line:dist}
		\If {stopping criterion is met} \State Return $(\wht{\cG}^{(k)}, \nu^{(k+1)})$ then \textbf{break} \EndIf
		\EndFor
	\end{algorithmic}
\end{algorithm}

It is important to note that not all alternating algorithms have convergence guarantees, which requires a detailed understanding of the associated fixed-point operator. In contrast, the proposed Alg.~\ref{alg:alter} comes with the following convergence guarantee.
\begin{proposition}[monotonicity]
	Under Alg.~\ref{alg:alter}, the objective in~\eqref{eqn:AMAobjective} is always non-increasing.
\end{proposition}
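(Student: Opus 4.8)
The plan is to recognize Alg.~\ref{alg:alter} as a standard two-block alternating minimization and to invoke the usual telescoping of sufficient-decrease inequalities. Write $F(\nu,\cG)$ for the bracketed objective functional in~\eqref{eqn:AMAobjective}, so that
\begin{align*}
	F(\nu,\cG)=\E_{u\sim\nu}\norm[\big]{\cG^\star(u)-\cG(u)}^2_{\cY}+\sqrt{\E_{\nu'\sim\Q} c_R^2\bigl(\cG^\star,\nu,\nu'\bigr)}\,\sqrt{\E_{\nu'\sim\Q}\sfW_2^2(\nu,\nu')}\,.
\end{align*}
The structural observation driving the proof is that the second (regularizer) term depends only on $\nu$ and on the fixed data $(\cG^\star,\Q,R)$, but \emph{not} on the hypothesis $\cG$. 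This is precisely the consequence of having replaced the model-dependent quantity $\Lip(\widehat{\cG}^{(\nu)})$ by the uniform bound $R$ in $c_R$: the resulting factor $c_R(\cG^\star,\nu,\nu')$ carries no dependence on $\cG$.

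First I would recast each half-step of the $k$-th iteration as an exact block minimization of $F$. For the distribution update in Line~\ref{line:dist}, this is immediate from the definition: $\nu^{(k+1)}\in\argmin_{\nu\in\sfP}F(\nu,\widehat{\cG}^{(k)})$, so that $F(\nu^{(k+1)},\widehat{\cG}^{(k)})\le F(\nu^{(k)},\widehat{\cG}^{(k)})$. For the model training in Line~\ref{line:model}, the minimization is ostensibly over the data-fit term alone; however, since with $\nu=\nu^{(k)}$ fixed the regularizer term is a constant in $\cG$, adding it back does not change the minimizer. Hence $\widehat{\cG}^{(k)}\in\argmin_{\cG\in\cH}F(\nu^{(k)},\cG)$ as well, and in particular, applying the model step at iteration $k+1$, we get $F(\nu^{(k+1)},\widehat{\cG}^{(k+1)})\le F(\nu^{(k+1)},\widehat{\cG}^{(k)})$.

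Chaining the two inequalities then yields
\begin{align*}
	F\bigl(\nu^{(k+1)},\widehat{\cG}^{(k+1)}\bigr)\le F\bigl(\nu^{(k+1)},\widehat{\cG}^{(k)}\bigr)\le F\bigl(\nu^{(k)},\widehat{\cG}^{(k)}\bigr)\,,
\end{align*}
so the sequence of objective values $\{F(\nu^{(k)},\widehat{\cG}^{(k)})\}_k$ is non-increasing, which is the claim. I do not anticipate a genuine obstacle: the only point requiring care is the decoupling observation above, namely that the data-fit minimization in Line~\ref{line:model} coincides with minimizing the full objective $F(\nu^{(k)},\slot)$, after which the result reduces to the textbook monotonicity of alternating minimization. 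A minor caveat is that one must assume the two $\argmin$ updates are attained, as the algorithm presupposes; were attainment not guaranteed, the same chain would hold verbatim with each ``$\in\argmin$'' read as ``achieves a value no larger than that of any competitor,'' still delivering the monotone non-increase.
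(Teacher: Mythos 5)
Your proposal is correct and follows essentially the same route as the paper's own proof: both rest on the observation that the regularizer term in~\eqref{eqn:AMAobjective} is independent of $\cG$ (since $c_R$ replaces $\Lip(\widehat{\cG}^{(\nu)})$ by the uniform bound $R$), so the model-training step minimizes the full objective with $\nu$ fixed, and the distribution step minimizes it with $\cG$ fixed. Your write-up merely makes the standard two-block alternating-minimization telescoping explicit where the paper states it informally.
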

\begin{proof}
	Line~\ref{line:model} ensures that the first term of the objective \eqref{eqn:AMAobjective} does not increase, while keeping the second term constant. Next, Line~\ref{line:dist} minimizes both terms. Iterating this argument, we deduce that the algorithm maintains a non-increasing objective value for all $k$.
\end{proof}

\begin{remark}[connection to Wasserstein barycenters]\label{rmk:barycenter_wass}
	If the set $\sfP$ contains distributions with uniformly bounded second moments and the first term on the right-hand side of \eqref{eqn:valid_upper_bound} is uniformly bounded over $\sfP$, then the optimal value of \eqref{eqn:AMAobjective} is bounded above by the value of another minimization problem whose solution in $\sP_2(\cU)$ is the $2$-Wasserstein barycenter of $\Q\in\sP_2(\sP_2(\cU))$~\citep[Thm~3.7, p.~444]{backhoff2022bayesian}. An analogous bound is even easier to verify for the $\sfW_1$ result \eqref{eqn:w1_ood_bound}. Thus, the $\cG^\star$-dependent solution of the proposed problem \eqref{eqn:AMAobjective} over $\sfP$ (and hence also the solution of Eqn.~\eqref{eqn:exact_bilevel}) is always \emph{at least as good} as the barycenter, which only depends on $\sfP$ and $\Q$.
\end{remark}

\paragraph{Implementation.} To implement Alg.~\ref{alg:alter}, we simply replace expectations with finite averages over empirical samples. The number of samples per iteration is a hyperparameter. Line~\ref{line:model} in~Alg.~\ref{alg:alter} is the usual model training step, i.e., Eqn.~\eqref{eqn:erm}, while the distribution update step in Line~\ref{line:dist} can be achieved in several ways. One way involves interacting particle systems derived from (Wasserstein) gradient flows over $\mathsf{P}\subseteq\sP_2(\cU)$. Alternatively, if the distribution is parametrized, we can optimize over the parametrization, e.g., transport maps or Gaussian means and covariances. In this case, one can apply global optimization methods or Euclidean gradient-based solvers.

To apply gradient-based optimization algorithms, we need the derivative of the objective in \eqref{eqn:AMAobjective} with respect to $\nu$. Up to constants, the objective takes the form $F\colon\sfP\subseteq \sP_2(\cU)\to\R$, where
\begin{align}\label{eqn:AMA_obj_surrogate}
	\nu\mapsto F(\nu)\defeq \E_{u\sim\nu}f(u)
	+ \sqrt{1 + \sfm_2(\nu)}\sqrt{\E_{\nu'\sim\Q}\sfW_2^2(\nu,\nu')}
\end{align}
and $f\colon \cU\to\R$. The next result differentiates $F$; the proof is in \textbf{SM}~\ref{app:details_alg_ub}.
\begin{proposition}[derivative: infinite-dimensional case]\label{prop:derivative_alter_prob}
	Let $F$ be as in \eqref{eqn:AMA_obj_surrogate} with $f$ and $\sfP$ sufficiently regular. Then for every $\nu\in\sfP$ and $u\in\cU$, it holds that
	\begin{align}\label{eqn:derivative_alter_prob}
		\bigl(DF(\nu)\bigr)(u)=f(u)+\frac{1}{2}\sqrt{\frac{\E_{\nu'\sim\Q}\sfW_2^2(\nu,\nu')}{1 + \sfm_2(\nu)}}\norm{u}_{\cU}^2 + \sqrt{\frac{1 + \sfm_2(\nu)}{\E_{\nu'\sim\Q}\sfW_2^2(\nu,\nu')}} \E_{\nu'\sim\Q}\phi_{\nu,\nu'}(u)
	\end{align}
	in the sense of duality. The potential $\phi_{\nu,\nu'}\colon \cU\to\R$ satisfies $\phi_{\nu,\nu'}= \tfrac{1}{2}\norm{\slot}_{\cU}^2 - \varphi_{\nu,\nu'}$, where $\nabla\varphi_{\nu,\nu'} = T_{\nu\to\nu'}$ and $T_{\nu\to\nu'}$ is the $\sfW_2$-optimal transport map from $\nu$ to $\nu'$.
\end{proposition}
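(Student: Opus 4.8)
The plan is to recognize $(DF(\nu))$ as the first variation (flat derivative) of $F$, represented ``in the sense of duality'' as a function on $\cU$, exactly as in Prop.~\ref{prop:derivative_bilevel_prob}: for admissible perturbations $\chi\in\sfP$ one asks for the function satisfying $\frac{d}{d\epsilon}\big|_{\epsilon=0^+}F\bigl((1-\epsilon)\nu+\epsilon\chi\bigr)=\int_\cU (DF(\nu))(u)\,(\chi-\nu)(du)$, which is well defined only up to additive constants since $\chi-\nu$ integrates constants to zero. I would split $F=L+A\cdot B$ into the linear term $L(\nu)\defeq\int_\cU f\,d\nu$, the moment factor $A(\nu)\defeq\sqrt{1+\sfm_2(\nu)}$, and the transport factor $B(\nu)\defeq\sqrt{\E_{\nu'\sim\Q}\sfW_2^2(\nu,\nu')}$, compute the first variation of each, and recombine through the chain and product rules for flat derivatives.

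The first two factors are immediate. As $L$ is linear in $\nu$, its first variation is $f$. As $A^2(\nu)=1+\int_\cU\norm{u}_\cU^2\,\nu(du)$ is also linear in $\nu$, its first variation is $\norm{\slot}_\cU^2$, so the scalar chain rule for $s\mapsto\sqrt{s}$ gives $(DA(\nu))(u)=\norm{u}_\cU^2/(2A(\nu))$.

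The crux is the first variation of $B^2(\nu)=\E_{\nu'\sim\Q}\sfW_2^2(\nu,\nu')$. For each fixed $\nu'$ I would invoke Kantorovich duality to write $\tfrac12\sfW_2^2(\nu,\nu')=\sup\{\int\phi\,d\nu+\int\psi\,d\nu'\}$ over admissible conjugate potential pairs, and then apply an envelope (Danskin-type) theorem in the same spirit as the adjoint/envelope computation used earlier in the paper: since the supremum is attained and the optimal potential $\phi_{\nu,\nu'}$ is unique up to an additive constant (annihilated by integration against $\chi-\nu$), differentiating the optimal value in $\nu$ reduces to differentiating only the explicit linear dependence, giving $\tfrac12\frac{\delta}{\delta\nu}\sfW_2^2(\nu,\nu')=\phi_{\nu,\nu'}$. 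Brenier's theorem then identifies this Kantorovich potential for the quadratic cost via $\phi_{\nu,\nu'}=\tfrac12\norm{\slot}_\cU^2-\varphi_{\nu,\nu'}$ with $\nabla\varphi_{\nu,\nu'}=T_{\nu\to\nu'}$, exactly as stated. Interchanging the first variation with the expectation $\E_{\nu'\sim\Q}$ (a dominated-convergence step) yields $(D(B^2)(\nu))=2\,\E_{\nu'\sim\Q}\phi_{\nu,\nu'}$, and the square-root chain rule gives $(DB(\nu))(u)=\E_{\nu'\sim\Q}\phi_{\nu,\nu'}(u)/B(\nu)$.

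Assembling via the product rule $D(AB)=(DA)\,B+A\,(DB)$ and substituting $B/(2A)=\tfrac12\sqrt{\E_{\nu'\sim\Q}\sfW_2^2(\nu,\nu')/(1+\sfm_2(\nu))}$ together with $A/B=\sqrt{(1+\sfm_2(\nu))/\E_{\nu'\sim\Q}\sfW_2^2(\nu,\nu')}$ reproduces \eqref{eqn:derivative_alter_prob}. I expect the main obstacle to be the rigorous envelope-theorem differentiation of $\nu\mapsto\sfW_2^2(\nu,\nu')$ in the Hilbert-space setting: one must secure existence, enough regularity, and essential uniqueness of the Kantorovich potential so that the envelope argument applies and the flat derivative is well defined, and then justify exchanging differentiation with the $\Q$-average. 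This is precisely what the ``sufficiently regular'' hypothesis on $f$ and $\sfP$ is there to supply; the remaining chain- and product-rule manipulations are routine.
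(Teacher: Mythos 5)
Your proposal is correct and follows essentially the same route as the paper: linearity of the first term, the chain rule for the square roots of the linear moment functional and of $\nu\mapsto\E_{\nu'\sim\Q}\sfW_2^2(\nu,\nu')$, the identification of the first variation of the squared Wasserstein distance with (twice) the Kantorovich potential, Brenier's theorem for the stated form of $\phi_{\nu,\nu'}$, and dominated convergence to exchange the $\Q$-expectation with differentiation. The only difference is that you sketch a Kantorovich-duality/envelope argument for the first variation of $\sfW_2^2$, whereas the paper simply cites the corresponding result of Santambrogio (Prop.~7.17), of which your sketch is the standard proof.
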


Taking the gradient of $ u\mapsto DF(\nu)(u) $ in \eqref{eqn:derivative_alter_prob} yields the Wasserstein gradient, which can be utilized in Wasserstein gradient flow discretizations; see Rmk.~\ref{rmk:wgf}. In the case of parametric distributions, combining Prop.~\ref{prop:derivative_alter_prob} with the chain rule yields gradient descent schemes similar to those introduced in Sec.~\ref{sec:alg_bilevel}. Again, the derived gradients are exact and require only discretization for their numerical evaluation. For further details, see Cor.~\ref{cor:wass_grad} and Lem.~\ref{lem:grad_parameter_gaussian} and its special case (Rmk.~\ref{rmk:mean_param} in \textbf{SM}~\ref{app:details_alg_ub}).

\section{Numerical results}\label{sec:numeric}
This section implements Alg.~\ref{alg:bilevel} (the bilevel gradient descent algorithm) for function approximation using kernel methods and Alg.~\ref{alg:alter} (the alternating minimization algorithm) for several operator learning tasks using DeepONets. Comprehensive experiment details and settings are collected in \textbf{SM}~\ref{app:details_numeric}.

\paragraph{Bilevel gradient descent algorithm.}
We instantiate a discretization of Alg.~\ref{alg:bilevel} on several function approximation benchmarks. The test functions $\cG^\star\in \{g_i\}_{i=1}^4$ each map $\R^d$ to $\R$ and are comprised of the Sobol G ($g_1$) and Friedmann functions ($g_2, g_3$), and a kernel expansion ($g_4$). We use the kernel framework from Sec.~\ref{sec:alg_bilevel} with $\cH$ the RKHS of the squared exponential kernel with a fixed scalar bandwidth hyperparameter. Also, $\Q$ is an empirical measure with $10$ atoms centered at fixed Gaussian measures realized by standard normally distributed means and Wishart distributed covariance matrices. A cosine annealing scheduler determines the gradient descent step sizes. The training distribution $\nu_\vartheta\defeq\normal(m,\cC)$ is parametrized by the pair $\vartheta=(m,\cC)$, which we optimize using Alg.~\ref{alg:bilevel}.

\begin{table}[tb]%
	\caption{\small Function approximation with bilevel Alg.~\ref{alg:bilevel}. Reported values are mean $\mathsf{Err}$ for kernel regressors trained with $1024$ labeled data samples from the optimized $\nu_\vartheta$ (ours) from Alg.~\ref{alg:bilevel} with $1000$ iterations vs. other adaptive (aCoreSet) and nonadaptive (Normal, Barycenter, Mixture, Uniform, nCoreSet) sampling distributions. Two standard deviations from the mean $\mathsf{Err}$ over $10$ independent runs is also reported. Also see Fig.~\ref{fig:bilevel_row_all_1000}.}
	\label{tab:bilevel_func}
	\centering
	\resizebox{\textwidth}{!}{%
		\begin{tabular}{lcccccc}
			\toprule
			& $g_1~(d=2)$ &  $g_2~(d=5)$ & $g_2~(d=8)$ & $g_3~(d=4)$ & $g_3~(d=5)$ & $g_4~(d=10)$\\
			\midrule
			\emph{Ours} & $\textbf{0.040}\pm0.010$ &  $\textbf{0.169}\pm0.007$ & $\textbf{0.556}\pm0.007$ & $\textbf{0.276}\pm0.019$ &  $0.558\pm0.009$ & $0.025\pm0.000$ \\
			Normal & $0.808\pm0.015$ & $0.811\pm0.010$ & $0.949\pm0.003$ & $0.937\pm0.005$ &  $0.947\pm0.003$ & $0.029\pm0.001$\\
			Barycenter & $0.546\pm0.025$ &  $0.514\pm0.012$ & $0.782\pm0.005$ & $0.771\pm0.008$ &  $0.784\pm0.010$ & $0.021\pm0.001$ \\
			Mixture & $0.308\pm0.011$ &  $0.372\pm0.015$ & $0.712\pm0.011$ & $0.471\pm0.037$ &  $0.658\pm0.011$ & $0.022\pm0.001$ \\
			Uniform & $0.983\pm0.001$ &  $0.971\pm0.002$ & $0.993\pm0.000$ & $0.976\pm0.000$ &  $1.000\pm0.000$ & $0.121\pm0.001$ \\
			nCoreSet & $0.185\pm0.000$ &  $0.247\pm0.002$ & $0.622\pm0.002$ & $0.295\pm0.002$ &  $\textbf{0.494}\pm0.002$ & $0.098\pm0.012$ \\
			aCoreSet & $0.501\pm0.060$ &  $0.538\pm0.028$ & $0.782\pm0.018$ & $0.793\pm0.048$ &  $0.848\pm0.034$ & $\textbf{0.013}\pm0.000$ \\
			\bottomrule
		\end{tabular}
	}
\end{table}

Tab.~\ref{tab:bilevel_func} shows that models trained on the final iterate $\nu_\vartheta$ of Alg.~\ref{alg:bilevel} nearly always outperforms those trained on adaptive or nonadaptive baseline distributions in terms of the root relative average OOD squared error $\mathsf{Err}\defeq (\sE_\Q(\wht{\cG}^{(\nu)})/\sE_\Q(0))^{1/2}$. Nonadaptive (nCoreSet) and adaptive (aCoreSet) pool-based coresets from active learning \citep{musekamp2025active} slightly outperform $\nu_\vartheta$ on the $g_3$ $(d=5)$ and $g_4$ $(d=10)$ test problems, respectively. Here, the fixed pool of size $5000$ consists of the empirical samples of samples from $\Q$ available to Alg.~\ref{alg:bilevel}.
Overall, the performance gap between our bilevel method and the baselines can close depending on the complexity of $\Q$ or $\cG^\star$. Fig.~\ref{fig:bilevel_row_sobolg} specializes to the $\cG^\star=g_1$ case. Although fitting the model on data sampled from the precomputed optimal $\nu_\vartheta$ uniformly outperforms baseline distribution sampling for large enough $N$, when corrected for the ``offline'' cost of finding the optimal $\nu_\vartheta$, the gains are more modest~(Fig.~\ref{fig:bilevel_row_sobolg}, right). Moreover, Fig.~\ref{fig:bilevel_row_sobolg} (center) shows that the nCoreSet method based on raw kernel feature maps performs better than our $\nu_\vartheta$ and empirical $\Q$ mixture for very small sample sizes, but eventually saturates to the mixture error level as $N$ approaches the pool size. In contrast, the error of $\nu_\vartheta$ continues to decrease.

\begin{figure}[tb]%
	\centering
	\resizebox{\textwidth}{!}{%
		\subfloat{
			\includegraphics[width=0.3\textwidth]{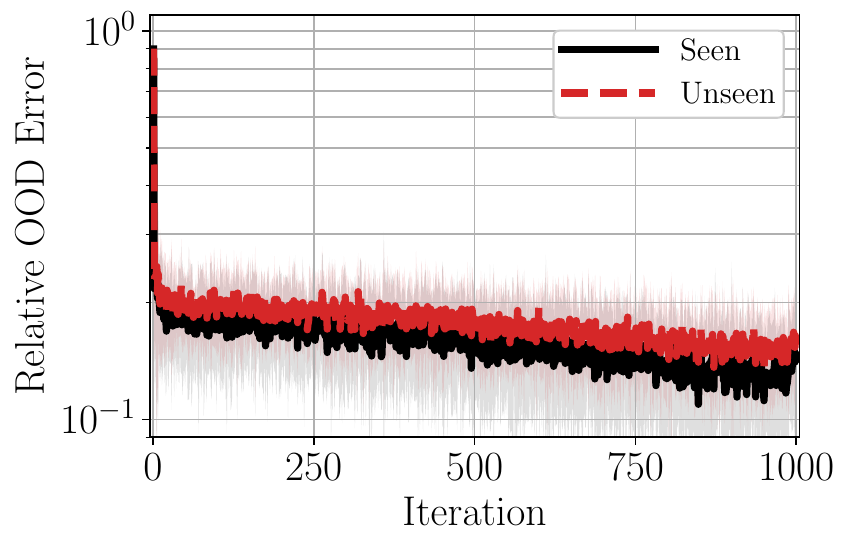}
		}%
		\subfloat{
			\includegraphics[width=0.3\textwidth]{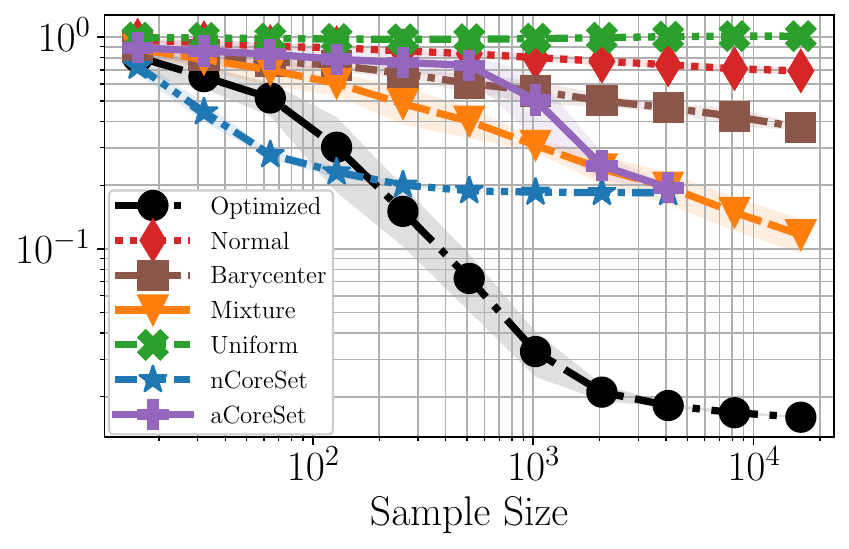}
		}%
		\subfloat{
			\includegraphics[width=0.3\textwidth]{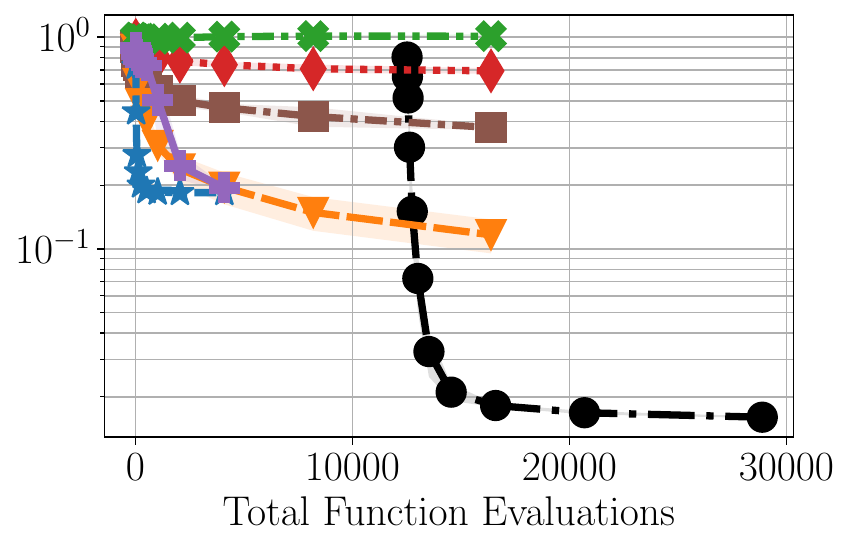}
		}
	}
	\caption{\small Alg.~\ref{alg:bilevel} applied to 
		ground truth $g_1\colon\R^2\to\R$. (Left) Evolution of $\mathsf{Err}$ over $1000$ iterations ($N=250$ samples per step) of gradient descent. (Center) $\mathsf{Err}$ of model trained on $N$ samples from optimized $\nu_\vartheta$ (ours) vs. initial normal $ \normal(m_0, I_2) $, empirical $ \Q $ $ \sfW_2 $-barycenter, empirical $ \Q $ mixture, $ \Unif([0,1]^2) $, and two pool-based coresets. (Right) Same as center, except incorporating the additional function evaluation cost incurred from Alg.~\ref{alg:bilevel}. Shading represents two standard deviations away from the mean $\mathsf{Err}$ over $10$ independent runs.
	}
	\label{fig:bilevel_row_sobolg}
\end{figure}

\paragraph{Alternating minimization algorithm.}
We apply the alternating scheme Alg.~\ref{alg:alter} to several important operator learning problems from the physical sciences. Let $\Omega \subset \mathbb{R}^2$ denote a bounded domain with boundary $\partial \Omega$. Consider the elliptic PDE $\nabla \cdot (a(x) \nabla u(x)) = 0$, where $x \in \Omega$, $a=a(x) > 0$ is the conductivity, and $u=u(x)$ is the PDE solution. Given a mean-zero Neumann boundary condition (BC) $g = a \frac{\partial u}{\partial \textbf{n}}|_{\partial \Omega}$, one can obtain a unique PDE solution $u$ and the resulting Dirichlet data $f \defeq u|_{\partial \Omega}$. For a fixed conductivity $a$, we define a Neumann-to-Dirichlet (NtD) map $\Lambda_a^{-1}\colon g\mapsto f$, which is also used in the EIT problem~\citep{dunlop2016bayesian}; see also Sec.~\ref{sec:intro}. On the other hand, the map $\cG\colon a\mapsto u$ with a fixed Neumann or Dirichlet BC (and nonzero source) is the Darcy flow parameter-to-solution operator, which plays a crucial role in geothermal energy extraction and carbon storage applications. Another important operator maps the scattering coefficient to the spatial-domain density in the radiative transport equation, which has key applications in optical tomography and atmospheric science. Last, we consider the viscous Burgers' equation with small viscosity; it models steep shock-like phenomena in fluids. Learning surrogate operators for these mappings can enable rapid simulations for efficient inference, design, and control.

\emph{NtD map learning.}
In this example, we aim to find the optimal training distribution for learning the NtD map using the alternating minimization algorithm (AMA). The domain $\Omega = (0,1)^2$ and the conductivity $a$ is fixed. The meta test distribution is a mixture of $K$ components $\mathbb{Q}=\frac{1}{K}\sum_{k=1}^{K}\delta_{\nu_k'}$, where each $\nu_k'$ corresponds to a distribution over $g$, the Neumann BC. Fig.~\ref{fig:AMA-NtD} illustrates the reduction in both average relative OOD error \eqref{eqn:ood_accuracy} and relative AMA loss~\eqref{eqn:AMAobjective} over $10$ iterations of Alg.~\ref{alg:alter}. 

\begin{figure}[tb]%
	\centering
	\resizebox{\textwidth}{!}{%
		\subfloat{
			\includegraphics[width=0.3\textwidth]{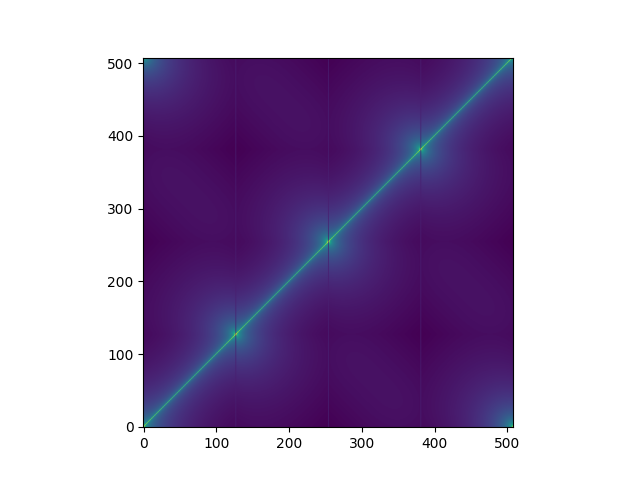}
		}
		\subfloat{
			\includegraphics[width=0.375\textwidth]{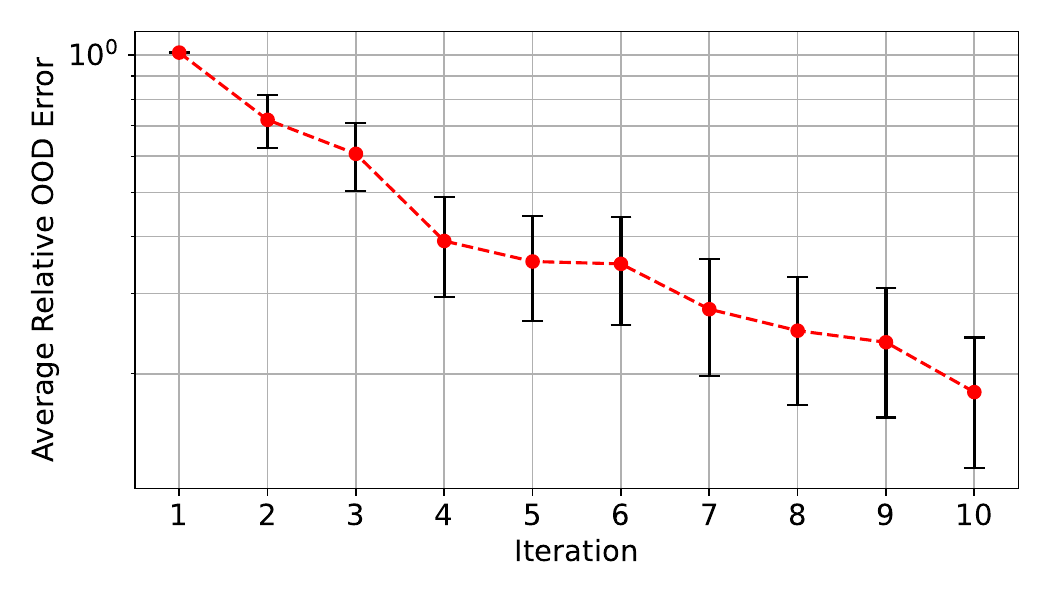}
		}
		\subfloat{
			\includegraphics[width=0.375\textwidth]{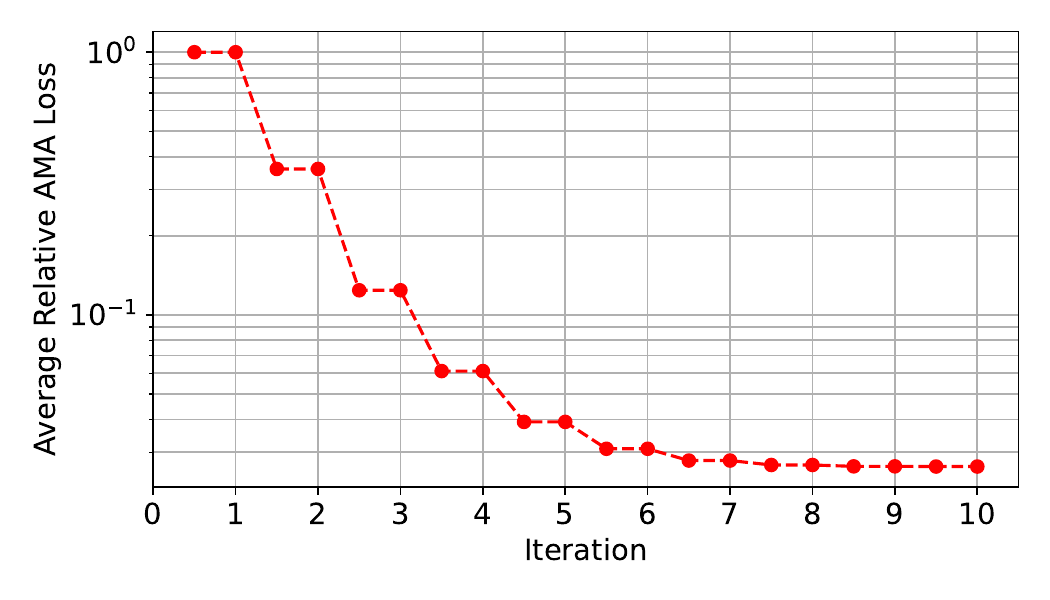}
		}
	}
	\caption{\small (Left) Matrix approximation of NtD map. (Center) After $80$ independent runs, decay of the average relative OOD error of the model when trained on the optimal distribution identified at each iteration of Alg.~\ref{alg:alter}; a $95$\% confidence interval of the true relative OOD error is provided at each iteration. (Right) Decay of average AMA loss defined in~\eqref{eqn:AMAobjective} vs. iteration relative to the same loss at initialization.}
	\label{fig:AMA-NtD}
\end{figure} 

\emph{Darcy flow forward map learning.} We use AMA to identify the optimal training distribution for learning the Darcy flow parameter-to-solution map. Here, $\Q$ is as before except now each $\nu_k'$ represents a test distribution over the conductivity. Fig.~\ref{fig:AMA-Darcy} illustrates improved predictions of the neural operator trained on distributions obtained from Alg.~\ref{alg:alter} across various iterations.

\begin{figure}[tb]%
	\centering
	\subfloat{
		\includegraphics[width=0.98\textwidth]{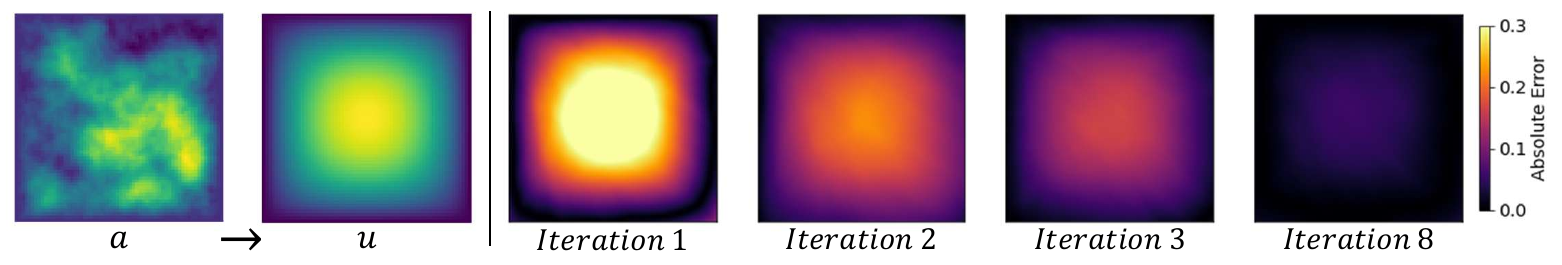}
	}
	\caption{\small The first two images show the test conductivity $a$ and true PDE solution $u$. The next four images show the absolute error $\abs{u_\text{pred}-u}$ of the model trained on the training distribution from iterations 1, 2, 3, and 8.}
	\label{fig:AMA-Darcy}
\end{figure} 

\emph{Radiative transport operator learning.} We apply a nonparametric particle-based version of AMA to find the optimal training distribution for learning the solution operator of the radiative transport equation. We vary the Knudsen number which corresponds to different physical regimes. Fig.~\ref{fig:rte_ood_error_comparison} shows how the model trained on the optimal empirical measure compares to the initial model and a benchmark model trained on the empirical $\Q$ mixture. When the sample size is $N=120$ particles, particle-based AMA reduces the average relative OOD error by 88\% compared to its initial value for a Knudsen number of 8.

\begin{figure}[tb]%
	\centering
	\resizebox{\textwidth}{!}{%
		\subfloat[$\varepsilon=1/8$]{
			\includegraphics[width=0.33\textwidth]{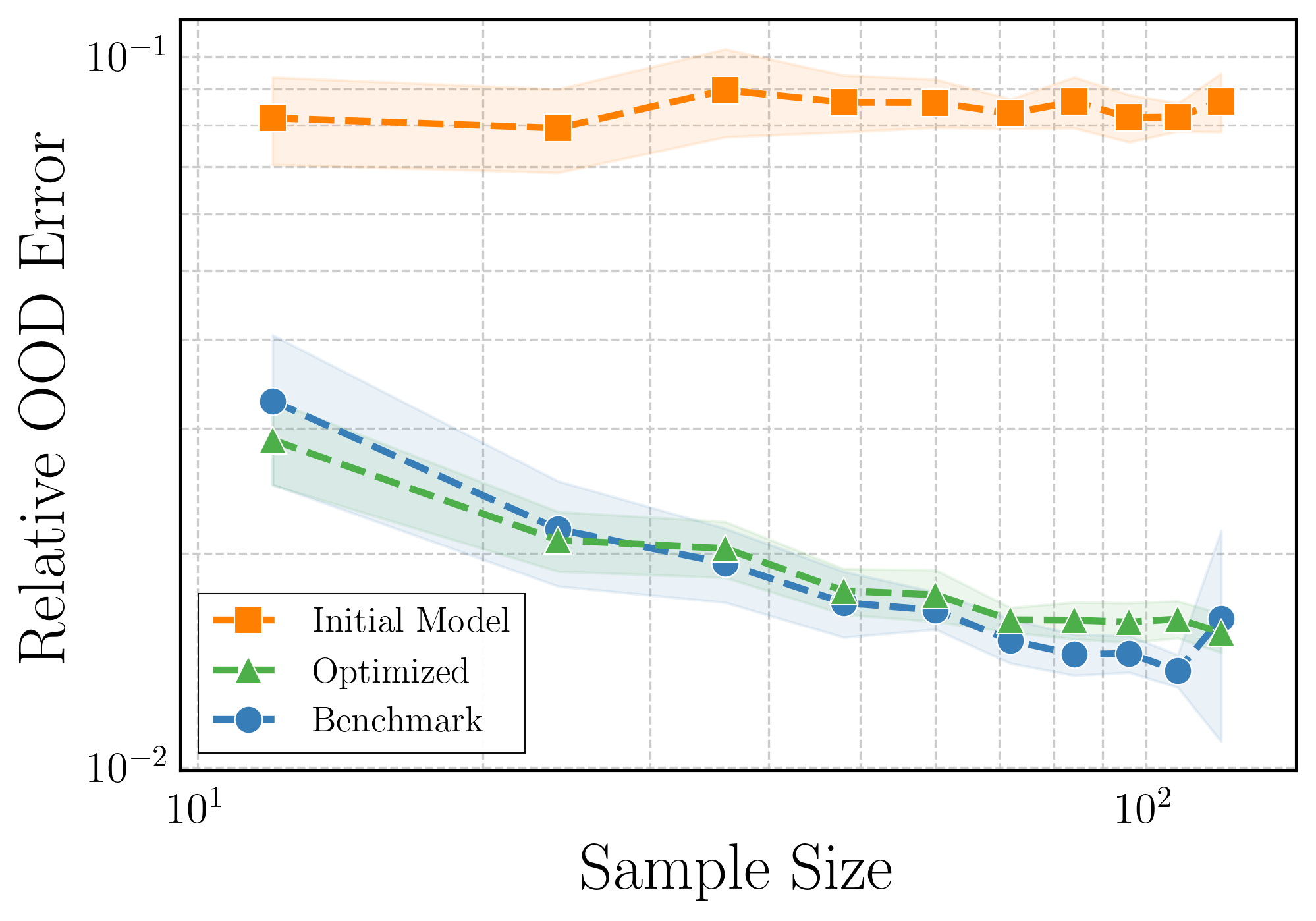}
		}%
		\subfloat[$\varepsilon=2$]{
			\includegraphics[width=0.33\textwidth]{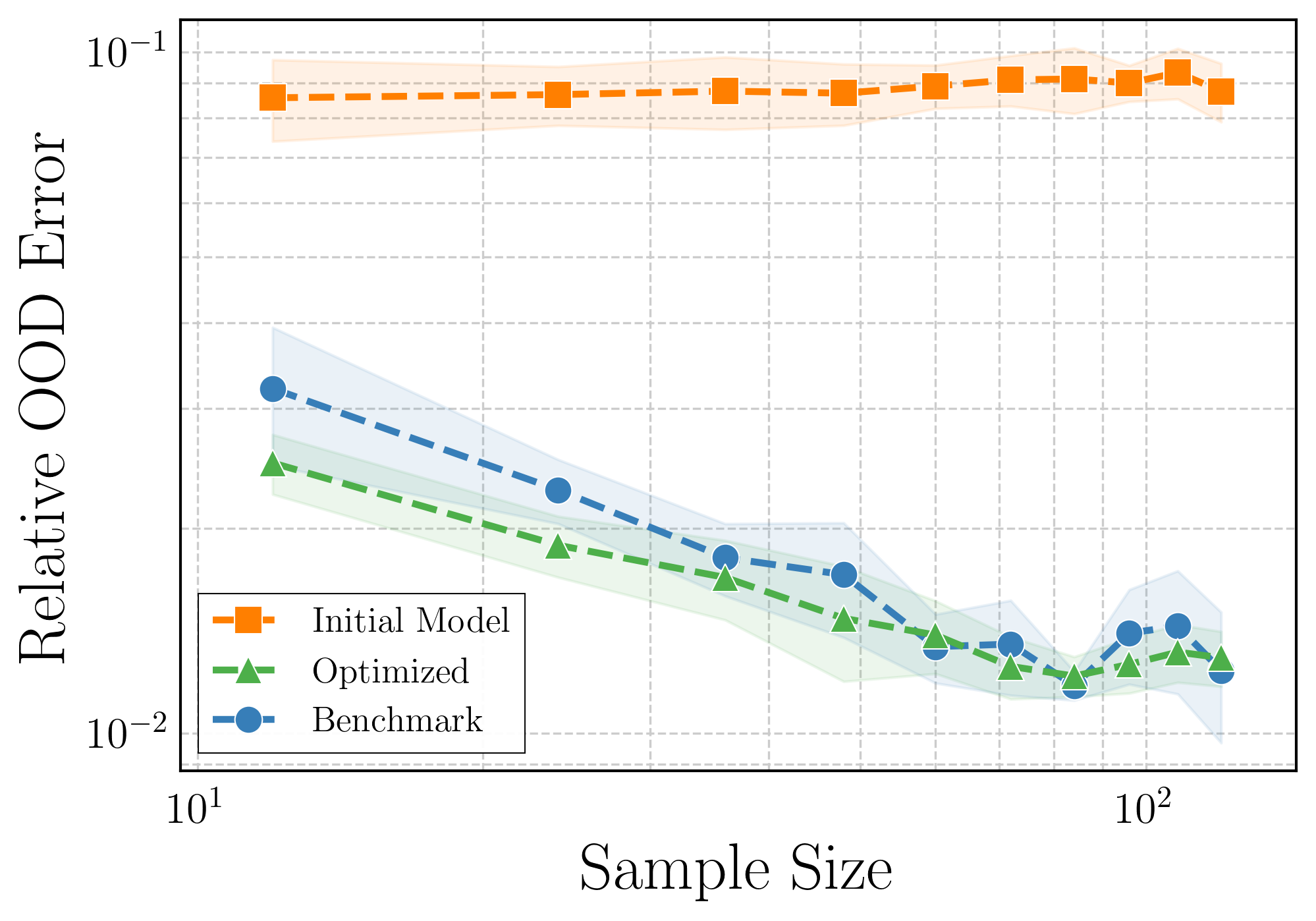}
		}%
		\subfloat[$\varepsilon=8$]{
			\includegraphics[width=0.33\textwidth]{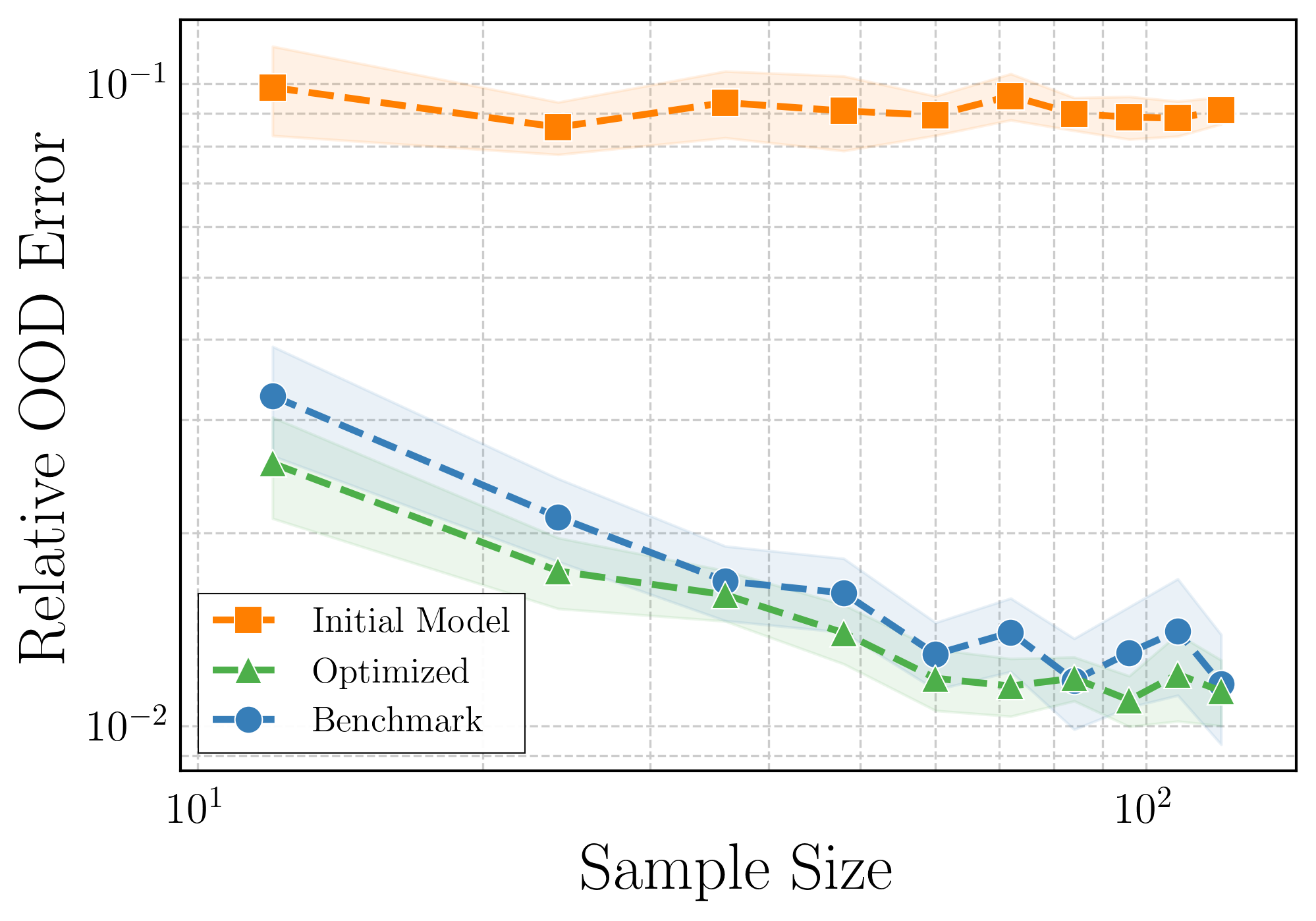}
		}
	}
	\caption{\small Relative OOD error vs. sample size $N$ for learning the radiative transport solution operator. Results are shown for Knudsen numbers $\varepsilon \in \{1/8, 2, 8\}$. For each $N$, each panel displays 95\% confidence intervals over 10 trials for three DeepONet models trained for 5000 epochs: the initial model, the model after particle-based AMA, and a benchmark trained the test distribution mixture $\nu_\Q\defeq \frac{1}{3}\sum_{k=1}^3\nu_k'$.}
	\label{fig:rte_ood_error_comparison}
\end{figure}

\emph{Burgers' equation operator learning.} We apply AMA to learn the operator that maps an initial condition $u(\slot,0)$ to the corresponding solution $u(\slot,1)$ of the viscous Burgers' equation. The left and center panels of Fig.~\ref{fig:burgers} illustrate an example input-output pair for this operator. We compare AMA against two pool-based active learning methods: Query-by-Committee (QbC) and CoreSet~\citep{musekamp2025active}. Using a fixed pool, we first compute the optimal training distribution from AMA and then draw $N$ samples from it to train the DeepONet model. The same pool is used to obtain models trained with each active learning strategy. The right panel of Fig.~\ref{fig:burgers} reports the relative OOD performance as $N$ increases. The AMA-trained model performs comparably to the active learning baselines and, in the mid-range of sample sizes, exceeds their performance. When the sample size is small, all methods begin with randomly selected points, leading to similar performance. For large sample sizes, the training sets seem to overlap, causing the methods to converge. In the intermediate regime, however, our model consistently outperforms the other two approaches. Moreover, AMA provides the additional advantage of generating arbitrarily many high-quality training samples beyond the fixed pool, yielding not only a stronger learned operator but also a robust training distribution.

\begin{figure}[tb]%
	\centering
	\resizebox{\textwidth}{!}{%
		\subfloat{
			\includegraphics[width=0.3\textwidth]{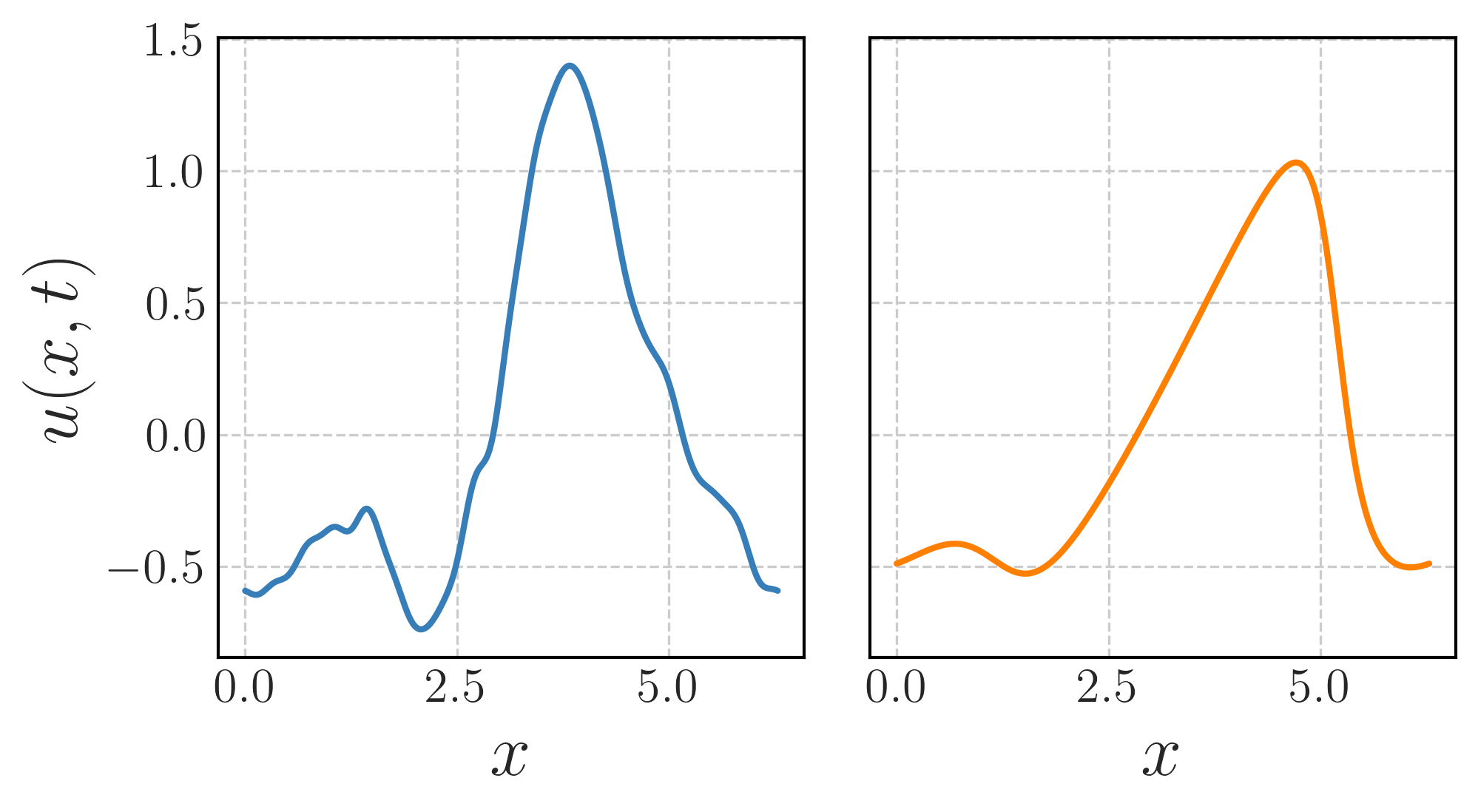}
		}
		\subfloat{
			\includegraphics[width=0.3\textwidth]{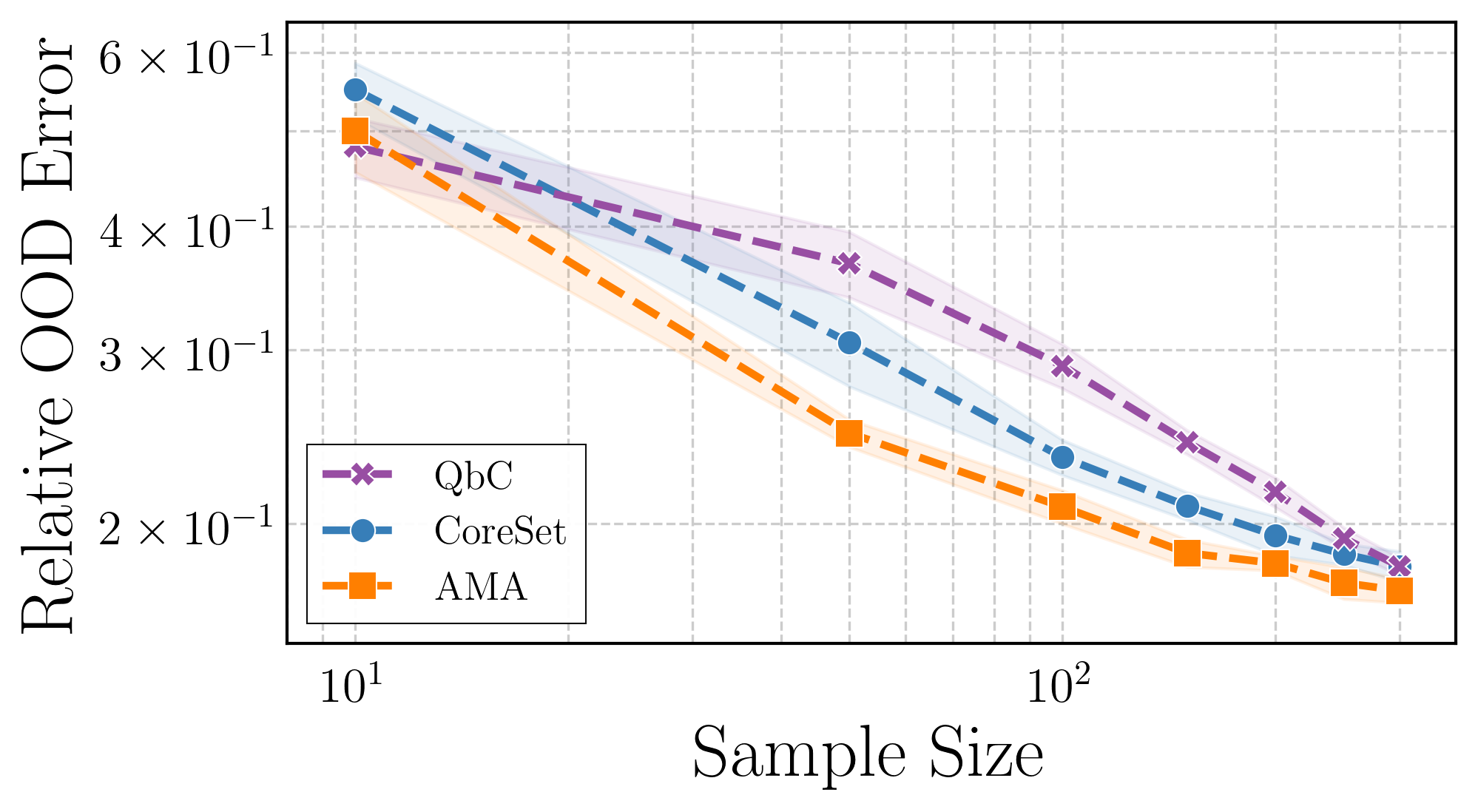}
		}
	}
	\caption{\small (Left) Example of a random initial condition $u(\slot,0)$ for the Burgers' equation. (Center) Corresponding solution $u(\slot,1)$ of the Burgers' equation at time $1$ for the given initial condition. (Right) Relative OOD error vs. sample size $N$ for learning the mapping from the initial condition $u(\slot,0)$ to the solution $u(\slot,1)$. After computing the optimal training distribution using a fixed data pool, the model labeled ``AMA'' is trained on $N$ samples drawn from this distribution. This model is compared with two models trained using pool-based active learning methods, QbC and CoreSet. All models share the same DeepONet architecture and differ only in the training data received. The experiment is repeated 10 times, and the plot shows the 95\% confidence interval for the expected relative OOD error.}
	\label{fig:burgers}
\end{figure} 

\section{Conclusion}\label{sec:conclusion}
This paper shows how optimizing training distributions using theoretically founded bilevel and alternating optimization algorithms produces more accurate and robust models for function approximation and PDE operator learning. Numerical experiments instate the methods with parametric distribution families and nonparametric particle-based discretizations of Wasserstein gradient flows. Though promising, the proposed approach is limited by distribution family expressivity, global optimization, and line search steps. Future work will address these limitations by incorporating transport maps and importance reweighting techniques. Understanding cost versus accuracy trade-offs for sample allocation and iteration complexity of the methodology is another important future challenge.

\section*{Code availability}
The code used to produce the
numerical results and figures in this paper are available at
\begin{center}
    \url{https://github.com/nicolas-guerra/learning-where-to-learn}\,.
\end{center}

\begin{ack}
N.G. and Y.Y. are partially supported by the U.S. National Science Foundation (NSF) through award DMS-2409855. N.H.N. acknowledges support from NSF award DMS-2402036 and from a Klarman Fellowship through Cornell University’s
College of Arts \& Sciences. The authors further acknowledge the Amazon Web Services cloud computing grant provided by Cornell University's Center for Data Science for Enterprise and Society. The authors are also grateful to Kevin Miller for helpful comments on a previous version of the manuscript.
\end{ack}

{
    \bibliographystyle{abbrvnat}
    \bibliography{references}
}


\appendix

\clearpage
\begin{center}
	{\bf Supplementary Material for:} \\ 
	\inserttitle \\
\end{center}

\section{Expanded overview of related work}\label{app:related_work_extra}
Beyond the SciML literature reviewed in Sec.~\ref{sec:intro}, our work also sits alongside research on learning under distribution shift~\citep{mallinar2024minimum}. This includes meta-learning, domain generalization and adaptation, and active learning and optimal experimental design (OED). We now explain these connections.

Meta-learning aims to learn models or priors that adapt rapidly across tasks drawn from a task distribution~\citep{hospedales2021meta,mouli2023metaphysica}. While related in spirit, meta-learning typically treats the data-generating distribution per task as given; in contrast, we \emph{optimize} the training distribution itself to improve average performance across deployment regimes. Both frameworks involve a two-level structure: an inner optimization that optimizes the model over a training dataset, and an outer optimization that adjusts a higher-level quantity beyond model parameters, guided by generalization performance. In optimization-based meta-learning methods, the outer loop adjusts the model initialization or update rules, allowing for rapid adaptation across new tasks. On the other hand, the outer loop in our work optimizes the training data distribution.

Domain generalization seeks representations that transfer across multiple source domains without access to target data~\citep{wang2022generalizing,deng2024domain,gagnon2023woods,eastwood2022probable,singh2024domain,wei2022open}. Instead, domain adaptation leverages (often unlabeled) target data to reduce source-target discrepancy~\citep{fang2023generalizing}. Both paradigms assume a fixed training set and focus on achieving robustness during training. Common benchmarks for unsupervised domain adaptation only assume access to samples from a single target domain ($\nu'$ in $\Q=\delta_{\nu'}$) and do not have the ability to generate new labels~\citep{setinek2025simshift}; this setting is orthogonal to our work. Robust domain generalization methods, such as distributionally robust optimization and invariant risk minimization~\citep{lin2022distributionally,rahimian2022frameworks,arjovsky2019invariant,kamath2021does,lin2022bayesian,zhou2022sparse}, modify objectives or constraints to ensure stability across domains given the training data. In contrast, we \emph{design} the training distribution \emph{before} learning: we retain standard training objectives but \emph{actively shape} where data are drawn for improved sample complexity and robustness to distribution shift in regression problems. We do so with bilevel (or alternating) optimization in the space of probability measures, guided by theoretical bounds. In particular, the minimizing training data distribution depends not just on the chosen model architecture and the accuracy metric, but also on the target map itself.

The perspective presented in our paper naturally connects to active learning and OED. Active learning selects informative samples from a given data pool to improve label efficiency~\citep{deng2023counterfactual,jain2025train,musekamp2025active,settles2009active,yang2023not}; however, it typically operates \emph{reactively} over a fixed unlabeled set during training. As a result, the support of the underlying distribution remains fixed as well. We instead study the \emph{a priori} optimization of the sampling distribution itself. Like our work, there are active learning algorithms that are adaptive to labels \citep{zhu2019robust}. However, this approach is pool-based, involves joint optimization instead of bilevel optimization, applies only to classification instead of regression, and adopts a worst-case instead of average-case perspective. On the other hand, OED explicitly reasons about where to make measurements, often through A- or D-optimality or Bayesian and sequential criteria~\citep{atkinson2007optimum,fedorov2010optimal,foster2019variational,huan2024optimal}. The classical OED problem emphasizes parameter inference and uncertainty reduction. In contrast, we target distributions that minimize the average OOD prediction error for function and operator learning, a design principle not commonly prioritized in OED. Nevertheless, our work shares similarities with modern measure-centric OED~\citep{hellmuth2025data,jin2024optimal}, in which the design variable is a probability measure over the design space instead of a fixed number of sample points. In the nonlinear case, measure-based OED requires the formulation of a bilevel optimization problem \citep{jin2024continuous}. The difference is that our inner loop involves the supervised training of a machine learning model, while for OED it typically involves solving a PDE-constrained inverse problem.

Other related work includes ridge leverage score and randomly pivoted Cholesky sampling for kernel regression~\citep{rudi2018fast,chen2025randomly} as well as Bayesian optimization for sequential point-by-point dataset updates~\citep{shahriari2015taking}. These techniques provide principled but typically label-independent sampling rules that do not depend on the target map to be learned. Instead, the present paper proposes adaptive, target-dependent, and supervised distribution design methods.

\section{Additional mathematical background}\label{app:details_prelim}
This appendix provides more mathematical background for the present work.

We begin with two comments about Gaussian distributions on Hilbert spaces. First, we remark that the $2$-Wasserstein metric between Gaussian measures has a closed-form expression.
\begin{example}[$2$-Wasserstein metric between Gaussians]\label{ex:w2_gaussian}
	Let $\mu=\normal(m_\mu,\cC_\mu)$ and $\nu=\normal(m_\nu,\cC_\nu)$ be Gaussian measures on a real separable Hilbert space $(H, \ip{\cdot}{\cdot},\norm{\slot})$. Then \citet{gelbrich1990formula} gives
	\begin{align}\label{eqn:w2_gaussian}
		\sfW_2^2(\mu,\nu)=\norm{m_\mu-m_\nu}^2 + \tr[\Big]{\cC_\mu+\cC_\nu-2\bigl(\cC_\mu^{1/2}\cC_\nu\cC_\mu^{1/2}\bigr)^{1/2}}\,.
	\end{align}
	The square root in~\eqref{eqn:w2_gaussian} is the unique self-adjoint positive semidefinite (PSD) square root of a self-adjoint PSD linear operator. The trace of an operator $A\colon H\to H$ is given by $\tr{A}=\sum_{j\in\N}\ip{e_j}{Ae_j}$ for any orthonormal basis $\{e_j\}_{j\in\N}$ of $H$. If $A$ is PSD, then we can also write $\tr{A}=\norm{A^{1/2}}_{\HS}^2$, where $B\mapsto \norm{B}_{\HS}^2\defeq\sum_{j\in\N}\norm{B e_j}^2$ is the squared Hilbert--Schmidt norm.
\end{example}
We frequently invoke \eqref{eqn:w2_gaussian} in this paper. When $H$ is infinite-dimensional, the \emph{Karhunen--Lo\`eve expansion} (KLE) is often useful in calculations~\citep[Thm.~6.19, p.~533]{stuart2010inverse}.
\begin{lemma}[KLE of Gaussian measure]\label{lem:app_kle_gaussian}
	Let $m\in (H, \ip{\cdot}{\cdot},\norm{\slot})$ and $\cC\colon H\to H$ be self-adjoint, PSD, and trace-class. Let $\{\lambda_j\}_{j\in\N}\subseteq\R_{\geq 0}$ be a non-increasing rearrangement of the eigenvalues of $\cC$ and $\{\phi_j\}_{j\in\N}\subseteq H$ be the corresponding orthonormal set of eigenvectors. If $\{\xi_j\}_{j\in\N}$ is an independent and identically distributed sequence with $\xi_1\sim\normal(0,1)$, then
	\begin{align}\label{eqn:kle_gaussian}
		\Law\Biggl(m + \sum_{j\in\N}\sqrt{\lambda_j}\xi_j\phi_j\Biggr) = \normal(m,\cC)\,.
	\end{align}
\end{lemma}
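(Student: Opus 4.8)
The plan is to prove the identity in two stages: first establish that the random series defines a genuine square-integrable $H$-valued random element, and then identify its law by computing its characteristic functional and matching it against that of $\normal(m,\cC)$.

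For the first stage, I would work with the partial sums $S_N\defeq m+\sum_{j=1}^N\sqrt{\lambda_j}\,\xi_j\phi_j$ and show they form a Cauchy sequence in $L^2(\Omega;H)$. Using the orthonormality of $\{\phi_j\}$ together with $\E[\xi_j^2]=1$, for $M<N$ one computes
\begin{align*}
\E\norm{S_N-S_M}^2 = \E\norm[\Big]{\sum_{j=M+1}^N\sqrt{\lambda_j}\,\xi_j\phi_j}^2 = \sum_{j=M+1}^N\lambda_j\,,
\end{align*}
where the cross terms drop out by orthonormality. The trace-class hypothesis $\sum_{j\in\N}\lambda_j=\tr{\cC}<\infty$ forces this tail to vanish as $M,N\to\infty$, so $S_N$ converges in $L^2(\Omega;H)$ to a limit $S$ (and, by a standard martingale or Itô--Nisio argument, almost surely as well, though the $L^2$ limit already suffices). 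This $S$ is the well-defined $H$-valued random variable whose law the lemma concerns. This is the step where the trace-class assumption is essential and, in my view, the main technical point.

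For the second stage, I would recall that a Borel probability measure on a separable Hilbert space is uniquely determined by its characteristic functional $h\mapsto\E[\exp(\iunit\ip{S}{h})]$, and that $\normal(m,\cC)$ has characteristic functional $h\mapsto\exp(\iunit\ip{m}{h}-\tfrac{1}{2}\ip{\cC h}{h})$. Fixing $h\in H$ and using the $L^2$ (hence in-distribution) convergence $S_N\to S$ to pass to the limit, I would expand
\begin{align*}
\ip{S}{h}=\ip{m}{h}+\sum_{j\in\N}\sqrt{\lambda_j}\,\ip{\phi_j}{h}\,\xi_j\,,
\end{align*}
which is an affine combination of independent centered Gaussians, hence real Gaussian with mean $\ip{m}{h}$ and variance $\sum_{j\in\N}\lambda_j\ip{\phi_j}{h}^2$. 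By independence of the $\xi_j$ the characteristic functional factorizes, giving
\begin{align*}
\E\bigl[\exp(\iunit\ip{S}{h})\bigr]=\exp\Bigl(\iunit\ip{m}{h}-\tfrac{1}{2}\sum_{j\in\N}\lambda_j\ip{\phi_j}{h}^2\Bigr)\,.
\end{align*}

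Finally, the spectral theorem for the compact self-adjoint operator $\cC$ gives $\cC h=\sum_{j\in\N}\lambda_j\ip{\phi_j}{h}\phi_j$ for every $h\in H$, whence $\ip{\cC h}{h}=\sum_{j\in\N}\lambda_j\ip{\phi_j}{h}^2$ (any component of $h$ lying in $\Ker\cC$ contributes nothing to either side). Substituting this identity shows the characteristic functional of $S$ equals that of $\normal(m,\cC)$, and uniqueness of the characteristic functional then yields $\Law(S)=\normal(m,\cC)$, as claimed. Beyond the convergence estimate, the only points requiring care are the interchange of limit and expectation in the characteristic functional (handled by dominated convergence, since $\abs{\exp(\iunit\,\slot)}\le 1$ and $\ip{S_N}{h}\to\ip{S}{h}$ in probability) and the invocation of the infinite-dimensional uniqueness theorem.
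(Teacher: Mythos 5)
Your proof is correct and is the standard argument: $L^2(\Omega;H)$ convergence of the partial sums from the trace-class hypothesis, followed by identification of the limit law via the characteristic functional and the spectral decomposition of $\cC$ (with appropriate care about $\Ker\cC$). The paper does not prove this lemma itself but cites \citet[Thm.~6.19, p.~533]{stuart2010inverse}, whose proof proceeds along essentially the same lines, so there is nothing to reconcile.
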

Generalizing Lem.~\ref{lem:app_kle_gaussian}, there exist Karhunen--Lo\`eve expansions of non-Gaussian measures. In this case, the $\{\xi_j\}_{j\in\N}$ are non-Gaussian, have mean zero and variance one, and are only pairwise uncorrelated instead of independent and identically distributed (i.i.d.); still, the KLE ``diagonalizes'' the probability measure.

Next, we turn our attention to general probability measures on abstract spaces. This allows us to define probability measures \emph{over} the space of probability measures.
\begin{definition}[spaces of probability measures]\label{def:space_of_prob}
	Let $(X,\mathcal{B})$ be a measurable space.  
	The space of probability measures on $X$ is defined by
	\begin{align*}
		\sP(X)\defeq\set{\mu\colon \mathcal{B}\to[0,1]}{\mu\text{ is a probability measure}}\,.
	\end{align*}
	It is equipped with the $\sigma$‑algebra $\mathcal{B}(\sP(X))$ generated by the evaluation maps $e_A\colon\sP(X)\to[0,1] $ given by $e_A(\mu)\defeq \mu(A)$ for all $A\in \mathcal{B}$. Similarly, the space of probability measures on $\sP(X)$ is
	\begin{align*}
		\sP\bigl(\sP(X)\bigr)\defeq\set{\Q\colon \mathcal{B}\bigl(\sP(X)\bigr)\to[0,1]}{\Q\text{ is a probability measure}}\,.
	\end{align*}
	Now suppose that $(\sP(X),\sfd)$ is a metric space. For $p\in [1,\infty)$, we say that $\Q\in\sP_p(\sP(X))$ if $\Q\in\sP(\sP(X))$ and $\E_{\mu\sim\Q}[\sfd^p(\mu,\mu_0)]<\infty$ for some (and hence for all) $\mu_0\in\sP(X)$.
\end{definition}

Def.~\ref{def:space_of_prob} can be related to the notion of random measure from Def.~\ref{def:rand_measure} as follows. Let $(\Omega,\mathcal{F},\mathbb{P})$ be a probability space.  
Recall that a random probability measure on $X$ is a measurable map
\begin{align*}
	\Xi\colon(\Omega,\mathcal{F}) \to \bigl(\sP(X), \mathcal{B}(\sP(X))\bigr)\,.
\end{align*}
Then the \emph{law} (or \emph{distribution}) of $\Xi$ is the pushforward measure $\Law(\Xi) \in \sP(\sP(X))$ defined by
\begin{align*}
	\Law(\Xi)(E)\defeq 
	\mathbb{P}\bigl(\set{\omega\in\Omega}{\Xi(\omega)\in E}\bigr)\qfa E\in\mathcal{B}\bigl(\sP(X)\bigr)\,.
\end{align*}
In this paper, we typically do not distinguish between a random measure and its law.\footnote{Both concepts are closely related to \emph{Markov kernels}~\citep{kallenberg2017random}.}

We now provide three examples of random measures.
\begin{example}[empirical measure]
	Given i.i.d.~random variables $X_1, \ldots, X_N$ taking values in measurable space $(M, \mathcal{B})$ with $X_1\sim\mu$, the \emph{empirical measure} is the random probability measure $
	\mu^{(N)} \defeq \frac{1}{N} \sum_{n=1}^N \delta_{X_n}$, where $\delta_{x}\in\sP(M)$ is the Dirac measure at $x\in M$. For each measurable set $A \in \mathcal{B}$, the number $\mu^{(N)}(A)\in[0,1]$ counts the fraction of samples that fall in $A$. The measure $\mu^{(N)}$ is random because the samples $\{X_n\}_{n=1}^N$ are random.
\end{example}

\begin{example}[Dirichlet process]
	Let $\alpha$ be a finite nonnegative measure on $(M, \mathcal{B})$. A \emph{Dirichlet process} $\mu \sim \mathrm{DP}(\alpha)$ is a random probability measure such that for every $k$ and every measurable partition $\{A_1, \ldots, A_k\}$ of $M$, it holds that
	\begin{align*}
		\bigl(\mu(A_1), \ldots, \mu(A_k)\bigr) \sim \mathsf{Dirichlet}\bigl(\alpha(A_1), \ldots, \alpha(A_k)\bigr)\,.
	\end{align*}
\end{example}

\begin{example}[random Gaussian measures]\label{ex:random_gaussian}
	In the infinite-dimensional state space setting, a canonical example is that of a random Gaussian measure defined through the KLE~\eqref{eqn:kle_gaussian} of its mean and covariance. Let $(H, \ip{\cdot}{\cdot},\norm{\slot})$ be infinite-dimensional, $\{\phi_j\}$ be an orthonormal basis of $H$, and $\{\lambda_j\}\subset\R_{\geq 0}$ and $\{\sigma_{ij}\}\subset\R_{\geq 0}$ be summable. Define the random measure $\normal(m,\cC)$ by
	\begin{align}\label{eqn:kle_random_gaussian}
		m\overset{\mathrm{Law}}{\defeq} \sum_{j\in\N} \sqrt{\lambda_j}\xi_j\phi_j\qa \cC\defeq\cR^*\cR,\qw \cR\overset{\mathrm{Law}}{\defeq} \sum_{i\in\N}\sum_{j\in\N}\sqrt{\sigma_{ij}}\zeta_{ij}\phi_i\otimes\phi_j\,.
	\end{align}
	The $\{\xi_j\}$ and $\{\zeta_{ij}\}$ are zero mean, unit variance, pairwise-uncorrelated real random variables. The outer product in \eqref{eqn:kle_random_gaussian} is defined by $(a\otimes b) c\defeq \ip{b}{c}a$. Since $\cR$ is a KLE in the space of Hilbert--Schmidt operators on $H$, it follows that $\cC=\cR^*\cR$ is a valid self-adjoint PSD trace-class covariance operator with probability one. Since $\{\phi_j\}$ is fixed, the degrees of freedom in $\Q\defeq\Law(\normal(m,\cC))$ are the choice of eigenvalues $\{\lambda_j\}$ and $\{\sigma_{ij}\}$ and the distribution of the coefficients $\{\xi_j\}$ and $\{\zeta_{ij}\}$. Typically the coefficients are chosen i.i.d.~from some common distribution such as the standard normal distribution $\normal(0,1)$ or the uniform distribution on the interval $[-\sqrt{3}, \sqrt{3}]$. The preceding construction of $\normal(m,\cC)$ immediately extends to mixtures of Gaussian measures.
\end{example}

To conclude \textbf{SM}~\ref{app:details_prelim}, we connect back to the barycenters described in Rmk.~\ref{rmk:barycenter_wass}.
\begin{definition}[Wasserstein barycenter]\label{def:barycenter_wass}
	Fix $p\in[1,\infty)$. Given a probability measure $\Q\in\sP_p(\sP_p(\cU))$ and a set $\sfP\subseteq\sP_p(\cU)$, a $(p,\sfP)$-Wasserstein barycenter of $\Q$ is any solution of
	\begin{align}
		\inf_{\mu\in\sfP}\E_{\nu\sim\Q}\sfW_p^p(\mu,\nu)\,.
	\end{align}
	If $\sfP=\sP_p(\cU)$, then we simply write $p$-Wasserstein barycenter for brevity.
\end{definition}

When $\Q$ is supported on Gaussian measures, the $2$-Wasserstein barycenter of $\Q$ can be characterized.
\begin{example}[Gaussian barycenter]\label{ex:barycenter_gauss}
	Let $\Q\defeq\Law(\normal(m,\cC))\in\sP_2(\sP_2(\cU))$, where $m\in\cU$ and $\cC\colon \cU\to\cU$ are the random mean vector and random covariance operator of a random Gaussian measure. Then the solution $\wbar{\mu}$ to $\inf_{\mu\in\sP_2(\cU)}\E[\sfW_2^2(\mu,\normal(m,\cC))]$ is the Gaussian $\wbar{\mu}=\normal(\E [m], \wbar{\cC})$ \citep[Thm.~2.4]{alvarez2016fixed}, where $\wbar{\cC}$ is a non-zero PSD solution to the nonlinear equation
	\begin{align}\label{eqn:barycenter_gauss}
		\wbar{\cC} = \E \biggl[\Bigl(\wbar{\cC}^{1/2}\cC \wbar{\cC}^{1/2} \Bigr)^{1/2}\biggr]\,.
	\end{align}
	One can replace the expectation with an empirical sum and employ fixed-point iterations to approximately solve \eqref{eqn:barycenter_gauss} for $\wbar{\cC}$.
\end{example}

\section{Details for section~\ref{sec:theory}:~\nameref*{sec:theory}}\label{app:details_theory}
This appendix provides the proof of Prop.~\ref{prop:ood} and expands on Cor.~\ref{cor:basic_inequality_average} with Ex.~\ref{ex:mixture}. We begin with the proof.
\begin{proof}[Proof of Prop.~\ref{prop:ood}]
	We begin by proving the first assertion, which is similar to the result and proof of \citet[Lem.~7.2, p.~23]{benitez2024out}.
	Let $\phi(u)\defeq \norm{\cG_1(u)-\cG_2(u)}$. For any $u$ and $u'$, the triangle inequality yields
	\begin{align*}
		\abs{\phi(u)-\phi(u')}&\leq \norm{\cG_1(u)-\cG_2(u) - (\cG_1(u')-\cG_2(u'))}\\
		&\leq \norm{\cG_2(u)-\cG_2(u')} + \norm{\cG_1(u)-\cG_1(u'))}\\
		&\leq \bigl(\Lip(\cG_1)+\Lip(\cG_2)\bigr)\norm{u-u'}\,.
	\end{align*}
	Thus, $\Lip(\phi)\leq \Lip(\cG_1)+\Lip(\cG_2)\eqdef C$. By Kantorovich--Rubenstein duality, $\E_{u'\sim\nu'}\phi(u')-\E_{u\sim\nu}\phi(u)\leq C \sfW_1(\nu,\nu')$ as asserted.
	
	For the $\sfW_2$ result, note that $\norm{\cG_i(u)}\leq \Lip(\cG_i)\norm{u}+\norm{\cG_i(0)}$. This and the previous estimate gives
	\begin{align*}
		\phi^2(u')-\phi^2(u)&=\bigl(\phi(u')+\phi(u)\bigr)\bigl(\phi(u')-\phi(u)\bigr)\\
		&\leq \bigl(\norm{\cG_1(u)}+\norm{\cG_1(u')} + \norm{\cG_2(u)} + \norm{\cG_2(u')}\bigr)C\norm{u-u'}\\
		&\leq \bigl(C\norm{u}+C\norm{u'}+2\norm{\cG_1(0)}+2\norm{\cG_2(0)}\bigr)C\norm{u-u'}\,.
	\end{align*}
	Let $\pi$ be the $\sfW_2$-optimal coupling of $\mu$ and $\mu'$. By the Cauchy--Schwarz inequality, the expectation $\E_{u'\sim \mu'}\phi^2(u')-\E_{u\sim\mu}\phi^2(u)=\E_{(u,u')\sim\pi}[\phi^2(u')-\phi^2(u)]$ is bounded above by
	\begin{align*}
		&C\sqrt{\E_{(u,u')\sim\pi}\bigl(C\norm{u}+C\norm{u'}+2\norm{\cG_1(0)}+2\norm{\cG_2(0)}\bigr)^2}\sqrt{\E_{(u,u')\sim\pi}\norm{u-u'}^2}\\
		&\qquad\quad \leq C\sqrt{4C^2\bigl(\sfm_2(\mu)+\sfm_2(\mu')\bigr)+16\norm{\cG_1(0)}^2 + 16\norm{\cG_2(0)}^2}\,\sfW_2(\mu,\mu')\,.
	\end{align*}
	The last line is due to $(a+b+c+d)^2\leq 4(a^2+b^2+c^2+d^2)$. The second assertion is proved.
\end{proof}

Next, we apply Cor.~\ref{cor:basic_inequality_average} in the context of Gaussian mixtures.
\begin{example}[Gaussian mixtures]\label{ex:mixture}
	Consider the \emph{random} Gaussian measure $\nu'=\normal(m,\cC)\sim\Q\in\sP_2(\sP_2(\cU))$, where $(m,\cC)\sim \Pi$ and $\Pi$ is the mixing distribution over the mean (with marginal $\Pi_1$) and covariance (with marginal $\Pi_2$). Define the Gaussian mixture $\wbar{\nu}\defeq \E_{(m,\cC)\sim\Pi}\normal(m,\cC)\in\sP_2(\cU)$ via duality. Let $\wbar{\mu}\defeq \E_{(m',\cC')\sim\Pi'}\normal(m',\cC')$ be another Gaussian mixture with mixing distribution $\Pi'$ (with marginals denoted by $\Pi_1'$ and $\Pi_2'$). By linearity of expectation and Fubini's theorem, $\sE_{\Q}(\cG)=\sE_{\delta_{\wbar{\nu}}}(\cG)$; that is, average-case accuracy with respect to $\Q$ is equivalent to accuracy with respect to the single test distribution $\wbar{\nu}$; also see \eqref{eqn:app_ood_equal_test_mix}. This fact and Cor.~\ref{cor:basic_inequality_average} yield $\sE_{\Q}(\cG)
	\leq \E_{u\sim\wbar{\mu}}\norm{\cG^\star(u)-\cG(u)}^2+c(\cG^\star,\cG,\wbar{\mu},\wbar{\nu})\sfW_2(\wbar{\mu},\wbar{\nu})$.
	The factor $c(\cG^\star,\cG,\wbar{\mu},\wbar{\nu})$ can be further simplified using the closed-form expression for second moments of Gaussian measures. For the $\sfW_2$ factor, Lem.~\ref{lem:wass_mix_rand} and coupling arguments imply that
	\begin{align}\label{eqn:example_mixture}
		\sE_{\Q}(\cG)
		\leq \E_{u\sim\wbar{\mu}}\norm[\big]{\cG^\star(u)-\cG(u)}_\cY^2+c\bigl(\cG^\star,\cG,\wbar{\mu},\wbar{\nu}\bigr)\sqrt{\sfW_2^2(\Pi_1,\Pi_1') + \sfW_2^2(\Pi_2,\Pi_2')}\,.
	\end{align}
	The two $\sfW_2$ distances between the marginals may be further bounded with \eqref{eqn:w2_gaussian} in \textbf{SM}~\ref{app:details_prelim} under additional structural assumptions on the mixing distributions; see Rmk.~\ref{rmk:app_mixing_kle}. We often invoke this example in numerical studies with $\Q=\sum_{k=1}^K w_k\delta_{\nu_k'}$ (where $K\in\N$, i.e., a finite mixture), weights $\sum_{k=1}^K w_k = 1$, Gaussian test measures $\{\nu_k'\}$, and degenerate training mixture $\wbar{\mu}\defeq \normal(m_0,\cC_0)$.
\end{example}

The following lemma is used in Ex.~\ref{ex:mixture}; it is an independently interesting result.
\begin{lemma}[Lipschitz stability of mixtures in Wasserstein metric]\label{lem:wass_mix_rand}
	Let $p\in[1,\infty)$. If $\Q$ and $\Q'$ belong to $\sP_p(\sP_p(\cU))$, then $\sfW_p(\E_{\mu\sim\Q}\mu,\E_{\nu\sim\Q'}\nu)\leq \sfW_p(\Q,\Q')$.
\end{lemma}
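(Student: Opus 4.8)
The plan is to lift a $\sfW_p$-optimal transport plan between the laws $\Q$ and $\Q'$ to a transport plan between the mixtures $\wbar{\mu}\defeq\E_{\mu\sim\Q}\mu$ and $\wbar{\nu}\defeq\E_{\nu\sim\Q'}\nu$ by gluing together fiberwise-optimal couplings, and then to estimate the resulting cost. First I would fix a $\sfW_p$-optimal coupling $\Theta\in\sP(\sP_p(\cU)\times\sP_p(\cU))$ of $\Q$ and $\Q'$, so that $\Theta$ has marginals $\Q$ and $\Q'$ and satisfies $\int \sfW_p^p(\mu,\nu)\,\Theta(d\mu,d\nu)=\sfW_p^p(\Q,\Q')$; such a minimizer exists because $(\sP_p(\cU),\sfW_p)$ is a Polish space, so the standard existence theory for optimal transport applies with ground metric $\sfW_p$.

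Next, for each pair $(\mu,\nu)$ I would select an optimal coupling $\gamma_{\mu,\nu}\in\Gamma(\mu,\nu)$ realizing $\sfW_p^p(\mu,\nu)=\int_{\cU\times\cU}\sfd(x,y)^p\,\gamma_{\mu,\nu}(dx,dy)$, chosen so that the assignment $(\mu,\nu)\mapsto\gamma_{\mu,\nu}$ is measurable. I would then define the glued plan $\gamma\in\sP(\cU\times\cU)$ by $\gamma(\slot)\defeq\int \gamma_{\mu,\nu}(\slot)\,\Theta(d\mu,d\nu)$, understood in the sense of duality against bounded measurable test functions on $\cU\times\cU$; the measurable selection guarantees this integral is well defined.

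The verification then proceeds in two steps. First I would check that $\gamma$ has the correct marginals: pushing $\gamma$ forward under the first coordinate projection $\pi^1$ and using that the first marginal of $\Theta$ is $\Q$ gives $\pi^1\push\gamma=\int \mu\,\Q(d\mu)=\wbar{\mu}$, and symmetrically $\pi^2\push\gamma=\wbar{\nu}$, so $\gamma\in\Gamma(\wbar{\mu},\wbar{\nu})$. Second, since $\gamma$ is a (generally suboptimal) coupling of $\wbar{\mu}$ and $\wbar{\nu}$, the definition of $\sfW_p$ as an infimum over couplings gives $\sfW_p^p(\wbar{\mu},\wbar{\nu})\le\int_{\cU\times\cU}\sfd(x,y)^p\,\gamma(dx,dy)$, and Fubini together with fiberwise optimality rewrites the right-hand side as $\int \sfW_p^p(\mu,\nu)\,\Theta(d\mu,d\nu)=\sfW_p^p(\Q,\Q')$. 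Taking $p$-th roots yields the claim.

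The main obstacle I anticipate is the measurable selection of the fiberwise optimal couplings $\gamma_{\mu,\nu}$: one must argue that the set-valued map $(\mu,\nu)\mapsto\Gamma(\mu,\nu)$ admits a measurable selection of minimizers of the jointly lower semicontinuous transport cost. This is handled by a standard measurable selection theorem—e.g. Kuratowski--Ryll-Nardzewski, using that $(\mu,\nu)\mapsto\Gamma(\mu,\nu)$ has closed nonempty values and that the cost is measurable and lower semicontinuous—so it is routine but not entirely trivial. All integrability bookkeeping (finiteness of the transport cost of $\gamma$ and membership of the mixtures in $\sP_p(\cU)$) follows directly from $\Q,\Q'\in\sP_p(\sP_p(\cU))$ and poses no real difficulty.
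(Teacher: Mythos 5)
Your proposal is correct and follows essentially the same route as the paper: both arguments start from a $\sfW_p$-optimal coupling $\Pi$ of $\Q$ and $\Q'$ and reduce the claim to the inequality $\sfW_p^p(\E_{(\mu,\nu)\sim\Pi}\mu,\E_{(\mu,\nu)\sim\Pi}\nu)\leq\E_{(\mu,\nu)\sim\Pi}\sfW_p^p(\mu,\nu)$. The only difference is that the paper cites this as ``joint convexity of $\sfW_p^p$ plus Jensen's inequality,'' whereas you prove it from scratch by gluing measurably selected fiberwise-optimal plans --- a correct and self-contained unpacking of the same step.
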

\begin{proof}
	Let $\Pi\in\Gamma(\Q,\Q')\subseteq\sP(\sP_p(\cU)\times \sP_p(\cU))$ be a $\sfW_p$-optimal coupling with respect to the ground space $(\sP_p(\cU), \sfW_p)$. By the joint convexity of the functional $\sfW_p^p$ and Jensen's inequality,
	\begin{align*}
		\sfW_p^p(\E_{\mu\sim\Q}\mu,\E_{\nu\sim\Q'}\nu)=\sfW_p^p(\E_{(\mu,\nu)\sim\Pi}\mu,\E_{(\mu,\nu)\sim\Pi}\nu)\leq \E_{(\mu,\nu)\sim\Pi}\sfW_p^p(\mu,\nu)=\sfW_p^p(\Q,\Q')\,.
	\end{align*}
	Taking $p$-th roots completes the proof.
\end{proof}
In Lem.~\ref{lem:wass_mix_rand}, the Wasserstein distance $\sfW_p(\Q,\Q')$ uses $(\sP_p(\cU),\sfW_p)$ as the ground metric space in Def.~\ref{def:wasserstein}. Thus, it is a Wasserstein metric over the Wasserstein space.

Last, we remark on structural assumptions that allow us to further bound \eqref{eqn:example_mixture} in Ex.~\ref{ex:mixture}.
\begin{remark}[KLE mixing distributions]\label{rmk:app_mixing_kle}
	Instate the setting and notation of Ex.~\ref{ex:mixture}. Furthermore, let $m\sim\Pi_1$ and $m'\sim\Pi_1'$ have centered (possibly non-Gaussian) KLEs
	\begin{align*}
		m\overset{\mathrm{Law}}{=}\sum_{j\in\N}\sqrt{\lambda_j}\xi_j\phi_j\qa m'\overset{\mathrm{Law}}{=}\sum_{j\in\N}\sqrt{\lambda_j'}\xi_j'\phi_j\,,\qw \xi_j\diid \eta\qa\xi_j'\diid \eta'\,.
	\end{align*}
	In the preceding display, we further use the notation from \eqref{eqn:kle_random_gaussian} in Ex.~\ref{ex:random_gaussian}; thus, $\{\xi_j\}$ and $\{\xi_j'\}$ have zero mean and unit variance, with laws in $\sP_2(\R)$. Let $\rho\in\Gamma(\eta,\eta')$ be the $\sfW_2$-optimal coupling. Similar to \citet[Lem.~4.1, pp.~13--14]{garbuno2023bayesian}, let $\pi\in\Gamma(\Pi_1,\Pi_1')$ be a coupling with the property that $\pi=\Law(h,h')$, where $h=\sum_{j\in\N}\sqrt{\lambda_j}z_j\phi_j$ and $h'=\sum_{j\in\N}\sqrt{\lambda_j'}z_j'\phi_j$ in law and $(z_j,z_j')\diid \rho$. Then by \eqref{ex:w2_gaussian} and the triangle inequality,
	\begin{align*}
		\sfW_2^2(\Pi_1,\Pi_1')&\leq \E_{(m,m')\sim \pi}\norm{m-m'}^2\\
		&\leq 2\E_{\rho}\sum_{j\in\N}\lambda_j\abs{\xi_j-\xi_j'}^2 + 2\E_{\eta'}\sum_{j\in\N}\abs{\xi_j'}^2\abs[\Big]{\sqrt{\lambda_j}-\sqrt{\lambda_j'}}^2\\
		&=2\sfW_2^2(\eta,\eta')\sum_{j\in\N}\lambda_j + 2 \sum_{j\in\N}\abs[\Big]{\sqrt{\lambda_j}-\sqrt{\lambda_j'}}^2\,.
	\end{align*}
	This estimate shows that $\Pi_1$ and $\Pi_1'$ are close in $\sfW_2$ if $\eta$ and $\eta'$ are close in $\sfW_2$ and $\{\sqrt{\lambda_j}\}$ and $\{\sqrt{\lambda_j'}\}$ are close in $\ell^2(\N;\R)$.
	We conclude this remark by upper bounding the term $\sfW_2^2(\Pi_2,\Pi_2')$ appearing \eqref{eqn:example_mixture}. Following \eqref{eqn:kle_random_gaussian} from Ex.~\ref{ex:random_gaussian}, suppose that $\cC=\cR^*\cR$ and $\cC'=(\cR')^*\cR'$, where
	\begin{align*}
		\cR\overset{\mathrm{Law}}{=} \sum_{i\in\N}\sum_{j\in\N}\sqrt{\sigma_{ij}}\zeta_{ij}\phi_i\otimes\phi_j
		,\, \cR'\overset{\mathrm{Law}}{=} \sum_{i\in\N}\sum_{j\in\N}\sqrt{\sigma_{ij}'}\zeta_{ij}'\phi_i\otimes\phi_j,\,
		\zeta_{ij}\diid \varrho,\, \text{ and } \, \zeta_{ij}'\diid \varrho'
	\end{align*}
	are KLE expansions in the Hilbert space $\HS(\cU)$ of Hilbert--Schmidt operators from $\cU$ into itself.
	Using \eqref{eqn:w2_gaussian} and \citet[Lem.~SM1.2, p.~SM2]{cockayne2021probabilistic}, it holds that
	\begin{align*}
		\sfW_2^2(\Pi_2,\Pi_2')&=\inf_{\gamma\in\Gamma(\Pi_2,\Pi_2')}\E_{(\cC,\cC')\sim\gamma}\sfW_2^2\bigl(\normal(0,\cC),\normal(0,\cC')\bigr)\\
		&\leq \inf_{\gamma\in\Gamma(\Pi_2,\Pi_2')}\E_{\gamma}\norm[\big]{\cR-\cR'}_{\HS}^2\,.
	\end{align*}
	Then a nearly identical coupling argument to the one used for $\sfW_2(\Pi_1,\Pi_1')$ delivers the bound
	\begin{align*}
		\sfW_2^2(\Pi_2,\Pi_2') \leq 2\sfW_2^2(\varrho,\varrho')\sum_{i\in\N}\sum_{j\in\N}\sigma_{ij} + 2 \sum_{i\in\N}\sum_{j\in\N}\abs[\Big]{\sqrt{\sigma_{ij}}-\sqrt{\sigma_{ij}'}}^2\,.
	\end{align*}
\end{remark}

\section{Details for section~\ref{sec:alg}:~\nameref*{sec:alg}}\label{app:details_alg}
This appendix expands on the content of Sec.~\ref{sec:alg} and provides the deferred proofs of the theoretical results from that section. It begins in \textbf{SM}~\ref{app:details_degenerate} by identifying a degeneracy of the bilevel problem formulation when infinite data are available. \textbf{SM}~\ref{app:details_alg_bilevel} concerns the bilevel minimization algorithm from Sec.~\ref{sec:alg_bilevel} and \textbf{SM}~\ref{app:details_alg_ub} pertains to the alternating minimization algorithm from Sec.~\ref{sec:alg_ub}.

\subsection{Necessity of finite data}\label{app:details_degenerate}
Although Secs.~\ref{sec:theory}~and~\ref{sec:alg} are formulated at the infinite data level, i.e., with full access to $\Q$ and the candidate training distribution $\nu$, the proposed formulation is only advantageous when $\nu$ is accessible from a finite number of samples. Of course, this is the case in practice (and moreover $\Q$ will often only be accessible from samples as well); recall Rem.~\ref{rmk:finite_data}. Our \emph{optimize-then-discretize} approach of Secs.~\ref{sec:theory}~and~\ref{sec:alg} simplifies the exposition and notation, while allowing us to derive continuum algorithms that still work at the finite data level. Regardless, it is instructive to see how the necessity of finite data emerges from a theoretical viewpoint.

To this end, recall the functional ${\mathcal{E}}_{\mathbb{Q}}$ from \eqref{eqn:ood_accuracy}. Define the infinite mixture probability measure
\begin{align}\label{eqn:app_infinite_mix}
	\nu_\mathbb{Q}\defeq \mathbb{E}_{\nu'\sim\mathbb{Q}} [\nu']
\end{align}
via duality with bounded continuous test functions. By linearity, one can check that
\begin{align}\label{eqn:app_ood_equal_test_mix}
	\mathcal{E}_{\mathbb{Q}}(\cG)=\mathbb{E}_{u'\sim\nu_{\mathbb{Q}}}\norm{\cG^\star(u')-\cG(u')}_\cY^2\,.
\end{align}
This display shows that the average OOD accuracy with respect to $\Q$ is equal to the standard test error with respect to the test mixture distribution $\nu_\Q$.

Now define the optimal value $V^\star_{\mathbb{Q}}\defeq \inf_{\cG\in \cH} \mathcal{E}_{\mathbb{Q}}(\cG)$. By the definition of the trained model in \eqref{eqn:exact_bilevel}, it holds that $V^\star_{\mathbb{Q}}=\mathcal{E}_{\mathbb{Q}}(\widehat{\cG}^{(\nu_{\mathbb{Q}})})$. 
The inequality $V^\star_{\mathbb{Q}}\leq \inf_{\nu} \mathcal{E}_{\mathbb{Q}}(\widehat{\cG}^{(\nu)})$ holds because $V^\star_{\mathbb{Q}}$ is the optimal value over all elements from $\cH$.
Moreover, $\inf_{\nu} \mathcal{E}_{\mathbb{Q}}(\widehat{\cG}^{(\nu)})\leq \mathcal{E}_{\mathbb{Q}}(\widehat{\cG}^{(\nu_{\mathbb{Q}})}) = V^\star_{\mathbb{Q}}$ due to optimality over training distributions. 
We deduce that
\begin{align}
	\inf_{\nu} \mathcal{E}_{\mathbb{Q}}(\widehat{\cG}^{(\nu)}) = \inf_{\cG\in \cH} \mathcal{E}_{\mathbb{Q}}(\cG)\,.
\end{align}
That is, the optimal value of the objective over distributions versus the objective over models coincide. Write $\widehat{\nu}$ for a solution of $\inf_{\nu} \mathcal{E}_{\mathbb{Q}}(\widehat{\cG}^{(\nu)})$ from \eqref{eqn:exact_bilevel}. The preceding display then says
\begin{align}\label{eqn:degen}
	\mathcal{E}_{\mathbb{Q}}(\widehat{\cG}^{(\widehat{\nu})}) = \mathcal{E}_{\mathbb{Q}}(\widehat{\cG}^{(\nu_{\mathbb{Q}})})\,.
\end{align}

This does not imply that $\widehat{\nu}$ equals $\nu_{\mathbb{Q}}$. Indeed, these infinite-dimensional optimization problems over possibly noncompact sets could admit multiple minima.
What \eqref{eqn:degen} does suggest is that the mixture $\nu_{\mathbb{Q}}$ (which only depends on $\mathbb{Q}$) is an equally good solution for making the OOD accuracy small compared to the adaptive measure $\widehat{\nu}$ (which depends on $\cG^\star$, $\cH$, and $\mathbb{Q}$), assuming full knowledge of $\mathbb{Q}$ and infinite training data. In this case, there is no reason to construct $\wht{\nu}$ if $\nu_{\mathbb{Q}}$ is known.

However, for each $N$, writing $\nu^N=\frac{1}{N}\sum_{n=1}^N\delta_{u_n}$ for the empirical measure corresponding to i.i.d. samples $u_n\sim\nu$, we can instead consider
\begin{align}\label{eqn:exact_bilevel_finite}
	\wht{\nu}_N\in\argmin\set{\sE_\Q\bigl(\wht{\cG}_N^{(\nu)}\bigr)}{\nu \in \mathsf{P}\subseteq   \sP_2(\cU)\qa\widehat{\cG}_N^{(\nu)} \in\argmin_{\cG\in\cH}\E_{u\sim\nu^N}\norm[\big]{\cG^\star(u)-\cG(u)}_\cY^2}
\end{align}
for some search space $\mathsf{P}$.
The preceding argument leading to the degenerate behavior shown in \eqref{eqn:degen} no longer holds for the empirical bilevel problem \eqref{eqn:exact_bilevel_finite} due to the finite data constraint. It is now possible that models trained on samples from $\wht{\nu}_N$ can outperform models trained on samples from the mixture $\nu_\Q$. Indeed, we observe the superiority of $\wht{\nu}_N$ over $\nu_\Q$ even for simple Gaussian parametrizations in Secs.~\ref{sec:numeric}~and~\ref{app:details_numeric_bilevel}. More generally, the \emph{test distribution $\nu_\Q$ is not the optimal distribution} from which to generate training data for regression problems. This fact is known to the numerical analysis community \citep{adcock2024optimal,cohen2017optimal}, building off of variance reduction and importance sampling ideas~\citep{agapiou2017importance} for Monte Carlo and quasi-Monte Carlo methods~\citep{caflisch1998monte,dick2013high}. It remains true for (linear) operator learning, where \citet{de2023convergence} suggests that one should pick a training data distribution over input functions whose samples are rougher than function samples from the test distribution. This implies that the training distribution support should (strictly) contain the support of the test distribution. However, there are fundamental limits on how much improvement one can expect to obtain by adjusting the training data distribution~\citep{grohs2025theory,kovachki2024data}.

Another practical takeaway from the preceding discussions is that if data from $\mathbb{Q}$ are very limited and fixed once and for all, then a finite sample approximation will not capture the mixture $\nu_{\mathbb{Q}}$ well. This could lead to a poor performing model when trained on the aggregate empirical mixture samples.
On the other hand, the proposed approach delivers a distribution $\widehat{\nu}_N$ that ideally can be sampled from at will (leading to many more training samples, e.g., a generative model) and uses information about $\cG^\star$ and $\cH$ to compensate for limited information about $\mathbb{Q}$.

\subsection{Derivations for subsection~\ref{sec:alg_bilevel}:~\nameref*{sec:alg_bilevel}}\label{app:details_alg_bilevel}

\paragraph*{Proofs.}
Recall the use of the adjoint method in real (i.e., $\cY=\R$) RKHS $\cH$ with kernel $\kappa$ from Sec.~\ref{sec:alg_bilevel}. For a fixed $\nu\in\sP_2(\cU)$, the data misfit $\Psi\colon\cH\to\R$ for this $\nu$ is 
\begin{align}\label{eqn:app_data_misfit}
	\cG\mapsto\Psi(\cG)\defeq \frac{1}{2}\norm{\iota_\nu\cG-\cG^\star}_{L^2_\nu(\cU)}^2=\E_{u\sim\nu}\abs{\cG(u)-\cG^\star(u)}^2\,.
\end{align}
This is the inner objective in \eqref{eqn:exact_bilevel}. In the absence of RKHS norm penalization, typically one must assume the existence of minimizers of $\Psi$ \citep[Assump.~1, p.~3]{rudi2015less}. The next lemma characterizes them.
\begin{lemma}[inner minimization solution]\label{lem:normal_eqns}
	Fix a set $\sfP\subseteq\sP_2(\cU)$ of probability measures. Suppose that for every $\nu\in\sfP$, there exists $\widehat{\cG}^{(\nu)}\in\cH$ such that $\Psi(\widehat{\cG}^{(\nu)})=\inf_{\cG\in\cH}\Psi(\cG)$, where $\Psi$ is as in \eqref{eqn:app_data_misfit}. Then $\widehat{\cG}^{(\nu)}$ satisfies the compact operator equation $\cK_{\nu} \widehat{\cG}^{(\nu)} = \iota_\nu^*\cG^\star$ in $\cH$.
\end{lemma}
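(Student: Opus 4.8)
The plan is to exploit the Hilbert-space (linear) structure of $\cH$: since $\Psi$ is a convex quadratic functional, any minimizer is characterized by the vanishing of its Fréchet gradient, and this stationarity condition rearranges directly into the claimed normal equation. Because existence of $\widehat{\cG}^{(\nu)}$ is hypothesized in the statement, I only need to \emph{characterize} the minimizer, not construct it.

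First I would expand $\Psi(\cG)=\tfrac12\norm{\iota_\nu\cG-\cG^\star}_{L^2_\nu}^2=\tfrac12\ip{\iota_\nu\cG}{\iota_\nu\cG}_{L^2_\nu}-\ip{\iota_\nu\cG}{\cG^\star}_{L^2_\nu}+\tfrac12\norm{\cG^\star}_{L^2_\nu}^2$ and compute the directional derivative in an arbitrary direction $h\in\cH$. Using the defining adjoint relation $\ip{\iota_\nu\cG}{g}_{L^2_\nu}=\ip{\cG}{\iota_\nu^*g}_\kappa$ together with $\cK_\nu=\iota_\nu^*\iota_\nu^{\phantom{*}}$, this yields $(D\Psi(\cG))[h]=\ip{\iota_\nu\cG-\cG^\star}{\iota_\nu h}_{L^2_\nu}=\ip{\cK_\nu\cG-\iota_\nu^*\cG^\star}{h}_\kappa$, so the Fréchet gradient is $\nabla\Psi(\cG)=\cK_\nu\cG-\iota_\nu^*\cG^\star$. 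Here the boundedness of $\iota_\nu$ (from the standing hypothesis that $\kappa$ is bounded and continuous, which in fact makes $\iota_\nu$ compact) and the assumption $\cG^\star\in L^2_\nu$ guarantee that both $\iota_\nu^*\cG^\star\in\cH$ and $\cK_\nu\cG\in\cH$ are well defined.

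Next, because $\cH$ is a full linear space and $\widehat{\cG}^{(\nu)}$ is assumed to be an unconstrained global minimizer of the differentiable functional $\Psi$, Fermat's rule forces $\nabla\Psi(\widehat{\cG}^{(\nu)})=0$; that is, $\cK_\nu\widehat{\cG}^{(\nu)}=\iota_\nu^*\cG^\star$ in $\cH$, which is the assertion. I would also observe that $\Psi$ is convex, being the composition of the affine map $\cG\mapsto\iota_\nu\cG-\cG^\star$ with the convex squared-norm functional; hence the stationarity condition is both necessary and sufficient, and every solution of the normal equation is a minimizer and conversely.

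The argument is essentially routine, and I do not expect a genuine obstacle. The only points requiring care are bookkeeping ones: confirming that $\iota_\nu$ is bounded with a well-defined adjoint $\iota_\nu^*$ under the kernel hypotheses, and that $\cG^\star\in L^2_\nu$ so that the right-hand side $\iota_\nu^*\cG^\star$ indeed lands in $\cH$. Neither of these is substantive given the assumptions already in force.
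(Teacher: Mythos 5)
Your proposal is correct and follows essentially the same route as the paper: compute the Fréchet derivative of the convex quadratic functional $\Psi$, use the adjoint relation to identify it with $\cK_\nu\cG-\iota_\nu^*\cG^\star$, and invoke stationarity of the assumed minimizer to obtain the normal equation. The additional remarks on convexity, sufficiency, and well-definedness of $\iota_\nu^*\cG^\star$ are consistent with the paper's (terser) argument.
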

\begin{proof}
	The result follows from convex optimization in separable Hilbert space. Since we assume minimizers of $\Psi$ exist, they are critical points. The Frech\'et derivative $D\Psi\colon \cH\to\cH^*$ is
	\begin{align*}
		\cG\mapsto D \Psi(\cG) = \ip{\iota^*_\nu(\iota_\nu^{\phantom{*}} \cG - \cG^\star)}{\slot}_{\kappa}\,.
	\end{align*}
	At a critical point $\cG$, we have $D\Psi(\cG)=0$ in $\cH^*$. This means
	\begin{align*}
		\ip{\iota^*_\nu(\iota_\nu^{\phantom{*}} \cG - \cG^\star)}{h}_{\kappa}=0 \qfa h\in\cH\,.
	\end{align*}
	By choosing $h=\iota^*_\nu(\iota_\nu^{\phantom{*}} \cG - \cG^\star)$, it follows that $\iota_\nu^*\iota_\nu^{\phantom{*}}\cG=\cK_{\nu} \cG = \iota_\nu^*\cG^\star$ in $\cH$ as asserted.
\end{proof}

Next, recall the Lagrangian $\cL$ from \eqref{eqn:lagrangian}. We prove Lem.~\ref{lem:adjoint_equation} on the adjoint state equation.
\begin{proof}[Proof of Lem.~\ref{lem:adjoint_equation}]
	The action of the Frech\'et derivative with respect to the second coordinate is
	\begin{align*}
		D_2\cL(\nu,\cG,\lambda)=\E_{\nu'\sim\Q}\ip{\iota^*_{\nu'}(\iota_{\nu'}^{\phantom{*}} \cG - \cG^\star)}{\slot}_{\kappa} +\ip{\cK_{\nu}\lambda}{\slot}_{\kappa} \in \cH^*\,.
	\end{align*}
	Setting the preceding display equal to $0\in\cH^*$, we deduce that
	\begin{align*}
		\cK_{\nu} \lambda = \E_{\nu'\sim\Q}\bigl[\iota^*_{\nu'}(\cG^\star-\iota_{\nu'} \cG)\bigr]\qin \cH\,.
	\end{align*}
	Simplifying the right-hand side using the definition of $\iota^*_\nu$ completes the proof.
\end{proof}

We are now in a position to prove Prop.~\ref{prop:derivative_bilevel_prob}, which computes the first variation of $\cJ\colon \nu\mapsto \cL(\nu,\wht{\cG}^{(\nu)}, \lambda^{(\nu)}(\wht{\cG}^{(\nu)}))$ in the sense of \citet[Def.~7.12, p.~262]{santambrogio2015optimal}.
\begin{proof}[Proof of Prop.~\ref{prop:derivative_bilevel_prob}]
	The first variation \citep[Def.~7.12, p.~262]{santambrogio2015optimal} of $\cL$ with respect to the variable $\nu$ is given by the function
	\begin{align*}
		D_1\cL(\nu,\cG,\lambda)=(\cG-\cG^\star)\lambda
	\end{align*}
	that maps from $\cU$ to $\R$, which is independent of $\nu$ given $\cG$ and $\lambda$. Now let $\wht{\cG}^{(\nu)}$ be as in Lem.~\ref{lem:normal_eqns} and $\lambda^{(\nu)}$ be as in Lem.~\ref{lem:adjoint_equation}. Due to the smoothness of $\cL$ \eqref{eqn:lagrangian} with respect to both $\cG$ and $\nu$---in particular, $\cL$ is convex in $\cG$ and linear in $\nu$---the envelope theorem~\citep{afriat1971theory} ensures that
	\begin{align*}
		(D\cJ)(\nu)
		&=\bigl(D_1\cL\bigr)\big|_{(\nu,\cG,\lambda)=(\nu,\wht{\cG}^{(\nu)}, \lambda^{(\nu)}(\wht{\cG}^{(\nu)}))} =\bigl(\wht{\cG}^{(\nu)}-\cG^\star\bigr)\lambda^{(\nu)}(\wht{\cG}^{(\nu)})
	\end{align*}
	as asserted.
\end{proof}

\begin{remark}[Wasserstein gradient]\label{rem:wass_grad}
	As discussed in Rmk.~\ref{rmk:wgf}, it is clear that
	\begin{align*}
		\nabla \bigl[(D\cJ)(\nu)\bigr] = \bigl(\nabla\wht{\cG}^{(\nu)}-\nabla\cG^\star\bigr)\lambda^{(\nu)}(\wht{\cG}^{(\nu)}) + \bigl(\wht{\cG}^{(\nu)}-\cG^\star\bigr)\nabla\bigl(\lambda^{(\nu)}(\wht{\cG}^{(\nu)})\bigr)
	\end{align*}
	is the Wasserstein gradient of $\cJ$ \citep[Chp.~8.2]{santambrogio2015optimal}. Its computation requires the usual $\cU$-gradient of the trained model $\wht{\cG}^{(\nu)}$, the true map $\cG^\star$, and the optimal adjoint state $\lambda^{(\nu)}(\wht{\cG}^{(\nu)})$.
\end{remark}

We now turn to the parametric training distribution setting. Let $\nu=\nu_\vartheta$, where $\vartheta\in\cV$ and $(\cV, \ip{\cdot}{\cdot}_{\cV}, \norm{\slot}_{\cV})$ is a parameter Hilbert space as introduced in Sec.~\ref{sec:alg_bilevel}. The main result of Sec.~\ref{sec:alg_bilevel} is Thm.~\ref{thm:derivative_bilevel_parametric}. Before proving that theorem, we require the following lemma.
\begin{lemma}[differentiation under the integral]\label{lem:diff_under_int}
	Let the density $p_\vartheta\defeq d\nu_\vartheta/d\mu_0$ be such that $(u,\vartheta)\mapsto p_\vartheta(u)$ is sufficiently regular. Then for all bounded continuous functions $\phi\in C_b(\cU)$ and all $\vartheta\in\cV$, it holds that
	\begin{align}\label{eqn:diff_under_int}
		\nabla_\vartheta\int_\cU\phi(u)\nu_\vartheta(du) = \int_\cU \phi(u) \Bigl(\nabla_\vartheta\log p_\vartheta(u)\Bigr) \nu_\vartheta(du)\,.
	\end{align}
\end{lemma}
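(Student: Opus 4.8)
The plan is to push the entire $\vartheta$-dependence onto the density and differentiate there. Since the dominating measure $\mu_0$ is independent of $\vartheta$ and dominates every $\nu_\vartheta$, I first rewrite the left-hand side of \eqref{eqn:diff_under_int} as an integral against the frozen measure $\mu_0$:
\[
\int_\cU\phi(u)\,\nu_\vartheta(du)=\int_\cU\phi(u)\,p_\vartheta(u)\,\mu_0(du)\,.
\]
Now all dependence on $\vartheta$ sits in the integrand $u\mapsto\phi(u)p_\vartheta(u)$, while the measure of integration is fixed. This is the entire purpose of the dominating measure: it converts differentiation of a family of measures into differentiation of a family of integrands against a single reference measure.

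The second step is to justify interchanging $\nabla_\vartheta$ with the $\mu_0$-integral, and this is where the main obstacle lies. Working one direction $v\in\cV$ at a time, I examine the difference quotient $t^{-1}(p_{\vartheta+tv}(u)-p_\vartheta(u))$ and invoke the Gâteaux regularity packaged into the ``sufficiently regular'' hypothesis: for $\mu_0$-a.e.\ $u$ this converges to $\ip{\nabla_\vartheta p_\vartheta(u)}{v}_\cV$, and a local Lipschitz-in-$\vartheta$ bound on $p_\vartheta$ (uniform over a neighborhood of $\vartheta$), combined with the boundedness $\phi\in C_b(\cU)$, dominates the quotients by a fixed $\mu_0$-integrable envelope. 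The dominated convergence theorem then gives, for every $v\in\cV$,
\[
\lim_{t\to 0}\frac{1}{t}\int_\cU\phi(u)\bigl(p_{\vartheta+tv}(u)-p_\vartheta(u)\bigr)\,\mu_0(du)=\int_\cU\phi(u)\,\ip{\nabla_\vartheta p_\vartheta(u)}{v}_\cV\,\mu_0(du)\,.
\]
Since the left-hand limit is the directional derivative of $\vartheta\mapsto\int_\cU\phi\,p_\vartheta\,\mu_0(du)$ and $v$ is arbitrary, identifying Riesz representatives yields $\nabla_\vartheta\int_\cU\phi\,p_\vartheta\,\mu_0(du)=\int_\cU\phi\,\nabla_\vartheta p_\vartheta\,\mu_0(du)$.

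The final step applies the log-derivative identity. Wherever $p_\vartheta(u)>0$ one has $\nabla_\vartheta\log p_\vartheta(u)=p_\vartheta(u)^{-1}\nabla_\vartheta p_\vartheta(u)$, hence $\nabla_\vartheta p_\vartheta(u)=p_\vartheta(u)\,\nabla_\vartheta\log p_\vartheta(u)$; on the complementary set $\{p_\vartheta=0\}$ the density attains its minimum value zero, so differentiability forces $\nabla_\vartheta p_\vartheta=0$ there $\mu_0$-a.e., and that set carries no $\nu_\vartheta$-mass in any case. Substituting and converting the $\mu_0$-integral back to a $\nu_\vartheta$-integral via $p_\vartheta\,\mu_0(du)=\nu_\vartheta(du)$ produces
\[
\int_\cU\phi(u)\,p_\vartheta(u)\,\nabla_\vartheta\log p_\vartheta(u)\,\mu_0(du)=\int_\cU\phi(u)\,\nabla_\vartheta\log p_\vartheta(u)\,\nu_\vartheta(du)\,,
\]
which is exactly \eqref{eqn:diff_under_int}.

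The genuinely delicate point is the interchange in the middle step, sharpened by the fact that $\cV$ may be infinite-dimensional so that $\nabla_\vartheta$ is a Fréchet rather than a coordinatewise gradient; this is why I reduce to directional derivatives and recover the gradient by the Riesz identification. In the finite-dimensional regime $\cV\subseteq\R^P$ emphasized after the theorem, the argument collapses to the classical Leibniz rule, and the required envelope follows immediately from a local Lipschitz bound on $\vartheta\mapsto p_\vartheta(u)$ times $\norm{\phi}_{\infty}$. I would fold all of the measurability and domination conditions into the phrase ``sufficiently regular'' rather than belabor a bespoke set of hypotheses.
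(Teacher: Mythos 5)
Your proof is correct and follows essentially the same route as the paper's: rewrite the integral against the dominating measure $\mu_0$, differentiate under the integral sign (the paper simply attributes this step to the assumed regularity, whereas you unpack it via difference quotients and dominated convergence), and then multiply and divide by $p_\vartheta$ on its support to invoke the logarithmic derivative formula. The extra care you take with the interchange in infinite-dimensional $\cV$ and with the set $\{p_\vartheta=0\}$ is a reasonable elaboration of what the paper folds into the phrase ``sufficiently regular,'' not a different argument.
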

\begin{proof}
	By the assumed smoothness and absolute continuity with respect to $\mu_0$,
	\begin{align*}
		\nabla_\vartheta\int_\cU\phi(u)\nu_\vartheta(du)=\int_\cU\phi(u)\nabla_\vartheta p_\vartheta(u)\mu_0(du)\,.
	\end{align*}
	Shrinking the domain of integration to the support of $\nu_\vartheta$, multiplying and dividing the rightmost equality by $p_\vartheta(u)$, and using the logarithmic derivative formula delivers the assertion.
\end{proof}

With the preceding lemma in hand, we are now able to prove Thm.~\ref{thm:derivative_bilevel_parametric}.
\begin{proof}[Proof of Thm.~\ref{thm:derivative_bilevel_parametric}]
	We use the chain rule for Frech\'et derivatives to compute $D\sfJ \colon\cV\to \cV^*$ and hence the gradient $\nabla \sfJ\colon\cV\to\cV$. 
	Define $Z\colon \cV \to \sP_2(\cU)$ by $Z(\vartheta)=\nu_\vartheta$.
	Then Lem.~\ref{lem:diff_under_int} and duality allow us to identify $D Z$ with the (possibly signed) $\cV$-valued measure
	\begin{align*}
		\bigl(D Z(\vartheta)\bigr)(du)=\bigl(\nabla_\vartheta\log p_\vartheta(u)\bigr) Z(\vartheta)(du)\,.
	\end{align*}
	We next apply the chain rule for Frech\'et derivatives to the composition $\sfJ=\cJ\circ Z$. This yields
	\begin{align*}
		D\sfJ(\vartheta)=(D\cJ)(Z(\vartheta))\circ (DZ)(\vartheta)\in\cV^*
	\end{align*}
	for any $\vartheta\in\cV$. To interpret this expression, let $\vartheta'\in\cV$ be arbitrary. Then formally,
	\begin{align*}
		\begin{split}
			\bigl(D\sfJ(\vartheta)\bigr)(\vartheta')&=\int_{\cU} \bigl(\wht{\cG}^{(Z(\vartheta))}(u)-\cG^\star(u)\bigr)\bigl(\lambda^{(Z(\vartheta))}(\wht{\cG}^{(Z(\vartheta))})\bigr)(u) \, \ip[\Big]{\bigl(D Z(\vartheta)\bigr)(du)}{\vartheta'}_{\cV}\\
			&=\ip[\bigg]{\int_\cU \bigl(\wht{\cG}^{(\nu_\vartheta)}(u)-\cG^\star(u)\bigr)\bigl(\lambda^{(\nu_\vartheta)}(\wht{\cG}^{(\nu_{\vartheta})})\bigr)(u) \bigl(\nabla_\vartheta\log p_\vartheta(u)\bigr) \nu_\vartheta(du)}{\,\vartheta'}_{\cV}\,.
		\end{split}
	\end{align*}
	The formula \eqref{eqn:derivative_bilevel_parametric} for $\nabla\sfJ$ follows.
\end{proof}

\paragraph*{Discretized implementation.}
A practical, discretized implementation of the parametric bilevel gradient descent scheme Alg.~\ref{alg:bilevel} further requires regularization of compact operator equations and finite data approximations. When the inner objective of \eqref{eqn:exact_bilevel} is approximated over a finite data set, we have the following regularized solution for the model update obtained from kernel ridge regression. The lemma uses the compact vector and matrix notation $U=(u_1,\ldots, u_N)^\tp\in\cU^N$, $\cG^\star(U)_n=\cG^\star(u_n)$ for every $n$, and $\kappa(U,U)_{ij}=\kappa(u_i,u_j)$ for every $i$ and $j$.
\begin{lemma}[model update]\label{lem:bilevel_model_update}
	Fix a regularization strength $\sigma>0$. Let $\{u_n\}_{n=1}^N\sim\nu^{\otimes N}$. Then
	\begin{align}\label{eqn:bilevel_model_update}
		\widehat{\cG}_N^{(\nu,\sigma)} \defeq \sum_{n=1}^N \kappa(\slot, u_n) \beta_n\approx \widehat{\cG}^{(\nu)}\,, \qw \beta=\bigl(\kappa(U,U)+N\sigma^2 I_N\bigr)^{-1}\cG^\star(U)\in\R^N\,.
	\end{align}
\end{lemma}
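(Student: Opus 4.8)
The plan is to recognize $\widehat{\cG}_N^{(\nu,\sigma)}$ as the minimizer of the Tikhonov-regularized empirical inner problem
\begin{align*}
	\widehat{\cG}_N^{(\nu,\sigma)} = \argmin_{\cG\in\cH}\Bigl\{\tfrac{1}{N}\textstyle\sum_{n=1}^N\abs{\cG(u_n)-\cG^\star(u_n)}^2 + \sigma^2\norm{\cG}_\kappa^2\Bigr\}\,,
\end{align*}
which is the finite-data, regularized analogue of the population problem solved by $\widehat{\cG}^{(\nu)}$ in Lem.~\ref{lem:normal_eqns}. The proof then splits into three parts: (i) a representer theorem giving the expansion form; (ii) the normal equations identifying the coefficient vector $\beta$; and (iii) a consistency argument justifying the approximation $\widehat{\cG}_N^{(\nu,\sigma)}\approx\widehat{\cG}^{(\nu)}$.

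For (i), first I would decompose $\cH=\cH_0\oplus\cH_0^\perp$, where $\cH_0\defeq\Span\set{\kappa(\slot,u_n)}{n=1,\dots,N}$. The reproducing property $\cG(u_n)=\ip{\cG}{\kappa(\slot,u_n)}_\kappa$ shows that each point evaluation, and hence the entire data-misfit term, depends on $\cG$ only through its orthogonal projection onto $\cH_0$. By the Pythagorean identity, the penalty $\norm{\cG}_\kappa^2$ is strictly decreased by zeroing out the $\cH_0^\perp$-component without affecting the misfit. Therefore any minimizer lies in $\cH_0$, yielding the expansion $\widehat{\cG}_N^{(\nu,\sigma)}=\sum_{n=1}^N\kappa(\slot,u_n)\beta_n$ for some $\beta\in\R^N$. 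For (ii), I would substitute this ansatz into the objective. Writing $K\defeq\kappa(U,U)$, the reproducing property gives $\widehat{\cG}_N^{(\nu,\sigma)}(u_i)=(K\beta)_i$ and $\norm{\widehat{\cG}_N^{(\nu,\sigma)}}_\kappa^2=\beta^\tp K\beta$, so the objective becomes $\tfrac{1}{N}\norm{K\beta-\cG^\star(U)}_2^2+\sigma^2\beta^\tp K\beta$. Setting the gradient in $\beta$ to zero gives $K\bigl[\tfrac{1}{N}(K\beta-\cG^\star(U))+\sigma^2\beta\bigr]=0$; restricting to the range of $K$ (equivalently, assuming $K\succ 0$) and cancelling the common factor $K$ leaves $(K+N\sigma^2 I_N)\beta=\cG^\star(U)$, which is invertible for $\sigma>0$ and delivers the stated formula for $\beta$.

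Part (iii)—the approximation $\approx$—is the main obstacle and the only nonroutine step. It is a statement of spectral-regularization consistency for the ill-posed normal equation $\cK_\nu\widehat{\cG}^{(\nu)}=\iota_\nu^*\cG^\star$ of Lem.~\ref{lem:normal_eqns}. Here I would observe that $\tfrac{1}{N}\sum_n\kappa(\slot,u_n)\otimes\kappa(\slot,u_n)$ is an unbiased empirical estimate of $\cK_\nu=\iota_\nu^*\iota_\nu$ and that $\tfrac{1}{N}\sum_n\kappa(\slot,u_n)\cG^\star(u_n)$ estimates $\iota_\nu^*\cG^\star$; in operator form, $\widehat{\cG}_N^{(\nu,\sigma)}$ is exactly the Tikhonov-regularized solution $(\widehat{\cK}_\nu+\sigma^2\Id)^{-1}\widehat{\cK}_\nu$ applied to the empirical data. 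Standard kernel concentration bounds then control the sampling error, while classical filter-function estimates control the regularization bias, so that $\widehat{\cG}_N^{(\nu,\sigma)}\to\widehat{\cG}^{(\nu)}$ in $\cH$ as $N\to\infty$ with $\sigma=\sigma_N\to 0$ at a suitable rate (under, e.g., the source condition and the existence hypothesis of Lem.~\ref{lem:normal_eqns}); see \citet{rudi2015less}. Making this rate explicit is the genuinely technical part, whereas steps (i) and (ii) are purely algebraic.
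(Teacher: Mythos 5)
Your proposal is correct and arrives at the same formula, but by a different (and equally standard) route. The paper works operator-theoretically: it first regularizes the population normal equation $\cK_\nu\cG=\iota_\nu^*\cG^\star$ from Lem.~\ref{lem:normal_eqns} to define $\widehat{\cG}^{(\nu,\sigma)}\defeq(\cK_\nu+\sigma^2\Id_\cH)^{-1}\iota_\nu^*\cG^\star$, rewrites this via the Woodbury push-through identity as $\iota_\nu^*(\iota_\nu\iota_\nu^*+\sigma^2\Id_{L^2_\nu})^{-1}\cG^\star$, and then substitutes the empirical measure $\nu^{(N)}$ (identifying $L^2_{\nu^{(N)}}$ with $\R^N$ and using the explicit action $\cK_{\nu^{(N)}}h=\frac{1}{N}\kappa(\slot,U)h(U)$) to read off the coefficient formula. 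You instead start from the Tikhonov-regularized empirical risk, invoke the representer theorem via the orthogonal decomposition $\cH=\cH_0\oplus\cH_0^\perp$, and solve the resulting finite-dimensional normal equations; these two derivations of kernel ridge regression are equivalent (your variational problem has first-order condition $\iota_{\nu^{(N)}}^*(\iota_{\nu^{(N)}}\cG-\cG^\star)+\sigma^2\cG=0$, which is exactly the paper's regularized operator equation at $\nu^{(N)}$). Your route is more elementary and self-contained; the paper's makes the link to the population operator equation---and hence to the surrounding adjoint-state machinery---more transparent, and avoids the small caveat you need about restricting to the range of $K$ when $K$ is singular. On part (iii), you are in fact more careful than the paper: the paper treats the final ``$\approx$'' entirely informally (it simply asserts that recovering kernel ridge regression ``implies the asserted approximation''), whereas you correctly identify that a rigorous statement would require concentration of the empirical operators plus filter-function bias estimates as in \citet{rudi2015less}. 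Since the lemma only claims an informal approximation, your sketch of (iii) is more than sufficient.
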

\begin{proof}
	To numerically update $\cG$ by solving $\cK_{\nu} \cG = \iota^*_\nu \cG^\star$ in $\cH$ for fixed $\nu$, we first regularize the inverse with a small nugget parameter $\sigma>0$. Then we define
	\begin{align*}
		\widehat{\cG}^{(\nu,\sigma)}\defeq (\cK_{\nu}+\sigma^2\Id_\cH)^{-1}\iota_\nu^* \cG^\star
		=\iota_\nu^*(\iota_\nu^{\phantom{*}}\iota_\nu^*+\sigma^2\Id_{L^2_\nu})^{-1} \cG^\star \approx \widehat{\cG}^{(\nu)}\,.
	\end{align*}
	The second equality is due to the usual Woodbury push-through identity for compact linear operators.
	Let $\nu^{(N)}\defeq \frac{1}{N}\sum_{n=1}^N\delta_{u_n}$ be the empirical measure with $u_n\diid \nu$. We identify $L^2_{\nu^{(N)}}$ with $\R^N$ equipped with the usual (i.e., unscaled) Euclidean norm. 
	The action of $\cK_{\nu^{(N)}}$ on a function $h\in\cH$ is
	\begin{align}\label{eqn:cov_rkhs_action}
		\cK_{\nu^{(N)}}h=\frac{1}{N}\sum_{n=1}^N\kappa(\slot,u_n)h(u_n)\eqdef \frac{1}{N}\kappa(\slot, U)h(U)
	\end{align}
	and similar for $\iota^*_{\nu^{(N)}}$ acting on $\R^N$. Define $\widehat{\cG}_N^{(\nu,\sigma)}\defeq \widehat{\cG}^{(\nu^{(N)},\sigma)}$. By the preceding displays,
	\begin{align*}
		\widehat{\cG}_N^{(\nu,\sigma)} = \sum_{n=1}^N \kappa(\slot, u_n) \beta_n\,, \qw \beta=\bigl(\kappa(U,U)+N\sigma^2 I_N\bigr)^{-1}\cG^\star(U)\,,
	\end{align*}
	which recovers standard kernel ridge regression under square loss and Gaussian likelihood. This implies the asserted approximation  $\widehat{\cG}_N^{(\nu,\sigma)} \approx \widehat{\cG}^{(\nu)}$.
\end{proof}
In Lem.~\ref{lem:bilevel_model_update}, we expect that $\widehat{\cG}_N^{(\nu,\sigma)} \to \widehat{\cG}^{(\nu)}$ as $N\to\infty$ and $\sigma\to 0$. This lemma takes care of the finite data approximation of the model. Next, we tackle the adjoints.

When discretizing the integral in the gradient \eqref{eqn:derivative_bilevel_parametric} with Monte Carlo sampling over the training data, we recognize that only the values of $\lambda=\lambda^{(\nu)}(\wht{\cG}^{(\nu)})$ at the training points are needed and not $\lambda\in\cH$ itself. That is, we never need to query $\lambda$ away from the current training data samples. The next lemma exploits this fact to numerically implement the adjoint state update.
\begin{lemma}[adjoint state update]\label{lem:bilevel_adjoint_update}
	For $J\in\N$ and a sequence $\{M_j\}\subseteq\N$, define $\Q^{(J)}\defeq\frac{1}{J}\sum_{j=1}^J\delta_{\mu_j^{(M_j)}}$, where
	$\mu_j^{(M_j)}\defeq \frac{1}{M_j}\sum_{m=1}^{M_j}\delta_{v_m^j}$, $V^j\defeq \{v_m^j\}_{m=1}^{M_j}\sim \mu_j^{\otimes M_j}$, and $\{\mu_j\}_{j=1}^J\sim \Q^{\otimes J}$.
	Then with $\widehat{\cG}_N^{(\nu,\sigma)}$ and $U$ as in \eqref{eqn:bilevel_model_update}, the adjoint values $(\lambda^{(\nu)}(\wht{\cG}^{(\nu)}))(U)\in\R^N$ are approximated by
	\begin{align}\label{eqn:bilevel_adjoint_update}
		\wht{\lambda}_{N,J}^{(\nu,\sigma)}(U)\defeq \Bigl(\kappa(U,U)+N\sigma^2 I_N\Bigr)^{-1}\frac{N}{J}\sum_{j=1}^J \frac{1}{M_j} \kappa(U,V^j)\Bigl(\cG^\star(V^j)-\widehat{\cG}_N^{(\nu,\sigma)}(V^j)\Bigr)\,.
	\end{align}
\end{lemma}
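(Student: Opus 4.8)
The plan is to mirror the discretized treatment of the model update in Lem.~\ref{lem:bilevel_model_update}, but applied to the adjoint state equation \eqref{eqn:adjoint_equation} of Lem.~\ref{lem:adjoint_equation}. First I would regularize the compact operator equation for consistency with the nugget $\sigma>0$ introduced for the model, replacing $\cK_\nu$ by $\cK_\nu + \sigma^2\Id_\cH$. This yields the regularized adjoint equation $(\cK_\nu + \sigma^2\Id_\cH)\lambda = \E_{\nu'\sim\Q}\E_{u'\sim\nu'}[\kappa(\slot, u')(\cG^\star(u') - \wht{\cG}^{(\nu)}(u'))]$.

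Next I would discretize both expectations. The outer expectation over $\nu'\sim\Q$ and the inner expectation over $u'\sim\nu'$ are replaced by the nested empirical measure $\Q^{(J)}$, so the right-hand side collapses to the explicit finite sum $g \defeq \frac{1}{J}\sum_{j=1}^J \frac{1}{M_j}\kappa(\slot, V^j)(\cG^\star(V^j) - \wht{\cG}_N^{(\nu,\sigma)}(V^j)) \in \cH$, where I have also substituted the discretized model $\wht{\cG}_N^{(\nu,\sigma)}$ from \eqref{eqn:bilevel_model_update}. Likewise I replace $\nu$ by its empirical measure $\nu^{(N)}$, so that the operator $\cK_{\nu^{(N)}}$ acts on any $h \in \cH$ through its training-point values via \eqref{eqn:cov_rkhs_action}, namely $\cK_{\nu^{(N)}} h = \frac{1}{N}\kappa(\slot, U)h(U)$.

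The key observation, which does all the work, is that the empirical operator $\cK_{\nu^{(N)}}$ sees the adjoint state only through its trace $\lambda(U) \in \R^N$ on the training set. I would therefore rearrange the regularized equation to $\sigma^2 \lambda = g - \cK_{\nu^{(N)}}\lambda = g - \frac{1}{N}\kappa(\slot, U)\lambda(U)$ and then evaluate both sides at $U$. This produces the self-contained finite linear system $\sigma^2\lambda(U) = g(U) - \frac{1}{N}\kappa(U,U)\lambda(U)$, equivalently $(\kappa(U,U) + N\sigma^2 I_N)\lambda(U) = N g(U)$. Reading off $g(U) = \frac{1}{J}\sum_{j=1}^J \frac{1}{M_j}\kappa(U, V^j)(\cG^\star(V^j) - \wht{\cG}_N^{(\nu,\sigma)}(V^j))$ and inverting the regularized, hence invertible, Gram matrix recovers the claimed formula \eqref{eqn:bilevel_adjoint_update}.

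I do not anticipate a serious obstacle: the argument is essentially a representer-theorem computation, and the only subtlety is recognizing that one never needs $\lambda$ as an element of $\cH$ but only its values on the training set, which is exactly what closes the linear system. The main care required is bookkeeping the two independent layers of Monte Carlo sampling, over $\nu$ for the training points $U$ and over $\Q$ for the points $V^j$, and keeping the $1/N$, $1/J$, and $1/M_j$ normalizations consistent so that the resulting Gram matrix $\kappa(U,U) + N\sigma^2 I_N$ matches the one appearing in \eqref{eqn:bilevel_model_update}.
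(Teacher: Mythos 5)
Your proposal is correct and follows essentially the same route as the paper: both exploit the key observation that the empirical operator $\cK_{\nu^{(N)}}$ sees the adjoint state only through its values at the training points $U$, so evaluating the discretized adjoint equation at $U$ closes a finite linear system whose solution is \eqref{eqn:bilevel_adjoint_update}. The only cosmetic difference is that you regularize the operator equation by $\sigma^2\Id_\cH$ before discretizing (which cleanly explains the $N\sigma^2 I_N$ term), whereas the paper first derives the unregularized finite system for $J=1$, regularizes the Gram matrix afterward, and extends to general $J$ by linearity.
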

\begin{proof}
	Let $\mu\sim\Q$, where $\Q\in\sP_2(\sP_2(\cU))$ is fixed. For some $M$, let $\mu^{(M)}=\frac{1}{M}\sum_{m=1}^M\delta_{v_m}$ be the empirical measure with $V=\{v_m\}_{m=1}^M\sim \mu^{\otimes M}$. Consider the case $J=1$ so that $\Q^{(1)}=\delta_{\mu^{(M)}}$; the general case for $J>1$ as asserted in the lemma follows by linearity. With $\nu=\nu^{(N)}$ being the empirical data measure corresponding to $U\in\cU^N$, $\cG=\widehat{\cG}_N^{(\nu,\sigma)}$ being the trained model from Lem.~\ref{lem:bilevel_model_update}, and $\Q^{(1)}$ in place of $\Q$, the adjoint equation \eqref{eqn:adjoint_equation} from Lem.~\ref{lem:adjoint_equation} is
	\begin{align*}
		\cK_{\nu^{(N)}} \lambda = \frac{1}{M}\sum_{m=1}^M \kappa(\slot,v_m)\bigl(\cG^\star(v_m)-\widehat{\cG}_N^{(\nu,\sigma)}(v_m)\bigr)\,.
	\end{align*}
	By evaluating the preceding display at the $N$ data points $U=\{u_n\}_{n=1}^N$ and using \eqref{eqn:cov_rkhs_action}, we obtain
	\begin{align*}
		\frac{1}{N}\kappa(U,U)\lambda(U) = \frac{1}{M} \kappa(U,V)\bigl(\cG^\star(V)-\widehat{\cG}_N^{(\nu,\sigma)}(V)\bigr)\,.
	\end{align*}
	This is a linear equation for the unknown $\lambda(U)\in\R^N$. Finally, we regularize the equation by replacing $\kappa(U,U)$ with $\kappa(U,U) + \sigma^2 I_N$. We denote the solution of the resulting system by $\wht{\lambda}_{N,1}^{(\nu,\sigma)}(U)$, which is our desired approximation \eqref{eqn:bilevel_adjoint_update} of $(\lambda^{(\nu)}(\wht{\cG}^{(\nu)}))(U)\in\R^N$.
\end{proof}
In Lem.~\ref{lem:bilevel_adjoint_update}, the vector $\wht{\lambda}_{N,J}^{(\nu,\sigma)}(U)\in\R^N$ depends on $\nu$ (through the empirical measure $\nu^{(N)}\defeq\frac{1}{N}\sum_{n=1}^N\delta_{u_n}$), the regularization parameter $\sigma$, the current model $\widehat{\cG}_N^{(\nu,\sigma)}$, and the random empirical measure $\Q^{(J)}$ (which itself depends on the sequence $\{M_j\}_{j=1}^J$). The formula \eqref{eqn:bilevel_adjoint_update} requires the inversion of the same regularized kernel matrix as in the model update \eqref{eqn:bilevel_model_update}. Thus, it is advantageous to pre-compute a factorization of this matrix so that it may be reused in the adjoint update step.

Finally, we perform the numerically implementable training data distribution update by inserting the approximations $\widehat{\cG}_N^{(\nu,\sigma)}$ and $\wht{\lambda}_{N,J}^{(\nu,\sigma)}(U)$ from Lems.~\ref{lem:bilevel_model_update}--\ref{lem:bilevel_adjoint_update} and empirical measure $\nu_\vartheta^{(N)}$ into the exact gradient formula~\eqref{eqn:derivative_bilevel_parametric} for $\nabla\sfJ$. This yields the approximate gradient
\begin{align}\label{eqn:derivative_bilevel_parametric_approx}
	\begin{split}
		\widehat{\nabla \sfJ}(\vartheta)&\defeq \int_\cU \bigl(\widehat{\cG}_N^{(\nu_\vartheta,\sigma)}(u)-\cG^\star(u)\bigr)\wht{\lambda}_{N,J}^{(\nu_\vartheta,\sigma)}(u)\bigl(\nabla_\vartheta\log p_\vartheta(u)\bigr) \nu_\vartheta^{(N)}(du) \,,
	\end{split}
\end{align}
where $\nu_\vartheta^{(N)}$ is the empirical measure of the same training data points used to obtain both the trained model and adjoint state vector, and we identify the vector $\wht{\lambda}_{N,J}^{(\nu,\sigma)}(U)\in\R^N$ with a function $\wht{\lambda}_{N,J}^{(\nu,\sigma)}\in L^2_{\nu_\vartheta^{(N)}}$.
This leads to Alg.~\ref{alg:bilevel_discretized}. In this practical algorithm, we allow for a varying number $N_k$ of training samples from $\nu_{\vartheta^{(k)}}$ and a varying regularization strength $\sigma_k$ at each iteration $k$ in Line~\ref{line:gradient}. The integrals in \eqref{eqn:derivative_bilevel_parametric_approx} and Line~\ref{line:gradient} of Alg.~\ref{alg:bilevel_discretized} are with respect to the empirical measure; hence, they equal the equally-weighted average over the finite data samples that make up the empirical measure.

\begin{algorithm}[tb]%
	\caption{Gradient Descent on a Parametrized Bilevel Objective: Discretized Scheme}\label{alg:bilevel_discretized}
	\begin{algorithmic}[1]
		\setstretch{0.8}
		\State \textbf{Initialize:} Parameter $\vartheta^{(0)}$, step sizes $\{t_k\}$, nuggets $\{\sigma_k\}$, sample sizes $\{N_k\}$, measure $\Q^{(J)}$
		\For{$k = 0, 1, 2, \ldots$}
		\State \textbf{Sample and label:} Construct the empirical measure $\nu_{\vartheta^{(k)}}^{(N_k)}$ and label vector $\cG^\star(U)\in\R^{N_k}$
		\State \textbf{Gradient step:} Update the training distribution's parameters via
		\begin{align*}
			\vartheta^{(k+1)} = \vartheta^{(k)}-t_k\int_\cU \Bigl(\widehat{\cG}_{N_k}^{(\nu_{\vartheta^{(k)}},\sigma_k)}(u)-\cG^\star(u)\Bigr)\wht{\lambda}_{N_k,J}^{(\nu_{\vartheta^{(k)}},\sigma_k)}(u)\bigl(\nabla_\vartheta\log p_{\vartheta^{(k)}}(u)\bigr) \nu_{\vartheta^{(k)}}^{(N_k)}(du)\
		\end{align*}
		\label{line:gradient}
		\If {stopping criterion is met} \State Return $\vartheta^{(k+1)}$ then \textbf{break} \EndIf
		\EndFor
	\end{algorithmic}
\end{algorithm}

One limitation of the practical bilevel gradient descent scheme Alg.~\ref{alg:bilevel_discretized} as written is that it is negatively impacted by overfitting in the model update step, as discussed in the following remark.
\begin{remark}[overfitting and vanishing gradients]
	A drawback of approximating the integral in the exact gradient \eqref{eqn:derivative_bilevel_parametric} with a Monte Carlo average over the i.i.d.~training samples in \eqref{eqn:derivative_bilevel_parametric_approx} and Line~\ref{line:gradient} of Alg.~\ref{alg:bilevel_discretized} is that the resulting gradient approximation incorporates the training error residual. If the trained model overfits to the data, then this residual becomes extremely small, which slows down the convergence of gradient descent significantly. In the extreme case, the gradient vanishes if the model perfectly fits the training data. Thus, the proposed method as currently formulated would likely fail for highly overparametrized models. To address this problem, one solution is to hold out a ``validation'' subset of the training data pairs for the adjoint state and Monte Carlo integral calculations; the remaining data should be used to train the model.
	Another approach is to normalize the size of the approximate gradient by its current magnitude. This adaptive normalization aims to avoid small gradients. In the numerical experiments of Sec.~\ref{sec:numeric}, we use kernel ridge regression with strictly positive regularization; this prevents interpolation of the data and helps avoid the vanishing gradient issue.
\end{remark}

\subsection{Derivations for subsection~\ref{sec:alg_ub}:~\nameref*{sec:alg_ub}}\label{app:details_alg_ub}
Recall the alternating minimization algorithm Alg.~\ref{alg:alter} from Sec.~\ref{sec:alg_ub}. We provide details about its numerical implementation. For gradient-based implementations, the first variation of the upper bound objective \eqref{eqn:AMA_obj_surrogate} is required. This is the content of Prop.~\ref{prop:derivative_alter_prob}, which we now prove.
\begin{proof}[Proof of Prop.~\ref{prop:derivative_alter_prob}]
	Since the first term $\int_{\cU}f(u)\nu(du)$ in the objective $F$ in \eqref{eqn:AMA_obj_surrogate} is linear in the argument $\nu$, the first variation of that term is simply $f$ itself \citep[Def.~7.12, p.~262]{santambrogio2015optimal}. The chain rule of Euclidean calculus and a similar first variation calculation delivers the second term in the asserted derivative \eqref{eqn:derivative_alter_prob}. The third and final term arising from the chain rule requires the first variation of $\nu\mapsto \E_{\nu'\sim\Q}\sfW_2^2(\nu,\nu')$. By \citet[Prop.~7.17, p.~264, applied with cost $c(x,y)=\norm{x-y}_\cU^2/2$]{santambrogio2015optimal}, this first variation equals the Kantorovich potential $2\phi_{\nu,\nu'}$, where the asserted characterization of $\phi_{\nu,\nu'}$ is due to Brenier's theorem; see \citet[Thm.~1.3.8, p.~27]{chewi2024log} or \citep{villani2021topics}. Finally, we exchange expectation over $\Q$ and differentiation using dominated convergence due to the assumed regularity of the set $\sfP$.
\end{proof}

The Wasserstein gradient of $F$ is useful for interacting particle discretizations. We have the following corollary of Prop.~\ref{prop:derivative_alter_prob}.
\begin{corollary}[Wasserstein gradient]\label{cor:wass_grad}
	Instate the hypotheses of Prop.~\ref{prop:derivative_alter_prob}. The Wasserstein gradient $\nabla DF$ at $\nu\in\sfP\subseteq \sP_2(\cU)$ is given by the function mapping any $u\in\cU$ to the vector
	\begin{align*}
		\bigl((\nabla DF)(\nu)\bigr)(u)=\nabla f(u)+\sqrt{\frac{\E_{\nu'\sim\Q}\sfW_2^2(\nu,\nu')}{1 + \sfm_2(\nu)}}u + \sqrt{\frac{1 + \sfm_2(\nu)}{\E_{\nu'\sim\Q}\sfW_2^2(\nu,\nu')}} \bigl(u-\E_{\nu'\sim\Q}T_{\nu\to\nu'}(u)\bigr).
	\end{align*}
\end{corollary}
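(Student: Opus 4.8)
The plan is to obtain the Wasserstein gradient by differentiating, in the spatial variable $u\in\cU$, the first variation $DF(\nu)$ already computed in Prop.~\ref{prop:derivative_alter_prob}. Recall from Rmk.~\ref{rem:wass_grad} and \citet[Chp.~8.2]{santambrogio2015optimal} that the Wasserstein gradient of a functional at $\nu$ is exactly the $\cU$-gradient of its first variation, so that $((\nabla DF)(\nu))(u)=\nabla_u[(DF(\nu))(u)]$. Thus the whole computation reduces to applying $\nabla_u$ termwise to the three summands in \eqref{eqn:derivative_alter_prob}, treating the scalar prefactors $\sqrt{\E_{\nu'\sim\Q}\sfW_2^2(\nu,\nu')/(1+\sfm_2(\nu))}$ and its reciprocal as constants in $u$ (they depend on $\nu$ but not on the base point $u$).

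First I would differentiate the three terms in turn. The first term contributes $\nabla f(u)$ directly. For the second term, the identity $\nabla_u\tfrac{1}{2}\norm{u}_\cU^2=u$ (valid in any Hilbert space, interpreting the gradient as the Riesz representative of the Fréchet derivative) produces the factor $u$ times the prefactor $\sqrt{\E_{\nu'\sim\Q}\sfW_2^2(\nu,\nu')/(1+\sfm_2(\nu))}$. For the third term, I would move the gradient inside the expectation over $\Q$, then use the decomposition $\phi_{\nu,\nu'}=\tfrac{1}{2}\norm{\slot}_\cU^2-\varphi_{\nu,\nu'}$ from Prop.~\ref{prop:derivative_alter_prob} together with Brenier's theorem $\nabla\varphi_{\nu,\nu'}=T_{\nu\to\nu'}$ to obtain $\nabla_u\phi_{\nu,\nu'}(u)=u-T_{\nu\to\nu'}(u)$. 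Since $u$ is independent of $\nu'$, the expectation collapses to $u-\E_{\nu'\sim\Q}T_{\nu\to\nu'}(u)$, and multiplying by the prefactor $\sqrt{(1+\sfm_2(\nu))/\E_{\nu'\sim\Q}\sfW_2^2(\nu,\nu')}$ yields the final summand. Assembling the three pieces gives the asserted formula.

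The hard part will be justifying the interchange of the spatial gradient $\nabla_u$ with the expectation $\E_{\nu'\sim\Q}$ in the third term. This calls for a dominated-convergence argument under the blanket regularity assumed on $f$ and on the candidate set $\sfP$: one needs the family $\{u\mapsto T_{\nu\to\nu'}(u)\}_{\nu'}$ to be differentiable for $\Q$-almost every $\nu'$, with a $\Q$-integrable local bound on the difference quotients, so that the limit defining $\nabla_u$ passes under the integral. This is the same regularity already invoked at the end of the proof of Prop.~\ref{prop:derivative_alter_prob}, so I would simply cite that proof and note that the identical hypotheses suffice here. A secondary point to handle with care, especially when $\cU$ is infinite-dimensional, is confirming existence of the optimal maps $T_{\nu\to\nu'}$ and their representation as gradients of convex potentials; this holds by Brenier's theorem under the absolute-continuity conditions already implicit in the differentiability of the Kantorovich potentials $\varphi_{\nu,\nu'}$ used in Prop.~\ref{prop:derivative_alter_prob}, for which I would again point to \citet[Thm.~1.3.8, p.~27]{chewi2024log}.
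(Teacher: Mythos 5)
Your proposal is correct and follows essentially the same route the paper intends: the corollary is stated as an immediate consequence of Prop.~\ref{prop:derivative_alter_prob}, obtained by taking the spatial gradient of the first variation term by term, using $\nabla_u \tfrac{1}{2}\norm{u}_\cU^2 = u$ and $\nabla_u\phi_{\nu,\nu'}(u) = u - T_{\nu\to\nu'}(u)$ via Brenier's theorem, with the $\nu$-dependent prefactors constant in $u$. The regularity points you flag (interchanging $\nabla_u$ with $\E_{\nu'\sim\Q}$ and existence of the optimal maps) are exactly those covered by the blanket hypotheses already invoked in the proof of Prop.~\ref{prop:derivative_alter_prob}.
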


Alternatively, one can derive standard gradients when working with a parametrized family $\{\nu_\vartheta\}_{\vartheta\in\cV}$ of candidate training distributions. The next lemma instantiates this idea in the specific setting of a finite-dimensional Gaussian family.
\begin{lemma}[chain rule: Gaussians]\label{lem:grad_parameter_gaussian}
	Instate the hypotheses of Prop.~\ref{prop:derivative_alter_prob}. Suppose that $\cU\subseteq\R^d$. Let $\nu_{\vartheta}\defeq \normal(m_\vartheta,\cC_\vartheta)$ and $\mathsf{F}(\vartheta)\defeq F(\nu_\vartheta)$, where $F$ is given in~\eqref{eqn:AMA_obj_surrogate}. Write $p_\vartheta\colon \cU\to\R_{\geq 0}$ for the density of $\nu_\vartheta$ with respect to any $\sigma$-finite dominating measure. Then for any parameter $\vartheta\in\cV$, it holds that
	\begin{align*}
		\nabla \mathsf{F}(\vartheta) &= \int_\cU f(u) \bigl(\nabla_\vartheta\log p_\vartheta(u)\bigr)\nu_\vartheta(du)\\
		&\quad + \frac{1}{2}\sqrt{\frac{\E_{\nu'\sim\Q}\sfW_2^2(\nu_\vartheta,\nu')}{1 + \norm{m_\vartheta}_\cU^2+\tr{\cC_\vartheta}}}\int_\cU\frac{\norm{u}_\cU^2}{2} \bigl(\nabla_\vartheta\log p_\vartheta(u)\bigr)\nu_\vartheta(du)\notag\\
		&\quad\quad+ \sqrt{\frac{1 + \norm{m_\vartheta}_\cU^2+\tr{\cC_\vartheta}}{\E_{\nu'\sim\Q}\sfW_2^2(\nu_\vartheta,\nu')}} \int_\cU \frac{1}{2}\ip[\big]{u}{(I_d-\E_\Q[A'_\vartheta])u}_\cU \bigl(\nabla_\vartheta\log p_\vartheta(u)\bigr)\nu_\vartheta(du)\notag\\
		&\quad\quad\quad + \sqrt{\frac{1 + \norm{m_\vartheta}_\cU^2+\tr{\cC_\vartheta}}{\E_{\nu'\sim\Q}\sfW_2^2(\nu_\vartheta,\nu')}} \int_\cU \ip[\big]{u}{\E_\Q[A'_\vartheta] m_\vartheta - \E_\Q[m']}_\cU \bigl(\nabla_\vartheta\log p_\vartheta(u)\bigr)\nu_\vartheta(du)\,,\notag
	\end{align*}
	where $\nu'=\normal(m',\cC')\sim\Q$ is a random Gaussian and  $A'_\vartheta\defeq \cC_\vartheta^{-1/2}(\cC_\vartheta^{1/2}\cC'\cC_\vartheta^{1/2})^{1/2}\cC_\vartheta^{-1/2}$.
\end{lemma}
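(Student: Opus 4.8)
The plan is to reduce the claim to two ingredients already available: the first variation of $F$ computed in Prop.~\ref{prop:derivative_alter_prob}, and the parametric chain rule (the score-function/log-derivative trick) used to prove Thm.~\ref{thm:derivative_bilevel_parametric}. Writing $Z\colon\cV\to\sP_2(\cU)$ for $Z(\vartheta)=\nu_\vartheta$ so that $\mathsf{F}=F\circ Z$, Lem.~\ref{lem:diff_under_int} identifies $DZ(\vartheta)$ with the $\cV$-valued measure $(\nabla_\vartheta\log p_\vartheta(u))\,\nu_\vartheta(du)$, and the Fréchet chain rule (exactly as in the proof of Thm.~\ref{thm:derivative_bilevel_parametric}) gives
\begin{align*}
	\nabla\mathsf{F}(\vartheta)=\int_\cU \bigl(DF(\nu_\vartheta)\bigr)(u)\,\bigl(\nabla_\vartheta\log p_\vartheta(u)\bigr)\,\nu_\vartheta(du)\,.
\end{align*}
I would then substitute the three-term formula for $(DF(\nu_\vartheta))(u)$ from Prop.~\ref{prop:derivative_alter_prob}, using $\sfm_2(\nu_\vartheta)=\norm{m_\vartheta}_\cU^2+\tr{\cC_\vartheta}$ for the Gaussian second moment. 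The contributions of $f$ and of the quadratic second-moment term transcribe directly into the first two integrals of the statement, so the real work is to turn the potential term $\sqrt{(1+\sfm_2(\nu_\vartheta))/\E_\Q\sfW_2^2(\nu_\vartheta,\nu')}\,\E_{\nu'\sim\Q}\phi_{\nu_\vartheta,\nu'}$ into the remaining two integrals.

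For that term I would invoke the closed-form Gaussian optimal transport map. Recall $\phi_{\nu,\nu'}=\tfrac12\norm{\slot}_\cU^2-\varphi_{\nu,\nu'}$ with $\nabla\varphi_{\nu,\nu'}=T_{\nu\to\nu'}$. For $\nu_\vartheta=\normal(m_\vartheta,\cC_\vartheta)$ and $\nu'=\normal(m',\cC')$ the Monge map is affine, $T_{\nu_\vartheta\to\nu'}(u)=m'+A'_\vartheta(u-m_\vartheta)$, with $A'_\vartheta=\cC_\vartheta^{-1/2}(\cC_\vartheta^{1/2}\cC'\cC_\vartheta^{1/2})^{1/2}\cC_\vartheta^{-1/2}$ symmetric and PSD. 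Integrating this gradient gives the Kantorovich potential $\varphi_{\nu_\vartheta,\nu'}(u)=\tfrac12\ip{u}{A'_\vartheta u}_\cU+\ip{u}{m'-A'_\vartheta m_\vartheta}_\cU+\mathrm{const}$, whence $\phi_{\nu_\vartheta,\nu'}(u)=\tfrac12\ip{u}{(I_d-A'_\vartheta)u}_\cU+\ip{u}{A'_\vartheta m_\vartheta-m'}_\cU+\mathrm{const}$. Taking $\E_{\nu'\sim\Q}$, pulling the expectation through the affine dependence, and using that $m_\vartheta$ is deterministic (so $\E_\Q[A'_\vartheta m_\vartheta]=\E_\Q[A'_\vartheta]m_\vartheta$) yields precisely the quadratic form $\tfrac12\ip{u}{(I_d-\E_\Q[A'_\vartheta])u}_\cU$ and the linear form $\ip{u}{\E_\Q[A'_\vartheta]m_\vartheta-\E_\Q[m']}_\cU$ that appear in the third and fourth terms.

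It remains to assemble the pieces: multiplying $\E_\Q\phi_{\nu_\vartheta,\nu'}$ by its prefactor and inserting into the chain-rule integral reproduces the last two integrals, and the additive $u$-independent constant contributes nothing because $\int_\cU\nabla_\vartheta\log p_\vartheta(u)\,\nu_\vartheta(du)=\nabla_\vartheta\!\int_\cU\nu_\vartheta(du)=\nabla_\vartheta 1=0$ by Lem.~\ref{lem:diff_under_int} with test function $\phi\equiv 1$. I expect the main obstacle to be analytic rather than algebraic: one must discharge the ``sufficiently regular'' hypotheses that justify (i) differentiating under the integral in the chain rule, (ii) exchanging $\nabla_\vartheta$ with $\E_{\nu'\sim\Q}$ by dominated convergence (as in Prop.~\ref{prop:derivative_alter_prob}), and (iii) the validity of the affine Monge map, which needs $\cC_\vartheta$ nondegenerate so that $A'_\vartheta$ is well defined and self-adjoint. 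Once these are in place, only the routine bookkeeping sketched above remains.
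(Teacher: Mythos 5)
Your proposal is correct and follows essentially the same route as the paper's proof: the Fréchet chain rule reducing $\nabla\mathsf{F}$ to $\int_\cU (DF(\nu_\vartheta))(u)\,(\nabla_\vartheta\log p_\vartheta(u))\,\nu_\vartheta(du)$, substitution of Prop.~\ref{prop:derivative_alter_prob}, the affine Gaussian Monge map integrated via Brenier's theorem to obtain $\phi_{\nu_\vartheta,\nu'}(u)=\tfrac12\ip{u}{(I_d-A'_\vartheta)u}+\ip{u}{A'_\vartheta m_\vartheta-m'}+\mathrm{const}$, the Gaussian second-moment formula, and an exchange of the $\Q$-expectation with the integral. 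Your explicit observation that the additive constant is annihilated because the score integrates to zero is a nice touch the paper leaves implicit in its remark that the potential is unique only up to a constant.
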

\begin{proof}
	The $\sfW_2$-optimal transport map from $\nu=\normal(m_1,\cC_1)$ to $\nu'=\normal(m_2,\cC_2)$ is
	\begin{align*}
		u\mapsto T_{\nu\to\nu'}(u)\defeq m_2+A(u-m_1)\,,\qw A\defeq \cC_1^{-1/2}\bigl(\cC_1^{1/2}\cC_2\cC_1^{1/2}\bigr)^{1/2}\cC_1^{-1/2}\,.
	\end{align*}
	Furthermore, $\phi_{\nu,\nu'}= \norm{\slot}_\cU^2/2 - \varphi_{\nu,\nu'}$, where $\nabla\varphi_{\nu,\nu'} = T_{\nu\to\nu'}$ by Brenier's theorem \citep[Thm.~1.3.8, p.~27]{chewi2024log}. Upon integration, we find that
	\begin{align*}
		u\mapsto \phi_{\nu,\nu'}(u)= \frac{\norm{u}^2}{2}-\frac{1}{2}\ip{u}{Au} -\ip{u}{m_2-Am_1}\,.
	\end{align*}
	This formula is unique up to a constant. Now let $\nu\defeq\nu_\vartheta=\normal(m_\vartheta,\cC_\vartheta)$ and $\nu'=\normal(m',\cC')\sim\Q$ be as in the hypotheses of the lemma. By the chain rule for Frech\'et derivatives,
	\begin{align*}
		\nabla\mathsf{F}(\vartheta)=\int_\cU \bigl(DF(\nu_\vartheta)\bigr)(u)\bigl(\nabla_\vartheta\log p_\vartheta(u)\bigr)\nu_\vartheta(du)
	\end{align*}
	for each $\vartheta$. Applying \eqref{eqn:derivative_alter_prob} from Prop.~\ref{prop:derivative_alter_prob}, closed-form formulas for the second moments of Gaussian distributions, and the Fubini--Tonelli theorem to exchange integrals completes the proof.
\end{proof}

Lem.~\ref{lem:grad_parameter_gaussian} requires the gradient $\nabla_\vartheta\log p_\vartheta$ of the log density with respect to the parameter $\vartheta$. In general, this gradient has no closed form and must be computed numerically, e.g., with automatic differentiation. However, when only the mean of the Gaussian family is allowed to vary, the following remark discusses an analytical expression for the gradient.
\begin{remark}[Gaussian mean parametrization]\label{rmk:mean_param}
	Let $\cU\subseteq\R^d$. If $\vartheta=m$ so that $\nu_\vartheta=\nu_m=\normal(m,\cC_0)$ for fixed $\cC_0$, then $\nabla_m\log p_m(u)=\cC_0^{-1}(u-m)$, where $p_m$ is the Lebesgue density of $\nu_m$.
\end{remark}

\section{Operator learning architectures}\label{app:architectures}
In this appendix, we provide detailed descriptions of the various deep operator learning architectures used in the paper. These are the DeepONet~(\textbf{SM}~\ref{app:architectures_deeponet}), NIO~(\textbf{SM}~\ref{app:architectures_NIO}), and the newly proposed AMINO~(\textbf{SM}~\ref{app:architectures_AMINO}).

\subsection{DeepONet: Deep Operator Network}\label{app:architectures_deeponet}
Deep operator networks (DeepONet)~\citep{lu2019deeponet} provide a general framework for learning nonlinear operators $\cG^\star\colon u\mapsto \cG^\star(u)$ from data. A DeepONet $\cG_\theta$ achieves this via two jointly trained subnetworks:
\begin{itemize}
	\item \textbf{Branch network} $\sfB$: takes as input the values of the input function $u$ at a fixed set of sensor locations $\{x^{(j)}\}_{j=1}^J$ and outputs a feature vector $b\in\R^p$ for some fixed latent dimension $p$.
	\item \textbf{Trunk network} $\sfT$: takes as input a query point $z$ in the output domain and outputs feature vector $t(z)\in\R^p$.
\end{itemize}
The operator prediction at function $u$ and query point $z$ is then formed by the inner product of these features, yielding
\begin{align}
	\cG_\theta(u)(z)=\ip[\big]{\sfB\bigl(u(x^{(1)}),\ldots,u(x^{(J)})\bigr)}{\sfT(z)}_{\R^p}=\sum_{k=1}^p \sfB_k\bigl(u(x^{(1)}),\ldots,u(x^{(J)})\bigr) \sfT_k(z)\,.
\end{align}
In practice, a constant bias vector is added to the preceding display. During ERM training as in \eqref{eqn:erm}, the weights $\theta$ of both neural networks $\sfB$ and $\sfT$ are optimized together to minimize the average discrepancy between $\cG_\theta(u)(z)$ and ground truth values $\cG^\star(u)(z)$ over a dataset of input-output function pairs. This branch-trunk architecture enables DeepONet to approximate a broad class of operators, including the Neumann-to-Dirichlet (NtD) map in EIT and the conductivity-to-solution map in Darcy flow considered in this work; see Sec.~\ref{sec:numeric} and \textbf{SM}~\ref{app:details_numeric}.

\subsection{NIO: Neural Inverse Operator}\label{app:architectures_NIO}
Traditional approaches for constructing direct solvers for inverse problems, such as the D-bar method for EIT~\citep{siltanen2000implementation}, have laid the foundation for direct solver approximations utilizing modern machine learning techniques. Among these, the Neural Inverse Operator (NIO)~\citep{molinaro2023neural} framework stands out as a significant advancement. In the context of operator learning, many inverse problems can be formulated as a mapping from an operator $\Lambda_a$, dependent on a physical parameter $a$, to the parameter $a$ itself, i.e., $\Lambda_a\mapsto a$. In practical scenarios, the operator $\Lambda_a$ is not directly accessible; instead, we observe its action on a set of input-output function pairs $\{(f_n,g_n)\}_n$, where $g_n=\Lambda_a(f_n)$. This observation leads to a two-step problem: first, infer the operator $\Lambda_a$ from the data pairs, and second, deduce the parameter $a$ from the inferred operator. This process can be further generalized by considering the operator as a pushforward map $(\Lambda_{a})\push$, which transforms an input distribution $\mu$ of functions to an output distribution $\nu_a$ of functions, i.e., $(\Lambda_{a})\push\mu=\nu_a$; see Sec.~\ref{sec:prelim} for the definition of the pushforward operation. Consequently, a measure-centric machine learning approach to solving the inverse problem can be reformulated as learning the mapping $(\mu, \nu_a) \mapsto a$~\citep[Sec.~2.4.1]{nelsen2025operator}.

Conditioned on a fixed input distribution $\mu$, NIO is designed to learn mappings from output measures $\nu_a$ to functions $a$, i.e., $\nu_a \mapsto a$, effectively relaxing the operator-to-function problem to a distribution-to-function problem. Since direct manipulation of measures is computationally infeasible, NIO circumvents this by leveraging finite data approximations. Specifically, it utilizes a composition of DeepONets and Fourier Neural Operators (FNO)~\citep{li2021fourier} to approximate the inverse mapping from an empirical measure/approximation $\wht{\nu}_a$ to the underlying parameter function $a$.

\subsection{AMINO: A Measure-theoretic Inverse Neural Operator}\label{app:architectures_AMINO}
Despite the innovative architecture of NIO, it exhibits certain limitations, particularly concerning its handling of input distributions. NIO primarily focuses on the output distribution $\nu_a$ during evaluation, neglecting variability and influence of the input distribution $\mu$. This oversight can lead to models that perform well on in-distribution (ID) data but lack robustness when faced with OOD scenarios. To address this shortcoming, we introduce \emph{A Measure-theoretic Inverse Neural Operator} (AMINO), a framework that explicitly accounts for the input distribution in the inverse problem. By formulating the inverse task in a fully measure-theoretic setting and taking into account the input distribution, AMINO aims to improve OOD accuracy. Like NIO, AMINO combines a DeepONet and an FNO. The key distinction is that AMINO incorporates both input and output functions as concatenated pairs $\{(f_n, g_n)\}_n \diid (\Id, \Lambda_{a})\push\mu$ which are passed to the branch network, as illustrated in Fig.~\ref{fig:AMINO}. See \citep[Sec.~4.2.3]{nelsen2025operator} for a more detailed description of NIO and AMINO.

\begin{figure}[tb]%
	\centering
	\subfloat{
		\includegraphics[width=0.99\textwidth]{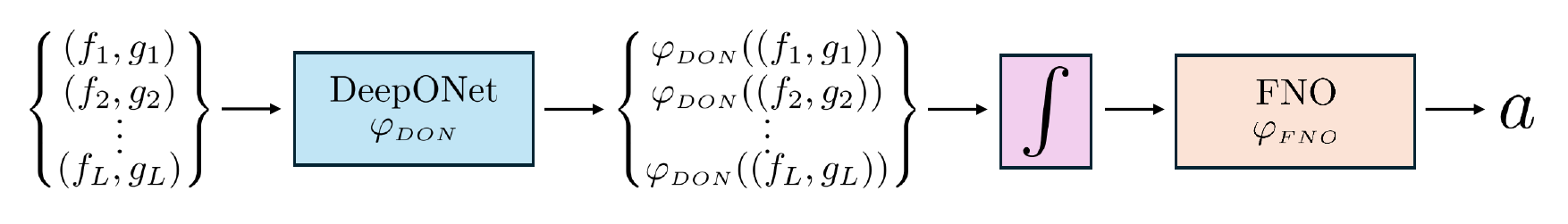}
	}
	\caption{\small Schematic of the AMINO architecture, a variant of NIO, which combines a DeepONet, averaging, and an FNO module. The model takes as input the paired function samples $\{(f_\ell, g_\ell)\}_{\ell=1}^L$ drawn from the joint distribution $(\Id, \Lambda_{a})\push\mu$ and outputs the target parameter $a$.}
	\label{fig:AMINO}
\end{figure} 

\section{Details for section~\ref{sec:numeric}:~\nameref*{sec:numeric}}\label{app:details_numeric}
This appendix describes detailed experimental setups for all numerical tests from Sec.~\ref{sec:intro} and Sec.~\ref{sec:numeric} in the main text. Regarding function approximation, \textbf{SM}~\ref{app:details_numeric_bilevel} covers the bilevel gradient descent experiments. The remaining appendices concern operator learning. \textbf{SM}~\ref{app:details_numeric_AMINO} details the AMINO experiments. \textbf{SM}~\ref{app:details_numeric_NtD} and \textbf{SM}~\ref{app:details_numeric_darcy} describe the results of the alternating minimization algorithm when applied to learning linear NtD maps and the nonlinear solution operator of Darcy flow, respectively. Last, \textbf{SM}~\ref{app:details_numeric_rte} presents results from applying nonparametric particle-based AMA to learn a parameter-to-solution map for the radiative transport equation.

\subsection{Bilevel minimization for function approximation}\label{app:details_numeric_bilevel}
We now describe the setting of the numerical results reported in Sec.~\ref{sec:numeric} for Alg.~\ref{alg:bilevel} and its numerical version Alg.~\ref{alg:bilevel_discretized}. The goal of the experiments is to find the optimal Gaussian training distribution $\nu_\vartheta=\normal(m,\cC)\in\sP_2(\R^d)$ with respect to the bilevel OOD objective \eqref{eqn:exact_bilevel}, specialized to the task of function approximation. We optimize over the mean and covariance $\vartheta=(m,\cC)$. For all $x\in\R^d$, the target test functions $\cG^\star = g_i\colon\R^d\to\R$ for $i\in\{1,2,3,4\}$ that we consider are given by
\begin{subequations}\label{eqn:app_test_functions}
	\begin{align}
		g_1(x) &= \prod_{j=1}^d\frac{\abs{4x_j-2} + (j-2)/2}{1+(j-2)/2}\,,\\[3pt]
		g_2(x) &= 10\sin(\pi x_1 x_2) + 20(x_3-1/2)^2 + 10x_4 + 5x_5\,,\\[3pt]
		g_3(x) &= \sqrt{(100x_1)^2+\biggl(x_3(520\pi x_2 + 40\pi)-\frac{1}{(520\pi x_2 + 40\pi)(10x_4+1)}\biggr)^2}\,,\qa\\[3pt]
		g_4(x) &= \sum_{l=1}^{1000} c_l\kappa(x,x_l)\,.
	\end{align}
\end{subequations}
The first function comes from \citep{dunbar2025hyperparameter} and the middle two from \citep{potts2021interpretable,saha2023harfe}. They are common benchmarks in the approximation theory literature; $g_1$ is called the Sobol G function, $g_2$ the Friedmann-$1$ function, and $g_3$ the Friedmann-$2$ function. Although these functions are typically defined on compact domains (e.g., $[0,1]^d$ or $[-\pi,\pi]^d$), we instead consider approximating them on average with respect to Gaussian distributions (or mixtures thereof) on the whole of $\R^d$. The final function $g_4$ is a linear combination of kernel sections for some kernel function $\kappa\colon\R^d\times\R^d\to\R$. We sample (and then fix) the coefficients $c_l\diid \Unif([-1,1])$ and the centers $x_l\diid \Unif([-4,4])^{\otimes d}$. We allow the dimension $d$ to be arbitrary for all test functions; in particular, the functions $g_2$ and and $g_3$ are constant in certain directions for large enough $d$.

\paragraph{Model architecture and optimization algorithm.}
In all bilevel minimization experiments, we use the squared exponential kernel function $\kappa\colon \R^d\times\R^d\to\R_{>0}$ defined by
\begin{align}
	(x,x')\mapsto \kappa(x,x')\defeq \exp\left(-\frac{\norm{x-x'}^2_2}{\ell^2}\right)\,,
\end{align}
where $\ell>0$ is a scalar lengthscale hyperparameter. While not tuned for accuracy, we do adjust $\ell$ on the order of $O(1)$ to $O(10)$ depending on the target function $g_i$ to avoid blow up of the bilevel gradient descent iterations. In particular, we set $\ell=1$ for $g_1$, $\ell=3$ for $g_2$, $\ell=2/1.1$ for $g_3$, and $\ell=5$ for $g_4$.
We also experimented with the less regular Laplace kernel $(x,x')\mapsto \exp(-\norm{x-x'}_1/\ell)$; although giving similar accuracy as $\kappa$, the Laplace kernel led to slower bilevel optimization. Thus, we opted to report all results using the squared exponential kernel $\kappa$. This same kernel is used to define $g_4$ in \eqref{eqn:app_test_functions}, which belongs to the RKHS of $\kappa$ as a result. Given a candidate set of training samples, the resulting kernel method is trained on these samples via kernel ridge regression as in Lem.~\ref{lem:bilevel_model_update}. We solve all linear systems with lower triangular direct solves using Cholesky factors of the regularized kernel matrices. No iterative optimization algorithms are required to train the kernel regressors. When fitting the kernel models to data outside of the bilevel gradient descent loop (see the center and right columns of Fig.~\ref{fig:bilevel_row_all}, for instance), we set the ridge strength to be $\sigma^2 = 10^{-3}/N$, where $N$ is the current training sample size.

\paragraph{Training and optimization procedure.}
Moving on to describe the specific implementation details of Alg.~\ref{alg:bilevel_discretized}, we initialize $\vartheta^{(0)}=(m_0, I_d)$. We take $m_0=\frac{1}{2}(1,\ldots, 1)^\tp\in\R^d$ in all setups except for the one concerning test function $g_1$, where we set $m_0=(0,\ldots, 0)^\tp$. We decrease the gradient descent step sizes $\{t_k\}$ in Alg.~\ref{alg:bilevel_discretized} according to a cosine annealing scheduler \citep{loshchilov2017sgdr} with initial learning rate of $10^{-2}$ and final learning rate of $0$. The regularization parameters $\{\sigma_k\}$ are also selected according to the cosine annealing schedule with initial value $10^{-3}$ and final value $10^{-7}$. Although the sequence $\{N_k\}$ of training sample sizes could also be scheduled, we opt to fix $N_k=250$ for all $k$ in Alg.~\ref{alg:bilevel_discretized} for simplicity. No major differences were observed when choosing $\{N_k\}$ according to some cosine annealing schedule. The identification of more principled ways to select the sequences $\{\sigma_k\}$, $\{t_k\}$, and $\{N_k\}$ to optimally balance cost and accuracy is an important direction for future work.

For the probability measure $\Q\in\sP_2(\sP_2(\R^d))$ over test distributions, we choose it to be the empirical measure 
\begin{align}\label{eqn:app_rand_measure}
	\Q \defeq \frac{1}{K}\sum_{k=1}^K\delta_{\nu_k'}\,.
\end{align}
The test measures $\nu_k'\in\sP_2(\R^{d})$ in \eqref{eqn:app_rand_measure} are deterministic and fixed. In applications, these could be the datasets for which we require good OOD accuracy. However, to synthetically generate the $\{\nu_k'\}$ in practice, we set
\begin{align}
	\nu_k'\defeq \normal(m_k', \cC_k')\,,\qw m_k'\diid \normal(0,I_{d})\qa \cC'_k\diid \mathsf{Wishart}_{d}(I_{d},d+1) 
\end{align}
for each $k=1,\ldots, K$. Once the $K$ realizations of these distributions are generated, we view them as fixed once and for all. We assume that $K$ and $\Q$ are fixed, but we only have access to the empirical measure $\wht{\Q}$ defined by \eqref{eqn:app_rand_measure} but with each $\nu_k'$ replaced by the empirical measure $\frac{1}{M}\sum_{m=1}^M\delta_{v_m^{(k)}}$ generated by samples $\{v_m^{(k)}\}_{m=1}^M\sim{\nu_k'}^{\otimes M}$. In the experiments, we set $M=5000$ and either $K=10$ or $K=1$ (a single test distribution). To create a fair test set, we use a fixed $500$ sample validation subset to update the adjoint state as in Lem.~\ref{lem:bilevel_adjoint_update} and to monitor the validation OOD accuracy during gradient descent. The remaining $4500$ samples are used to evaluate the final OOD test accuracy of our trained models; see \eqref{eqn:app_err_eval_bilevel}.

Based on the available samples $\{v_m^{(k)}\}_{1\leq m\leq M,\,1\leq k\leq K}$ associated to the meta test distribution $\Q$, we construct six adaptive and nonadaptive baseline distributions for comparison:
\begin{enumerate}
	\item \sfit{(Normal)} The Gaussian measure $\normal(m_0, I_d)$ which is independent of $\Q$;
	
	\item \sfit{(Barycenter)} The empirical $\sfW_2$ barycenter of $\Q$ (which is Gaussian), as described in Ex.~\ref{ex:barycenter_gauss};
	
	\item \sfit{(Mixture)} The non-Gaussian empirical mixture of $\Q$ given by $\frac{1}{K}\sum_{k=1}^K\normal(\wht{m}_k,\wht{\cC}_k)$, where $\wht{m}_k$ and $\wht{\cC}_k$ are the empirical mean and covariance of the points $\{v_m^{(k)}\}_{m=1}^M$, respectively. Note that we enforce the mixture components to be Gaussian instead of empirical measures;
	
	\item \sfit{(Uniform)} The $d$-dimensional uniform distribution $\Unif([0,1]^d)\defeq \Unif([0,1])^{\otimes d}$, which is independent of $\Q$.
	
	\item \sfit{(nCoreSet)} Nonadaptive coreset described in \textbf{SM}~\ref{app:details_numeric_burgers} with features $\kappa(\slot, v_m)\in\cH_\kappa$ for $v_m$ belonging to the fixed pool of $500K$ validation points. New indices $m$ are selected according to the maxmin sequential update \eqref{eqn:sm_maxmin_coreset} with features as above and distance between features induced by the RKHS norm of $\kappa$. We use point-by-point updates and an initial random coreset of size one.
	
	\item \sfit{(aCoreSet)} Adaptive greedy coreset described in \textbf{SM}~\ref{app:details_numeric_burgers} with features $\{c_n\kappa(u_n, v_m)\}_{n=1}^N\in\R^N$ for $v_m$ belonging to the fixed pool of $500K$ validation points. The numbers $\{c_n\}$ are the kernel method coefficients trained on points $\{u_n\}$, which makes this an adaptive method. New indices $m$ are selected according to the maxmin sequential update \eqref{eqn:sm_maxmin_coreset} with features as above and distance between features induced by the Euclidean norm on $\R^N$. We select new points from the pool in batches of size ten and used an initial random coreset of size six.
\end{enumerate}
The pool-based active learning baselines are based on the work of \citet{musekamp2025active} and results for these methods are shown in Sec.~\ref{sec:numeric} and in Fig.~\ref{fig:bilevel_row_all_1000}.

It remains to describe the optimization of the Gaussian family $\nu_\vartheta=\normal(m,\cC)$ with parameter $\vartheta=(m,\cC)$. The closed form update for the mean follows from Rmk.~\ref{rmk:mean_param}. For the covariance matrix, we optimize over the lower triangular Cholesky factor $ L$ of $\cC= LL^\tp\in\R^{d\times d}$ instead of $\cC$ itself. This removes redundant parameters due to the symmetry of $\cC$. We enforce strict positive definiteness of $\cC$ by replacing all nonpositive diagonal entries of $L$ with a threshold value of $10^{-7}$. Other than this constraint, each entry of $L$ is optimized over the whole of $\R$. Recalling the log density formula \eqref{eqn:example_gaussian_density} from Ex.~\ref{ex:gaussian_log_density}, we use automatic differentiation to compute $\nabla_L\log(p_{(m,LL^\tp)})$; this factor appears in the definition of the approximate gradient \eqref{eqn:derivative_bilevel_parametric_approx} and in Alg.~\ref{alg:bilevel_discretized}.

\paragraph{Model evaluation.}
We evaluate the performance of all kernel regressors $\wht{\cG}_N^{(\nu,\sigma)}$ regularized with strength $\sigma^2$ and trained with $N$ i.i.d.~samples from $\nu$ with respect to the root relative average OOD squared error, which is defined as
\begin{align}\label{eqn:app_err_eval_bilevel}
	\mathsf{Err}\defeq \sqrt{\frac{\E_{\wht{\nu'}\sim\wht{\Q}}\E_{u'\sim\wht{\nu'}}\abs[\big]{\cG^\star(u')-\wht{\cG}_N^{(\nu,\sigma)}(u')}^2}{\E_{\wht{\nu'}\sim\wht{\Q}}\E_{u'\sim\wht{\nu'}}\abs[\big]{\cG^\star(u')}^2}}
	\approx\sqrt{\frac{\sE_\Q(\wht{\cG}_N^{(\nu,\sigma)})}{\sE_\Q(0)}}\,.
\end{align}
In the preceding display, we observe that $\mathsf{Err}$ employs a Monte Carlo discretization of the integrals appearing in $\sE_\Q$ by averaging over the empirical meta measure $\wht{\Q}$, which is composed of $K$ test measures. Each individual test measure is itself an empirical measure with $4500$ atoms at points that are unseen during the adjoint state calculation. See the preceding paragraph for details about our choice of $\Q$. We report values of $\mathsf{Err}$ in Figs.~\ref{fig:bilevel_row_sobolg}, \ref{fig:bilevel_row_all}, \ref{fig:bilevel_row_all_1000}, \ref{fig:bilevel_row_all_J1}, \ref{fig:bilevel_row_all_1000_J1} and in Tab.~\ref{tab:bilevel_func}.

Supplementary numerical results are given in Figs.~\ref{fig:bilevel_row_all}--\ref{fig:bilevel_cov_all_1000_J1}. Fig.~\ref{fig:bilevel_row_all} reports the same information that Fig.~\ref{fig:bilevel_row_sobolg} does, except now for the remaining functions $\{g_i\}$. The conclusions from these numerical results are mostly the same. The left column shows the training history over $50$ iterations; the ``seen'' curve computes the OOD error with the $\Q$ samples used in the bilevel algorithm, while the ``unseen'' curve computes the OOD error on a held-out set of $\Q$ samples. The third row shows a slight ``generalization gap'' for $\cG^\star=g_2$ ($d=8$). The optimized Gaussian outperforms alternative distributions except for the $g_3$ ($d=5$) and $g_4$ cases, where the empirical $\Q$ mixture eventually overtakes it. Fig.~\ref{fig:bilevel_row_all_1000} shows that this behavior is largely mitigated if $1000$ iterations of gradient descent are used to find the optimal Gaussian instead of $50$. We remark that in Fig.~\ref{fig:bilevel_row_sobolg}, the leftmost subplot showing the training history is designed to highlight the long-time behavior (i.e., $1000$ iterations) of the bilevel algorithm. In practice, we only run the algorithm for $O(10)$ to $O(100)$ gradient descent iterations; this is reflected in the leftmost column of Figs.~\ref{fig:bilevel_row_all}~and~\ref{fig:bilevel_row_all_J1}.

While the optimized Gaussian is shown to consistently outperform nonadaptive distributions when $\Q$ in \eqref{eqn:app_rand_measure} is composed of $K=10$ atoms (Figs.~\ref{fig:bilevel_row_all}~and~\ref{fig:bilevel_row_all_1000}) and $N$ is large enough, this is less true for a single test distribution ($K=1$), as seen in Figs.~\ref{fig:bilevel_row_all_J1}~and~\ref{fig:bilevel_row_all_1000_J1} when $N$ is sufficiently large. In this case, the barycenter and mixture distributions are the same test distribution and lead to trained models with lower OOD error. When $N$ is small and many iterations are used to find the optimal Gaussian, the optimal Gaussian can achieve smaller OOD error than models trained on the test distribution. Understanding for what $\cG^\star$, $\Q$, $N$, and gradient descent iteration count makes this desirable improvement possible is an interesting and important question for future work.

To understand what properties the optimal Gaussian absorbs from $\cG^\star$ and $\Q$, Figs.~\ref{fig:bilevel_cov_all},~\ref{fig:bilevel_cov_all_1000},~\ref{fig:bilevel_cov_all_J1},~and~\ref{fig:bilevel_cov_all_1000_J1} visualize the learned covariance matrices, barycenter covariances, and mixture covariances. For instance, the learned covariance puts high weight on the $x_3$ coordinate in the $g_2$ case (Fig.~\ref{fig:bilevel_cov_all_1000}, rows 2--3, diagonal entry of matrix) even though the barycenter and mixture do not. This makes sense because $g_2$ is quadratic in $x_3$ and lower order in all other variables (recall \eqref{eqn:app_test_functions}); thus, the learned covariance picks out the highest order term in the ground truth map. Similarly, for $g_3$ the learned covariance assigns high importance to the $x_2$ and $x_3$ coordinates which are also fast growing. In all cases (except for $g_4$), the learned covariance differs substantially from the mixture or test distribution's covariance. These covariance visualization figures highlight an additional benefit of our $\cG^\star$-dependent estimators: they lead to interpretable training distributions tailored to the structure of the problem.

\begin{figure}[htbp!]%
	\centering
	\subfloat{
		\includegraphics[width=0.32\textwidth]{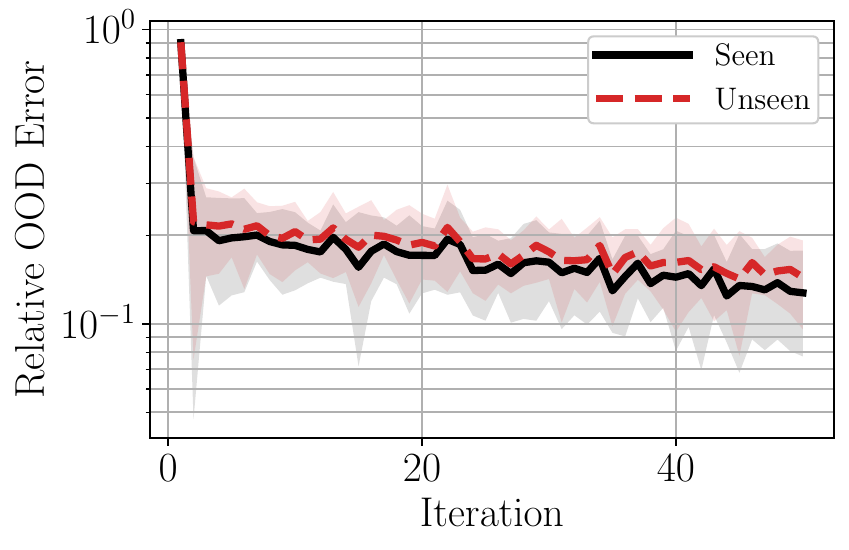}
	}%
	\subfloat{
		\includegraphics[width=0.32\textwidth]{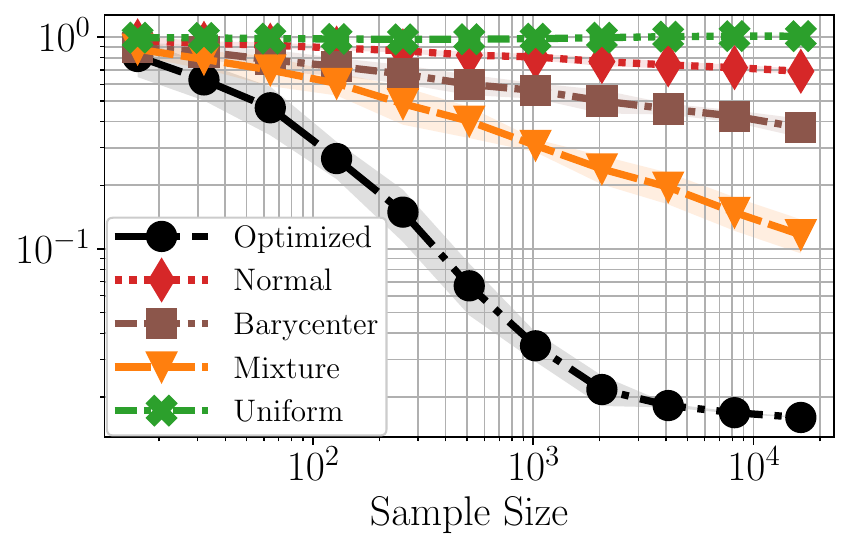}
	}%
	\subfloat{
		\includegraphics[width=0.32\textwidth]{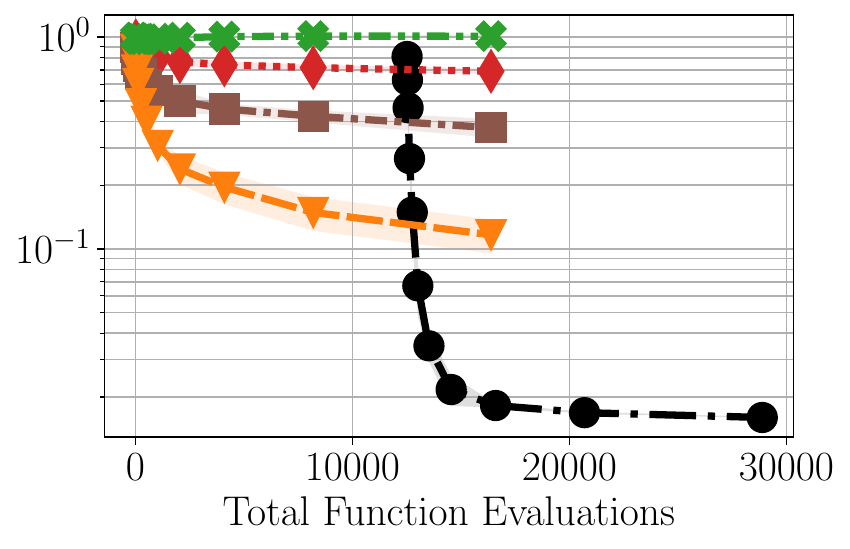}
	}\\
	\subfloat{
		\includegraphics[width=0.32\textwidth]{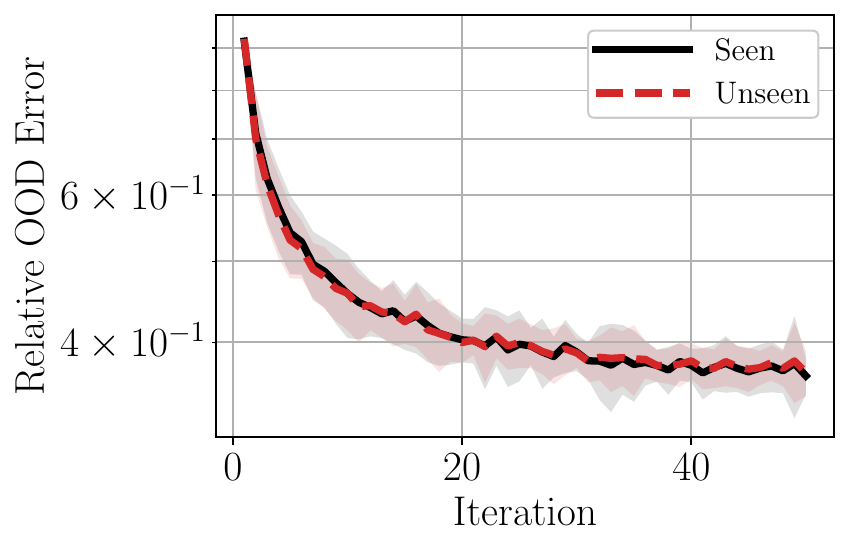}
	}%
	\subfloat{
		\includegraphics[width=0.32\textwidth]{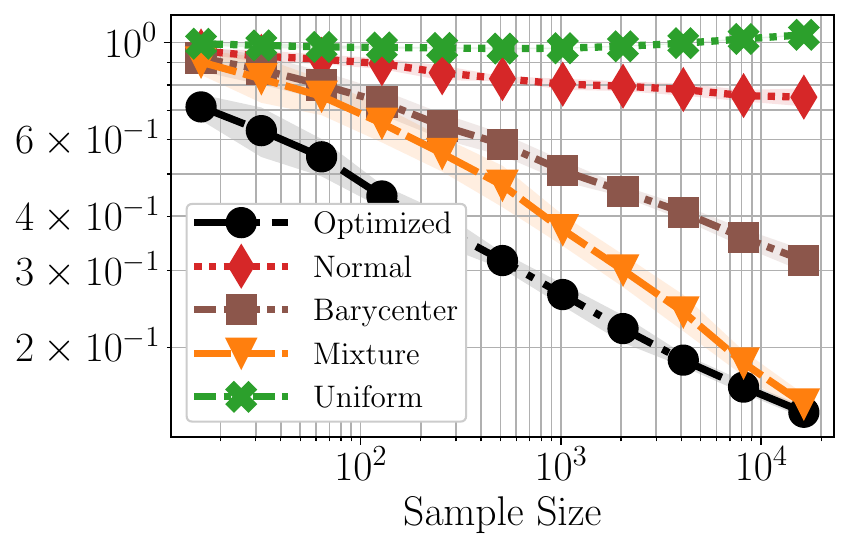}
	}%
	\subfloat{
		\includegraphics[width=0.32\textwidth]{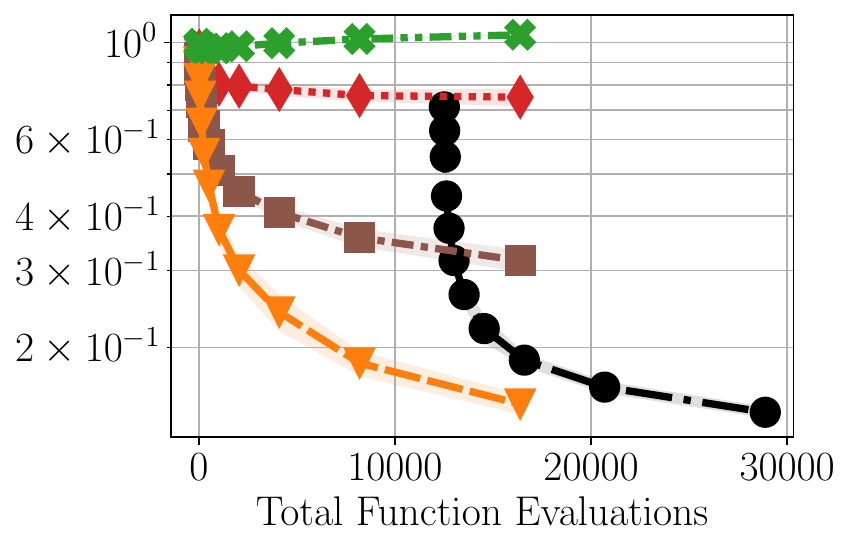}
	}\\
	\subfloat{
		\includegraphics[width=0.32\textwidth]{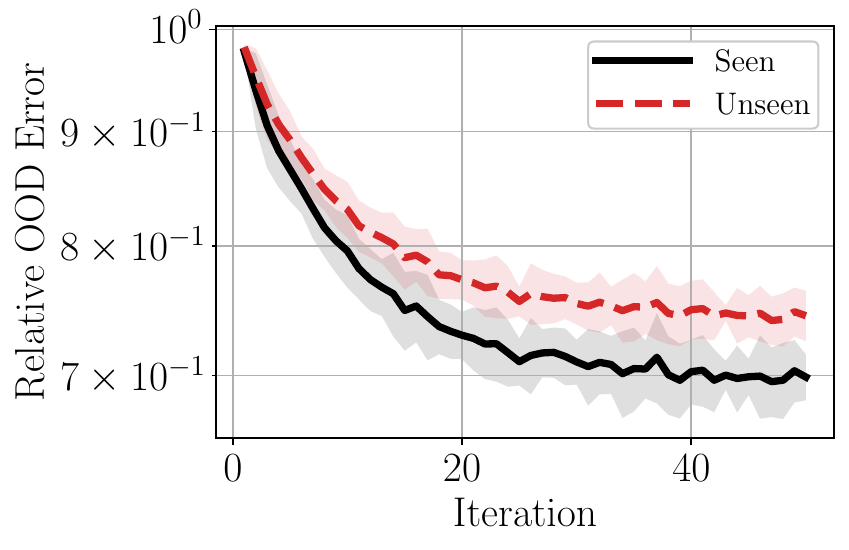}
	}%
	\subfloat{
		\includegraphics[width=0.32\textwidth]{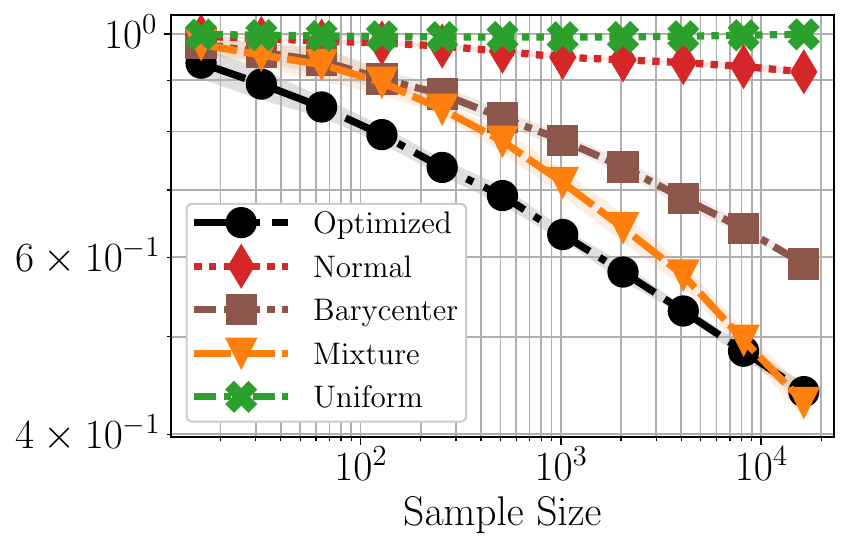}
	}%
	\subfloat{
		\includegraphics[width=0.32\textwidth]{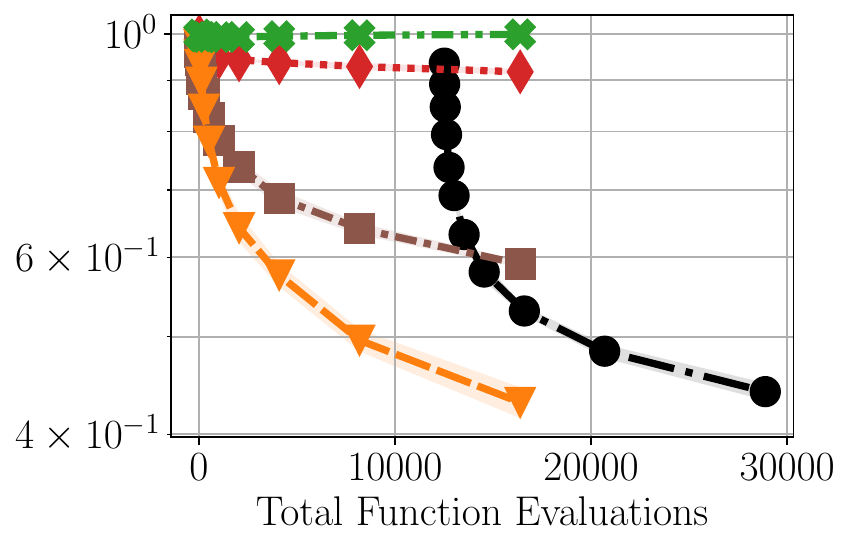}
	}\\
	\subfloat{
		\includegraphics[width=0.32\textwidth]{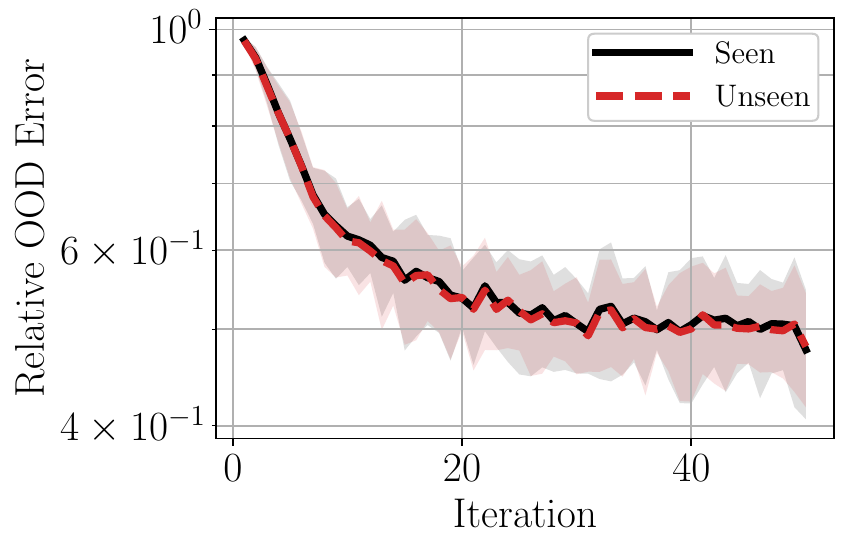}
	}%
	\subfloat{
		\includegraphics[width=0.32\textwidth]{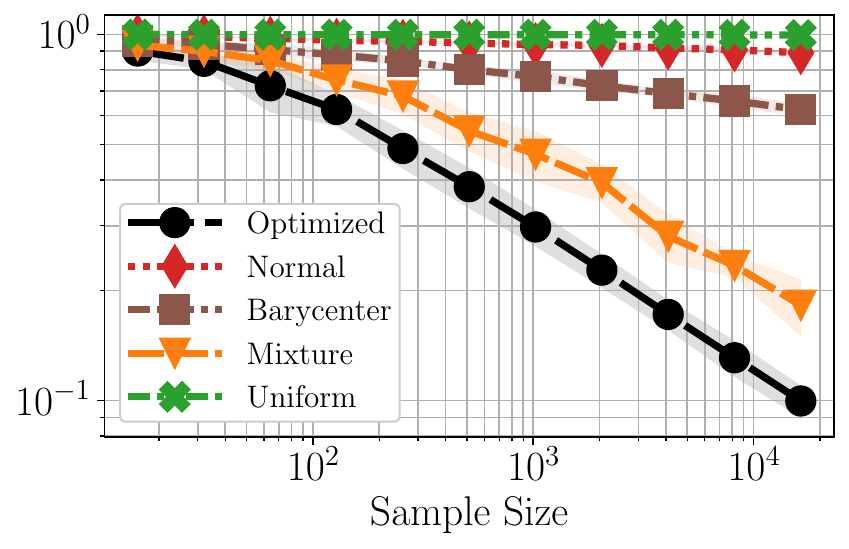}
	}%
	\subfloat{
		\includegraphics[width=0.32\textwidth]{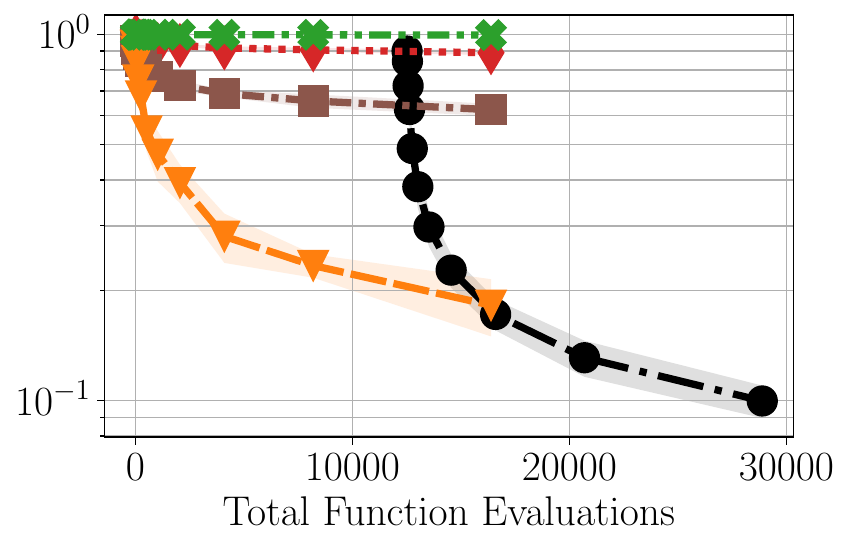}
	}\\
	\subfloat{
		\includegraphics[width=0.32\textwidth]{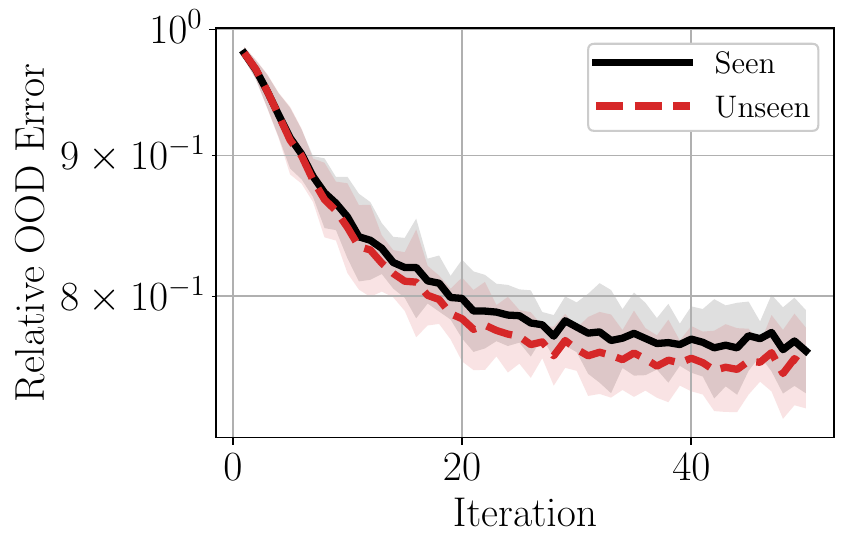}
	}%
	\subfloat{
		\includegraphics[width=0.32\textwidth]{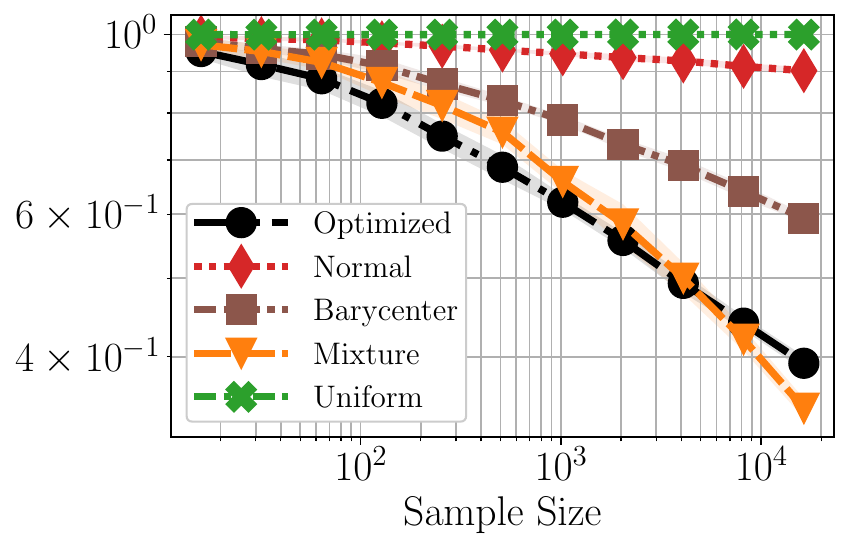}
	}%
	\subfloat{
		\includegraphics[width=0.32\textwidth]{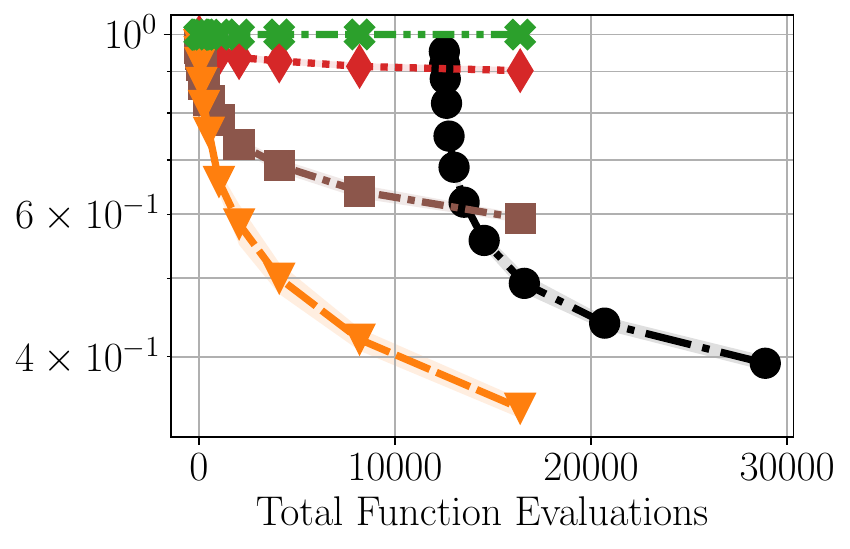}
	}\\
	\subfloat{
		\includegraphics[width=0.32\textwidth]{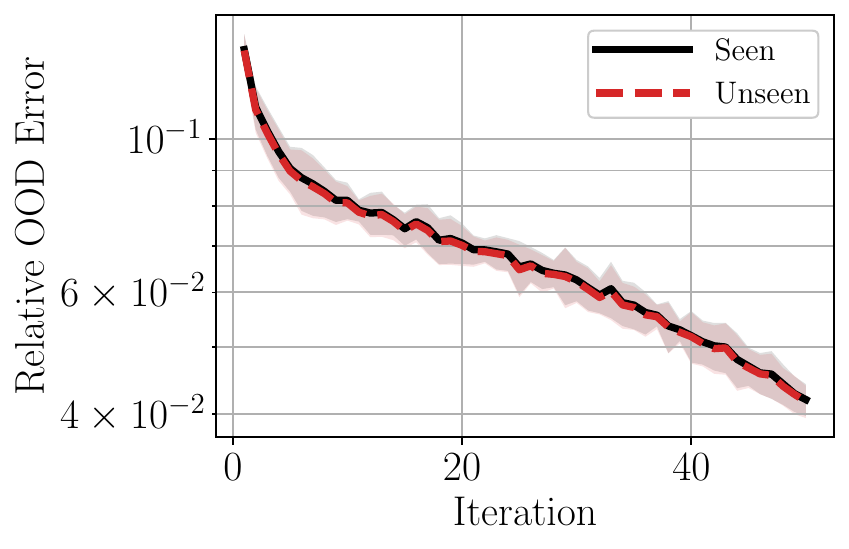}
	}%
	\subfloat{
		\includegraphics[width=0.32\textwidth]{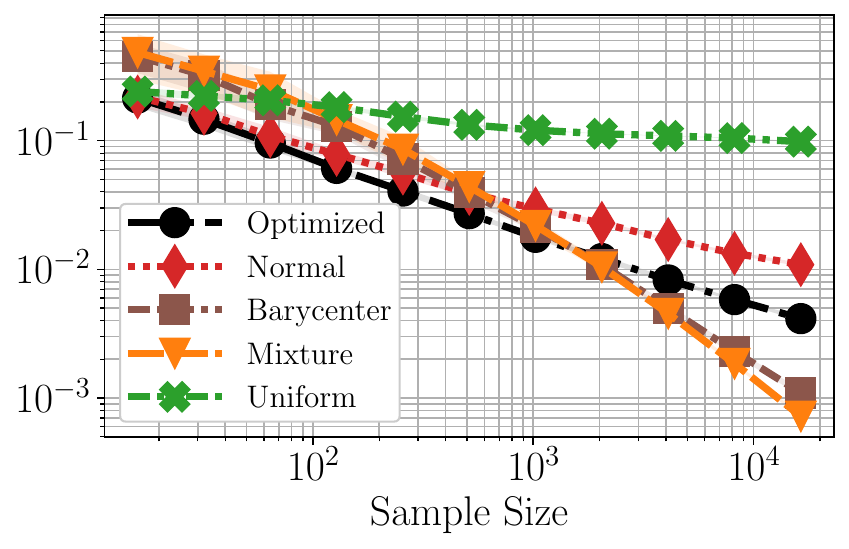}
	}%
	\subfloat{
		\includegraphics[width=0.32\textwidth]{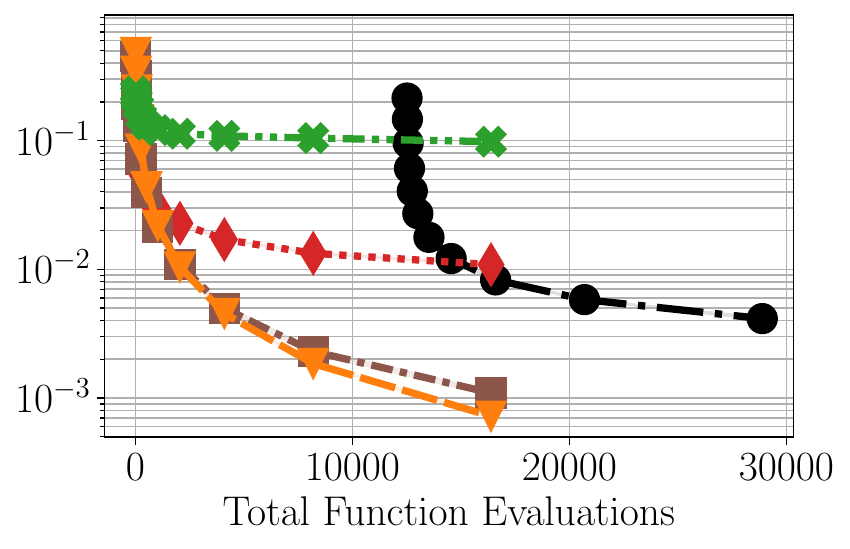}
	}
	\vspace{-1mm}
	\caption{\small Bilevel Alg.~\ref{alg:bilevel_discretized} applied to the various 
		ground truths $\{g_i\}$ with $K=10$ test atoms and $50$ gradient descent iterations. (Left column) Evolution of $\mathsf{Err}$ from \eqref{eqn:app_err_eval_bilevel} vs. iterations. (Center column) $\mathsf{Err}$ of model trained on $N$ samples from optimal $\nu_\vartheta$ vs. nonadaptive distributions. (Right column) Same as center, except incorporating the additional function evaluation cost incurred from Alg.~\ref{alg:bilevel}. From top to bottom, each row represents target functions $\{g_i\}$ corresponding to those listed in each column of Tab.~\ref{tab:bilevel_func} (from left to right). Shading represents two standard deviations away from the mean $\mathsf{Err}$ over $10$ independent runs.
	}
	\label{fig:bilevel_row_all}
\end{figure}

\begin{figure}[htbp!]%
	\centering
	\subfloat{
		\includegraphics[width=0.68\textwidth]{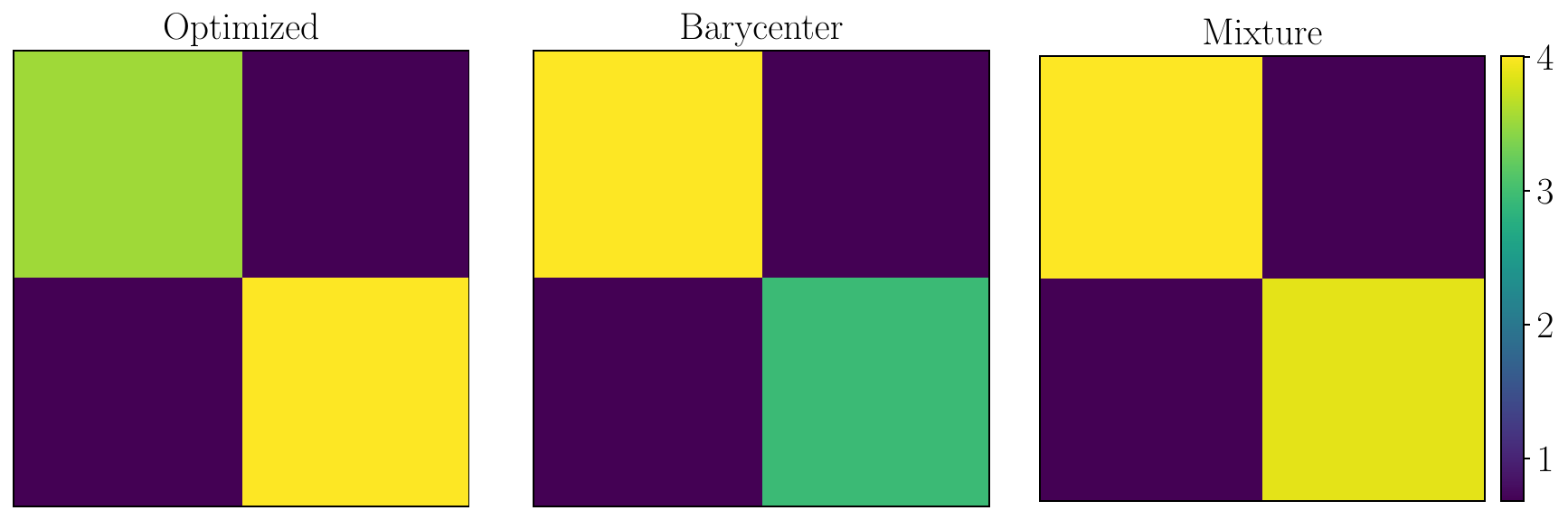}
	}\\
	\subfloat{
		\includegraphics[width=0.69\textwidth]{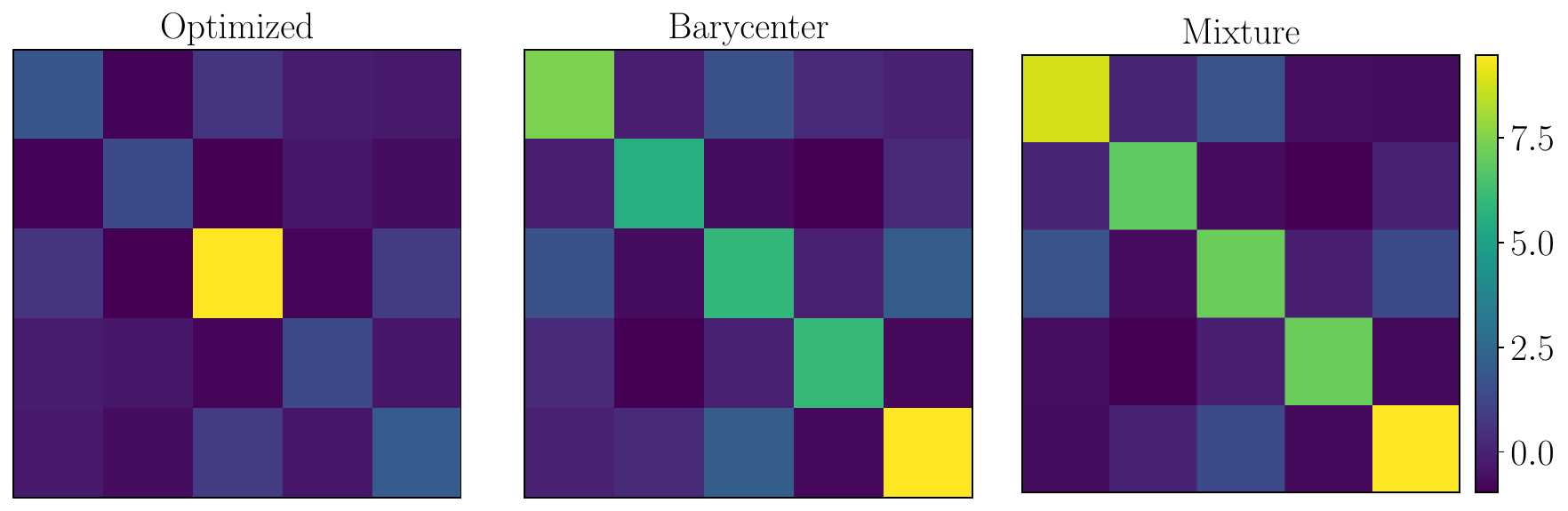}
	}\\
	\subfloat{
		\includegraphics[width=0.69\textwidth]{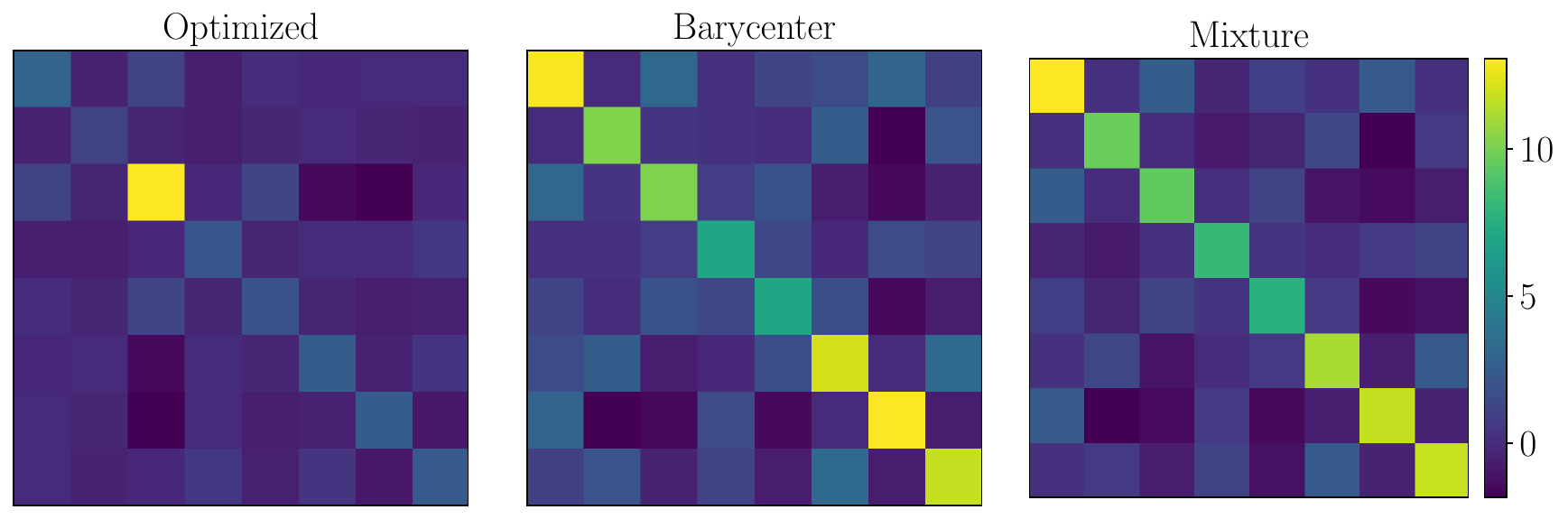}
	}\\
	\subfloat{
		\includegraphics[width=0.69\textwidth]{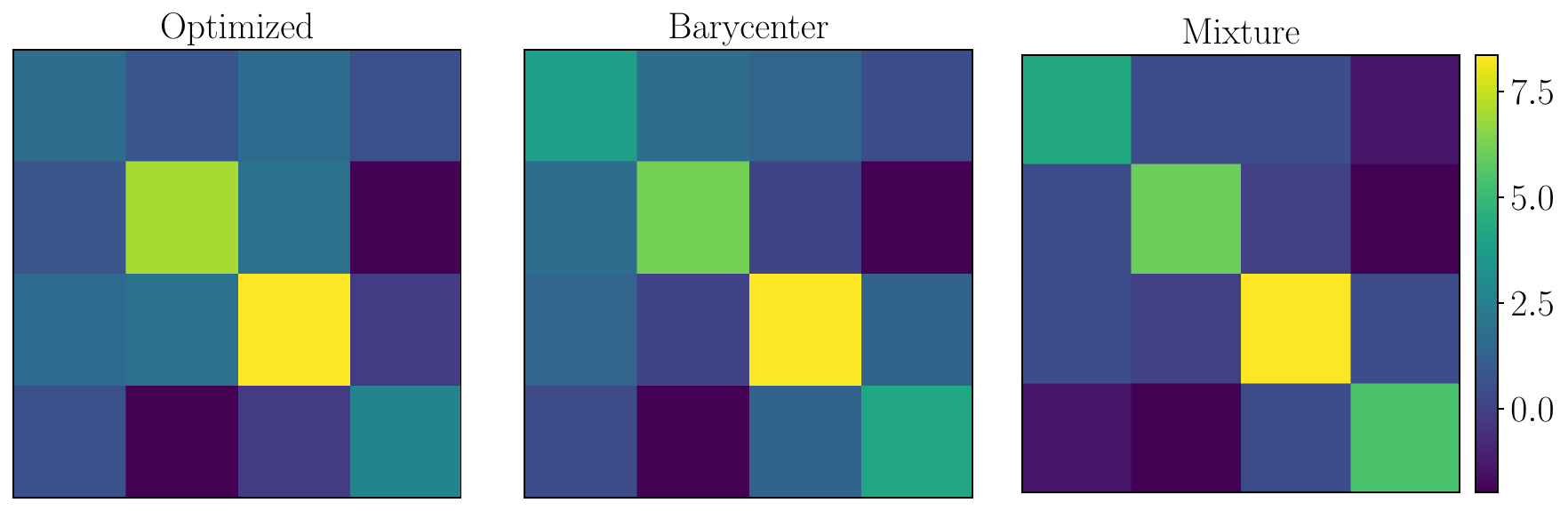}
	}\\
	\subfloat{
		\includegraphics[width=0.69\textwidth]{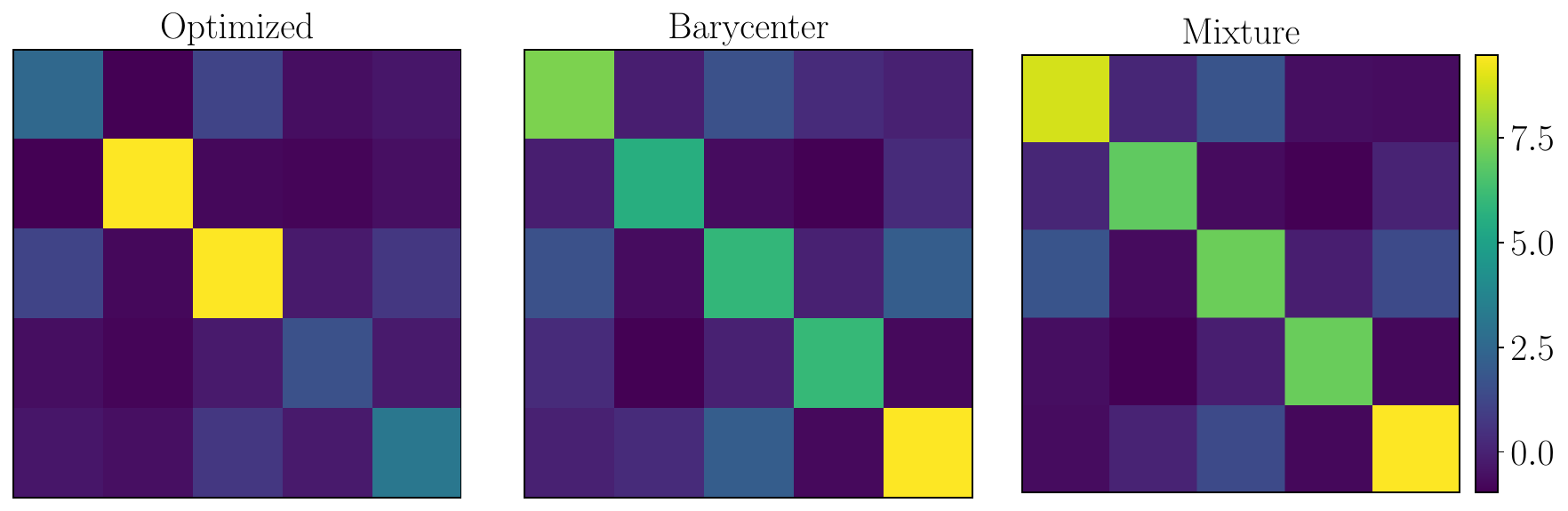}
	}\\
	\subfloat{
		\includegraphics[width=0.69\textwidth]{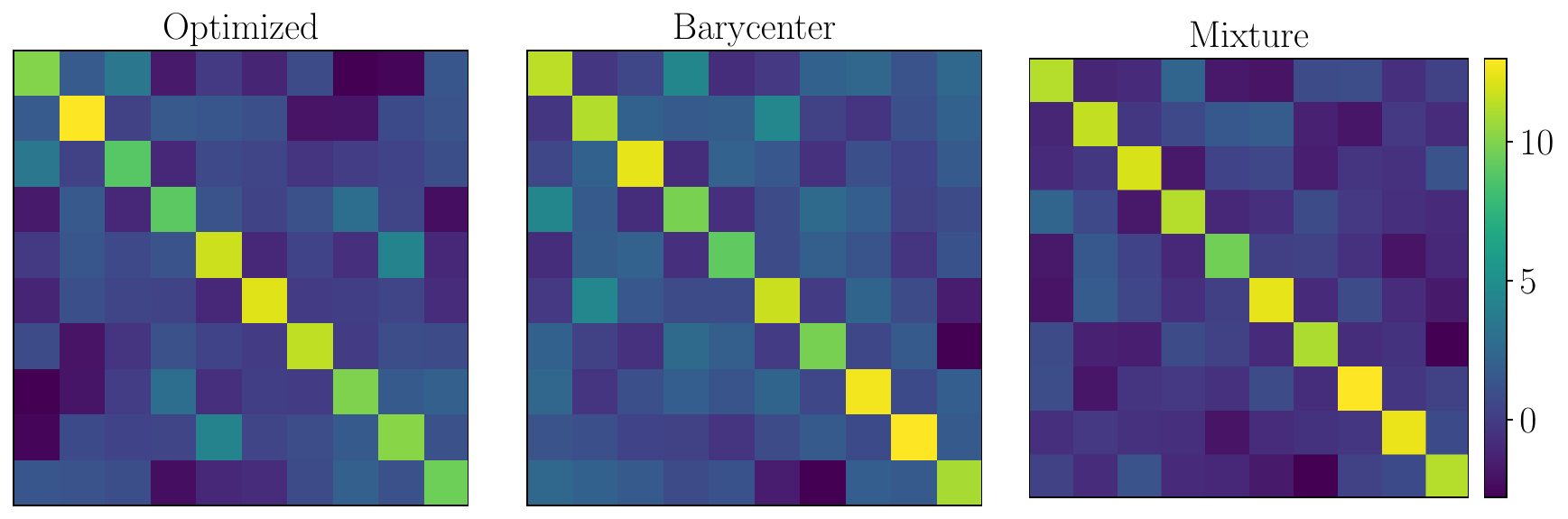}
	}\\
	\vspace{-1mm}
	\caption{\small Bilevel Alg.~\ref{alg:bilevel_discretized} applied with $K=10$ test atoms and $50$ gradient descent iterations. Each column visualizes the covariance matrix of the optimized (left), empirical $\Q$ $\sfW_2$ barycenter (center), and empirical $\Q$ mixture distributions (right). From top to bottom, each row represents target functions $\{g_i\}$ corresponding to those listed in each column of Tab.~\ref{tab:bilevel_func} (from left to right).
	}
	\label{fig:bilevel_cov_all}
\end{figure}

\begin{figure}[htbp!]%
	\centering
	\subfloat{
		\includegraphics[width=0.32\textwidth]{loss_sobolg_1000}
	}%
	\subfloat{
		\includegraphics[width=0.32\textwidth]{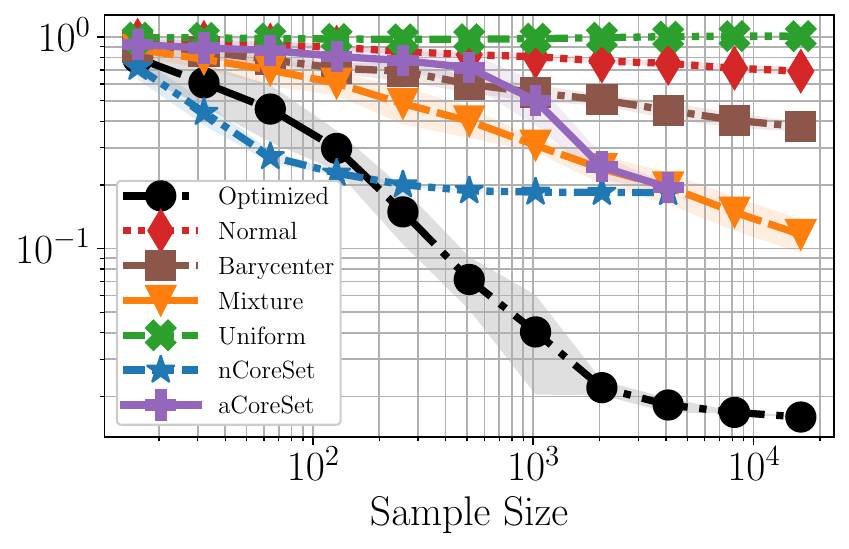}
	}\\
	\subfloat{
		\includegraphics[width=0.32\textwidth]{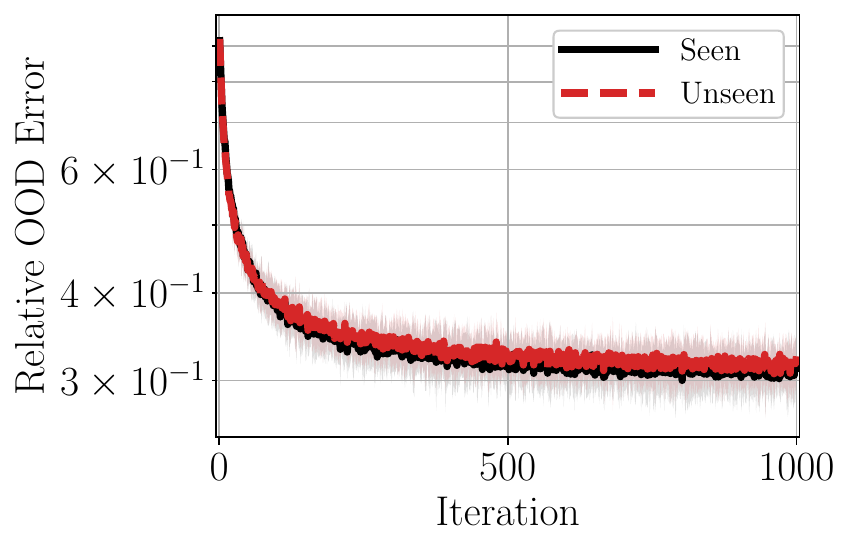}
	}%
	\subfloat{
		\includegraphics[width=0.32\textwidth]{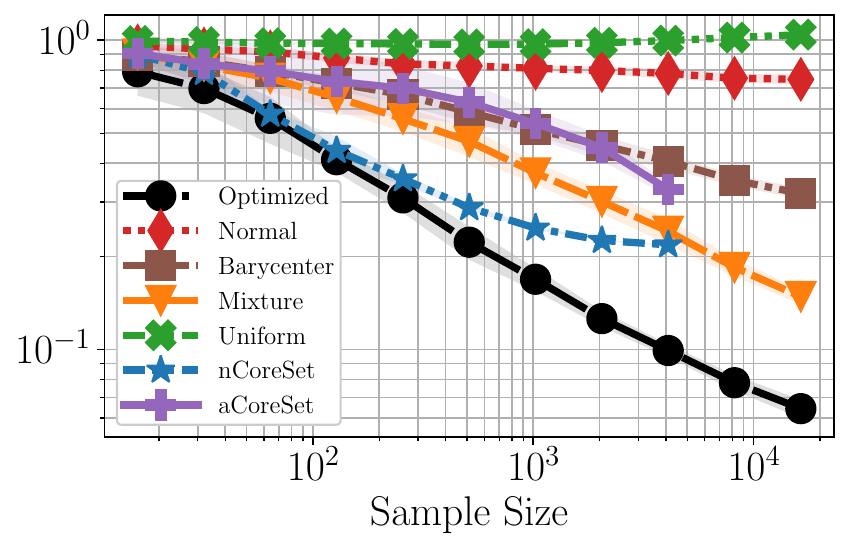}
	}\\
	\subfloat{
		\includegraphics[width=0.32\textwidth]{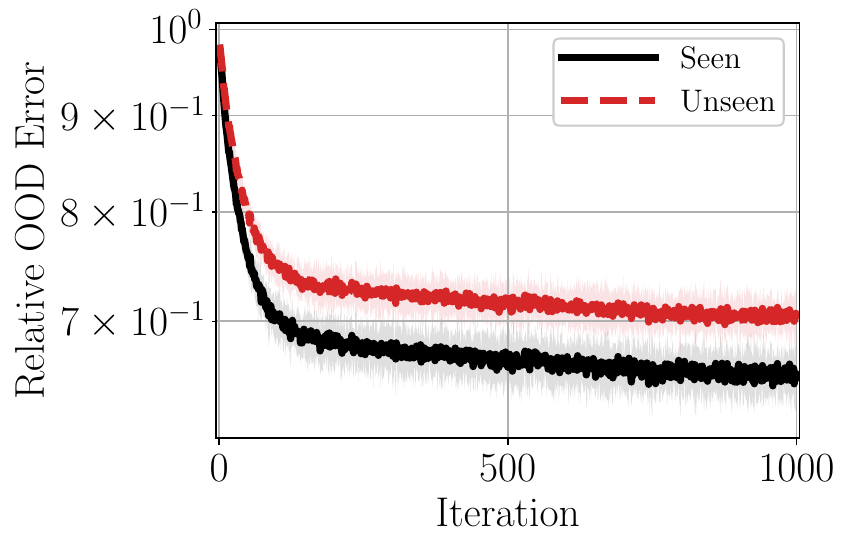}
	}%
	\subfloat{
		\includegraphics[width=0.32\textwidth]{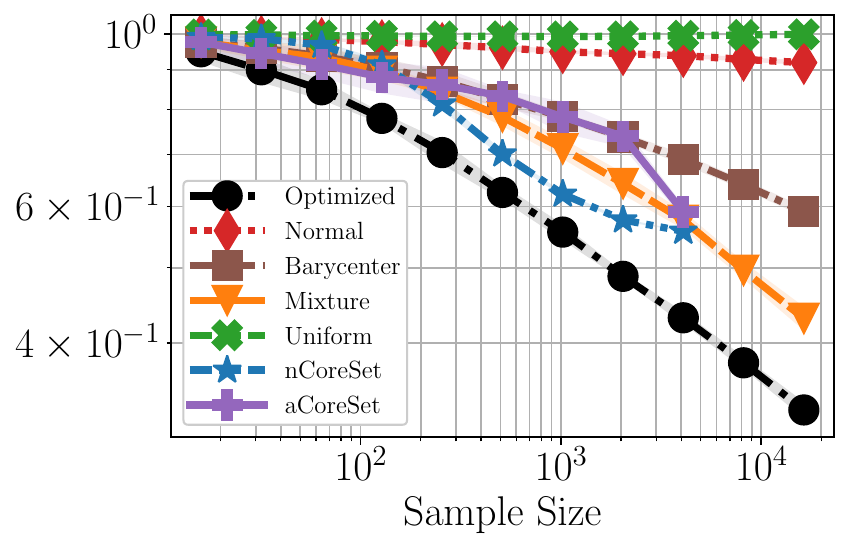}
	}\\
	\subfloat{
		\includegraphics[width=0.32\textwidth]{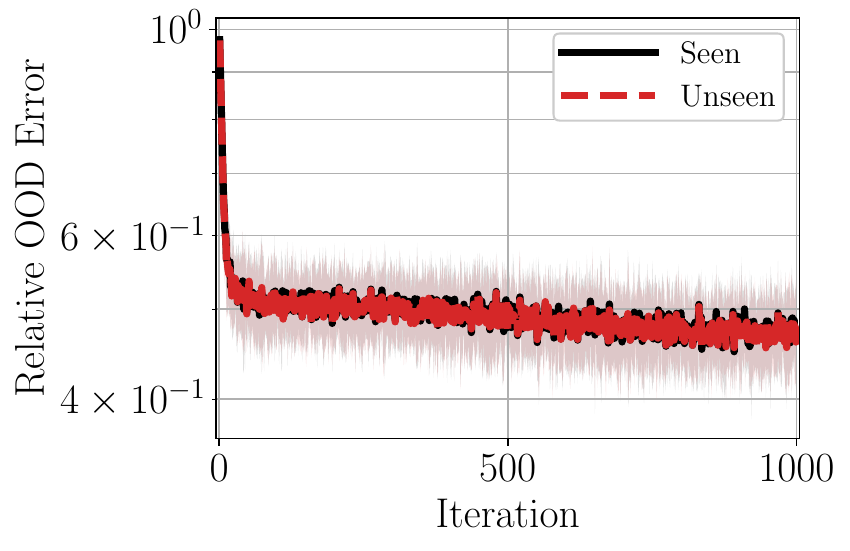}
	}%
	\subfloat{
		\includegraphics[width=0.32\textwidth]{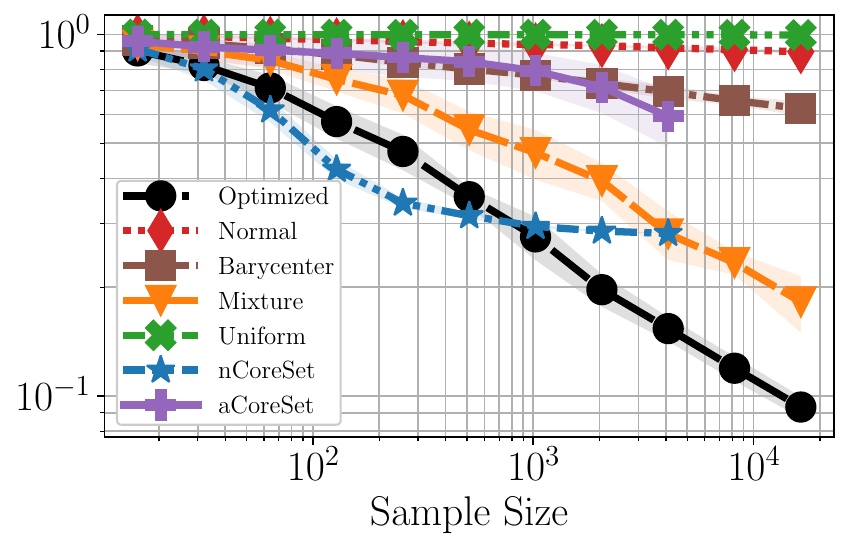}
	}\\
	\subfloat{
		\includegraphics[width=0.32\textwidth]{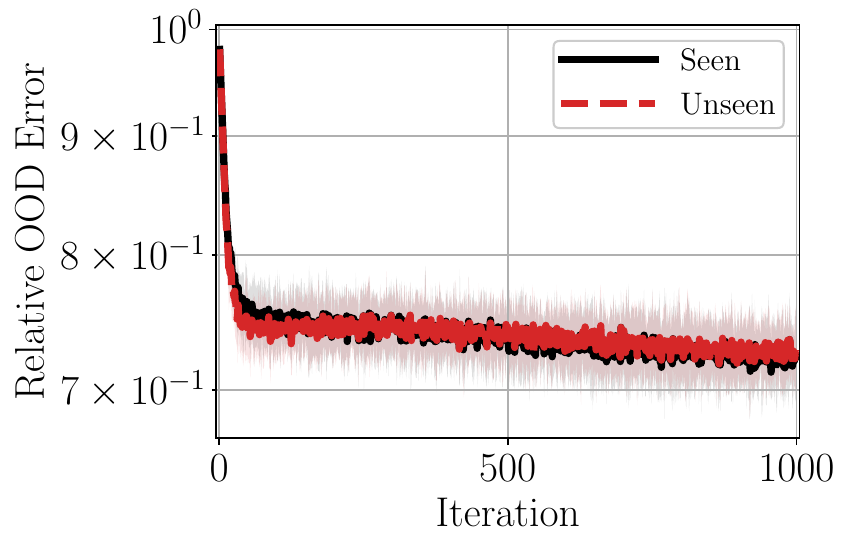}
	}%
	\subfloat{
		\includegraphics[width=0.32\textwidth]{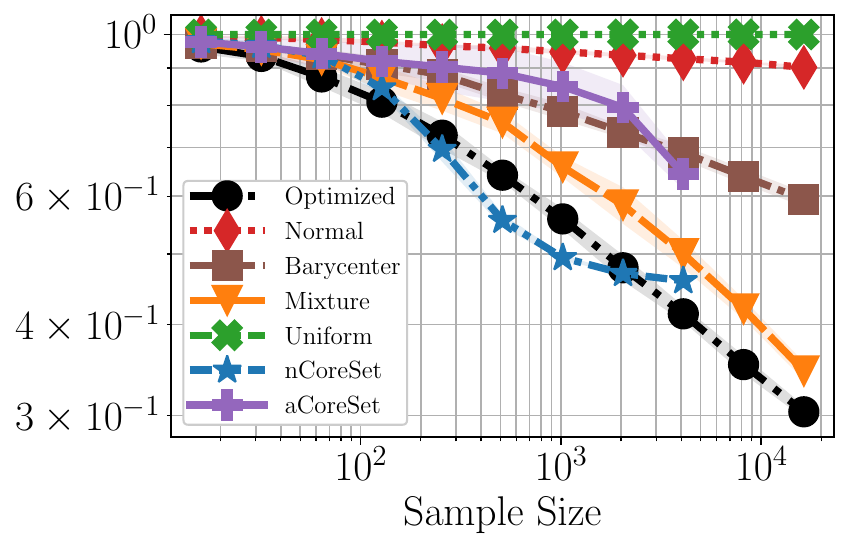}
	}\\
	\subfloat{
		\includegraphics[width=0.32\textwidth]{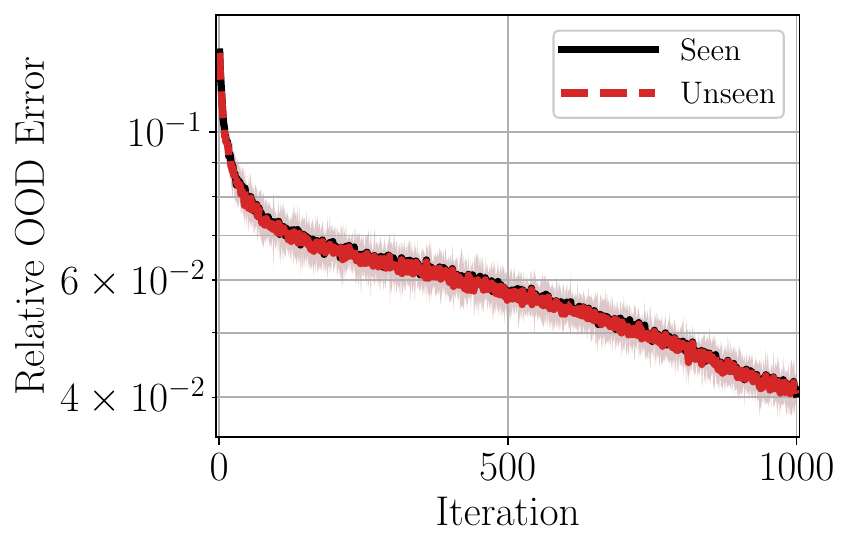}
	}%
	\subfloat{
		\includegraphics[width=0.32\textwidth]{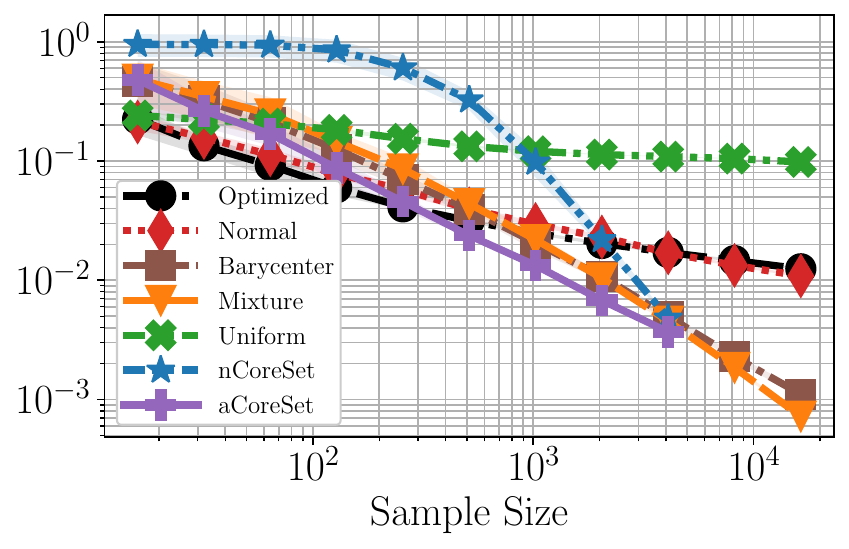}
	}
	\vspace{-1mm}
	\caption{\small Bilevel Alg.~\ref{alg:bilevel_discretized} applied to the various 
		ground truths $\{g_i\}$ with $K=10$ test atoms and $1000$ gradient descent iterations. (Left column) Evolution of $\mathsf{Err}$ from \eqref{eqn:app_err_eval_bilevel} vs. iterations. (Right column) $\mathsf{Err}$ of model trained on $N$ samples from optimal $\nu_\vartheta$ vs. baseline distributions. From top to bottom, each row represents target functions $\{g_i\}$ corresponding to those listed in each column of Tab.~\ref{tab:bilevel_func} (from left to right). Shading represents two standard deviations away from the mean $\mathsf{Err}$ over $10$ independent runs.
	}
	\label{fig:bilevel_row_all_1000}
\end{figure}

\begin{figure}[htbp!]%
	\centering
	\subfloat{
		\includegraphics[width=0.68\textwidth]{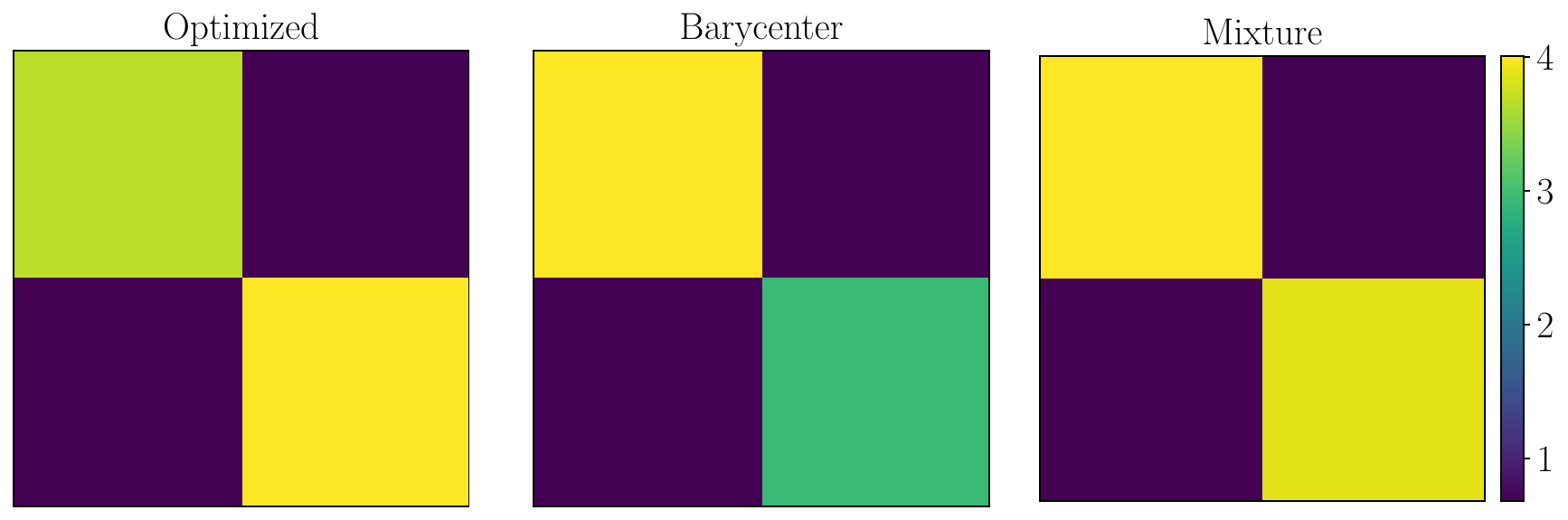}
	}\\
	\subfloat{
		\includegraphics[width=0.69\textwidth]{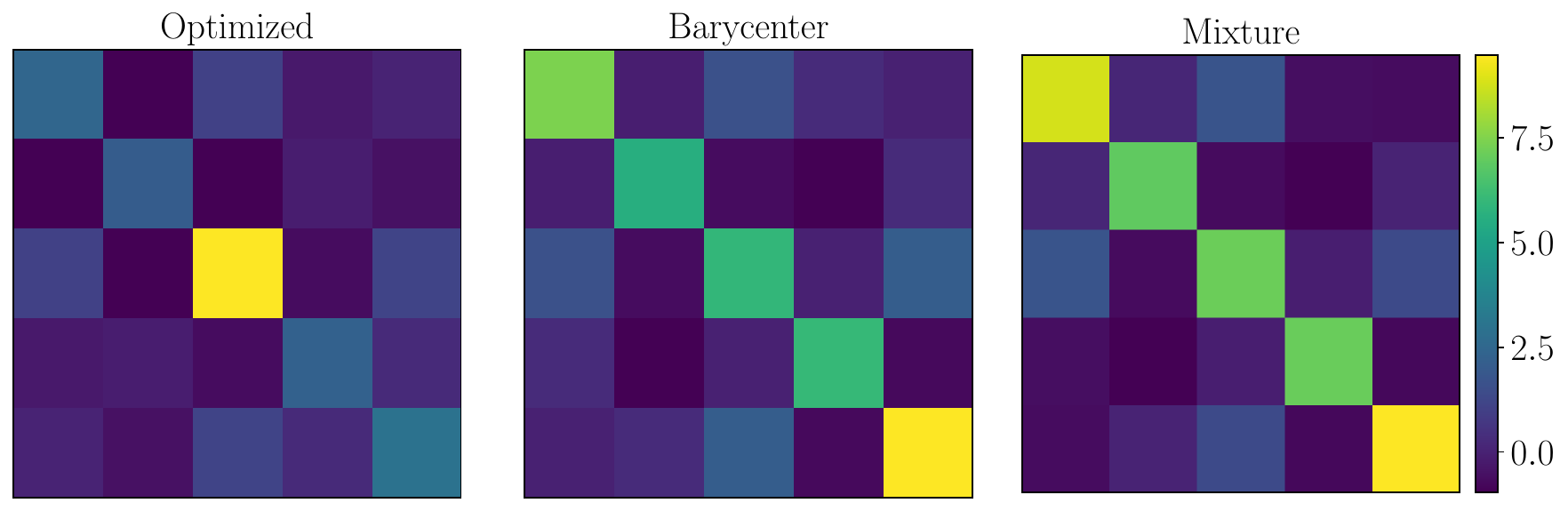}
	}\\
	\subfloat{
		\includegraphics[width=0.69\textwidth]{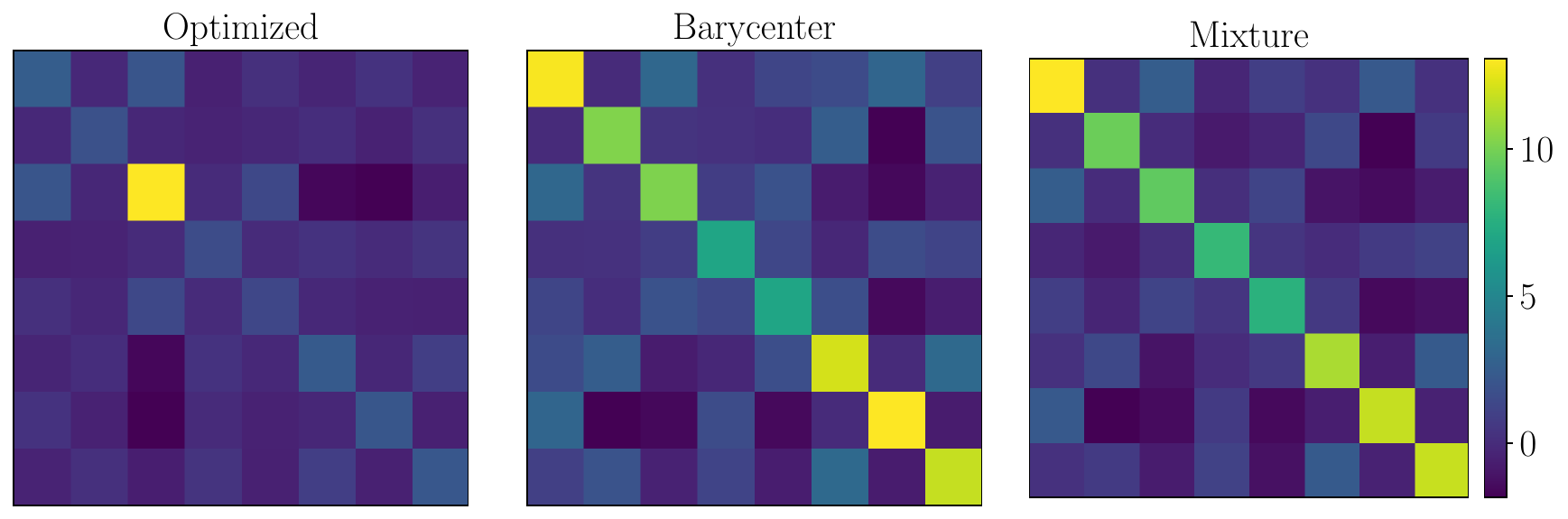}
	}\\
	\subfloat{
		\includegraphics[width=0.69\textwidth]{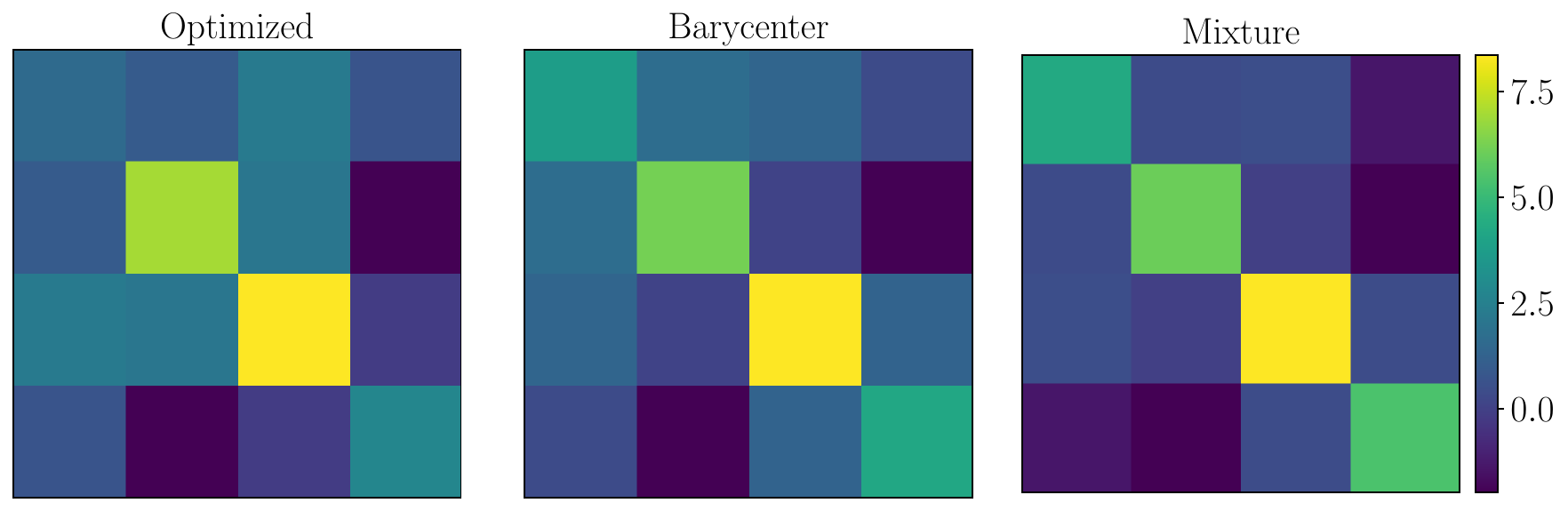}
	}\\
	\subfloat{
		\includegraphics[width=0.69\textwidth]{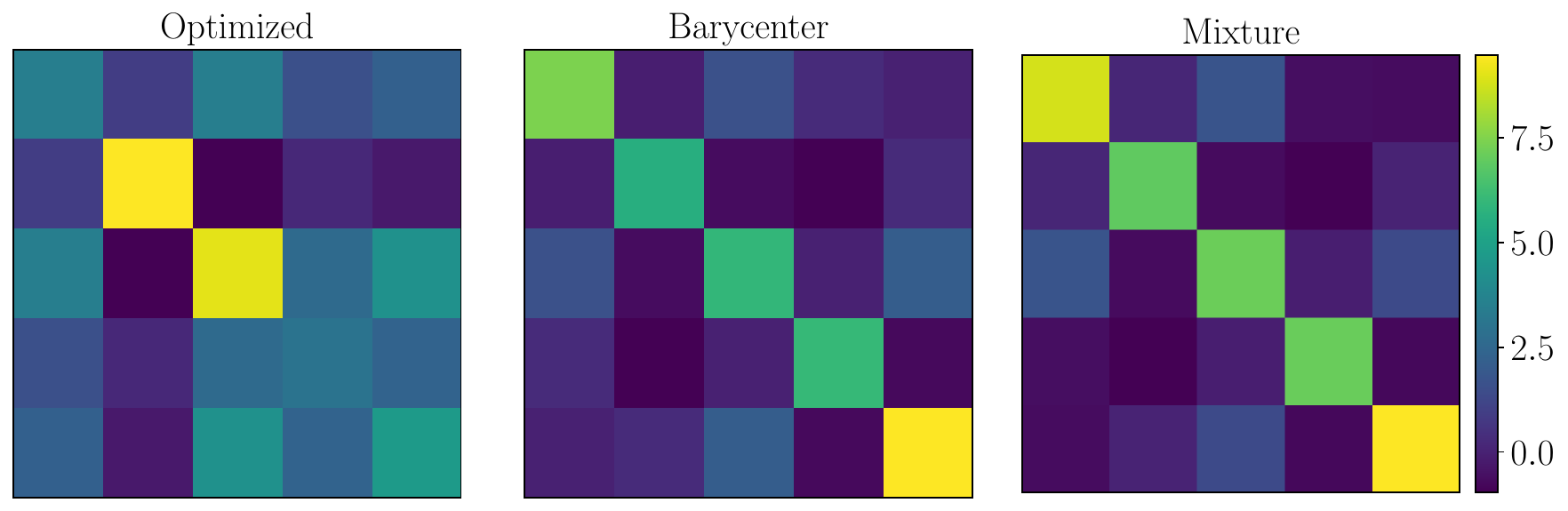}
	}\\
	\subfloat{
		\includegraphics[width=0.69\textwidth]{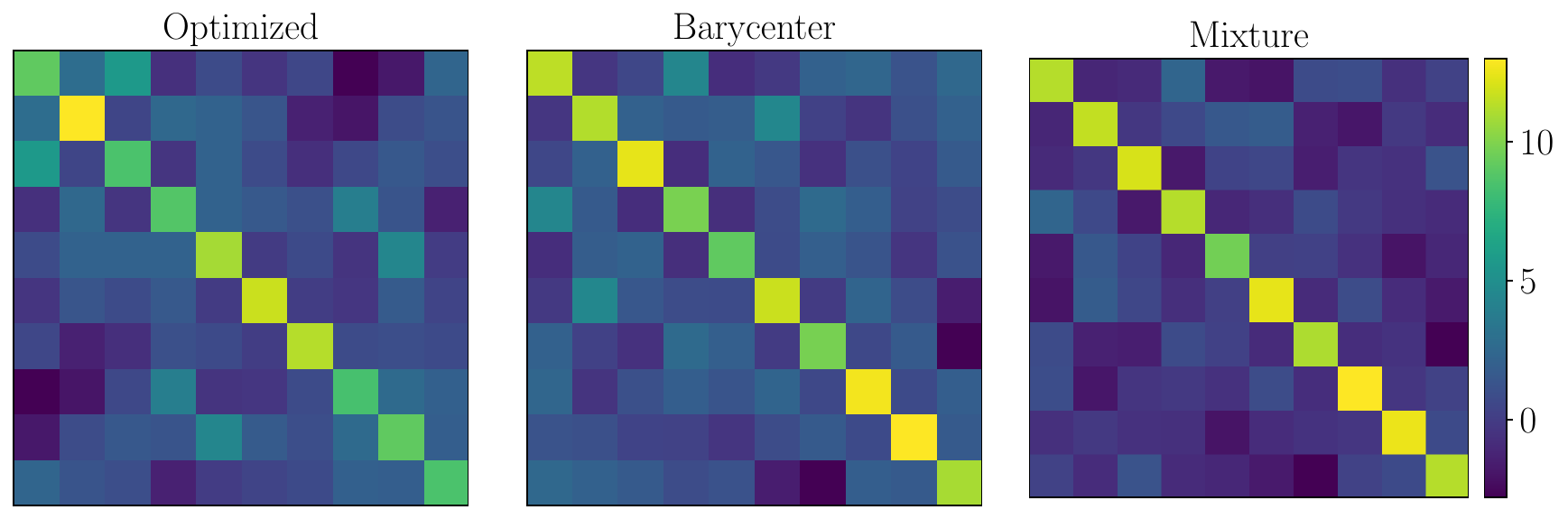}
	}\\
	\vspace{-1mm}
	\caption{\small Bilevel Alg.~\ref{alg:bilevel_discretized} applied with $K=10$ test atoms and $1000$ gradient descent iterations. Each column visualizes the covariance matrix of the optimized (left), empirical $\Q$ $\sfW_2$ barycenter (center), and empirical $\Q$ mixture distributions (right). From top to bottom, each row represents target functions $\{g_i\}$ corresponding to those listed in each column of Tab.~\ref{tab:bilevel_func} (from left to right).
	}
	\label{fig:bilevel_cov_all_1000}
\end{figure}

\begin{figure}[htbp!]%
	\centering
	\subfloat{
		\includegraphics[width=0.32\textwidth]{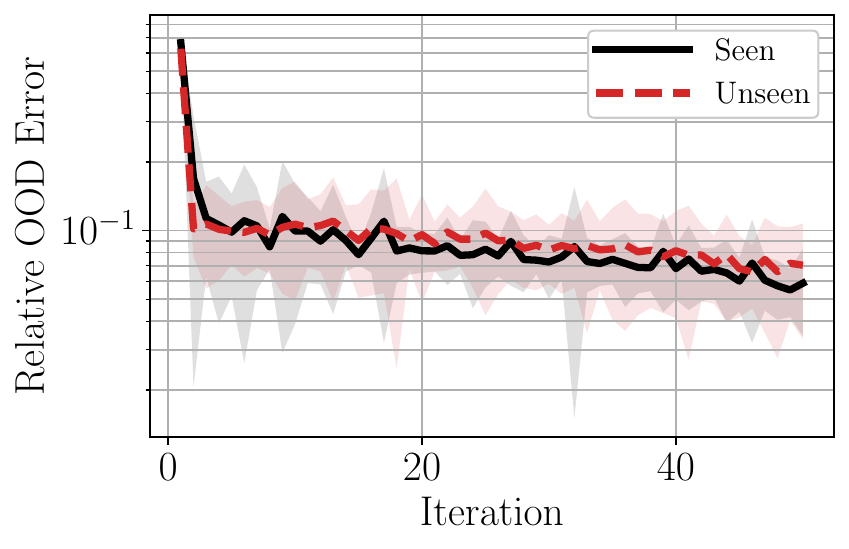}
	}%
	\subfloat{
		\includegraphics[width=0.32\textwidth]{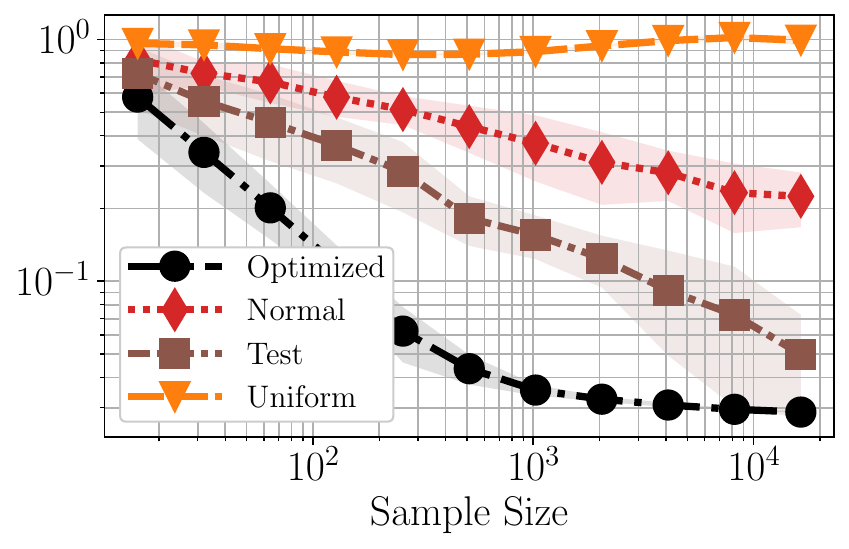}
	}%
	\subfloat{
		\includegraphics[width=0.32\textwidth]{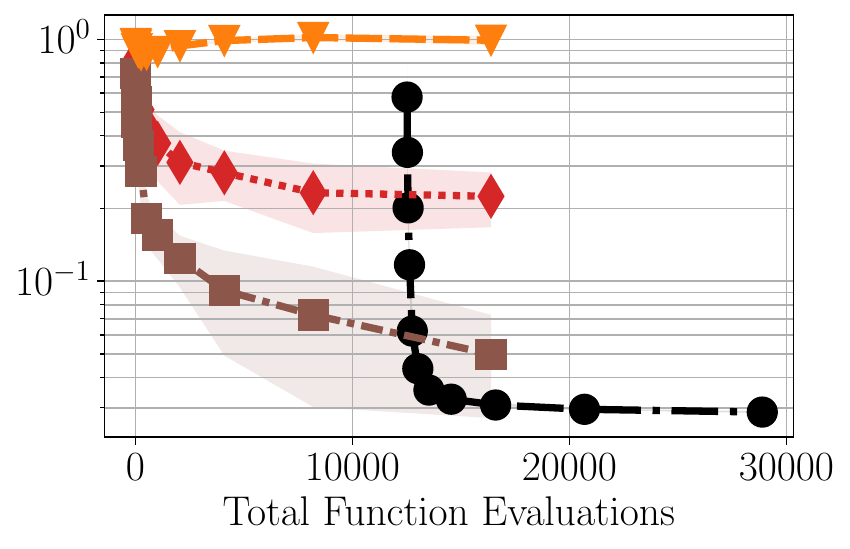}
	}\\
	\subfloat{
		\includegraphics[width=0.32\textwidth]{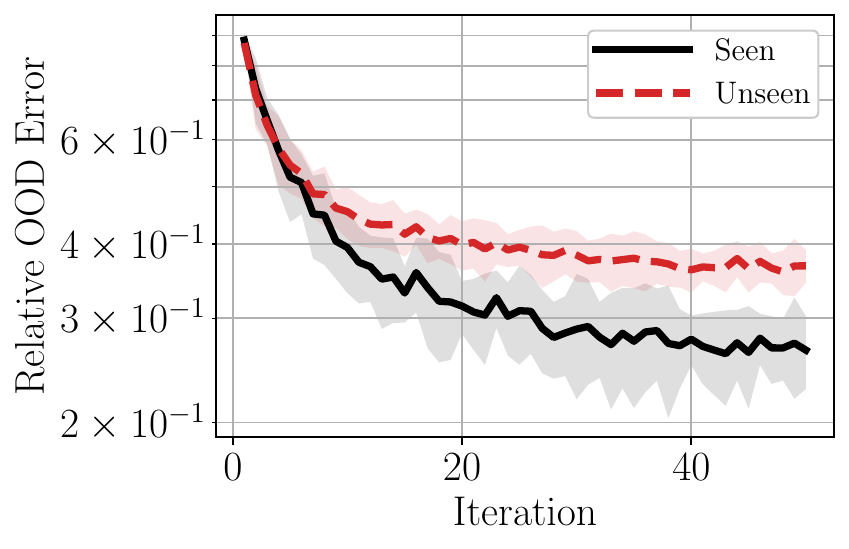}
	}%
	\subfloat{
		\includegraphics[width=0.32\textwidth]{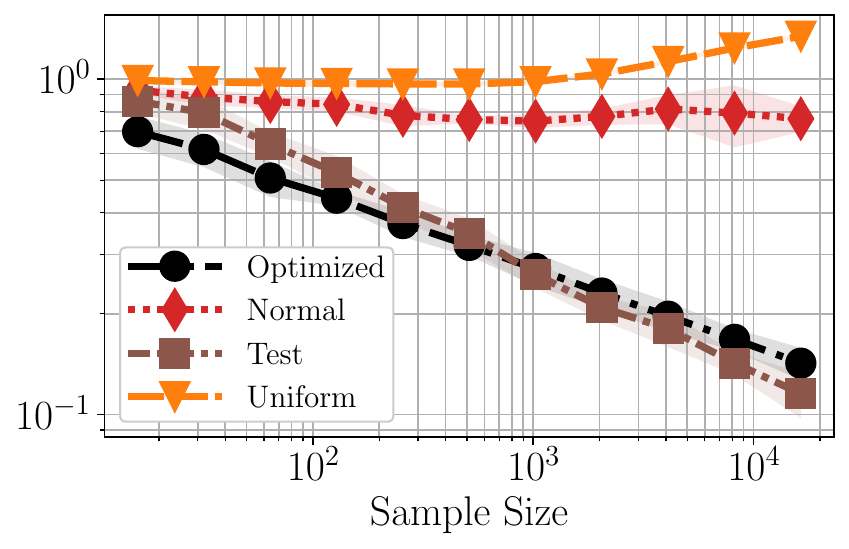}
	}%
	\subfloat{
		\includegraphics[width=0.32\textwidth]{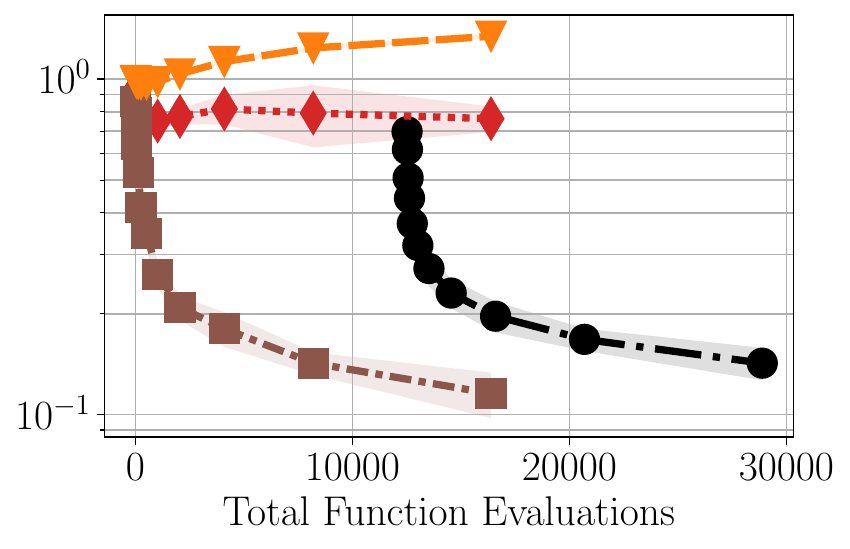}
	}\\
	\subfloat{
		\includegraphics[width=0.32\textwidth]{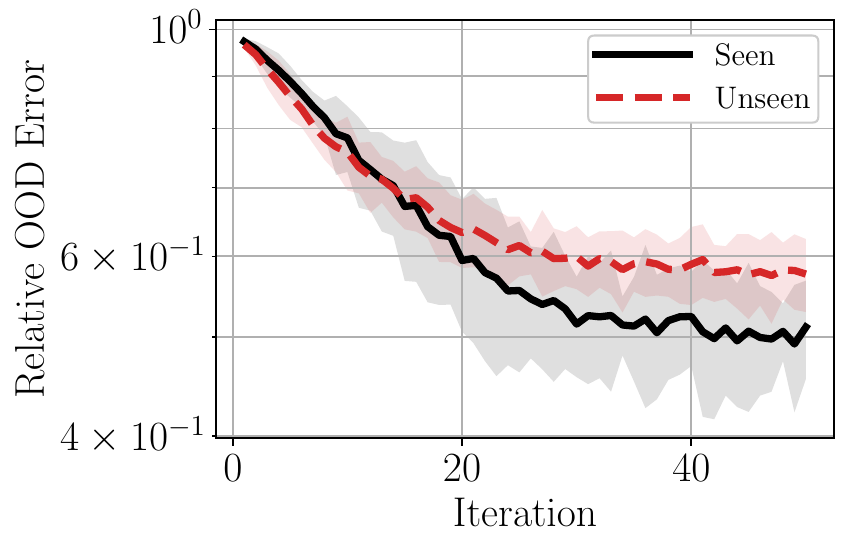}
	}%
	\subfloat{
		\includegraphics[width=0.32\textwidth]{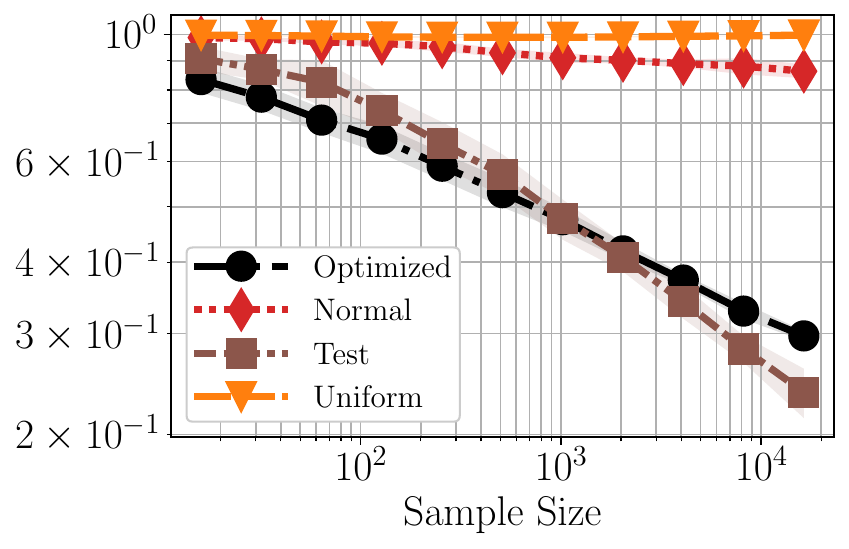}
	}%
	\subfloat{
		\includegraphics[width=0.32\textwidth]{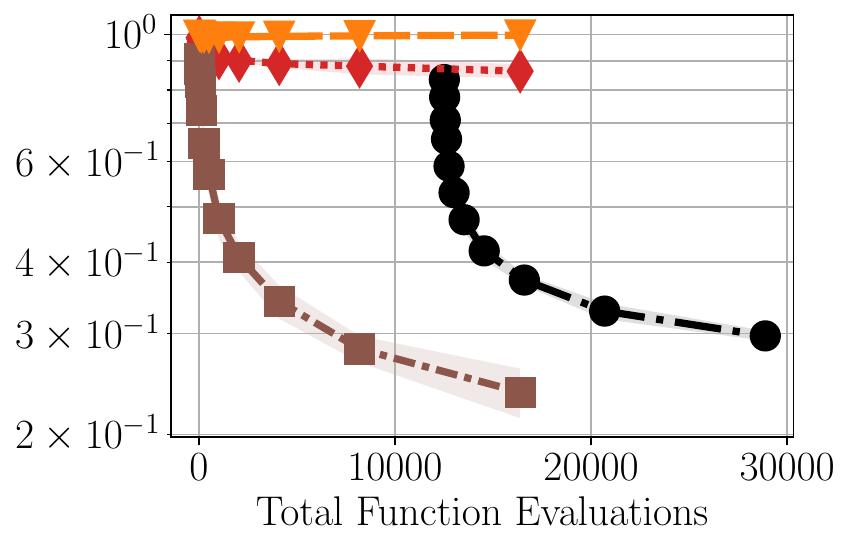}
	}\\
	\subfloat{
		\includegraphics[width=0.32\textwidth]{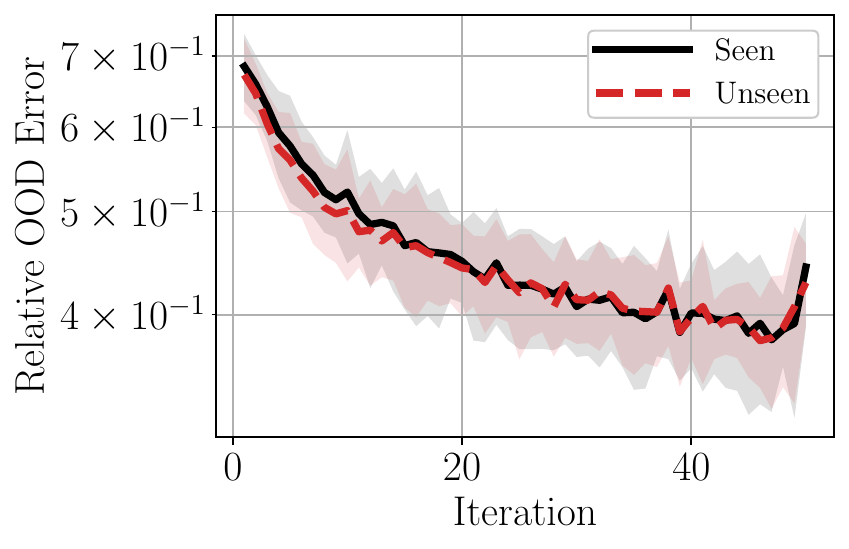}
	}%
	\subfloat{
		\includegraphics[width=0.32\textwidth]{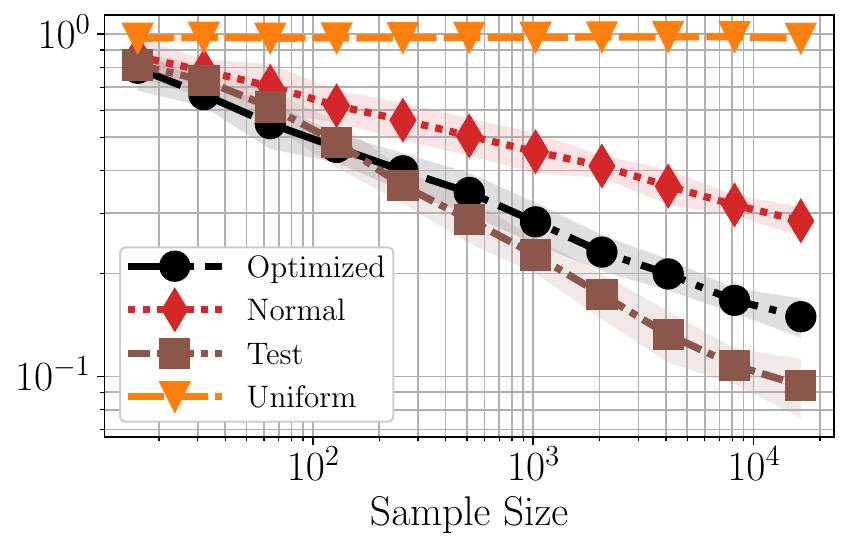}
	}%
	\subfloat{
		\includegraphics[width=0.32\textwidth]{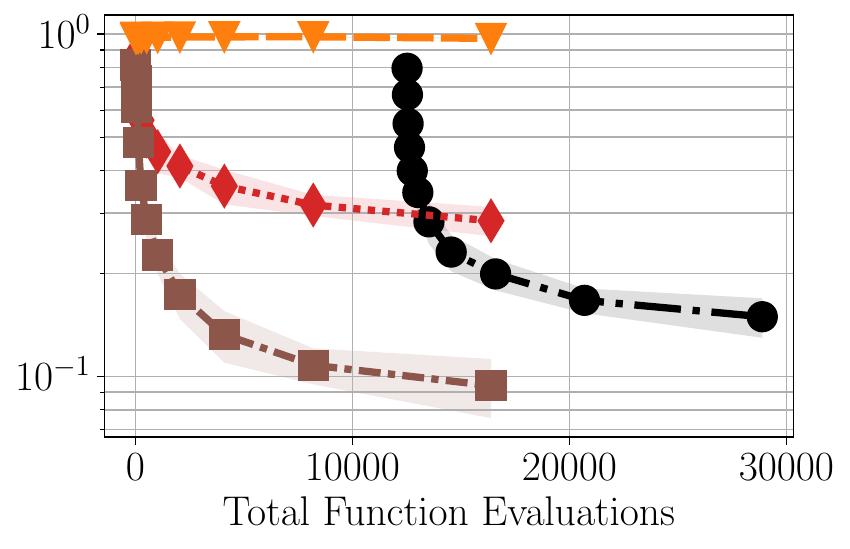}
	}\\
	\subfloat{
		\includegraphics[width=0.32\textwidth]{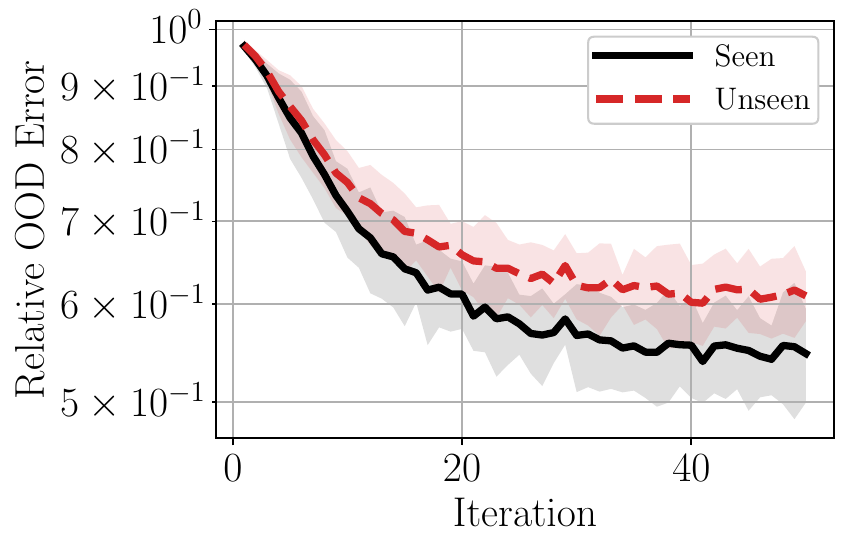}
	}%
	\subfloat{
		\includegraphics[width=0.32\textwidth]{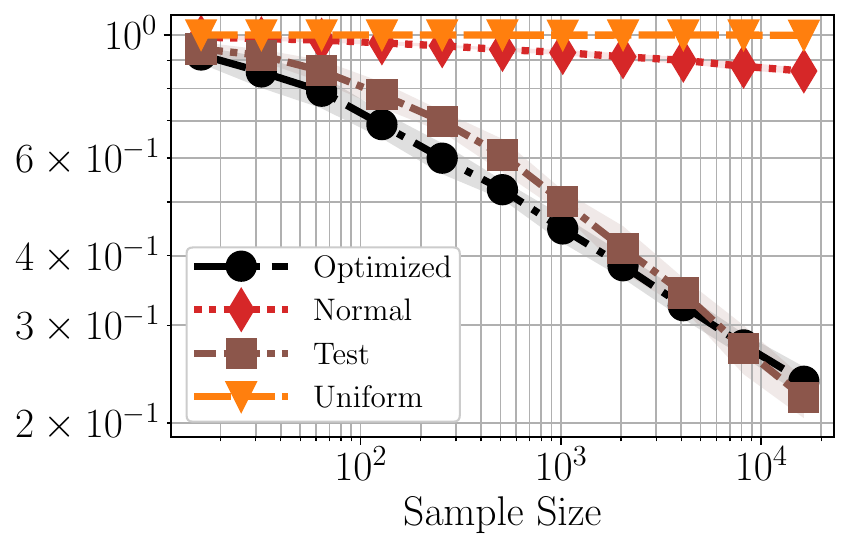}
	}%
	\subfloat{
		\includegraphics[width=0.32\textwidth]{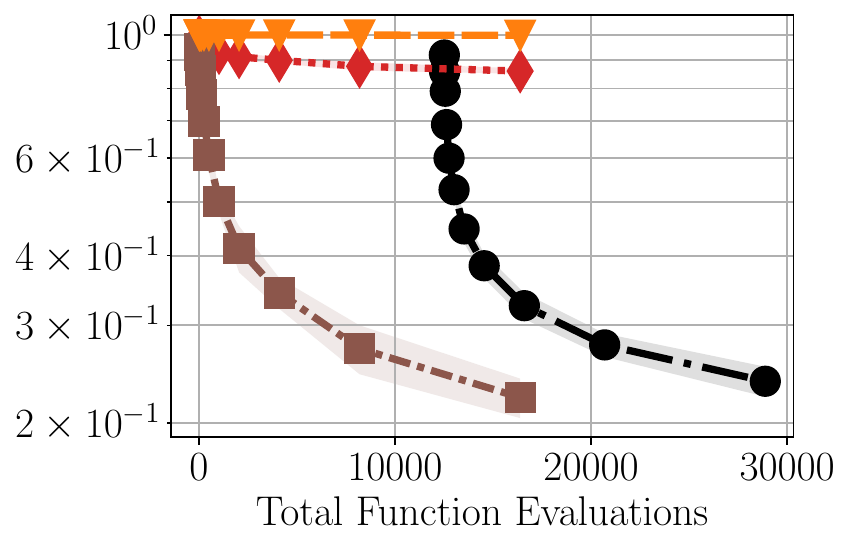}
	}\\
	\subfloat{
		\includegraphics[width=0.32\textwidth]{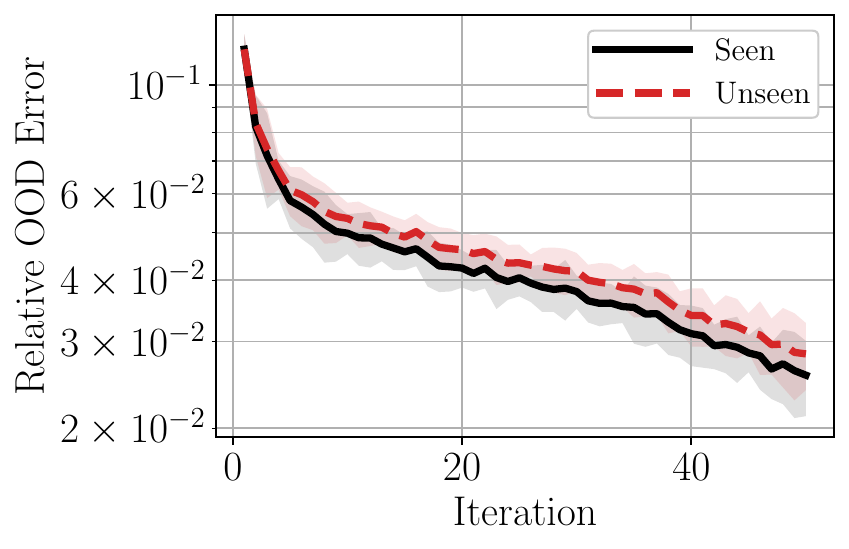}
	}%
	\subfloat{
		\includegraphics[width=0.32\textwidth]{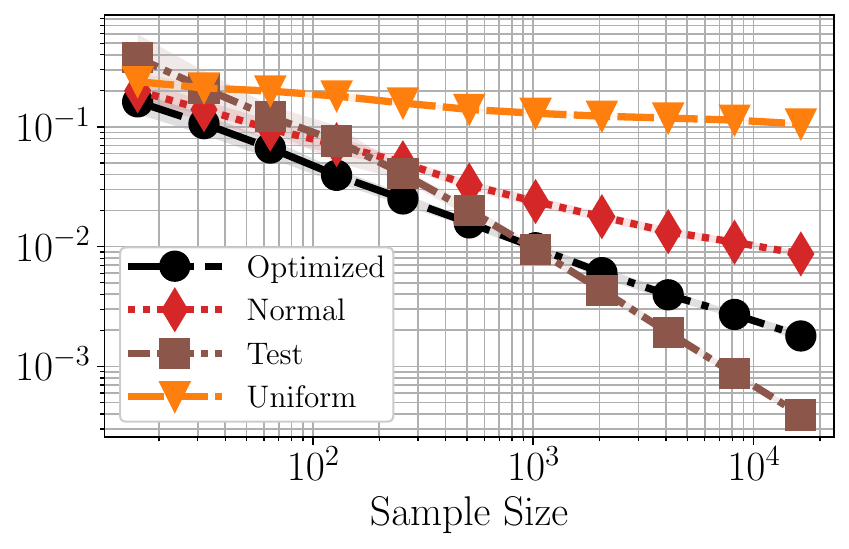}
	}%
	\subfloat{
		\includegraphics[width=0.32\textwidth]{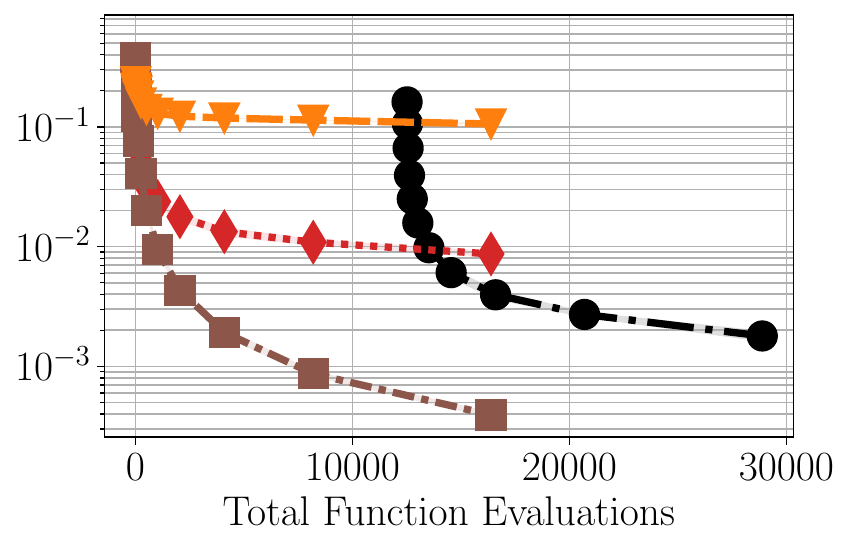}
	}
	\vspace{-1mm}
	\caption{\small Bilevel Alg.~\ref{alg:bilevel_discretized} applied to the various 
		ground truths $\{g_i\}$ with $K=1$ test atoms and $50$ gradient descent iterations. (Left column) Evolution of $\mathsf{Err}$ from \eqref{eqn:app_err_eval_bilevel} vs. iterations. (Center column) $\mathsf{Err}$ of model trained on $N$ samples from optimal $\nu_\vartheta$ vs. nonadaptive distributions. (Right column) Same as center, except incorporating the additional function evaluation cost incurred from Alg.~\ref{alg:bilevel}. From top to bottom, each row represents target functions $\{g_i\}$ corresponding to those listed in each column of Tab.~\ref{tab:bilevel_func} (from left to right). Shading represents two standard deviations away from the mean $\mathsf{Err}$ over $10$ independent runs.
	}
	\label{fig:bilevel_row_all_J1}
\end{figure}

\begin{figure}[htbp!]%
	\centering
	\subfloat{
		\includegraphics[width=0.57\textwidth]{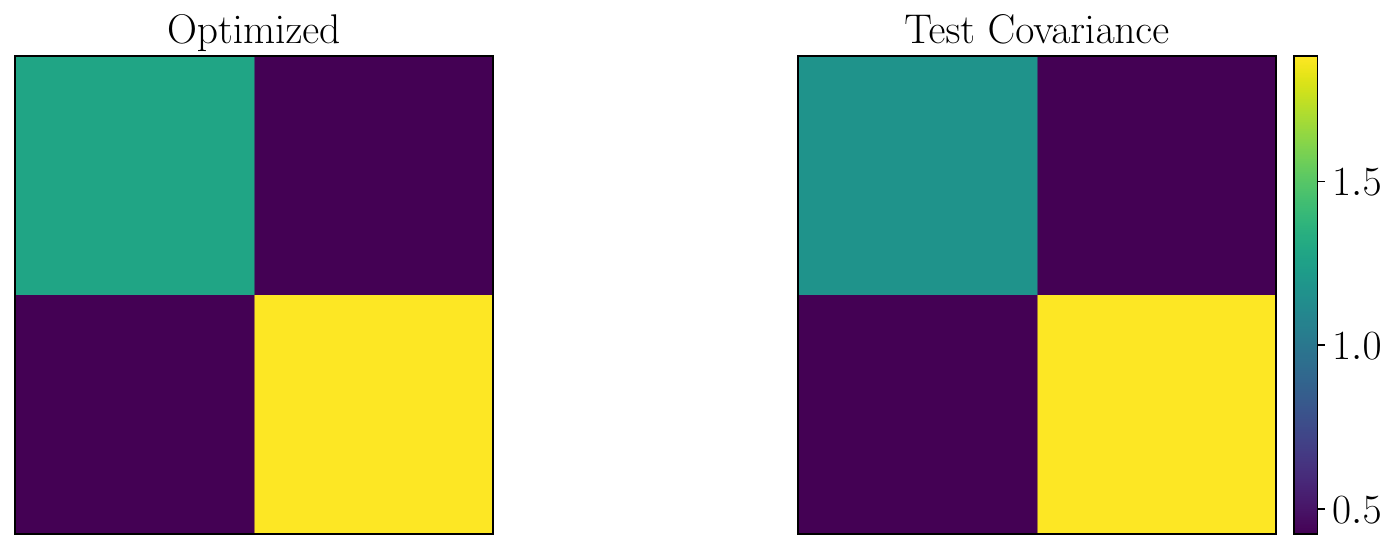}
	}\\
	\subfloat{
		\includegraphics[width=0.57\textwidth]{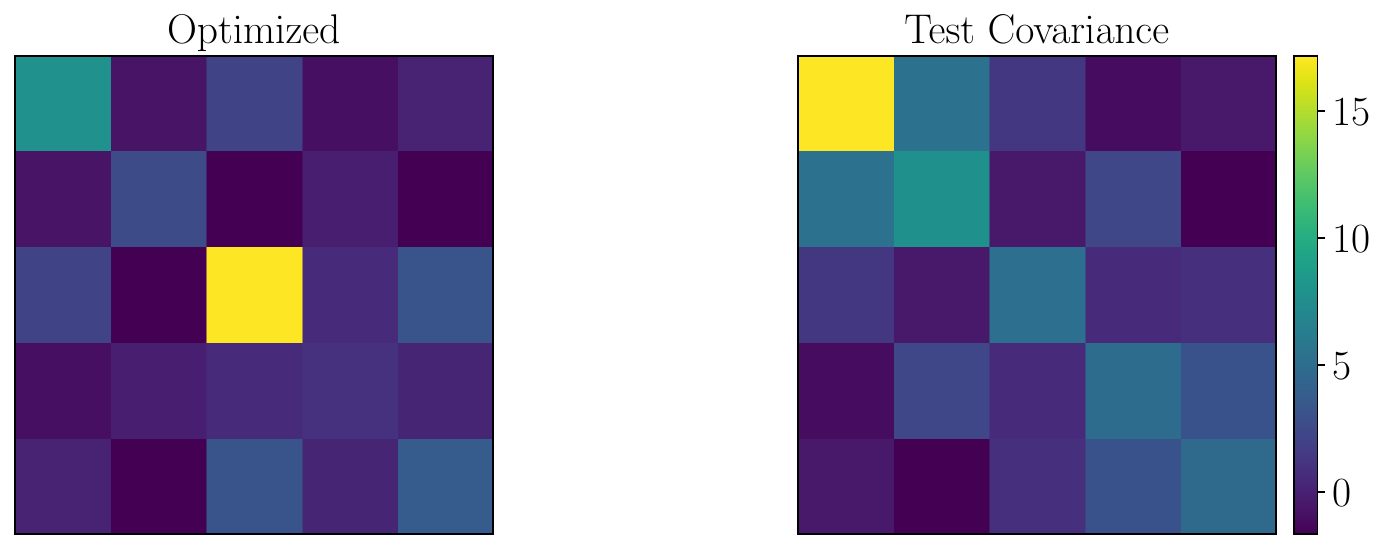}
	}\\
	\subfloat{
		\includegraphics[width=0.57\textwidth]{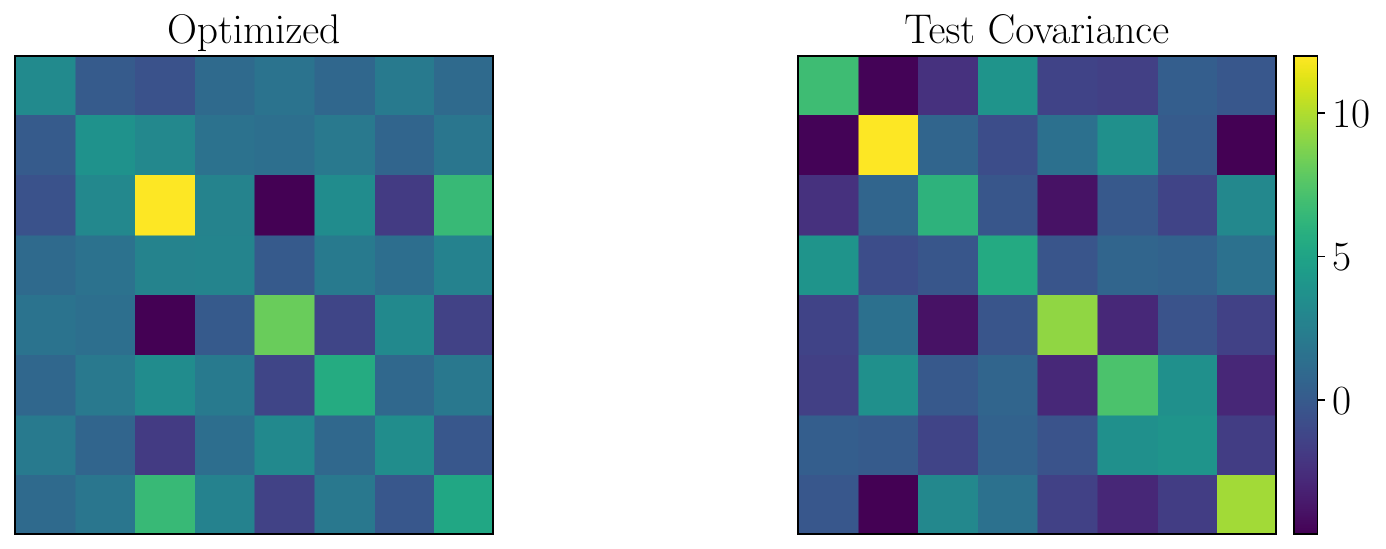}
	}\\
	\subfloat{
		\includegraphics[width=0.57\textwidth]{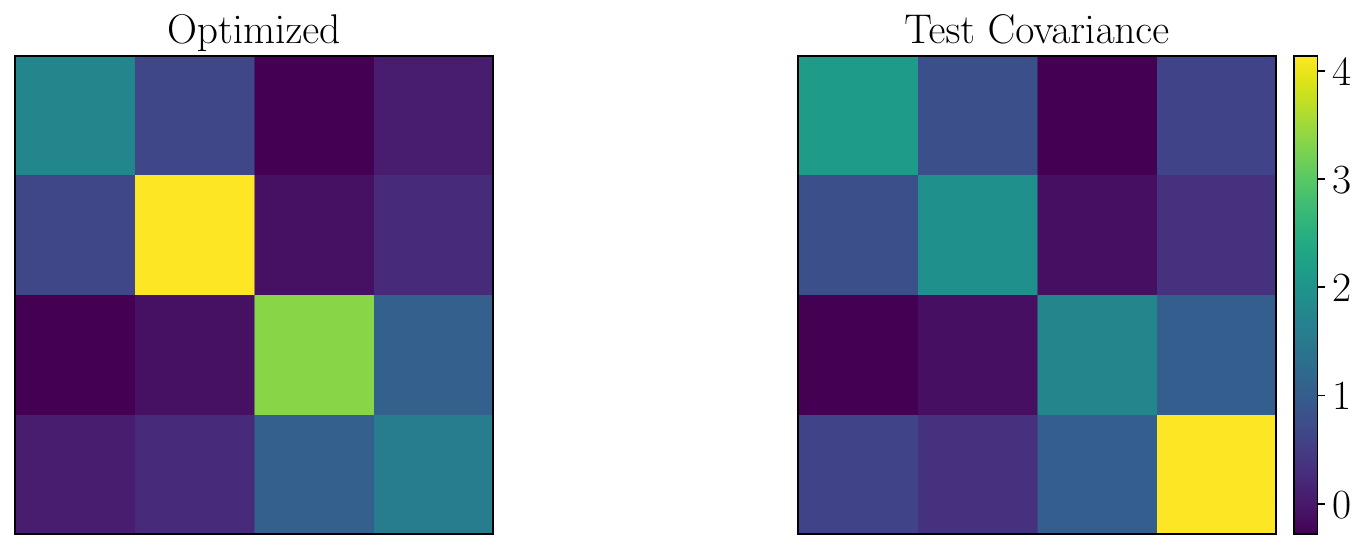}
	}\\
	\subfloat{
		\includegraphics[width=0.57\textwidth]{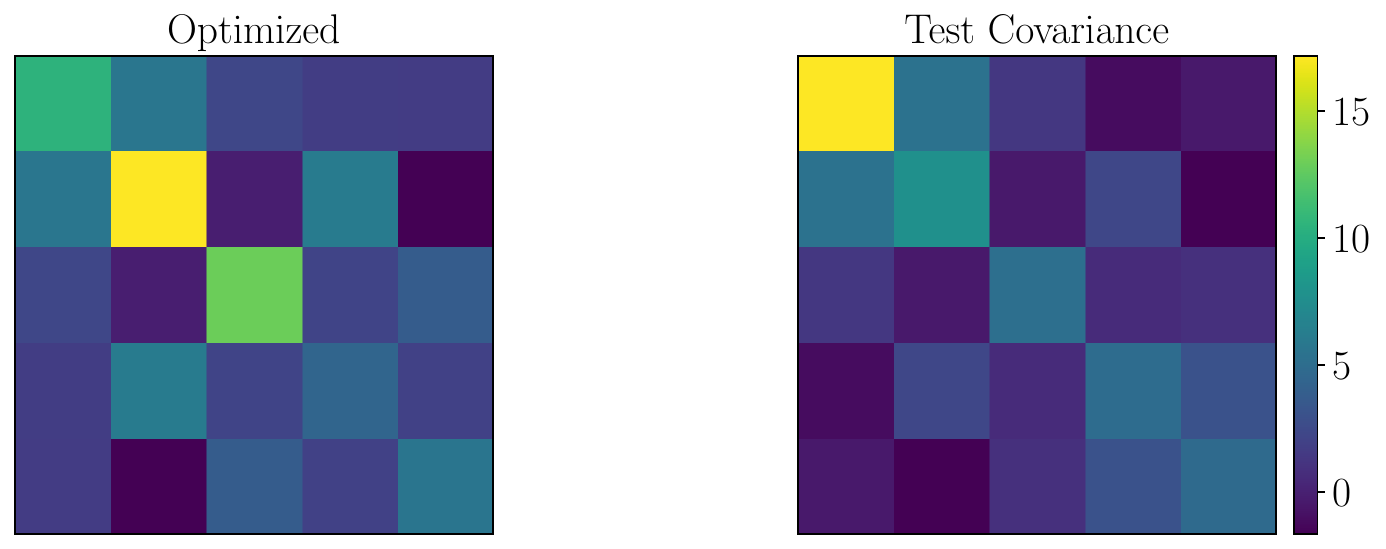}
	}\\
	\subfloat{
		\includegraphics[width=0.57\textwidth]{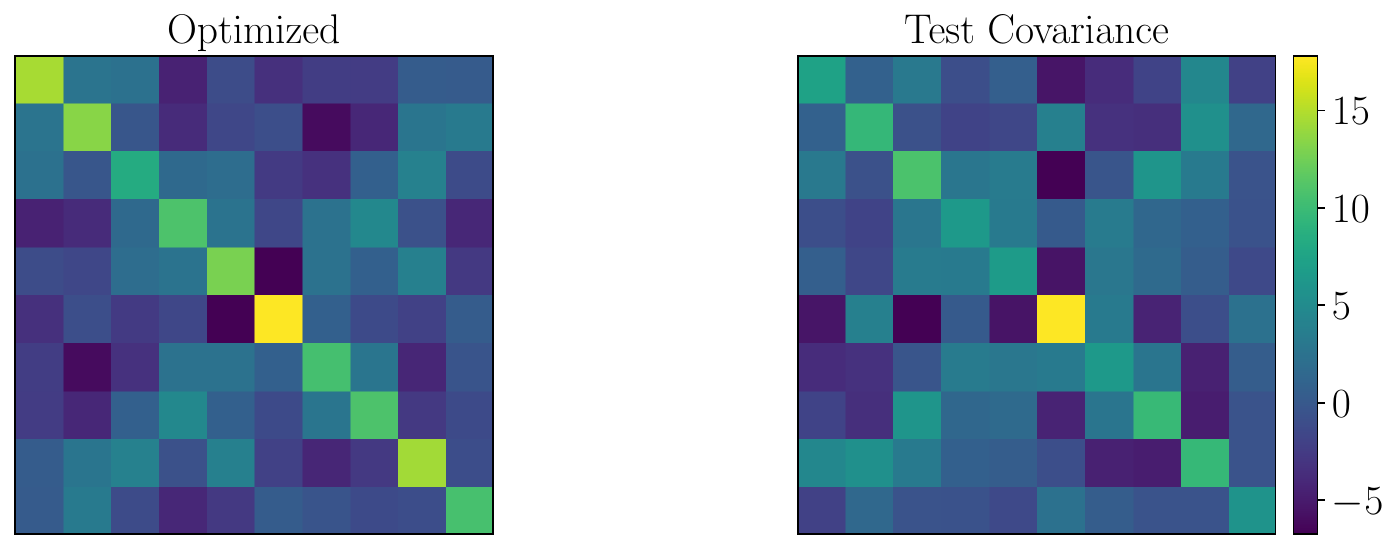}
	}\\
	\vspace{-1mm}
	\caption{\small Bilevel Alg.~\ref{alg:bilevel_discretized} applied with $K=1$ test atoms and $50$ gradient descent iterations. Each column visualizes the covariance matrix of the optimized (left) and empirical test distributions (right), respectively. From top to bottom, each row represents target functions $\{g_i\}$ corresponding to those listed in each column of Tab.~\ref{tab:bilevel_func} (from left to right).
	}
	\label{fig:bilevel_cov_all_J1}
\end{figure}

\begin{figure}[htbp!]%
	\centering
	\subfloat{
		\includegraphics[width=0.32\textwidth]{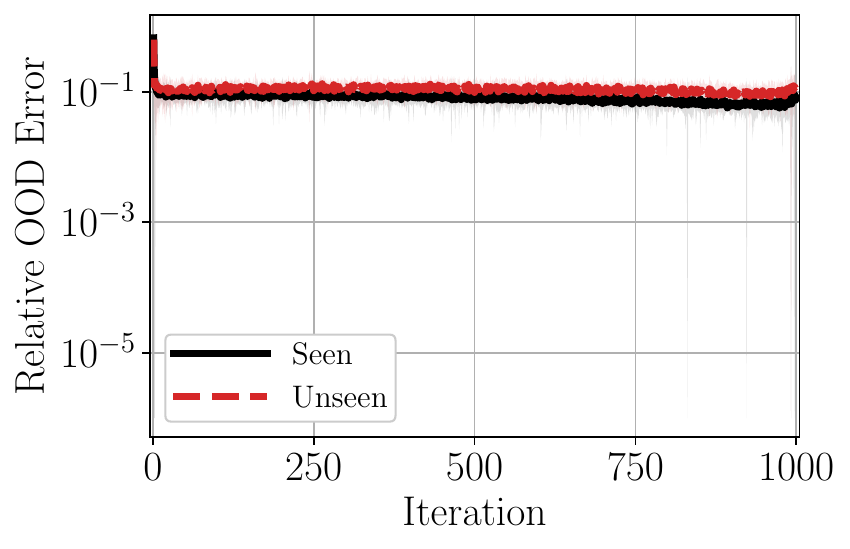}
	}%
	\subfloat{
		\includegraphics[width=0.32\textwidth]{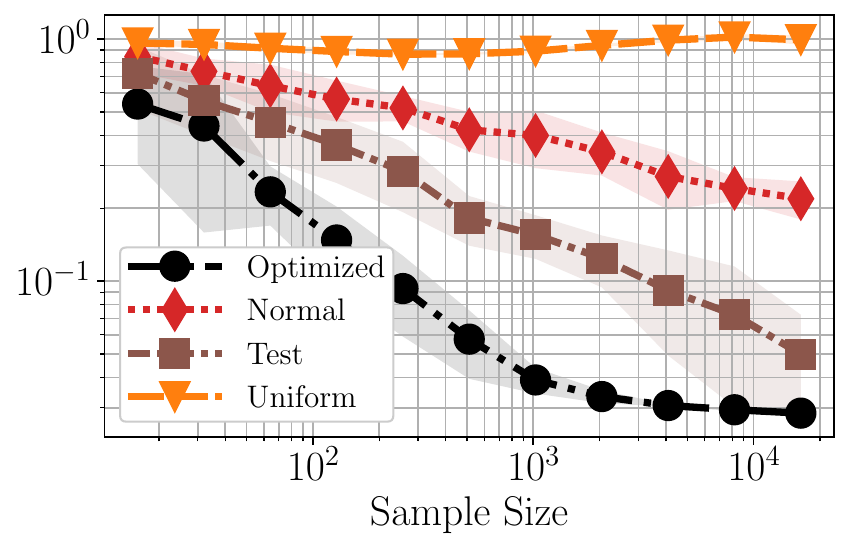}
	}\\
	\subfloat{
		\includegraphics[width=0.32\textwidth]{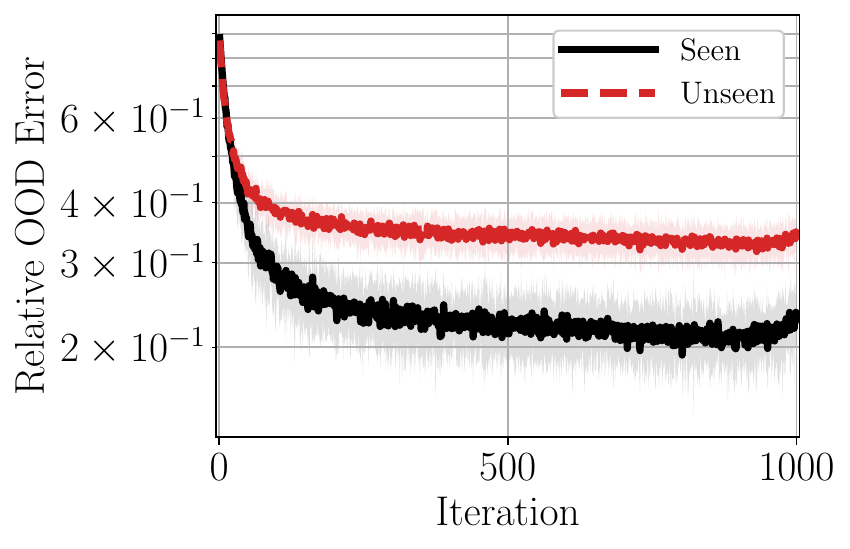}
	}%
	\subfloat{
		\includegraphics[width=0.32\textwidth]{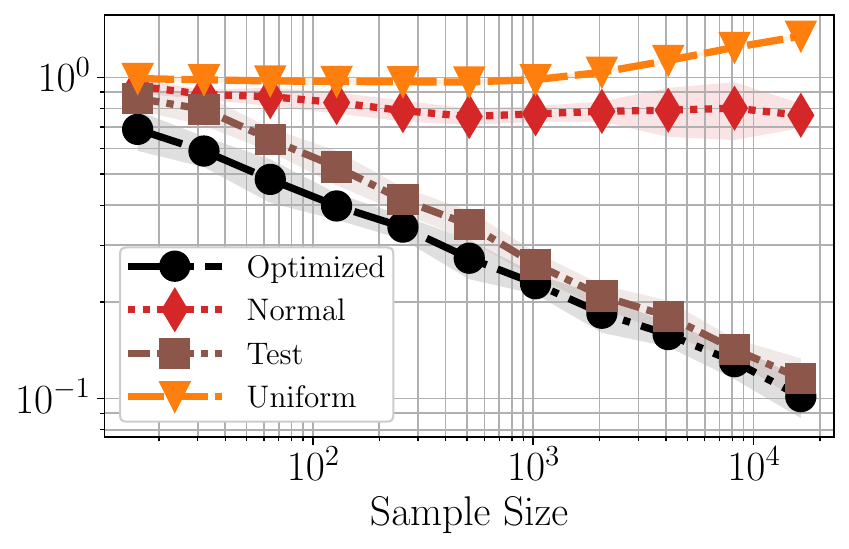}
	}\\
	\subfloat{
		\includegraphics[width=0.32\textwidth]{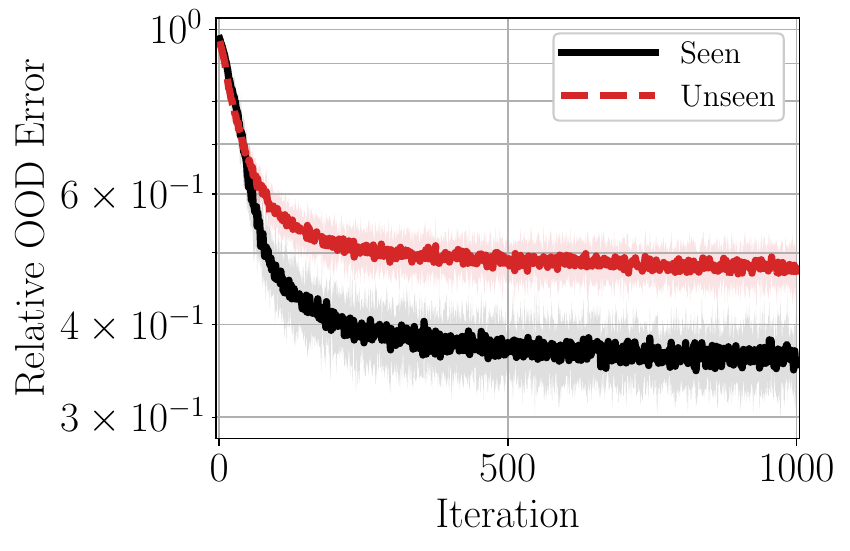}
	}%
	\subfloat{
		\includegraphics[width=0.32\textwidth]{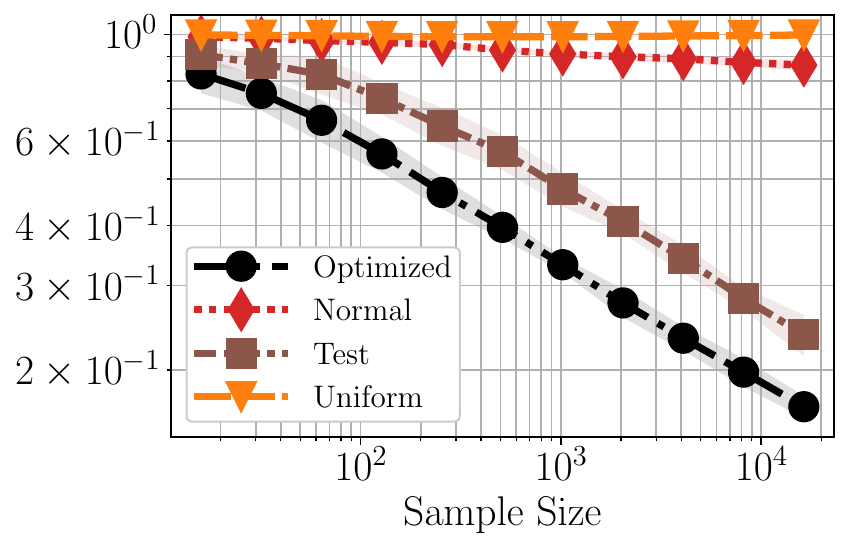}
	}\\
	\subfloat{
		\includegraphics[width=0.32\textwidth]{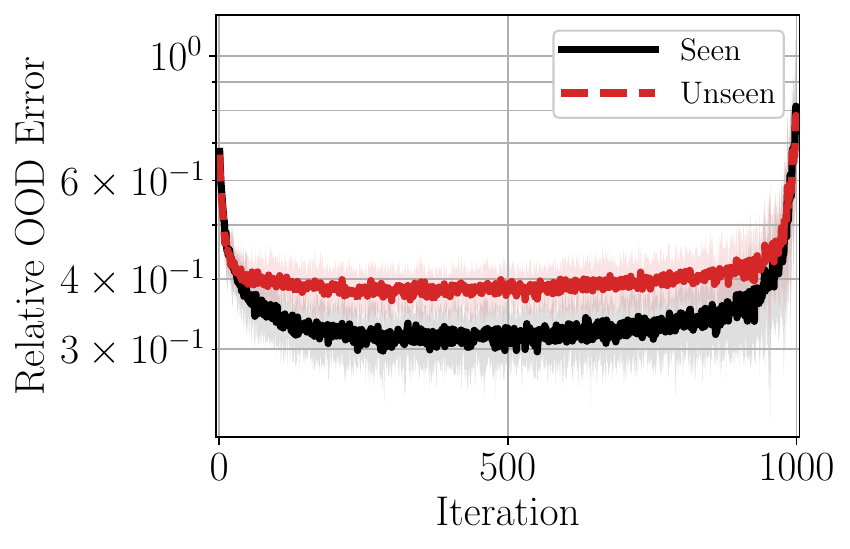}
	}%
	\subfloat{
		\includegraphics[width=0.32\textwidth]{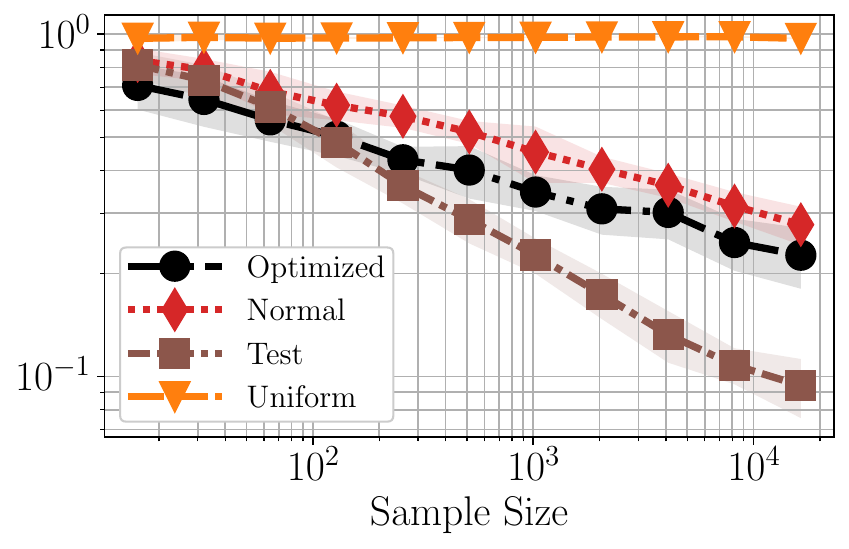}
	}\\
	\subfloat{
		\includegraphics[width=0.32\textwidth]{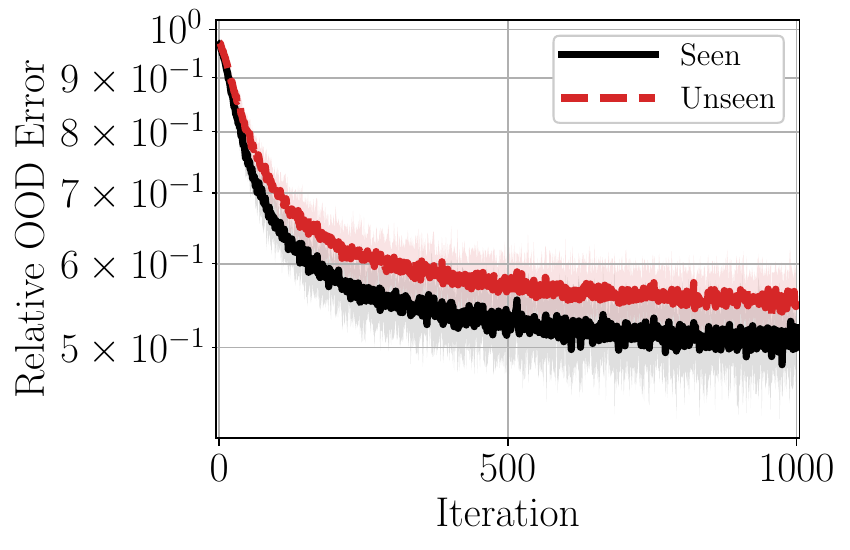}
	}%
	\subfloat{
		\includegraphics[width=0.32\textwidth]{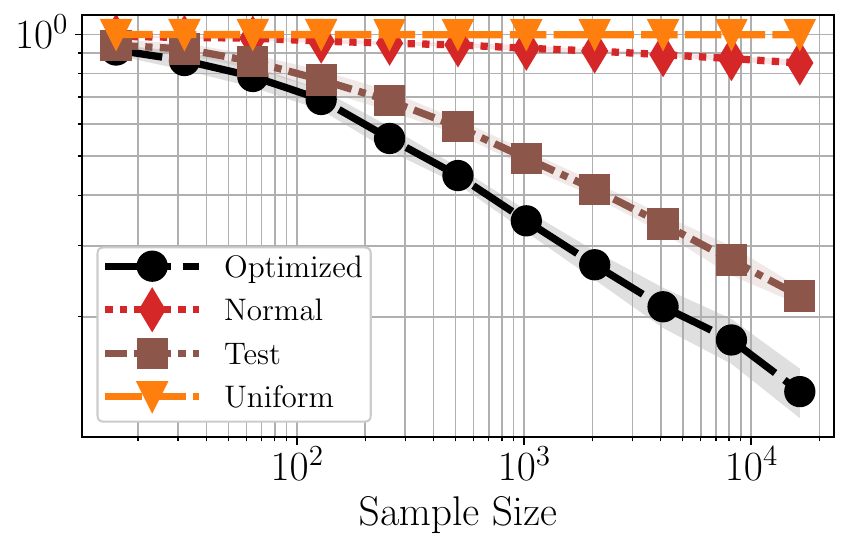}
	}\\
	\subfloat{
		\includegraphics[width=0.32\textwidth]{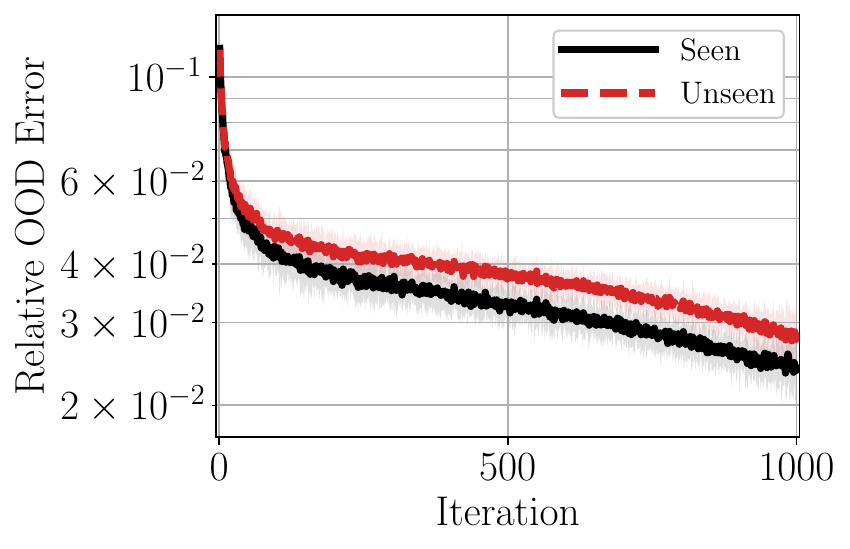}
	}%
	\subfloat{
		\includegraphics[width=0.32\textwidth]{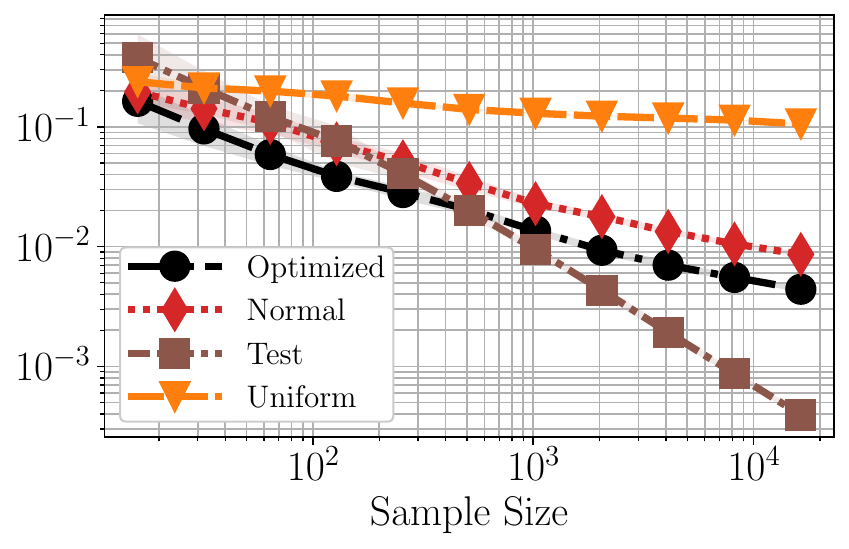}
	}
	\vspace{-1mm}
	\caption{\small Bilevel Alg.~\ref{alg:bilevel_discretized} applied to the various 
		ground truths $\{g_i\}$ with $K=1$ test atoms and $1000$ gradient descent iterations. (Left column) Evolution of $\mathsf{Err}$ from \eqref{eqn:app_err_eval_bilevel} vs. iterations. (Right column) $\mathsf{Err}$ of model trained on $N$ samples from optimal $\nu_\vartheta$ vs. nonadaptive distributions. From top to bottom, each row represents target functions $\{g_i\}$ corresponding to those listed in each column of Tab.~\ref{tab:bilevel_func} (from left to right). Shading represents two standard deviations away from the mean $\mathsf{Err}$ over $10$ independent runs.
	}
	\label{fig:bilevel_row_all_1000_J1}
\end{figure}

\begin{figure}[htbp!]%
	\centering
	\subfloat{
		\includegraphics[width=0.57\textwidth]{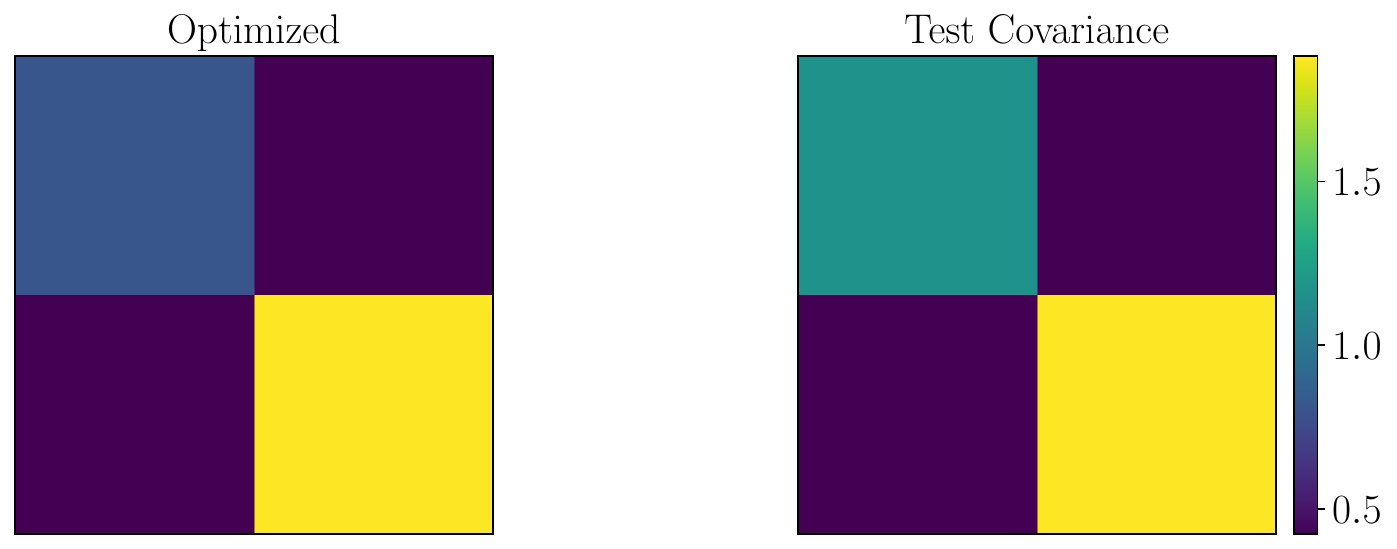}
	}\\
	\subfloat{
		\includegraphics[width=0.57\textwidth]{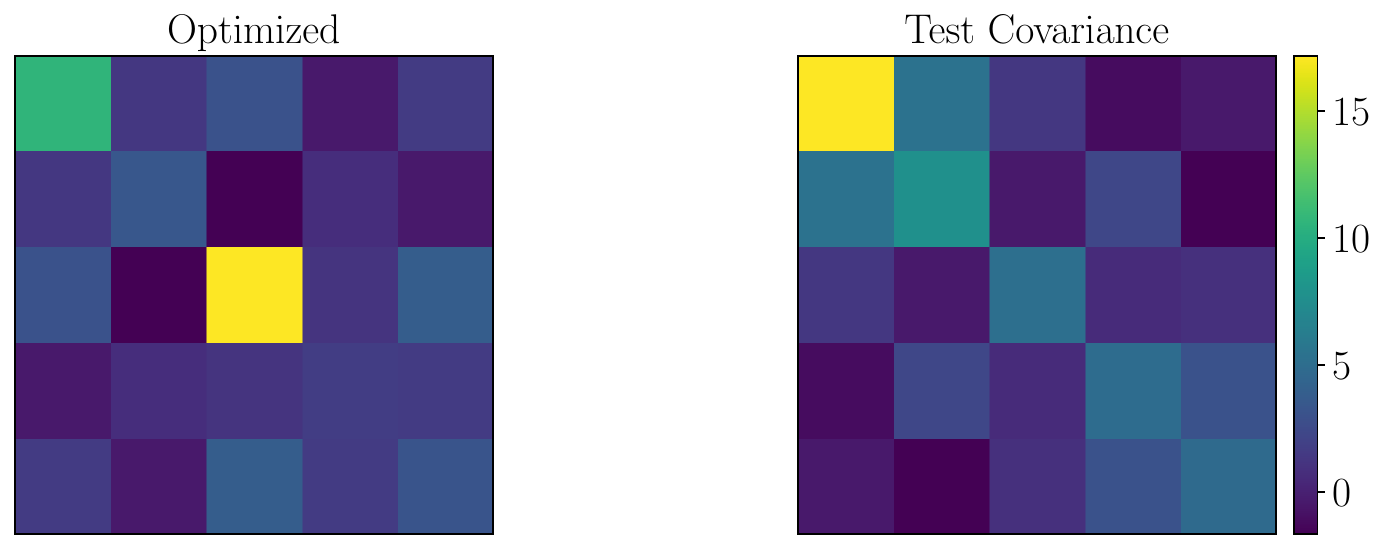}
	}\\
	\subfloat{
		\includegraphics[width=0.57\textwidth]{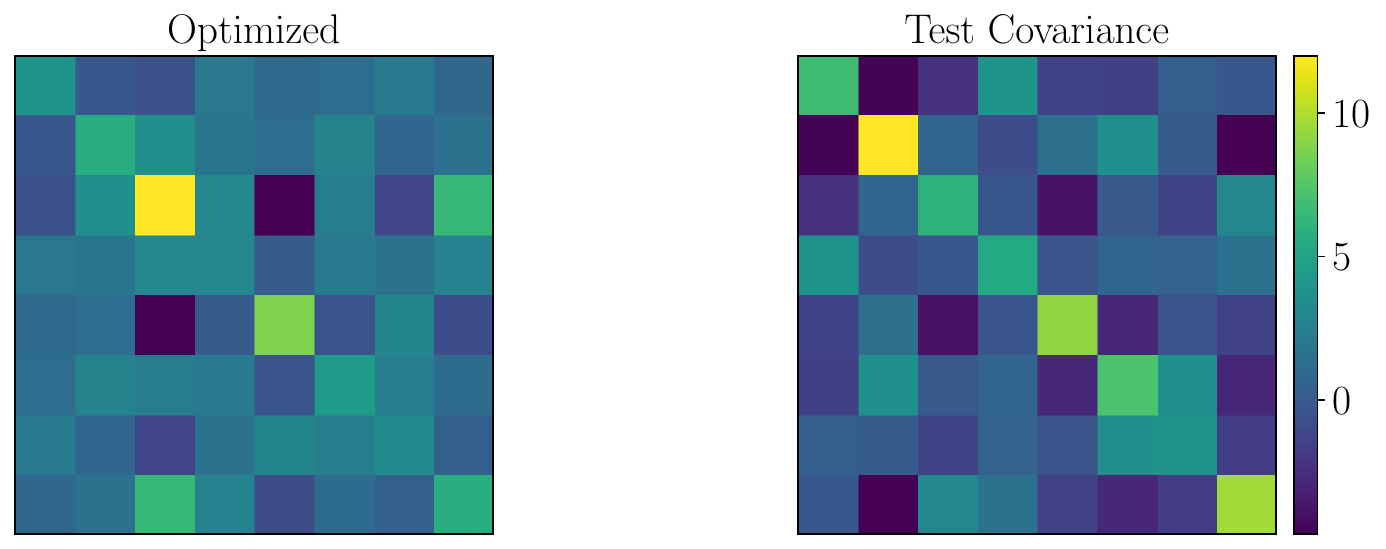}
	}\\
	\subfloat{
		\includegraphics[width=0.57\textwidth]{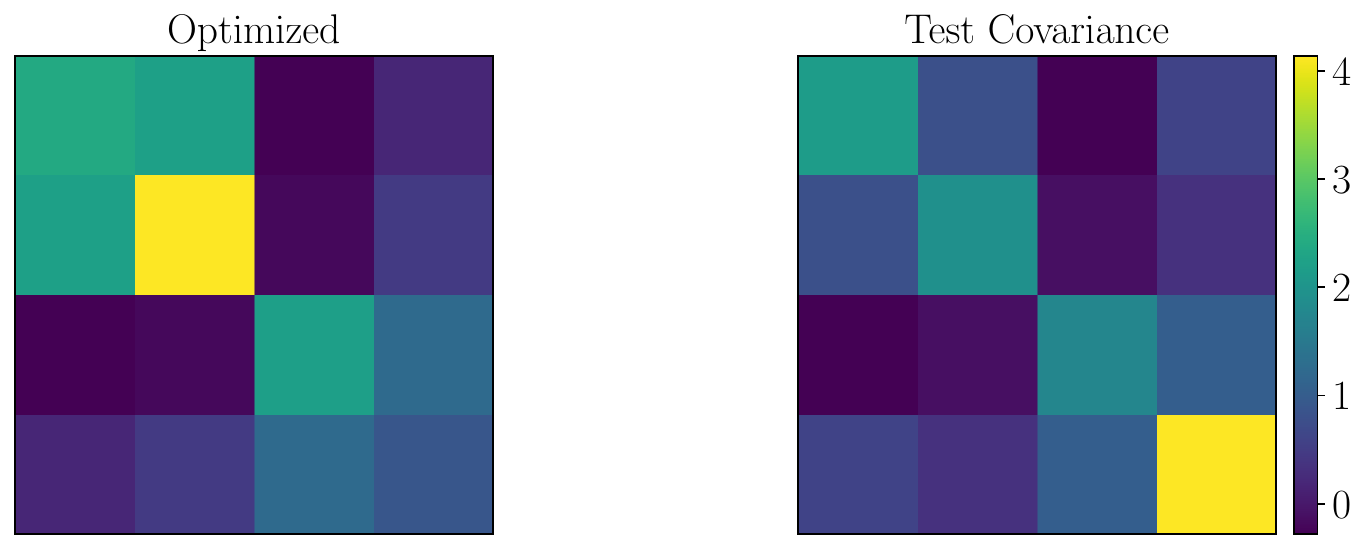}
	}\\
	\subfloat{
		\includegraphics[width=0.57\textwidth]{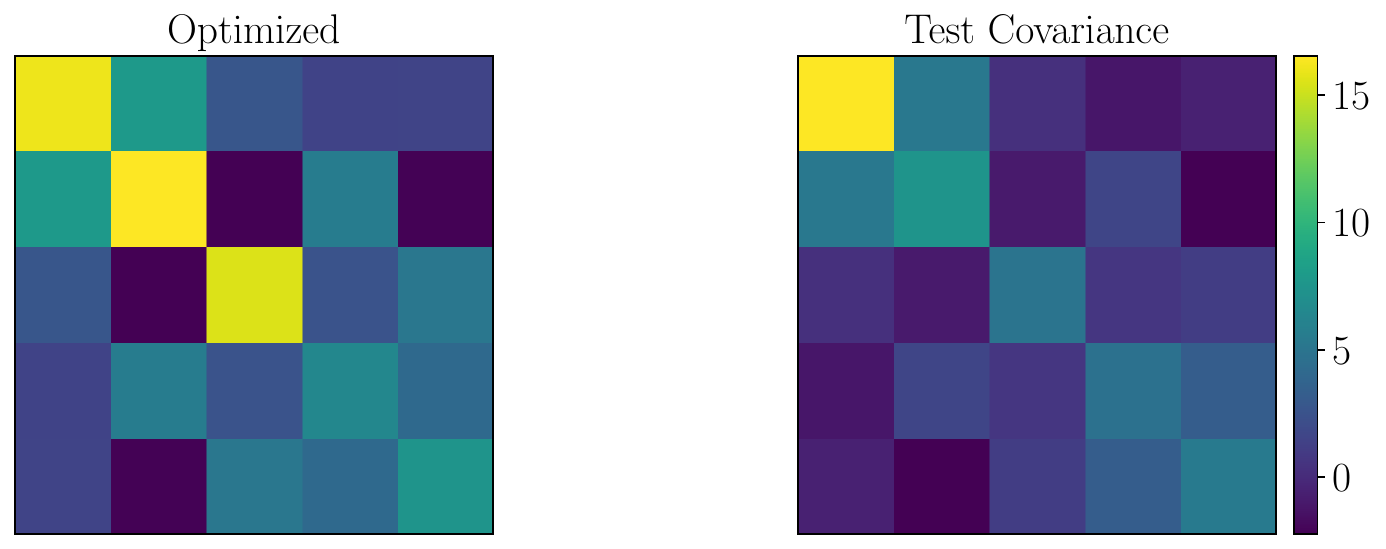}
	}\\
	\subfloat{
		\includegraphics[width=0.57\textwidth]{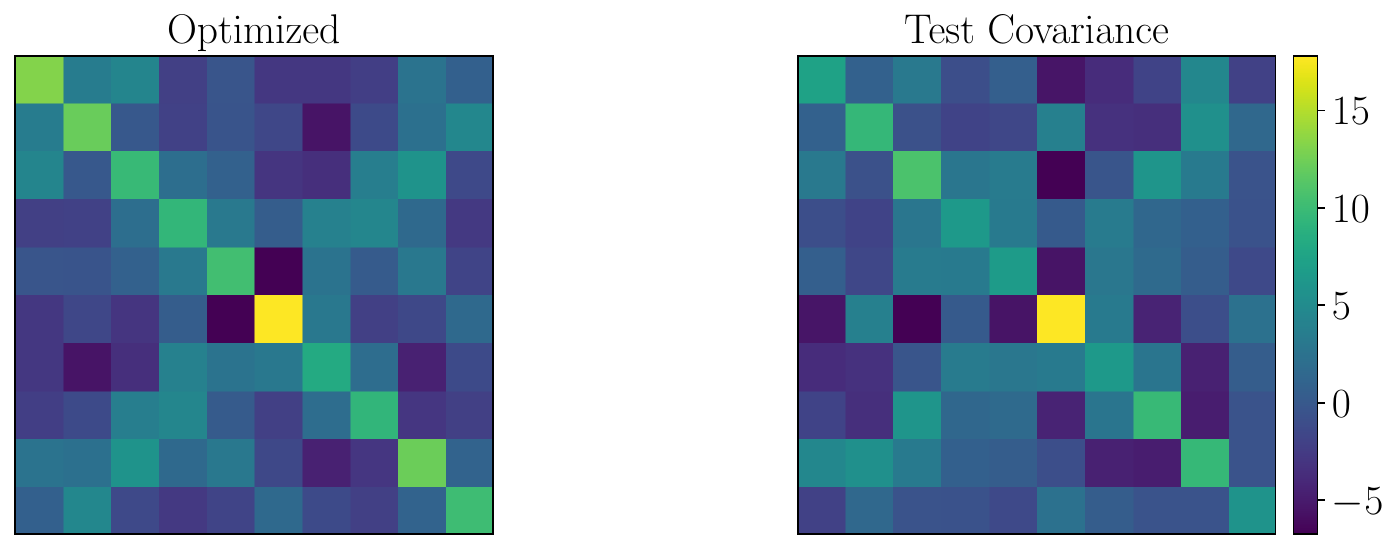}
	}\\
	\vspace{-1mm}
	\caption{\small Bilevel Alg.~\ref{alg:bilevel_discretized} applied with $K=1$ test atoms and $1000$ gradient descent iterations. Each column visualizes the covariance matrix of the optimized (left) and empirical test distributions (right), respectively. From top to bottom, each row represents target functions $\{g_i\}$ corresponding to those listed in each column of Tab.~\ref{tab:bilevel_func} (from left to right).
	}
	\label{fig:bilevel_cov_all_1000_J1}
\end{figure}

\paragraph{Computational resources.}
All bilevel minimization experiments are performed in Python using the PyTorch framework in \texttt{float32} single precision on a machine equipped with an NVIDIA GeForce RTX 4090 GPU (24 GB VRAM) running CUDA version 12.4, Intel i7-10700K CPU (8 cores/16 threads) running at 4.9 GHz, and 64 GB of DDR4-3200 CL16 RAM. The experiments required less than one total CPU and GPU hour to complete. An additional five CPU and GPU hours were expended to tune hyperparameters and set up the experiments. We ensure reproducibility of our numerical results by appropriately setting PyTorch backends to be deterministic and using fixed seeds for the relevant random number generators.

\subsection{AMINO for learning electrical impedance tomography}\label{app:details_numeric_AMINO}
In all machine learning tasks, the choice of training distribution plays a critical role. In practice, testing distributions can span a wide range, making training on the entire space infeasible. Instead, we must identify an optimal training distribution that enables good generalization across diverse testing scenarios. 

To illustrate this, we consider the EIT problem with the AMINO solver; see \textbf{SM}~\ref{app:architectures_AMINO}. 
We start with the following forward problem setup taken from \citet{molinaro2023neural} for EIT on the domain $\Omega \defeq (0, 1)^2 \subset \mathbb{R}^2$. The model is the two-dimensional elliptic PDE boundary value problem
\begin{align}\label{eqn:app_elliptic_pde}
	\left\{
	\begin{aligned}
		\nabla \cdot (a\nabla u) &= 0 && \qin \Omega\,, \\
		u &= f && \qon \partial\Omega\,.
	\end{aligned}
	\right.
\end{align}
For one conductivity realization $a$ defined by
\begin{align}\label{eqn:app_conductivity}
	x\mapsto a(x)=\exp\Biggl(\sum_{k=1}^m c_k\sin(k\pi x_1)\sin(k\pi x_2)\Biggr)\,,
\end{align}
where $m=\lfloor\wbar{m}\rfloor$, $\wbar{m}\sim\mathsf{Unif}([1,5])$, and $\{c_k\}\sim\mathsf{Unif}([-1,1]^m)$, we take $\sfL\in\N$ different Dirichlet boundary conditions $\{f_\ell\}_{\ell=1}^{\sfL}$ for $f$ in \eqref{eqn:app_elliptic_pde}. These are given by 
\begin{align}\label{eqn:app_dirichlet_cosine}
	x\mapsto f_\ell(x)=\cos\Bigl(\omega\bigl(x_1\cos(\theta_\ell)+x_2\sin(\theta_\ell)\bigr)\Bigr)
\end{align}
with $\omega\sim \mathsf{Unif}([\pi,3\pi])$ and $\theta_l\defeq 2\pi \ell/\sfL$ for $\ell=1,\ldots,\sfL$. We denote by $\Lambda_a$ the DtN operator that maps the Dirichlet data of the PDE, $f_\ell$, to the Neumann boundary data, $a \frac{\partial u}{\partial \mathbf{n}}|_{\partial\Omega}$, where $u$ is the PDE solution and $\mathbf{n}$ is the outward normal unit vector. Here, the Dirichlet boundary conditions are functions drawn from a probability distribution $\mu$. AMINO learns the map from the joint Dirichlet--Neumann distribution $(\Id, \Lambda_{a})\push\mu$ to the conductivity $a$.

Suppose the general distribution of interest for the Dirichlet boundary condition is generated according to the law of $f_l$ in \eqref{eqn:app_dirichlet_cosine} with $\omega\sim\mathsf{Unif}([\frac {\pi}{2},7\pi])$. A reasonable distribution to train upon is this very same distribution. However, suppose we were constrained to focus and train on distributions corresponding to $\Law(\omega)$ supported only on a subset of the domain $[\frac{\pi}{2},7\pi]$. In that case, the choice of the distribution will greatly affect the OOD error when testing with $\omega\sim\mathsf{Unif}([\frac{\pi}{2},7\pi])$. To be more concrete, we will focus on four training distributions for $\omega$, which directly determines the input Dirichlet boundary condition as in \textbf{SM}~\ref{app:architectures_AMINO}: $\mathsf{Unif}([\frac{\pi}{2},\pi])$, $\mathsf{Unif}([\frac{7\pi}{2},4\pi])$, $\mathsf{Unif}([5\pi,\frac{11\pi}{2}])$, and $\mathsf{Unif}([\frac{13\pi}{2},7\pi])$. After obtaining the well-trained neural operators (in the sense that the ID generalization error is small enough) corresponding to each training distribution, we study the OOD error of the trained neural operators for Dirichlet data obtained by drawing $\omega\sim \mathsf{Unif}([\frac{\pi}{2},7\pi])$ in \eqref{eqn:app_dirichlet_cosine}.

\paragraph{Model architecture and optimization algorithm.}
We train four distinct AMINO models; see \textbf{SM}~\ref{app:architectures_AMINO} to recall AMINO and \textbf{SM}~\ref{app:architectures_deeponet} for its DeepONet component. Each model comprises three jointly trained components:
\begin{itemize}
	\item \textbf{Branch encoder}:  
	Eight sequential convolutional blocks with LeakyReLU activations:  
	\begin{itemize}[nosep,leftmargin=*]
		\item $\mathrm{ConvBlock}(1\!\to\!64,\ \mathrm{kernel}=(1,7),\ \mathrm{stride}=(1,2),\ \mathrm{pad}=(0,3))$  
		\item $\mathrm{ConvBlock}(64\!\to\!128,\ \mathrm{kernel}=(1,3),\ \mathrm{stride}=(1,2),\ \mathrm{pad}=(0,1))$  
		\item $\mathrm{ConvBlock}(128\!\to\!128,\ \mathrm{kernel}=(1,3),\ \mathrm{pad}=(0,1))$  
		\item $\mathrm{ConvBlock}(128\!\to\!256,\ \mathrm{kernel}=(1,3),\ \mathrm{stride}=(1,2),\ \mathrm{pad}=(0,1))$  
		\item $\mathrm{ConvBlock}(256\!\to\!256,\ \mathrm{kernel}=(1,3),\ \mathrm{pad}=(0,1))$  
		\item $\mathrm{ConvBlock}(256\!\to\!512,\ \mathrm{kernel}=(1,3),\ \mathrm{stride}=(1,2),\ \mathrm{pad}=(0,1))$  
		\item $\mathrm{ConvBlock}(512\!\to\!512,\ \mathrm{kernel}=(1,3),\ \mathrm{pad}=(0,1))$  
		\item $\mathrm{ConvBlock}(512\!\to\!512,\ \mathrm{kernel}=(4,5),\ \mathrm{pad}=0)$.
	\end{itemize}  
	The feature map is then flattened and passed through a linear layer to produce the $p$-dimensional branch output.
	\item \textbf{Trunk network}:  
	An $8$–layer multilayer perceptron (MLP) with $512$ neurons per hidden layer, LeakyReLU activations, zero dropout, and a final output dimension of $p=100$.
	\item \textbf{Fourier Neural Operator (FNO)}: 
	A single spectral convolution layer with $128$ channels and $32$ Fourier modes.
\end{itemize}
All parameters are learned using the AdamW optimizer with a learning rate of $10^{-3}$.

\paragraph{Training and optimization procedure.}
The four neural operator models ``Model 1'', ``Model 2'', ``Model 3'', and ``Model 4'' are trained using four different input parameter distributions for $\Law(\omega)$, respectively: $\mathsf{Unif}([\frac{\pi}{2},\pi])$, $\mathsf{Unif}([\frac{7\pi}{2},4\pi])$, $\mathsf{Unif}([5\pi,\frac{11\pi}{2}])$, and $\mathsf{Unif}([\frac{13\pi}{2},7\pi])$; see \eqref{eqn:app_dirichlet_cosine} to recall how $\omega$ is reflected in the actual input distributions on function space. For each distribution, $4096$ training samples are generated. Each sample consists of $\sfL=100$ Dirichlet--Neumann boundary condition pairs with the underlying conductivity discretized on a $70\times70$ grid for the domain $\Omega=(0,1)^2$. After $500$ training epochs, the trained models are evaluated on $1024$ test samples of the distribution for the Dirichlet boundary condition \eqref{eqn:app_dirichlet_cosine} determined by $\omega\sim \mathsf{Unif}([\frac{\pi}{2},7\pi])$. 

\paragraph{Model evaluation.} For both ID and OOD settings, we report the average relative $L^1$ error given as
\begin{align}\label{eqn:app_amino_error}
	\text{Expected Relative $L^1$ Error}\defeq \E_{a\sim\Law(a)}\Biggl[\frac{\norm[\big]{a-\cG_\theta\bigl((\Id, \Lambda_{a})\push\mu\bigr)}_{L^1(\Omega)}}{\norm{a}_{L^1(\Omega)}}\Biggr]\,,
\end{align}
where $\Law(a)$ is the distribution of $a$ from \eqref{eqn:app_conductivity} and $\cG_\theta$ is the AMINO model. Using the same setup as in \ref{app:architectures_AMINO}, we approximate \eqref{eqn:app_amino_error} with a Monte Carlo average over $N'=1024$ test conductivity samples $a_n\diid\Law(a)$ and each with $\sfL=100$ pairs of corresponding Dirichlet and Neumann boundary data. That is, the expected relative $L^1$ error is approximately given by the average
\begin{align}
	\frac{1}{N'}\sum_{n=1}^{N'}
	\frac{\norm[\Big]{a_n-\cG_\theta\Bigl(\tfrac{1}{\sfL}\sum_{\ell=1}^{\sfL}\delta_{(f_\ell^{(n)},g_\ell^{(a_n)})}\Bigr)}_{L^1}}{\norm[\big]{a_n}_{L^1}}, \text{ where }\, \bigl\{(f_\ell^{(n)},g_\ell^{(a_n)})\bigr\}_{\ell=1}^{\sfL}\sim \bigl((\Id, \Lambda_{a_n})\push\mu\bigr)^{\otimes\sfL}.
\end{align}
The $L^1(\Omega)$ norms in the preceding display are approximated with Riemann sum quadrature over the equally-spaced $70\times 70$ grid.

Tab.~\ref{tab:amino} illustrates how OOD performance varies with the choice of training distribution. Model 1 achieves strong ID performance but performs poorly in the OOD setting. Model 2 exhibits consistent yet mediocre performance across both ID and OOD evaluations. Model 4 performs well in the ID setting but shows a noticeable drop in OOD performance. Model 3 offers the best balance, achieving low error in both ID and OOD scenarios. Notably, despite being trained on a Dirichlet data distribution with a small support regarding the parameter $\omega$, Model 3 generalizes significantly better to the larger test domain $[\frac{\pi}{2},7\pi]$ for the parameter $\omega$.

\begin{table}[tb]%
	\centering
	\caption{\small 
		Average relative $L^1$ error of AMINO models under ID and OOD settings for the EIT inverse problem solver. Model 1 is trained on $\omega_1\sim\mathsf{Unif}([\frac{\pi}{2},\pi])$, Model 2 on $\omega_2\sim\mathsf{Unif}([\frac{7\pi}{2},4\pi])$, Model 3 on $\omega_3\sim\mathsf{Unif}([5\pi,\frac{11\pi}{2}])$, and Model 4 on $\omega_4\sim\mathsf{Unif}([\frac{13\pi}{2},7\pi])$. In the OOD setting, all models are evaluated on Dirichlet data from $\omega\sim\mathsf{Unif}([\frac{\pi}{2},7\pi])$. One standard deviation is reported. See Fig.~\ref{fig:motivate} for a visual comparison on two representative samples.}
	\label{tab:amino}
	\resizebox{\textwidth}{!}{%
		\begin{tabular}{lcccc}
			\toprule
			& Model 1 & Model 2 & Model 3 & Model 4 \\
			\midrule
			In-Distribution (ID) & $0.034\pm0.019$ & $0.081\pm0.028$ & $0.021\pm 0.010$ & $0.017\pm0.009$ \\
			Out-of-Distribution (OOD) & $0.796\pm0.251$ & $0.078\pm0.047$ & $\mathbf{0.066}\pm0.038$ & $0.132\pm0.047$ \\
			\bottomrule
		\end{tabular}
	}
\end{table}

These results are further illustrated in Fig.~\ref{fig:motivate}, which presents two representative conductivity samples. The top row displays the predicted conductivities from each of the four models when evaluated on Dirichlet--Neumann data pairs sampled from the same distribution used during training. In this ID setting, all four models accurately recover the true conductivity. In contrast, the bottom row shows predictions when the Dirichlet data is drawn from the OOD setting, i.e., $\omega \sim \mathsf{Unif}([\frac{\pi}{2}, 7\pi])$, and the joint Dirichlet--Neumann data distribution also changes accordingly. Under this distributional shift, Models 1, 2, and 4 exhibit noticeable degradation in performance, while Model 3 maintains a reasonable level of accuracy. We use this test to demonstrate the importance of carefully selecting the training data distribution to achieve robust OOD performance.

\paragraph{Computational resources.} The AMINO experiment was conducted on a machine equipped with an AMD Ryzen 9 5950X CPU (16 cores / 32 threads), 128 GB of RAM, and an NVIDIA RTX 3080Ti GPU (10 GB VRAM) running with CUDA version 12.1. Twelve CPU and GPU hours total were needed for the training and testing of all four models.

\subsection{NtD linear operator learning}\label{app:details_numeric_NtD}
In this example from Sec.~\ref{sec:numeric}, we begin by defining a conductivity field $a\colon\Omega\subset\R^2\to\R$ given by
\begin{align}\label{eqn:app_fix_cond_for_ntd}
	x\mapsto a(x)=\exp\Biggl(\sum_{j=1}^{M+1}\sum_{k=1}^{M+1}\frac{2\sigma}{\bigl((j\pi)^2+(k\pi)^2+\tau^2\bigr)^{\alpha/2}}z_{jk}\sin(j\pi x_1)\sin(k\pi x_2)\Biggr)
\end{align} 
with fixed parameters $\alpha=2$, $\tau=3$, $\sigma=3$ and random coefficients $z_{jk} \diid \mathcal{N}(0,1)$. Once these random coefficients are drawn, they are fixed throughout the example. Thus, in this experiment, our realization of the conductivity $a$ remains fixed; see the left panel of Fig.~\ref{fig:AMA-NtD} for a visualization of this $a$. Our goal is to develop neural operator approximations of the NtD linear operator $\cG^\star\defeq\Lambda_a^{-1}$ that maps the Neumann boundary condition to the Dirichlet boundary condition according to the PDE~\eqref{eqn:app_elliptic_pde} (with Neumann data $g$ given instead of Dirichlet data $f$). Using the alternating minimization algorithm (AMA, Alg.~\ref{alg:alter}) from Sec.~\ref{sec:alg_ub}, we want to optimize the training distribution over the Neumann boundary conditions to achieve small OOD error \eqref{eqn:ood_accuracy}.

To evaluate the OOD error, we need to assign a random measure $\mathbb{Q}$. Let $\mathbb{Q} = \frac{1}{K}\sum_{k=1}^K \delta_{\nu'_k}$, where $\{\nu_k'\}_{k=1}^{K}$ are distributions over functions. For a fixed $k$, the function $g\sim\nu_k'$ is a Neumann boundary condition. It takes the form $g\colon\partial\Omega\to\R$, where 
\begin{align}\label{eqn:app_nbc}
	t\mapsto g\bigl(r(t)\bigr)=10\sin\left(\omega_k\left(t-\tfrac{1}{2}\right)\right)+\sum_{j=1}^{N_\mathrm{basis}}\frac{\sigma_k}{\bigl((j\pi)^2+\tau_k^2\bigr)^{\alpha_k/2}}z_{j}\sqrt{2}\sin(j\pi t)\,.
\end{align} 
We view $r\colon [0,1]\to\partial\Omega$ as a 1D parametrization of the boundary $\partial\Omega\subset\R^2$ for the domain $\Omega\defeq (0,1)^2$. The entire domain is discretized on an $M\times M$ grid, where $M$ is as in \eqref{eqn:app_fix_cond_for_ntd}. Here $z_{j}\diid \normal(0,1)$, $\alpha_k=1.5$ for all $k$, and $N_{\mathrm{basis}}=4M-4$. We set the grid parameter to be $M=128$ and sample $K=64$ test distributions in this example. The parameters $\omega_k\diid \rho([-2\pi,2\pi])$ and $\tau_k\diid \rho([0,50])$ are sampled from a custom distribution $\rho=\rho([x_{\min},x_{\max}])$ and held fixed throughout the algorithm. The custom distribution $\rho$ has the probability density function 
\begin{equation}\label{eqn:customdist}
	p(x) = C\sin\bigl(x-\tfrac{\pi}{2}\bigr)+1
\end{equation} 
for $x\in [x_{\min},x_{\max}]\subset\R$, where $C$ is a normalizing constant ensuring that $p$ integrates to $1$ over the domain $[x_{\min},x_{\max}]$. Samples are generated by using inverse transform sampling. Any time a Neumann boundary condition $g$ is sampled according to \eqref{eqn:app_nbc} and \eqref{eqn:customdist}, we subtract the spatial mean of $g$ from itself to ensure that the resulting function integrates to zero. This is necessary to make the Neumann PDE problem---as well as the NtD operator itself---well-posed.

The training distribution $\nu$ is assumed to generate functions of the form 
\begin{align}\label{eqn:app_ntd_train_distribution}
	t\mapsto \wtilde{g}(t)=10\sin\left(\omega\left(t-\tfrac{1}{2}\right)\right)+\sum_{j=1}^{N_\mathrm{basis}}\frac{\sigma}{\bigl((j\pi)^2+\tau^2\bigr)^{\alpha/2}}\wtilde{z}_{j}\sqrt{2}\sin(j\pi t)\,,
\end{align} 
where $\wtilde{z}_j \diid \normal(0,1)$, $\alpha=2$, $\tau=3$, and $\sigma=\tau^{3/2}$. That is, $\nu=\Law(\wtilde{g})$ with $\wtilde{g}$ as in \eqref{eqn:app_ntd_train_distribution}. The frequency parameter $\omega$ appearing in the mean term of \eqref{eqn:app_ntd_train_distribution} is optimized using Alg.~\ref{alg:alter} to identify a training distribution that minimizes average OOD error with respect to $\Q$.

Note that $g$ and $\wtilde{g}$ are finite-term truncations of the KLE of Gaussian measures with Mat\'ern-like covariance operators; recall Lem.~\ref{lem:app_kle_gaussian}. In particular, for $\wtilde{g}$, and similarly for $g$, the covariance operator of the Gaussian measure has eigenpairs
\begin{align}\label{eqn:app_ex_eigenpairs}
	\lambda_j = \frac{\sigma^2}{\bigl((j\pi)^2 + \tau^2\bigr)^\alpha}\qa t\mapsto
	\phi_j(t) = \sqrt{2}\,\sin(j\pi t) \qf
	j = 1, 2, \dots, N_{\mathrm{basis}}\,.
\end{align}

Using \eqref{eqn:app_ex_eigenpairs} and \eqref{eqn:w2_gaussian}, the squared $2$-Wasserstein distance in Prop.~\ref{prop:ood} reduces to 
\begin{equation}\label{eqn:was}
	\sfW_2^2(\nu,\nu_k')=\int_0^1\Bigl(10\sin\left(\omega\left(t-\tfrac{1}{2}\right)\right)-10\sin\left(\omega_k\left(t-\tfrac{1}{2}\right)\right)\Bigr)^2dt
	+\sum_{j=1}^{N_\mathrm{basis}}\Bigl(\sqrt{\lambda_j}-\sqrt{\lambda_{j,k}}\Bigr)^2,
\end{equation}
where $\lambda_{j,k}=\sigma_k^2/((j\pi)^2+\tau_k^2)^{\alpha_k}$. We approximate the integral with standard quadrature rules. Moreover, the second moment of $\nu$, and similarly for $\nu_k'$, is
\begin{equation}\label{eqn:2m}
	\sfm_2(\nu)=\int_0^1\Bigl(10\sin\left(\omega\left(t-\tfrac{1}{2}\right)\right)\Bigr)^2 dt+\sum_{j=1}^{N_\mathrm{basis}}\lambda_j\,.
\end{equation}
These closed-form reductions follow directly from the finite-dimensional KLE and Ex.~\ref{ex:w2_gaussian}. 

Furthermore, we estimate the Lipschitz constants of both the true NtD operator $\cG^\star$ and the learned model $\cG_\theta$---which appear in the multiplicative factor $c$ from Prop.~\ref{prop:ood}---by sampling $250$ realizations of Neumann input function pairs $(g_1, g_2)$ from $11$ candidate distributions $\nu$. We denote the finite set of these pairs of realizations by $\mathsf{S}$. We then use the quantity 
\begin{align}\label{eqn:app_lip_estimate}
	\max_{(g_1, g_2)\in\mathsf{S}} \frac{\norm{\mathcal{G}(g_1)-\mathcal{G}(g_2)}_{L^2(\partial\Omega)}}{\norm{g_1-g_2}_{L^2(\partial\Omega)}}
\end{align}
to estimate the Lipschitz constant of map $\cG\in\{\cG^\star,\cG_\theta\}$. These Lipschitz constants are then fixed throughout the algorithm.

\paragraph{Model architecture and optimization algorithm.} The model architecture is a DeepONet comprising a branch network with five fully connected layers of sizes $(4M-4)$, $512$, $512$, $512$, and $200$, and a trunk network with five fully connected layers of sizes $2$, $128$, $128$, $128$, and $200$, respectively; see \textbf{SM}~\ref{app:architectures_deeponet} for details. All layers use ReLU activation functions. Weights are initialized using the Glorot normal initializer. The model is trained with the Adam optimizer with a learning rate of $10^{-3}$. After the model-training half step in the alternating minimization scheme Alg.~\ref{alg:alter}, the parameter $\omega$ in \eqref{eqn:app_ntd_train_distribution} is optimized using SciPy's default implementation of differential evolution with a tolerance of $10^{-7}$. Due to the presence of many local minima in the optimization landscape, this global optimization algorithm is necessary. 

\paragraph{Training and optimization procedure.} The initial value for $\omega$ is set to $8\pi$. For every model-training step, $500$ pairs of Neumann and Dirichlet data are generated using the current value of $\omega$. For practical reasons, the model is initially trained for $2000$ iterations of Adam. If the AMA loss has not improved relative to the previous iteration, training continues for an additional $2000$ iterations. This process is repeated up to $10$ times, after which the algorithm proceeds to the parameter optimization step for the training distribution $\nu$. During distribution parameter optimization, the AMA loss is evaluated using $500$ newly generated data pairs, used specifically to compute the first term of the objective~\eqref{eqn:AMAobjective}. To ensure deterministic behavior, the $500$ sets of random coefficients $\{z_j\}_{j=1}^{N_\mathrm{basis}}$ are fixed separately for training and optimization. This results in $1000$ total sets of coefficients fixed for the entire algorithm. This process is repeated across $80$ independent runs using the same conductivity but different realizations of the random coefficients $\{z_j\}$. The relative AMA loss and relative OOD error is computed for each run; see \eqref{eqn:app_ntd_rel_ood_error}.

\paragraph{Model evaluation.} 
The relative AMA loss is defined as the algorithm's objective value \eqref{eqn:AMAobjective} normalized by its initial value. The middle panel of Fig.~\ref{fig:AMA-NtD} illustrates the rapid decay of this normalized loss over subsequent iterations. The relative OOD loss is defined as 
\begin{align}\label{eqn:app_ntd_rel_ood_error}
	\text{Relative OOD Error}\defeq\sqrt{\frac{\E_{\nu'\sim\mathbb{Q}}\E_{g\sim\nu'}\|\mathcal{G}^\star g-\mathcal{G}_{\theta}(g)\|_{L^2}^2}{\E_{\nu'\sim\mathbb{Q}}\E_{g\sim\nu'}\|\mathcal{G}^\star g\|_{L^2}^2}}\,,
\end{align}
which measures the squared prediction error of the model $\mathcal{G}_\theta$ relative to the ground truth NtD operator $\mathcal{G}^\star$ on average with respect to the given random measure $\mathbb{Q}$.

The relative OOD error is estimated using Monte Carlo sampling with $K=64$ and $N'=500$ as 
\begin{align}
	\text{Relative OOD Error}\approx\sqrt{\frac{\frac{1}{K}\sum_{k=1}^K\frac{1}{N}\sum_{n=1}^{N'}\|\mathcal{G}^\star g_{k,n}-\mathcal{G}_{\theta}(g_{k,n})\|_{L^2}^2}{\frac{1}{K}\sum_{k=1}^K\frac{1}{N'}\sum_{n=1}^{N'}\|\mathcal{G}^\star g_{k,n}\|_{L^2}^2}}\,,
\end{align}
where $g_{k,n}$ is the $n^{th}$ i.i.d. sample from $\nu_k'$. Further numerical quadrature is used to discretize the $L^2(\partial\Omega)$ norms in the preceding display.

After $80$ independent runs of Alg.~\ref{alg:alter}, a $95\%$ confidence interval for the true relative OOD error was computed using a Student's $t$-distribution as seen in the middle panel of Fig.~\ref{fig:AMA-NtD}.

\paragraph{Computational resources.} The computational resources here are the same as those reported in \textbf{SM}~\ref{app:details_numeric_AMINO}, except three total CPU and GPU hours were needed for all independent runs.

\subsection{Darcy flow forward operator learning}\label{app:details_numeric_darcy}
In this example from Sec.~\ref{sec:numeric}, we are interested in learning another operator that is relevant and related to the EIT problem. In contrast with the previous example in \textbf{SM}~\ref{app:details_numeric_NtD} where we fix the conductivity, the Darcy flow forward operator that we consider is nonlinear and maps the log-conductivity function $\log(a)$ to the PDE solution $u$ of
\begin{align}\label{eqn:app_elliptic_pde_darcy}
	\left\{
	\begin{aligned}
		-\nabla \cdot (a\nabla u) &= 1 && \qin \Omega\,, \\
		u &= 0 && \qon \partial\Omega\,.
	\end{aligned}
	\right.
\end{align}
That is, $\cG^\star\colon \log(a) \mapsto u$. Here, $\Omega\defeq (0,1)^2$.

Again, we first assign a random measure $\mathbb{Q}$ to evaluate the OOD performance. We take $\mathbb{Q} = \frac{1}{K}\sum_{k=1}^K \delta_{\nu_k'}$, where 
each test distribution $\nu_k'$ is defined by  parameters sampled i.i.d. from custom distributions \eqref{eqn:customdist}: $m_k\sim\rho([0,1])$, $\alpha_k\sim\rho([2,3])$, and $\tau_k\sim\rho([1,4])$. The scaling parameter is set as $\sigma_k=\tau_{k}^{\alpha_k-1}$. We set $K=20$ and the grid parameter $M=64$. Given these parameters, each random log-conductivity function $\log(a')$ from test distribution $\nu_k'$ has the form
\begin{align}
	x\mapsto \log(a'(x))=m_k+\sum_{i=1}^{M+1}\sum_{j=1}^{M+1}\frac{2\sigma_k}{\bigl((i\pi)^2+(j\pi)^2+\tau_k^2\bigr)^{\alpha_k/2}}z'_{ij}\sin(i\pi x_1)\sin(j\pi x_2)\,,
\end{align} 
where $z'_{ij}\sim\mathcal{N}(0,1)$ independently. Thus, $a$ is lognormal. Sample log-conductivities $\log(a)$ from training distribution $\nu$ are assumed to follow the same form
\begin{align}
	x\mapsto \log(a(x))=m+\sum_{i=1}^{M+1}\sum_{j=1}^{M+1}\frac{2\sigma}{\bigl((i\pi)^2+(j\pi)^2+\tau^2\bigr)^{\alpha/2}}z_{ij}\sin(i\pi x_1)\sin(j\pi x_2)\,,
\end{align} 
with fixed parameters $\alpha=2$, $\tau=3$, and $\sigma=3$ and $z_{ij}\diid\normal(0,1)$.  Alg.~\ref{alg:alter} is used to optimize the mean shift parameter $m\in\R$ in order to reduce the OOD error. As in \textbf{SM}~\ref{app:details_numeric_NtD}, $\log(a)$ is modeled via a finite KLE of a Gaussian measure with Mat\'ern-like covariance operator. Thus, by working with $\log(a)$ directly, we can exploit \eqref{eqn:w2_gaussian} to write the squared $2$-Wasserstein distance factor in objective~\eqref{eqn:AMAobjective} in closed-form.

\paragraph{Model architecture and optimization algorithm.} The same DeepONet architecture and training setup from \textbf{SM}~\ref{app:details_numeric_NtD} are used here, except the branch network has layer sizes $M^2$, $256$, $256$, $128$, and $200$. After the model-training half step in Alg.~\ref{alg:alter}, the scalar parameter $m$ is optimized using PyTorch's Adam optimizer with a learning rate of $10^{-3}$.

\paragraph{Training and optimization procedure.} 
The initial value of $m$ is set to zero. The training and optimization procedure follows the same structure as in \textbf{SM}~\ref{app:details_numeric_NtD}, but with the following differences: $200$ conductivity-solution pairs are generated for training using the current value of $m$. The model is initially trained for $5000$ iterations. If the AMA loss has not improved relative to the previous iteration, training continues for an additional $5000$ iterations. This process is repeated up to three times, after which the algorithm proceeds to the distribution parameter optimization step. During distribution parameter optimization, $500$ steps are taken. Then the AMA loss is evaluated using three newly generated data pairs. If the AMA loss has not improved relative to the previous iteration, another $500$ steps are taken, up to $1500$ total steps. After this, the procedure returns to model training again. To ensure deterministic behavior, $203$ sets of random coefficients $\{z_{ij}\}$ are fixed across training and optimization. The preceding process is repeated across $13$ independent runs, each using different realizations of the coefficients. The average relative OOD error and average relative AMA loss are computed for each run.

\paragraph{Model evaluation.} 
After $13$ independent runs, the average relative OOD error and relative AMA loss are computed and are shown in Fig.~\ref{fig:AMAloss-Darcy}. In Fig.~\ref{fig:AMA-Darcy}, we visualize the pointwise absolute error for iterations $1$, $2$, $3$, and $8$. At these iterations, the DeepONet model is trained using the optimal training distribution produced by AMA. The model is then evaluated on the same fixed test conductivity. The progressive reduction in the spatial error across these panels of Fig.~\ref{fig:AMA-Darcy} corroborates the quantitative decrease in the relative OOD error, which converges to approximately $4\%$ in our final model.

\begin{figure}[tb]%
	\centering
	\resizebox{\textwidth}{!}{%
		\subfloat{
			\includegraphics[width=0.375\textwidth]{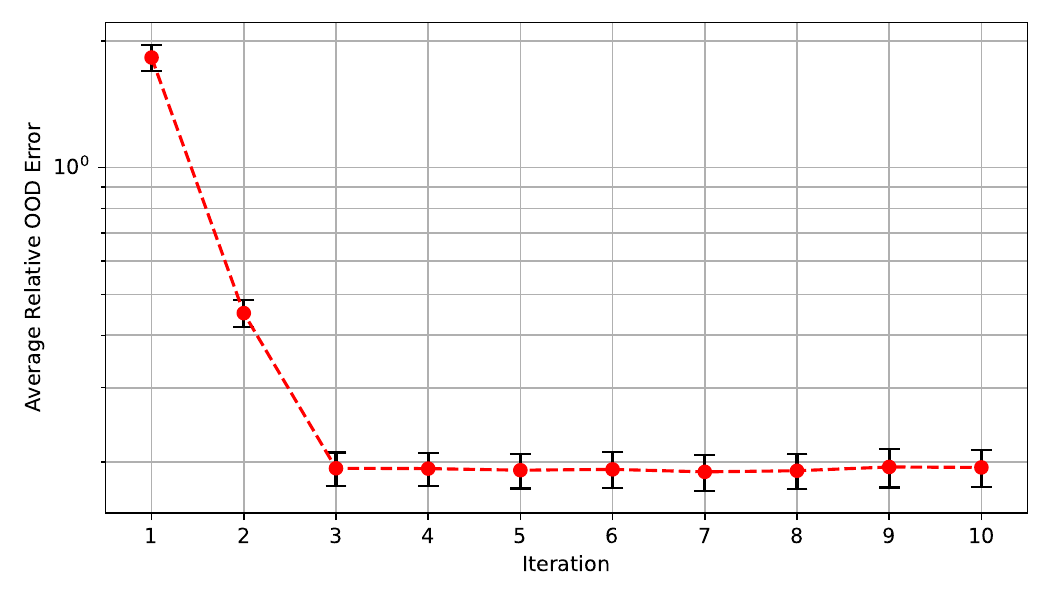}
		}
		\subfloat{
			\includegraphics[width=0.375\textwidth]{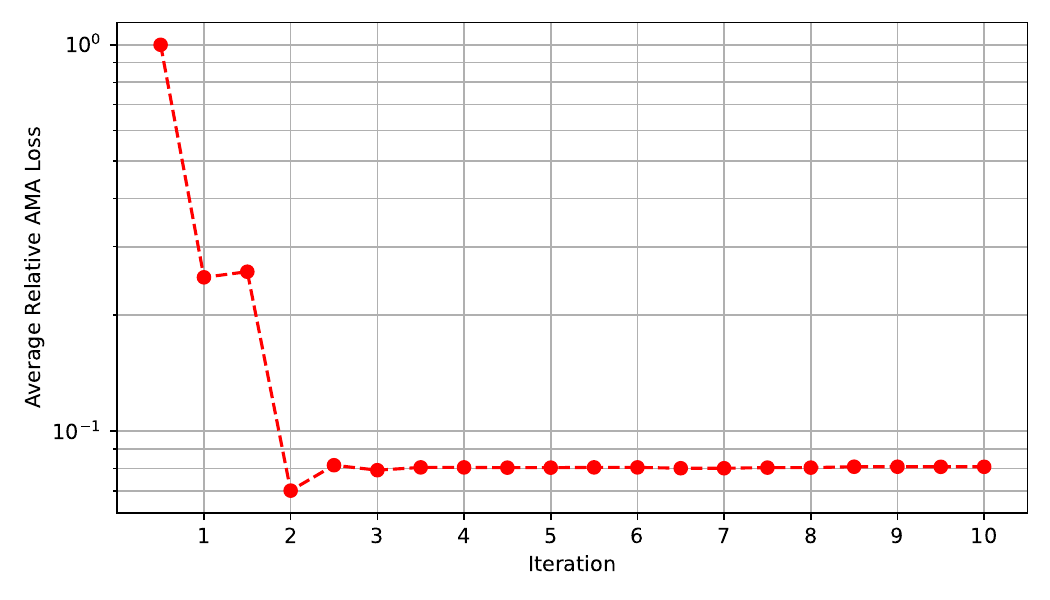}
		}
	}
	\vspace{-5mm}
	\caption{\small Average relative OOD error and average relative loss of Alg.~\ref{alg:alter} on the Darcy flow forward problem. (Left) After $13$ independent runs, decay of the average relative OOD error of the model when trained on the optimal distribution identified at each iteration; a $95$\% confidence interval of the true relative OOD error is provided at each iteration. (Right) Decay of average AMA loss defined in~\eqref{eqn:AMAobjective} vs. iteration relative to the same loss at initialization.}
	\label{fig:AMAloss-Darcy}
\end{figure} 

\paragraph{Computational resources.} The computational resources here are the same as those reported in \textbf{SM}~\ref{app:details_numeric_NtD}, except three total CPU and GPU hours were needed for all independent runs.

\subsection{Radiative transport operator learning}\label{app:details_numeric_rte}
In this example from Sec.~\ref{sec:numeric}, we consider the hyperbolic PDE for radiative transport governing the particle density $u(z,v)$ at position $z\in \Omega\subset\R^d$ and velocity $v\in V\subset\R^d$:
\begin{equation}
	\begin{cases} 
		v\cdot\nabla_z u(z,v)=\frac{1}{\varepsilon}a(z) Q[u](z,v), & (z,v)\in \Omega\times V\,, \\
		u(z,v)=\phi(z,v), & (z,v)\in \Gamma_{-}\,,
	\end{cases}
\end{equation}
where $a$ is the scattering coefficient, $\varepsilon$ is the Knudsen number, and the collision operator $u\mapsto Q[u]$ is given by
\begin{equation}
	Q[u](z,v)\defeq \int_V \mathcal{K}(v,v')u(z,v')\,dv'-u(z,v)\,.
\end{equation}
In this example, we use the constant kernel $\mathcal{K}(v,v')=\tfrac{1}{|V|}$.
The inflow boundary is defined as
\begin{equation}
	\Gamma_{-}\defeq \set{(z,v)\in\partial \Omega \times V}{-\boldsymbol{n}_z\cdot v\geq 0}\,,
\end{equation}
where $\boldsymbol{n}_z$ denotes the outward normal at $z\in\partial \Omega$.

On three separate experiments with error results given in Fig.~\ref{fig:rte_ood_error_comparison}, we set the Knudsen number $\varepsilon$ equal to 1/8, 2, and 8. Changing $\varepsilon$ puts us in different physical regimes as seen in Fig.~\ref{fig:RTE_InputOutput}. The inflow boundary condition is given by
\begin{align}
	\phi(z,v) = \begin{cases}
		\bigl(1 + 0.5\sin(\pi y/L)\bigr)\max(v \cdot \boldsymbol{n}_z, 0), & \text{if }\ z=0\,, \\
		0, & \text{elsewhere on } \Gamma_-\,.
	\end{cases}
\end{align}

\begin{figure}[tbp]%
	\centering
	\subfloat{
		\includegraphics[width=0.8\textwidth]{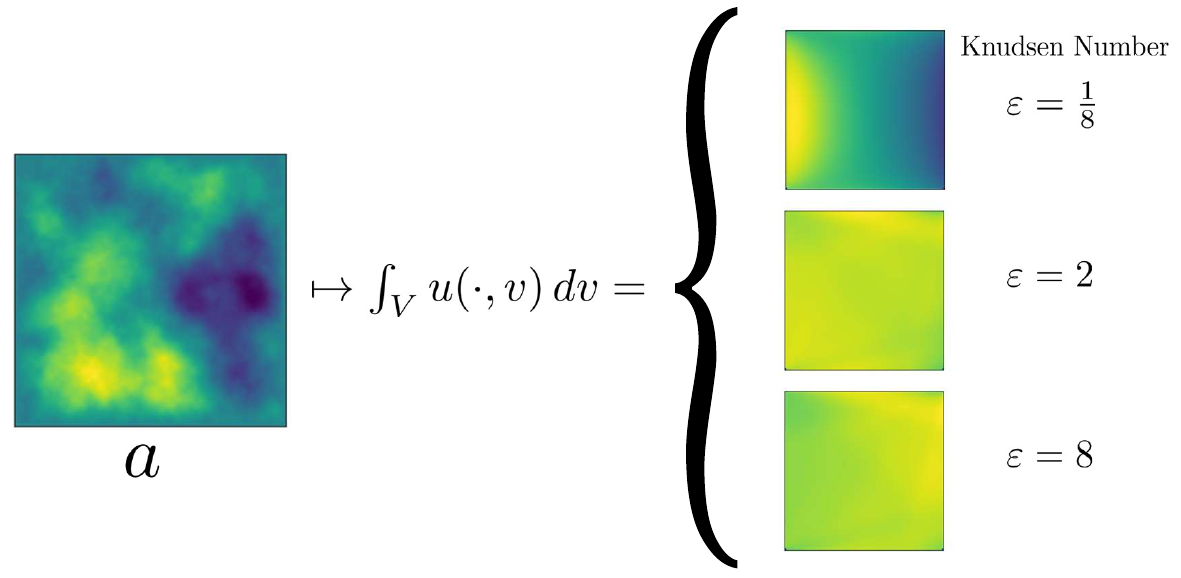}
	}
	\vspace{-1mm}
	\caption{\small Mapping from scattering coefficient to spatial density in the radiative transport equation for three Knudsen numbers~$\varepsilon$. Larger~$\varepsilon$ values yield a more hyperbolic regime.}
	\label{fig:RTE_InputOutput}
\end{figure} 

We seek the optimal input training distribution $\nu$ for learning the nonlinear operator that maps the scattering coefficient function $a(\cdot)$ to the spatial domain density $\int_V u(\cdot,v)\,dv$.  

In this example, $\Omega\defeq (0,1)^2$ is the unit square and $V\defeq \mathbb{S}^1=\set{(\cos\theta,\sin \theta)}{\theta\in[-\pi,\pi)}$ is the unit circle. Thus, we parametrize $v$ by $\theta$, i.e., $v(\theta)=(\cos \theta,\sin \theta)$ for $\theta\in [-\pi,\pi)$, $\int_V u(\cdot,v)\,dv=\int_{-\pi}^\pi u(\cdot,v(\theta))\,d\theta$, and $v\cdot\nabla_z=(\cos\theta)\partial_{x_1}+(\sin\theta)\partial_{x_2}$. We first generate samples from $\Q$ to then evaluate the OOD performance of the model trained on $\nu$. We set $K=3$. Each distribution $\nu_k'\sim\Q$ is set to be $(\exp)_\#\gamma_k$, where $\gamma_k$ is a Gaussian process parametrized by $\sigma_k$, $\alpha$, and $\tau$. Thus, samples from $\nu_k'$ have the form 
\begin{align} a'(x)=\exp\biggl(f_k(x)+\sum_{i=1}^{\infty}\sum_{j=1}^{\infty}\frac{2\sigma_k}{\bigl((i\pi)^2+(j\pi)^2+\tau^2\bigr)^{\alpha/2}}z_{ij}\sin(i \pi x_1)\sin(j \pi x_2)\biggr)\,,
\end{align} 
where $z_{ij}\diid\mathcal{N}(0,1)$. The mean $f_k$ of each distribution $\gamma_k$ is selected as follows. For a fixed centered Gaussian process $\gamma$, draw $f_k\diid\gamma$ for each $k$. Then these $f_k$ functions are then fixed once and for all at the beginning of the experiment. By supplying distinct nonzero means to each of the $K$ test distributions, the problem of finding the optimal training distribution becomes harder. Indeed, the test distributions now exhibit not only covariance shifts, but also mean shifts. We set $\tau=1$ and $\alpha=2$ for all $K=3$ test distributions, and set $\sigma_1=1$, $\sigma_2=2$, and $\sigma_3=3$. Note that the parameters of $\gamma$ are set to $\tau=1$, $\alpha=2$, and $\sigma=1$.

\paragraph{Model architecture and optimization algorithm.} Following Alg.~\ref{alg:alter}, we represent $\nu$ as an empirical measure of $N$ equally weighted particles: $\nu=\frac{1}{N}\sum_{q=1}^N\delta_{a_q}$. Each particle $a_q$ is a function of the form
\begin{equation}\label{eqn:rte_particleform}
	a_q(x) = \exp\left(\sum_{i=1}^{\infty}\sum_{j=1}^{\infty} c_{qij}\cdot 2\sin(i\pi x_1)\sin(j\pi x_2)\right)\,,
\end{equation}
where the coefficients $c_q:=\{c_{qij}\}_{i,j\geq 1}$ belong to $\ell^2$. Rather than optimizing the particles $a_q$ directly, we optimize their representations in the coefficient space through the transformation
\begin{equation}
	a_q=\exp(L c_q)\,,
\end{equation}
where $L\colon \ell^2\to L^2(\Omega)$ is the linear operator mapping coefficients to the series in~\eqref{eqn:rte_particleform} and $\exp$ is applied pointwise. The distribution $\nu$ is thus obtained as the pushforward $\nu=(\exp \circ L)_\#\mu$, where $\mu$ is the distribution over coefficient sequences. We can then write the loss as $\widetilde{F}(\mu)\defeq F((\exp\circ L)_\#\mu)=F(\nu)$, where $F$ is defined as the upper bound in Prop.~\ref{prop:ood}, i.e., $\widetilde{F}=F\circ(\exp\circ L)_\#$. For each step $k$, the discretized Wasserstein gradient flow update rule with respect to $\{c_q\}_{q=1}^N$ is then
\begin{align}\label{eqn:rte_coef_update}
	c_q^{(k+1)}=c_q^{(k)}-\eta^{(k)}\nabla_{\ell^2}\Bigl[D\widetilde{F}\bigl(\mu^{(k)}_N\bigr)\Bigr]\bigl(c_q^{(k)}\bigr)\,,\qw \mu^{(k)}_N\defeq\frac{1}{N}\sum_{q'=1}^N\delta_{c_{q'}^{(k)}}
\end{align}
and $\eta^{(k)}$ is the step size. By the chain rule and linearity of pushforward,
\begin{align}
	D\widetilde{F}(\mu)= DF\bigl((\exp\circ L)_\#\mu\bigr) \circ (\exp\circ L)
\end{align}
for each $\mu$, where $DF$ denotes the derivative from Prop.~\ref{prop:derivative_alter_prob}. Thus, we can write the update rule \eqref{eqn:rte_coef_update} in terms of the known derivative $DF$.

The $\ell^2$ gradient with respect to 
$c_q$ is computed by a combination of PyTorch’s autograd and the adjoint state method. After updating $\{c_q\}_{q=1}^{N}$, we train a model with discretized versions of particles $\{a_q\}_{q=1}^N$ denoted as $\{\hat{a}_q\}_{q=1}^N$. Each particle $a_q$ is discretized as an $m\times m$ matrix. Similar to the setups from \textbf{SM}~\ref{app:details_numeric_NtD} and \textbf{SM}~\ref{app:details_numeric_darcy}, the model architecture is a DeepONet. However, the branch network now has $6$ fully connected layers of sizes $m^2$, $100$, $100$, $100$, $64$, and $32$, and the trunk network has $6$ fully connected layers of sizes $2$, $100$, $100$, $100$, $64$ and $32$.

\paragraph{Training and optimization procedure.} 
During optimization, we work with $N$ samples from each distribution $\{\nu'_k\}_{k=1}^K$ rather than the distributions themselves. For computational simplicity, we set the number of samples from each $\nu'_k$ equal to the number of particles in the training distribution.

The particles $\{a_q\}_{q=1}^N$ are discretized by truncating the number of terms in its expansion, i.e.,

\begin{align}
	a_q(x) = \exp\left(\sum_{i=1}^{N_{\text{truncate}}}\sum_{j=1}^{N_{\text{truncate}}} c_{qij}\cdot 2\sin(i\pi x_1)\sin(j\pi x_2)\right)\,.
\end{align}

The coefficient sequences $\{c_q\}_{q=1}^N$ are truncated to $N_{\text{truncate}}=20$ for both indices $i$ and $j$, yielding sequences of 100 coefficients each. These are initialized by sampling from $\mathcal{N}(0, 10^{-6})$.

We discretize $\Omega$ on a $51 \times 51$ grid ($m=51$) and $V$ with $12$ different directions, i.e. $\Delta \theta = \pi/6$, so we approximate the collision term by 
\begin{align*}
	Q[u_q]&= \frac{1}{|V|}\int_V u_q(z,v')\,dv'-u_q(z,v)\\
	&\approx \frac{1}{|V|}\sum_{i=1}^{12} u_q(\cdot,\theta_i)\,\Delta \theta - u_q\\
	&= \frac{\pi/6}{2\pi}\sum_{i=1}^{12} u_q(\cdot,\theta_i) - u_q\\
	&= \frac{1}{12}\sum_{i=1}^{12} u_q(\cdot,\theta_i) - u_q
\end{align*}

and the spatial-domain density by
\begin{align}
	\tilde{u}_q \defeq \int_V u_q(\cdot,v)\,dv \approx \sum_{i=1}^{12} u_q(\cdot,\theta_i)\,\Delta \theta\,.
\end{align}
Initialize the step size as $\eta=1\times 10^{-6}$. At each iteration of Alg.~\ref{alg:alter}:
\begin{enumerate}
	\item Compute the true outputs $\{\tilde{u}_q\}_{q=1}^N$ as $m\times m$ matrices using the current $\{c_q\}_{q=1}^N$.
	\item Train the DeepONet for 5,000 epochs with an 80/20 train-test split.
	\item Update $\{c_q\}_{q=1}^N$ according to Eqn.~\eqref{eqn:rte_coef_update}.
	\item If the loss $F(\nu)$ increases, halve the step size and recompute the update. The decreased step size carries over to the next iteration.
\end{enumerate}

The stopping condition for the algorithm is if $\|\verb|step|^{(k)}\|_2<\verb|tol|$, where $\verb|step|^{(k)}=\eta^{(k)}\nabla_{\ell^2}\Bigl[D\widetilde{F}\bigl(\mu^{(k)}_N\bigr)\Bigr]$ and $\verb|tol|=1\times10^{-6}$. Note that in this example, convergence usually occurs in about seven iterations.

For the loss computation in Step 4 above, the second moment of a distribution $\nu$ is estimated with $N$ samples by
\begin{align}
	\sfm_2(\nu)=\E_{a\sim\nu} \|a\|_{L^2(\Omega)}^2\approx\frac{1}{N}\sum_{i=1}^N \|a_i\|_{L^2(\Omega)}^2 \,, \label{eqn:m2_estimate}
\end{align}
where $\|a_i\|_{L^2(\Omega)}=\int_{\Omega}a_i(z)^2dz$ is estimated with the trapezoidal rule. Note that for our empirical training distribution, \ref{eqn:m2_estimate} is not an approximation, but instead an equality. We estimate the Wasserstein distance $\sfW_2(\nu,\nu')$ between distributions $\nu$ and $\nu'$ using $N$ samples from each, with the ground cost function taken to be $c(a,a') = \|a-a'\|^2_{L^2(\Omega)}$.

\paragraph{Model evaluation.}
We compare models trained on the optimized distribution at each iteration of particle-based AMA with a benchmark model trained on $N$ samples from the mixture of test distributions $\nu_{\Q}=\frac{1}{K}\sum_{k=1}^K\nu_{k}'$ (Eqn.~\ref{eqn:app_infinite_mix}), using the same DeepONet architecture and 5,000 training epochs for all models. Performance is measured by the relative OOD error (Eqn.~\ref{eqn:app_ntd_rel_ood_error}), approximated with the original $N$ samples from each $\nu_k'$. As shown in Fig.~\ref{fig:rte_ood_error_comparison}, the optimized model improves substantially after particle-based AMA, achieving performance comparable to the benchmark and even outperforming the benchmark in small sample-size regimes. Given that AMA minimizes an upper bound, we are not directly minimizing the OOD error which can account for why the optimized distribution does not outperform the benchmark distribution for most sample sizes. Tab.~\ref{tab:rte_ood_comparison} reports 95\% confidence intervals for relative OOD errors over 10 independent trials at $N=120$, demonstrating that the optimized model reduces its relative OOD error by 82\% from its original value on average for $\varepsilon=\tfrac{1}{8}$, 85\% for $\varepsilon=2$, and 88\% for $\varepsilon=8$. For this particular sample size, the runtime, shown in Fig.~\ref{fig:RTE_Time}, averages 20.6 minutes for $\varepsilon=\tfrac{1}{8}$, 26.7 minutes for $\varepsilon=2$, and 26.1 minutes for $\varepsilon=8$.

\begin{table}[tb]
	\centering
	\caption{\small Radiative transport operator learning with particle-based Alg.~\ref{alg:alter}. Reported are mean relative OOD errors with 95\% confidence intervals computed over 10 independent runs for three separate models with varying Knudsen number $\varepsilon$. All models have the same DeepONet architecture and are trained with $N=120$ samples for 5,000 epochs. What makes each model different is their training distribution. The initial model was trained with random particles $c_{qij}\sim\mathcal{N}(0,10^{-6})$, the optimized model was trained with optimized particles from particle-based AMA, and the benchmark model is a model trained with samples from the mixture of test distributions, i.e., $\nu_\Q=\frac{1}{K}\sum_{k=1}^K\nu_k'$.}
	\label{tab:rte_ood_comparison}
	\begin{tabular}{l|ccc}
		\toprule
		$\varepsilon$ & \textbf{Initial} & \textbf{Optimized} & \textbf{Benchmark} \\
		\midrule
		$1/8$ & $8.65 \times 10^{-2} \pm 8.18 \times 10^{-3}$ & $1.55 \times 10^{-2} \pm 9.72 \times 10^{-4}$ & $1.62 \times 10^{-2} \pm 5.34 \times 10^{-3}$ \\
		$2$   & $8.76 \times 10^{-2} \pm 8.63 \times 10^{-3}$ & $1.29 \times 10^{-2} \pm 1.20 \times 10^{-3}$ & $1.24 \times 10^{-2} \pm 2.69 \times 10^{-3}$ \\
		$8$   & $9.09 \times 10^{-2} \pm 4.39 \times 10^{-3}$ & $1.13 \times 10^{-2} \pm 1.33 \times 10^{-3}$ & $1.16 \times 10^{-2} \pm 2.26 \times 10^{-3}$ \\
		\bottomrule
	\end{tabular}
\end{table}

\begin{figure}[htbp]%
	\centering
	\subfloat{
		\includegraphics[width=0.8\textwidth]{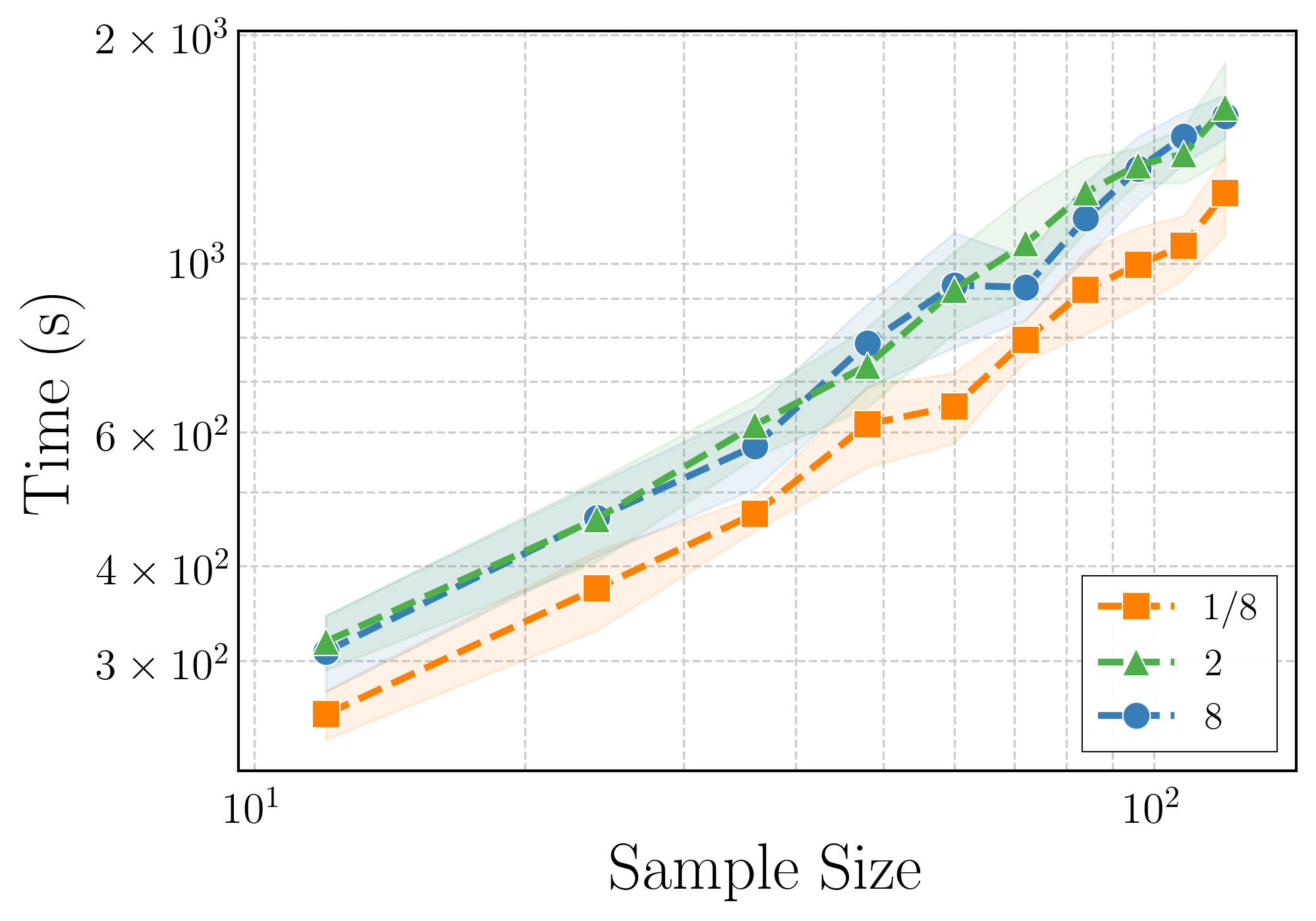}
	}
	\vspace{-1mm}
	\caption{\small Runtime of particle-based AMA for the radiative transport equation with sample size $N$ at three Knudsen numbers $\varepsilon \in \{1/8, 2, 8\}$. Error bars indicate 95\% confidence intervals over 10 trials.}
	\label{fig:RTE_Time}
\end{figure} 

\paragraph{Computational resources.} All experiments were conducted on AWS using Galaxy's Deep Learning AMI GPU, PyTorch 2.0.1, and Ubuntu 20.04-prod-ada4rwvf26mda with an EC2 instance type of g5.12xlarge. The experiments for the three different $\varepsilon$ values were run in parallel, with the $\varepsilon = 2$ case taking the longest at 27 hours. Thus, a total of approximately 27 GPU-hours (and corresponding CPU usage) were required for all independent runs with three GPUs running in parallel.

\subsection{Burgers' equation operator learning}\label{app:details_numeric_burgers}
In the final example from Sec.~\ref{sec:numeric}, we consider the viscous Burgers' equation
\begin{align}
	\frac{\partial u}{\partial t} + u \frac{\partial u}{\partial x}
	= \kappa \frac{\partial^2 u}{\partial x^2}
\end{align}
with viscosity $\kappa = 0.1$ on the periodic domain $x \in [0, 2\pi]_{\mathrm{per}}$. 
Our goal is to learn the operator that maps the initial condition $a\defeq u(\slot,0)$ to the solution $u\defeq u(\slot,1)$. This benchmark problem is taken from \citet{nelsen2024operator,li2021fourier,anandkumar2022neural}.

Following Alg.~\ref{alg:alter}, we model the training data distribution $\nu$ as a Gaussian process with a Mat\'ern covariance kernel. Samples from \(\nu\) are generated according to
\begin{align}
	a(x)
	= \frac{1}{\sqrt{\pi}}
	\sum_{j=1}^{n/2}
	\frac{\sigma}{(j^{2}+\tau^{2})^{\alpha/2}}
	\bigl(z^{(s)}_{j}\sin(jx) + z^{(c)}_{j}\cos(jx)\bigr)\,,
\end{align}
where $z^{(s)}_{j}$ and $z^{(c)}_{j}$ are i.i.d. samples from $\mathcal{N}(0,1)$ and $n=8192$ is the discretization size.  
Only \(n/2\) modes are retained due to the Nyquist limit.

\paragraph{Model architecture and optimization algorithm.} We adopt the DeepONet architecture described in \textbf{SM}~\ref{app:details_numeric_NtD}. 
The branch network has layer sizes
$n$, $1024$, $512$, $256$, $128$, $p$
and the trunk network has layer sizes
$1$, $128$, $128$, $128$, $128$, $p$ where $p=128$. 
Training uses Adam with initial learning rate \(10^{-3}\) and inverse-time decay (decay steps = 2000,\, decay rate = 0.9). We add \(\ell_2\) regularization via weight decay \(10^{-3}\) and apply early stopping on the test loss with patience 1000 and \(\mathrm{min\_delta}=10^{-4}\). The maximum number of training iterations is 10,000. After the model-training half-step in Alg.~\ref{alg:alter}, we optimize the Mat\'ern covariance parameters \((\alpha,\tau,\sigma)\) by minimizing the upper bound in Prop.~\ref{prop:ood} using \texttt{scipy.optimize.minimize}, with gradients from Lem.~\ref{lem:grad_parameter_gaussian}.

\paragraph{Training and optimization procedure.} We initialize the parameters at $\alpha = 3$, $\tau = 1.4$, and $\sigma = 5$. 
All experiments use the fixed publicly available dataset \texttt{burgers\_data\_R10.mat} from \citet{anandkumar2022neural}, which contains 2048 input-output sample pairs. 
We reserve $10\%$ of the data (205 samples) as a held-out set for estimating the relative OOD error and treat the remaining 1843 samples as i.i.d. draws from a test distribution which we assume to be Gaussian. To optimize the training distribution $\nu$, we generate $N = 1843$ samples from the current parameter values, train the DeepONet, and then update the parameters using this model and test samples. We repeat this process until the improvement in the objective function falls below $3\times 10^{-3}$. This occurs at iteration $10$, yielding optimal parameters $\alpha = 2.906$, $\tau = 1.456$, $\sigma = 5.005$.

\paragraph{Model evaluation.} After identifying the optimal training distribution, we compare the model trained on $N$ samples from this distribution with models trained using two pool-based active learning strategies, Query-by-Committee (QbC) and CoreSet. We treat the 1843 test samples as the pool, i.e., $N_{\mathrm{pool}} = 1843$. Each method begins with 10 randomly selected samples, after which we fully train the model, select an additional $n_{\mathrm{select}} = 40$ samples according to the method-specific criterion, and retrain the model from scratch. For subsequent iterations we use $n_{\mathrm{select}} = 50$ until each method reaches a total of 1000 training samples. We run 10 independent trials of this experiment. The selection criteria for each strategy are summarized below.

\paragraph{Selection criteria.}
The pool contains $N_{\mathrm{pool}}$ candidate inputs. Let $n_{\mathrm{select}}$ denote the number of points selected at each acquisition step and let $\Delta x=2\pi /n$ be the spatial grid spacing.

\begin{itemize}
	\item \textbf{Query-by-Committee (QbC).} Given an ensemble of $M=4$ models that produce predictions $u_{i}^{(m)}\in\mathbb{R}^{n}$ for pool sample $i$ and model $m$, define the ensemble mean
	$\bar u_i = \tfrac{1}{M}\sum_{m=1}^{M} u_{i}^{(m)}$.
	The uncertainty score is the average squared $L^2$ deviation from the mean:
	\begin{align}
		\mathrm{uncertainty}(i)
		\defeq \frac{1}{M}\sum_{m=1}^{M}\big\|u_{i}^{(m)}-\bar u_i\big\|_{L^2}^2\qa\big\|v\big\|_{L^2} \defeq \sqrt{\Delta x\sum_{k=1}^{n} v_k^2 }\,.
	\end{align}
	Select the $n_{\mathrm{select}}$ pool points with largest $\mathrm{uncertainty}(i)$.
	
	\item \textbf{CoreSet (greedy \(k\)-center on last-layer features).} 
	Let \(k := n_{\mathrm{select}}\). For each pool input, compute a last-layer feature vector \(f_i\in\mathbb{R}^{p}\) from the DeepONet evaluated at point $a_i$ from the pool, apply a Gaussian sketch with \(d=32\), i.e.,
	\begin{align}
		U &\sim \mathcal{N}(0,1)^{d\times p}\,,\qquad
		s_i = \frac{f_i U^{\top}}{\sqrt{d}}\in\mathbb{R}^{d}\,,
	\end{align}
	and normalize \(\tilde s_i = s_i/\norm{s_i}_2\). Initialize the selected set \(S_0\) with one random index and iteratively add
	\begin{align}\label{eqn:sm_maxmin_coreset}
		i_{t+1} = \argmax_{i\in\mathrm{pool}\setminus S_t} \, \min_{j\in S_t} \norm{\tilde s_i-\tilde s_j}_2
	\end{align}
	until \(|S_T|=k\). Return the indices in \(S_T\).
\end{itemize}

For each model trained on $N$ samples, performance is measured by the relative OOD error (Eqn.~\ref{eqn:app_ntd_rel_ood_error}), approximated with the 205 held-out samples.

Fig.~\ref{fig:burgers} compares the model trained on $N$ samples from the optimal distribution with the two active-learning-based models. The results show that the model trained on the optimal distribution performs comparably to the active learning methods. A key advantage of the AMA approach is that it allows sampling beyond the size of the fixed data pool, whereas pool-based active learning methods are restricted to selecting at most the available pool samples. We remark that generally within the field of active learning and adaptive sampling, one cannot expect to always outperform nonadaptive/i.i.d. sampling, especially in high dimensions. The target mapping and choice of deployment regime $\Q$ will determine the gains, if any, of active sampling \citep{adcock2022monte}.

\paragraph{Computational resources.} All experiments were conducted on AWS using Galaxy's Deep Learning AMI GPU, PyTorch 2.0.1, and Ubuntu 20.04-prod-ada4rwvf26mda with an EC2 instance type of g5.12xlarge. The experiments for AMA and the two different active learning methods were run in parallel, with the QbC method taking the longest at 11 hours. Thus, a total of approximately 11 GPU-hours (and corresponding CPU usage) were required for all independent runs with four GPUs running in parallel.

\end{document}